\newif\ifdraft
 \newcommand{\asd}[1]{\deleted[AS]{#1}} 
 \newcommand{\asr}[2]{\replaced[AS]{#1}{#2}} 
\newcommand{\gkabsym}{\G}
\newcommand{\ginitprog}{\delta}
\newcommand{\gemptyprog}{\varepsilon}
\newcommand{\gPICK}{\textbf{pick}}
\newcommand{\gWHILE}{\textbf{while}}
\newcommand{\gDO}{\textbf{do}}
\newcommand{\gIF}{\textbf{if}}
\newcommand{\gTHEN}{\textbf{then}}
\newcommand{\gELSE}{\textbf{else}}
\newcommand{\gif}[3]{\gIF~#1~\gTHEN~#2~\gELSE~#3}
\newcommand{\gact}[2]{\gPICK~#1.#2}
\newcommand{\gwhile}[2]{\gWHILE~#1~\gDO~#2}
\newcommand{\gprogtrans}[1]{\xrightarrow{#1}}
\newcommand{\gfin}{\mathbb{F}}
\newcommand{\gexectrans}{\ra}
\newcommand{\pid}{pid}
\newcommand{\tpid}{\tau_{id}}
\newcommand{\final}[1]{#1\in\gfin}
 \newcommand{\ask}{\Ans}
\newcommand{\tell}{\textsc{tell}}
\newcommand{\filter}{f}
\newcommand{\progres}{\textsc{res}} 
\newcommand{\tkabs}{\tau_S} 
\newcommand{\ppre}{pre} 
\newcommand{\ppost}{post}
\newcommand{\mimic}{\cong}
\newcommand{\tgkab}{\tau_{\gkabsym}} 
\newcommand{\tgprog}{t_{\gkabsym}} 
\newcommand{\eqm}{\simeq}
\newcommand{\nnf}{\textsc{nnf}}
\newcommand{\setinvocation}{\Lambda}
\newcommand{\tmpconceptname}{\mathsf{State}}
\newcommand{\tmpconst}{\mathit{temp}}
\newcommand{\tmp}{\tmpconceptname(\tmpconst)}
\newcommand{\pre}{\mathit{st}} 
\newcommand{\post}{\mathit{ed}} 
\newcommand{\flagconceptname}{\mathsf{Flag}} 
\newcommand{\flagconcept}[1]{\flagconceptname(#1)} 
\newcommand{\noopconceptname}{\mathsf{Noop}} 
\newcommand{\noopconcept}[1]{\noopconceptname(#1)} 
\newcommand{\tforb}{t_B} 
\newcommand{\tforj}{t_{j}} 
\newcommand{\tford}{t_{dup}} 
\newcommand{\inc}{\textsc{inc}}
\newcommand{\tgkabb}{\tau_B} 
\newcommand{\tgkabc}{\tau_C} 
\newcommand{\tgkabe}{\tau_E} 
\newcommand{\tgprogb}{\kappa_B} 
\newcommand{\tgprogc}{\kappa_C} 
\newcommand{\tgproge}{\kappa_E} 
\newcommand{\actsettmpa}{\actset_{\varepsilon}^+}
\newcommand{\actsettmpd}{\actset_{\varepsilon}^-}
\newcommand{\temp}{\mathsf{M}(\mathit{rep})}
\newcommand{\tap}[1]{[#1]} 
\newcommand{\voc}{\textsc{voc}}
\newcommand{\initABox}{A_0} 
\newcommand{\act}{\alpha} 
\newcommand{\actset}{\Gamma} 
\newcommand{\add}{\textbf{add \xspace}}
\newcommand{\del}{\textbf{del \xspace}}
\newcommand{\eff}[1]{\textsc{Eff}(#1)}
\newcommand{\facta}{F^+}
\newcommand{\factd}{F^-}
\newcommand{\procset}{\Pi} 
\newcommand{\kabsym}{\K}
\newcommand{\const}{\Delta} 
\newcommand{\iconst}{\const_0}
\newcommand{\servcall}{\F} 
\newcommand{\map}[2]{#1 \rightsquigarrow #2} 
\newcommand{\adom}[1]{\textsc{adom}(#1)} 
\newcommand{\carule}[2]{#1 \mapsto #2} 
\newcommand{\stateset}{\Sigma} 
\newcommand{\trans}{\Rightarrow} 
\newcommand{\abox}{\mathit{abox}} 
\newcommand{\ts}[1]{\varUpsilon_{#1}} 
\newcommand{\scmap}{\ensuremath{m}\xspace} 
\newcommand{\scset}{\ensuremath{\mathbb{SC}}\xspace} 
\newcommand{\doo}[1]{\textsc{do}(#1)} 
\newcommand{\calls}[1]{{\textsc{calls}({#1})}} 
\newcommand{\eval}[1]{{\textsc{eval}(#1)}} 
\newcommand{\exec}[1]{\textsc{exec}_{#1}\xspace} 
\newcommand{\exect}[1]{\xrightarrow[]{#1}} 
\newcommand{\addfactssym}{\textsc{add}}
\newcommand{\delfactssym}{\textsc{del}}
\newcommand{\addfactss}[2]{\addfactssym^{#1}_{#2}}
\newcommand{\delfactss}[2]{\delfactssym^{#1}_{#2}}
\newcommand{\domain}[1]{\ensuremath{\textsc{dom}(#1)}\xspace}
\newcommand{\jbsim}{\sim_{\textsc{j}}}
\newcommand{\sbsim}{\sim_{\textsc{so}}}
\newcommand{\lbsim}{\sim_{\textsc{l}}}
\newcommand{\ebsim}{\sim_{\textsc{e}}}
\newcommand{\qedb}{\hfill\ensuremath{\blacksquare}}
\newcommand{\arset}[1]{\textsc{b-rep}(#1)}
\newcommand{\iarset}[1]{\textsc{c-rep}(#1)}
\newcommand{\evol}{\textsc{evol}}
\newcommand{\qunsatf}{q^f_{\textnormal{unsat}}}
\newcommand{\qunsatn}{q^n_{\textnormal{unsat}}}
\newcommand{\qunsatecq}[1]{Q^#1_{\textnormal{unsat}}}
\newcommand{\muL}{\mu\L} 
\newcommand{\muladom}{\ensuremath{\muL_{A}^{{\textnormal{EQL}}}}\xspace}
\newcommand{\ladom}{\ensuremath{\L_{A}^{{\textnormal{EQL}}}}\xspace}
\newcommand{\mula}{\ensuremath{\muL_{A}}\xspace}
\newcommand{\vfo}{\ensuremath{v}} 
\newcommand{\vso}{\ensuremath{V}} 
\newcommand{\MOD}[1]{(#1)^{\ts{}}}
\newcommand{\MODA}[1]{(#1)_{\vfo,\vso}^{\ts{}}}
\newcommand{\MODAX}[2]{(#1)_{\vfo #2,\vso}^{\ts{}}}
\newcommand{\Ans}{\textsc{ans}}
\newcommand{\ans}{\mathit{ans}}
\newcommand{\conj}{\mathit{conj}}
\newcommand{\true}{\mathsf{true}}
\newcommand{\false}{\mathsf{false}}
\newcommand{\dllitea}{\textit{DL-Lite}\ensuremath{_{\mathcal{A}}}\xspace}
\newcommand{\funct}[1]{(\mathsf{funct}~#1)}
\newtheorem{theorem}{Theorem}
\newtheorem{lemma}[theorem]{Lemma}
\theoremstyle{definition}
\newtheorem{exampleAux}{Example}
\newtheorem{mydefinition}[theorem]{Definition}
\newenvironment{definition}{\begin{mydefinition}}{\null\hfill\qedb\smallskip\end{mydefinition}}
\def\qed{\hfill{\qedboxempty}} 
\def\qedboxempty{\vbox{\hrule\hbox{\vrule\kern3pt
\vbox{\kern3pt\kern3pt}\kern3pt\vrule}\hrule}}
\newcommand{\B}{\mathcal{B}}
\newcommand{\E}{\mathcal{E}} \newcommand{\F}{\mathcal{F}}
\newcommand{\G}{\mathcal{G}} 
\newcommand{\I}{\mathcal{I}} 
\newcommand{\K}{\mathcal{K}} \renewcommand{\L}{\mathcal{L}}
\newcommand{\Q}{\mathcal{Q}}
\newcommand{\ra}{\rightarrow}
\newcommand{\lra}{\leftrightarrow}
\newcommand{\Lora}{\Longrightarrow}
\newcommand{\Lola}{\Longleftarrow}
\newcommand{\per}{\mbox{\bf .}}                  
\newcommand{\set}[1]{\{#1\}}                      
\newcommand{\card}[1]{|{#1}|}                     
\newcommand{\tup}[1]{\langle #1\rangle}            
\newcommand{\dom}[1][\I]{\Delta^{#1}}  
\newcommand{\Int}[2][\I]{#2^{#1}}      
\newcommand{\SOMET}[1]{\exists #1}
\newcommand{\INV}[1]{#1^{-}}
\newcommand{\BOX}[1]{ [\!-\!] #1}
\newcommand{\DIAM}[1]{\langle \!-\! \rangle #1}
\newcommand{\ISA}{\sqsubseteq}
\newcommand{\citeasnoun}[1]{\citeauthor{#1}~\shortcite{#1}}
\begin{document}

\title{Verification of Generalized Inconsistency-Aware Knowledge and Action
 Bases
 \\(Extended Version)
}

\author{Diego Calvanese, Marco Montali, Ario Santoso\\
 KRDB Research Centre for Knowledge and Data\\
 Free University of Bozen-Bolzano\\
 \textit{lastname}@inf.unibz.it}

\maketitle

\begin{abstract}
  Knowledge and Action Bases (KABs) have been put forward as a semantically
  rich representation of a domain, using a DL KB to account for its static
  aspects, and actions to evolve its extensional part over time, possibly
  introducing new objects. Recently, KABs have been extended to manage
  inconsistency, with ad-hoc verification techniques geared towards specific
  semantics.  This work provides a twofold contribution along this line of
  research. On the one hand, we enrich KABs with a high-level, compact action
  language inspired by Golog, obtaining so called Golog-KABs (GKABs). On the
  other hand, we introduce a parametric execution semantics for GKABs, so as to
  elegantly accomodate a plethora of inconsistency-aware semantics based on the
  notion of repair.  We then provide several reductions for the verification of
  sophisticated first-order temporal properties over inconsistency-aware GKABs,
  and show that it can be addressed using known techniques, developed for
  standard KABs.
\end{abstract}

\section{Introduction}
\label{sec:introduction}

The combination of static and dynamic aspects in modeling complex
organizational domains is a challenging task that has received increased
attention, and has led to the study of settings combining formalisms from
knowledge representation, database theory, and process management
\cite{Hull08,Vian09,CaDM13}.  Specifically, Knowledge and Action Bases (KABs)
\cite{BCMD*13} have been put forward to provide a semantically rich
representation of a domain.  In KABs, static aspects are modeled using a
knowledge base (KB) expressed in the lightweight Description Logic (DL)
\cite{BCMNP03} \dllitea \cite{CDLLR07,CDLL*09}, while actions are used to
evolve its extensional part over time, possibly introducing fresh individuals
from the external environment.  An important aspect that has received little
attention so far in such systems is the management of inconsistency with
respect to domain knowledge that may arise when the extensional information is
evolved over time.  In fact, inconsistency is typically handled naively by just
rejecting updates in actions when they would lead to inconsistency.  This
shortcoming is not only present in KABs, but virtually in all related
approaches in the literature, e.g., \cite{DHPV09,BeLP12,BCDDM13}.

To overcome this limitation, KABs have been extended lately with mechanisms to
handle inconsistency \cite{CKMSZ13}.  However, this has been done by defining
ad-hoc execution semantics and corresponding ad-hoc verification techniques
geared towards specific semantics for inconsistency management.  Furthermore,
it has been left open whether introducing inconsistency management in the rich
setting of KABs, effectively leads to systems with a different level of
expressive power.
In this paper, we attack these issues by:
\begin{inparaenum}[\it (i)]
\item Proposing (standard) GKABs, which enrich KABs with a compact
  action language inspired by Golog \cite{LRLLS97} that can be
  conveniently used to specify processes at a high-level of
  abstraction. As in KABs, standard GKABs still manage inconsistency
  naively.
\item Defining a parametric execution semantic for GKABs that is able
  to elegantly accomodate a plethora of inconsistency-aware semantics
  based on the well-known notion of repair
  \cite{EiGo92,Bert06,LLRRS10,CKNZ10b}.
\item Providing several reductions showing that verification of
  sophisticated first-order temporal properties over
  inconsistency-aware GKABs can be recast as a corresponding
  verification problem over standard GKABs.
\item Showing that verification of standard and inconsistency-aware
  GKABs can be addressed using known techniques, developed for
  standard KABs.
\end{inparaenum}

\section{Preliminaries}
\label{sec:prelim}

We start by introducing the necessary technical preliminaries.

\subsection{\dllitea}
We fix a countably infinite set $\const$ of \textit{individuals}, acting as
standard names.  To model KBs, we use the lightweight logic \dllitea
\cite{CDLLR07,CDLL*09}, whose \emph{concepts} and \emph{roles} are built
according to $B ::= N \mid \SOMET{R}$ and $R ::= P \mid \INV{P}$, where
\begin{inparaenum}[]
\item $N$ is a \emph{concept name},
\item $B$ a \emph{basic concept},
\item $P$ a \emph{role name},
\item $\INV{P}$ an \emph{inverse role}, and
\item $R$ a \emph{basic role}.
\end{inparaenum}

A \emph{\dllitea KB} is a pair $\tup{T, A}$, where:
\begin{inparaenum}[\it (i)]
\item $A$ is an Abox, i.e., a finite set of \emph{ABox 
   assertions} (or \emph{facts}) of the form $ N(c_1)$ or $P(c_1,c_2)$, where
  $c_1$, $c_2$ are individuals.
\item $T=T_p\uplus T_n\uplus T_f$ is a \emph{TBox}, i.e., a finite set
  constituted by a subset $T_p$ of \emph{positive inclusion assertions} of the
  form $B_1\ISA B_2$ and $R_1\ISA R_2$, a subset $T_n$ of \emph{negative
   inclusion assertions} of the form $B_1 \ISA \neg B_2$ and $R_1 \ISA \neg
  R_2$, and a subset $T_f$ of \emph{functionality assertions} of the form
  $\funct{R}$.
\end{inparaenum}
We denote by $\adom{A}$ the set of individuals explicitly present in $A$.

We rely on the standard semantics of DLs based on FOL interpretations
$\I=(\dom,\Int{\cdot})$, where $\Int{c}\in\dom$, $\Int{N}\subseteq\dom$, and
$\Int{P}\subseteq\dom\times\dom$.  The semantics of the \dllitea constructs and
of TBox and ABox assertions, and the notions of \emph{satisfaction} and of
\emph{model} are as usual (see, e.g., \cite{CDLLR07}).
We say that $A$ is \emph{$T$-consistent} if $\tup{T,A}$ is
satisfiable, i.e., admits at least one model. We also assume that
all concepts and roles in $T$ are satisfiable, i.e., for every
concept $N$ in $T$, there exists at
least one model $\I$ of $T$ such that $\Int{N}$ is non-empty, and
similarly for roles.

\smallskip
\noindent
\textbf{Queries.}
We use queries to access KBs and extract individuals of interest.
A \emph{union of conjunctive queries  (UCQ)} $q$ over a KB $\tup{T, A}$ is a FOL
formula of the form
$\bigvee_{1\leq i\leq n}\exists\vec{y_i}\per\conj_i(\vec{x},\vec{y_i})$, where
each $\conj_i(\vec{x},\vec{y_i})$ is a conjunction of atoms, whose predicates
are either concept/role names of $T$, or equality assertions
involving variables $\vec{x}$ and $\vec{y}_i$, and/or individuals.
%

The \emph{(certain) answers} of $q$ over $\tup{T,A}$ are defined as
the set $\ans(q,T,A)$ of substitutions $\sigma$ of the
free variables in $q$ with inviduals in $\adom{A}$, such that
$q\sigma$ evaluates to true in every model of $\tup{T,A}$.
If $q$ has no free variables, then it is called \emph{boolean} and its certain
answers are either the empty substitution (corresponding to $\true$), or the
empty set (corresponding to $\false$).
We also consider the extension of UCQs named
\textit{EQL-Lite}(UCQ)~\cite{CDLLR07b} (briefly, ECQs), that is, the FOL query
language whose atoms are UCQs evaluated according to the certain answer
semantics above.
Formally, an \emph{ECQ} over a TBox $T$ is a (possibly open) formula of the
form\footnote{In this work we only consider domain independent ECQs.}:
\begin{center}
$  Q ~::=~  [q]  ~\mid~ \lnot Q ~\mid~ Q_1\land Q_2 ~\mid~
              \exists x\per Q
$
\end{center}
where $q$ is a UCQ,  and $[q]$ denotes the fact that $q$ is evaluated
under the (minimal) knowledge operator \cite{CDLLR07b}.\footnote{We
  omit the square brackets for single-atom UCQs.}
Intuitively, the \emph{certain answers $\Ans(Q,T,A)$ of an ECQ $Q$
  over $\tup{T,A}$} are obtained by computing the certain answers of
the UCQs embedded in $Q$, then composing such answers
through the FO constructs in $Q$ (interpreting existential variables as ranging over
$\adom{A}$).
%

%

\subsection{Inconsistency Management in DL KBs}
\label{sec:inconsistency}
Retrieving certain answers from a KB makes sense only if the KB is
consistent: if it is not, then each query returns all possible tuples of
individuals of the ABox. In a dynamic setting where the ABox
evolves over time, consistency is a too strong requirement, and in fact a number of approaches have been proposed to handle
the instance-level evolution of KBs, managing inconsistency when it
arises. Such approaches
typically follow one of the two following two strategies:
\begin{inparaenum}[\it (i)]
\item inconsistencies are kept in the KBs, but the semantics of query
  answering is refined to take this into account (\emph{consistent
    query answering} \cite{Bert06});
\item the extensional part of an
inconsistent KB is (minimally) \emph{repaired} so as to remove
inconsistencies, and certain answers are
then applied over the curated KB.
\end{inparaenum}
In this paper, we follow the approach in \cite{CKMSZ13}, and
consequently focus on repair-based approaches. However, our results
seamlessly carry over the setting of consistent query answering. We
then recall the basic notions related to inconsistency
management via repair, distinguishing approaches that repair
an ABox and those that repair an update.

\smallskip
\noindent
\textbf{ABox repairs.} Starting from the seminal work in
\cite{EiGo92}, in \cite{LLRRS10} two approaches for repairing KBs are proposed: \emph{ABox repair} (AR) and \emph{intersection ABox repair} (IAR). In \cite{CKMSZ13}, these approaches are used to handle
inconsistency in KABs, and are respectively called \emph{bold-repair} (\emph{b-repair}) and
\emph{certain-repair} (\emph{c-repair}).
Formally, a \emph{b-repair of an ABox $A$ w.r.t.\ a TBox $T$} is a
\emph{maximal T-consistent subset $A'$ of $A$}, i.e.:
\begin{inparaenum}[\it (i)]
\item $A' \subseteq A$,
\item $A'$ is $T$-consistent, and
\item there does not exists $A''$ such that $A' \subset A'' \subseteq
  A$ and  $A''$ is $T$-consistent.
\end{inparaenum}
We denote by $\arset{T,A}$ the set of all b-repairs of $\tup{T,A}$.
The \emph{c-repair of an ABox $A$ w.r.t.\ a TBox $T$} is the (unique) set
$\iarset{T,A} = \cap_{A_i \in\arset{T,A}} A_i$ of ABox assertions, obtained by
intersecting all b-repairs.

\smallskip
\noindent
\textbf{Inconsistency in KB evolution.}~ In a setting where the KB is
subject to instance-level evolution, b- and c-repairs are computed
agnostically from the updates: each update is committed, and only
secondly the obtained ABox is repaired if inconsistent. In
\cite{CKNZ10b}, a so-called \emph{bold semantics} is proposed to apply
the notion of repair to the update itself.  Specifically, the bold
semantics is defined over a consistent KB $\tup{T, A}$ and an
instance-level update that comprises two ABoxes $F^-$ and $F^+$,
respectively containing those assertions that have to be deleted from
and then added to $A$. It is assumed that $F^+$ is consistent with
$T$, and that new assertions have ``priority'': if an inconsistency
arises, newly introduced facts are preferred to those already present
in $A$.
Formally, the \emph{evolution of an ABox $A$ w.r.t.\ a TBox $T$ by $F^+$ and
 $F^-$}, written $\evol(T, A, F^+, F^-)$, is an ABox $A_e = F^+ \cup A'$, where
\begin{inparaenum}[\it (i)]
\item $A' \subseteq (A \setminus F^-)$,
\item $F^+ \cup A'$ is $T$-consistent, and
\item there does not exists $A''$ such that $A' \subset A'' \subseteq
  (A \setminus F^-)$ and  $F^+\cup A''$ is $T$-consistent.
\end{inparaenum}

\subsection{Knowledge and Action Bases}
\label{sec:kab}
Knowledge and Action Bases (KABs) \cite{BCMD*13} have been proposed as
a unified framework to simultaneously account for the static and
dynamic aspects of an application domain. This is done by combining a
semantically-rich representation of the domain (\asr{via}{through} a
DL KB), with a process that evolves the extensional part of such a KB,
possibly introducing, through service calls, new individuals from the
external world. We briefly recall the main aspects of KABs, by
combining the framework in \cite{BCMD*13} with the \asd{high-level}
action specification formalism in \cite{MoCD14}.

We consider a finite set of distinguished individuals $\iconst \subset \const$,
and a finite set $\servcall$ of \textit{functions} representing \textit{service
 calls}, which abstractly account for the injection of fresh individuals from
$\const$ into the system.  A \emph{KAB} is a tuple $\kabsym = \tup{T,
 \initABox, \actset, \procset}$ where:
\begin{inparaenum}[\it(i)]
\item $T$ is a \dllitea TBox that captures the intensional aspects of the domain of interest;
\item $\initABox$ is the initial \dllitea ABox, describing the
 initial configuration of data;
\item $\actset$ is a finite set of parametric actions that evolve the
  ABox;
\item $\procset$ is a finite set of condition-action rules forming a
  process, which describes when actions can be executed, and with
  which parameters.
\end{inparaenum}
We assume that $\adom{\initABox} \subseteq \iconst$.

An \textit{action} $\act \in \actset$ has the form
$\act(\vec{p}):\set{e_1,\ldots,e_m}$, where
\begin{inparaenum}[\it (i)]
\item $\act$ is the \emph{action name},
\item $\vec{p}$ are the \emph{input parameters}, and
\item $\set{e_1,\ldots,e_m}$ is the set of \emph{effects}.
\end{inparaenum}
 Each effect has
  the form $\map{Q(\vec{x})}{\add F^+, \del F^-}$, where:
  \begin{inparaenum}[\it (i)]
  \item $Q(\vec{x})$ is an ECQ, possibly mentioning individuals in
    $\iconst$ and action parameters $\vec{p}$.
  \item $F^+$ is a set of \asr{atoms}{facts} (over the alphabet of $T$) to be \emph{added} to
    the ABox, each having as terms: individuals in $\iconst$, action parameters
    $\vec{p}$, free variables $\vec{x}$ of $Q$, and service calls,
    represented as Skolem terms formed by
    applying a function $f \in \servcall$ to one of the previous
    kinds of terms.
  \item $F^-$ is a set of \asr{atoms}{facts} (over the alphabet of $T$) to be \emph{deleted}
    from the ABox, each having as terms: individuals in $\iconst$, input
    parameters $\vec{p}$, and free variables of $Q$.
  \end{inparaenum}
  We denote by $\eff{\act}$ the set of effects in $\act$.
  Intuitively, action $\alpha$ is executed by grounding its
  parameters, and then applying its effects in parallel. Each effect
  instantiates the \asr{atoms}{facts} mentioned in its head with all
  answers of $Q$, then issues the corresponding service calls possibly
  contained in $F^+$, and substitutes them with the obtained results
  (which are individuals from $\const$). The update induced by
  $\alpha$ is produced by adding and removing the ground
  \asr{atoms}{facts} so-obtained to/from the current ABox, giving
  higher priority to additions.

  %


The \emph{process} $\procset$ comprises a finite set of
\emph{condition-action rules} of the form $Q(\vec{x})\mapsto\alpha(\vec{x})$,
where:
\begin{inparaenum}[]
\item $\act\in\actset$ is an action, and
\item $Q(\vec{x})$ is an ECQ over $T$, whose terms are free variables
  $\vec{x}$, quantified variables, and individuals in $\iconst$.
\end{inparaenum}
Each condition-action rule determines the instantiations
of parameters with which to execute the action in its head over the
current ABox.

The execution semantics of a KAB
 is given in terms of a possibly infinite-state \emph{transition
   system}, whose construction depends on the adopted semantics of
 inconsistency \cite{CKMSZ13}. In general, the transition systems we consider are
of the form $\tup{\const,T,\stateset,s_0,\abox,\trans}$, where:
\begin{inparaenum}[\it (i)]
\item $T$ is a \dllitea TBox;
\item $\stateset$ is a (possibly infinite) set of states;
\item $s_0 \in \stateset$ is the initial state;
\item $\abox$ is a function that, given a state $s\in\stateset$,
  returns an ABox associated to $s$;
\item ${\Rightarrow} \subseteq \Sigma\times\Sigma$ is a transition
  relation between pairs of states.
\end{inparaenum}

Following the terminology in \cite{CKMSZ13}, we call S-KAB a
KAB under the standard execution semantics of KABs, where
inconsistency is naively managed by simply rejecting those updates
that lead to an inconsistent state.  The transition system
$\ts{\kabsym}^{S}$ accounting for the standard execution semantics of
KAB $\kabsym$ is then constructed by starting from the initial ABox,
applying the executable actions in all possible ways, and generating
the (consistent) successor states by applying the corresponding
updates, then iterating through this procedure. As for the semantics
of service calls, in line with \cite{CKMSZ13} we adopt the
\emph{deterministic semantics}, i.e., services return always the same
result when called with the same inputs. Nondeterministic services can
be seamlessly added without affecting our technical results.

To ensure that services behave deterministically, the states of the
transition system are also equipped with a service call map that
stores the service calls issued so far, and their corresponding
results. Technically, a \emph{service call map} is a partial function
$\scmap:\scset\ra\const$, where  $\scset = \{\mathit{sc}(v_1,\ldots,v_n) \mid
\mathit{sc}/n \in \servcall \textrm{ and } \{v_1,\ldots,v_n\} \subseteq \const
\}$ is the set of (Skolem terms
representing) service calls.

\subsection{Verification Formalism}\label{sec:VerificationFormalism}
To specify sophisticated temporal properties to be verified over KABs, taking
into account the system dynamics as well as the evolution of data over time, we
rely on the $\muladom$ logic, the FO variant of the $\mu$-calculus defined in
\cite{BCMD*13}.  $\muladom$ combines the standard temporal operators of the
$\mu$-calculus with EQL queries over the states. FO quantification is
interpreted with an active domain semantics, i.e., it ranges over those
individuals that are explicitly present in the current ABox, and fully
interacts with temporal modalities, i.e., it applies \emph{across} states. The
\muladom syntax is:
\begin{center}
$
  \Phi ~:=~ Q ~\mid~ \lnot \Phi ~\mid~ \Phi_1 \land \Phi_2
  ~\mid~ \exists x.\Phi ~\mid~ \DIAM{\Phi} ~\mid~ Z ~\mid~ \mu Z.\Phi
$
\end{center}
where $Q$ is a possibly open EQL query that can make use of the distinguished
individuals in $\iconst$, $Z$ is a second-order variable denoting a predicate
(of arity 0), $\DIAM{\Phi}$ indicates the existence of a next state where
$\Phi$ holds, and $\mu$ is the least fixpoint operator, parametrized with the
free variables of its bounding formula. 
We make use of the following standard abbreviations: $\forall x.  \Phi = \neg
(\exists x.\neg \Phi)$, $\Phi_1 \lor \Phi_2 = \neg (\neg\Phi_1 \land
\neg \Phi_2)$, $\BOX{\Phi} = \neg \DIAM{\neg \Phi}$, and $\nu Z. \Phi
= \lnot\mu Z. \neg \Phi[Z/\neg Z]$.

For the semantics of \muladom, which is given over transition systems of the form
specified in Section~\ref{sec:kab}, we refer to \cite{BCMD*13}.
Given a transition system $\ts{}$ and a closed \muladom  formula
$\Phi$, we call \emph{model checking} verifying whether
 $\Phi$ holds in the initial state of $\ts{}$, written $\ts{} \models \Phi$.


\section{Golog-KABs and Inconsistency}
\label{sec:gkab}
In this section, we leverage on the KAB framework
(cf.~Section~\ref{sec:kab}) and provide a twofold contribution. On the
one hand, we enrich KABs with a high-level action
language inspired by Golog \cite{LRLLS97}.
 This allows modelers to
represents processes much more compactly, and will be instrumental for
the reductions discussed in Sections~\ref{sec:compilation} and~\ref{sec:gtos}.
On the other hand, we introduce a parametric execution semantics,
which elegantly accomodates a plethora of inconsistency-aware semantics based
on the notion of repair.

A \emph{Golog-KAB (GKAB)} is a tuple $\gkabsym =
\tup{T,\initABox,\actset,\ginitprog}$, where
$T$, $\initABox$, and $\actset$ are as in standard KABs, and
$\ginitprog$ is the Golog program characterizing the evolution of
the GKAB over time, using the atomic actions in $\actset$.
For simplicity, we only consider a core fragment\footnote{The other
  Golog constructs, including non-deterministic iteration and unrestricted pick, can be
 simulated with the constructs considered here.} of Golog based on the action
language in \cite{CDLR11}, and define a \emph{Golog program} as:
%
%
\begin{center}
$\begin{array}{@{}r@{\ }l@{\ }}
  \delta ::= &
  \gemptyprog ~\mid~
  \gact{Q(\vec{p})}{\act(\vec{p})} ~\mid~
  \delta_1|\delta_2  ~\mid~
  \delta_1;\delta_2 ~\mid~ \\
  &\gif{\varphi}{\delta_1}{\delta_2} ~\mid~
  \gwhile{\varphi}{\delta}
\end{array}
$
\end{center}
where:
\begin{inparaenum}[(1)]
\item $\gemptyprog$ is the \emph{empty program};
\item $\gact{Q(\vec{p})}{\act(\vec{p})}$ is an \emph{atomic action invocation}
  guarded by an ECQ $Q$, such that $\act\in\actset$ is applied by
  non-deterministically substituting its parameters $\vec{p}$ with an
  answer of $Q$;
\item $\delta_1|\delta_2$ is a \emph{non-deterministic choice} between
  programs;
\item $\delta_1;\delta_2$ is \emph{sequencing};
\item $\gif{\varphi}{\delta_1}{\delta_2}$ and
  $\gwhile{\varphi}{\delta}$ are \emph{conditional} and
  \emph{loop} constructs, using a boolean ECQ $\varphi$ as condition.
\end{inparaenum}

\smallskip
\noindent
\textbf{Execution Semantics.}~As for normal KABs, the execution semantics of a
GKAB \asr{$\gkabsym$}{$\gkabsym = \tup{T,\initABox,\actset,\ginitprog}$} is given in terms of a
possibly infinite-state transition system $\ts{\gkabsym}$, whose states are
labelled with ABoxes.  The states we consider, are tuples of the form
$\tup{A,\scmap,\delta}$, where $A$ is an ABox, $\scmap$ a service call map, and
$\delta$ a program.  Together, $A$ and $\scmap$ constitute the
\emph{data-state}, which captures the result of the actions executed so far,
together with the answers returned by service calls issued in the
past. Instead, $\delta$ is the \emph{process-state}, which represents the
program that still needs to be executed from the current data-state.

We adopt the functional approach by \citeasnoun{Leve84} in defining the
semantics of action execution over $\gkabsym$, i.e., we assume $\gkabsym$
provides two operations:
\begin{inparaenum}[\it (i)]
\item $\textsc{ask}$, to
answer queries over the current KB;
\item  $\tell$, to update the KB through an atomic action.
\end{inparaenum}
Since we adopt repairs to handle inconsistency, the $\textsc{ask}$
operator corresponds to certain answers computation.

We proceed now to formally define $\tell$.
Given an action invocation $\gact{Q(\vec{p})}{\act(\vec{p})}$ and an
ABox $A$, we say that substitution $\sigma$ of
parameters $\vec{p}$ with individuals in $\const$ is \emph{legal
  for $\alpha$ in $A$} if
$\ask(Q\sigma, T,A)$ is $\true$. If so, we also say that
\emph{$\act\sigma$ is executable in $A$}, and we define
the sets
of \asr{atoms}{facts} to be added and deleted by
$\gact{Q(\vec{p})}{\act(\vec{p})}$ with $\sigma$ in $A$ as follows:
$
\begin{array}{@{}r@{~}c@{~}l}
  \addfactss{A}{\act\sigma} &=&
  \bigcup_{(\map{Q}{\add F^+, \del F^-}) \text{ in } \eff{\act}}
  \bigcup_{\rho\in\ask(Q\sigma,T,A)}
  F^+\sigma\rho\\
  \delfactss{A}{\act\sigma} &=&
  \bigcup_{(\map{Q}{\add F^+, \del F^-}) \text{ in } \eff{\act}}
  \bigcup_{\rho\in\ask(Q\sigma,T,A)}
  F^-\sigma\rho
\end{array}
$

In general, $\addfactss{A}{\act\sigma}$ is not a proper set
 of facts, because it could contain (ground) service calls, to be
 substituted with corresponding results.
 We denote by $\calls{\addfactss{A}{\act\sigma}}$ the set of ground
 service calls in $\addfactss{A}{\act\sigma}$, and by
 $\eval{\addfactss{A}{\act\sigma}}$ the set of call substitutions with
 individuals in $\const$, i.e., the set
%
\begin{center}
$
 \{ \theta \mid \theta \mbox { is a total function, }
                        \theta: \calls{\addfactss{A}{\act\sigma}} \ra \const \}
$
\end{center}
%

%
%

Given two ABoxes $A$ and $A'$ where $A$ is assumed to be $T$-consistent, and two sets $F^+$ and $F^-$ of facts,
we introduce a so-called \emph{filter relation} to indicate that $A'$
is obtained from $A$ by adding the $F^+$ facts and removing the
$F^-$ ones.
To account for inconsistencies, the filter could drop some
additional facts when producing $A'$.
%
%
Hence, a filter consists of tuples of the form
$\tup{A, F^+, F^-,
  A'}$ satisfying $\emptyset \subseteq A'
\subseteq ((A \setminus F^-) \cup F^+)$.  In this light, filter relations provide
an abstract mechanism to accommodate several inconsistency management
approaches.

We now concretize $\tell$ as follows. Given a
GKAB $\gkabsym$ and a
filter $\filter$, we define
$\tell_\filter$ as the following relation over pairs of data-states in
$\ts{\gkabsym}$: tuple $\tup{\tup{A,\scmap}, \act\sigma, \tup{A',
    \scmap'}} \in \tell_\filter$ if\\
\begin{inparaitem}[$\bullet$]
\item $\sigma$ is a legal parameter substitution for $\alpha$ in $A$, and\\
\item there exists $\theta \in \eval{\addfactss{A}{\act\sigma}}$ such that:
\begin{inparaenum}[\it(i)]
\item $\theta$ and $\scmap$ agree on the common values in their
  domains (this enforces the deterministic semantics for services);
\item $\scmap' = \scmap \cup \theta$;
\item $\tup{A, \addfactss{A}{\act\sigma}\theta, \delfactss{A}{\act\sigma}, A'} \in
  \filter$, where $\addfactss{A}{\act\sigma}\theta$ denotes the \asd{ground}
  set of facts obtained by applying $\theta$ over the \asr{atoms}{facts} in $\addfactss{A}{\act\sigma}$;
\item $A'$ is $T$-consistent.
\end{inparaenum}
\end{inparaitem}

As a last preliminary notion towards the parametric execution semantics of
GKABs, we specify when a state $\tup{A,\scmap,\delta}$ is considered to be
\emph{final} by its program $\delta$, written $\final{\tup{A,\scmap,\delta}}$.
This is done by defining the set $\gfin$ of final states as follows:
\begin{compactenum}[1.]
  \small
\item $\final{\tup{A, \scmap, \gemptyprog}}$;
\item $\final{\tup{A, \scmap, \delta_1|\delta_2}}$ if
  $\final{\tup{A, \scmap, \delta_1}}$ or
  $\final{\tup{A, \scmap, \delta_2}}$;
\item $\final{\tup{A, \scmap, \delta_1;\delta_2}}$ if
  $\final{\tup{A, \scmap, \delta_1}}$ and
  $\final{\tup{A, \scmap, \delta_2}}$;
\item $\final{\tup{A, \scmap, \gif{\varphi}{\delta_1}{\delta_2}}}$ \\ if
  $\ask(\varphi, T, A) = \true$, and
  $\final{\tup{A, \scmap, \delta_1}}$;
\item $\final{\tup{A, \scmap, \gif{\varphi}{\delta_1}{\delta_2}}}$ \\if
  $\ask(\varphi, T, A) = \false$, and
  $\final{\tup{A, \scmap, \delta_2}}$;
\item $\final{\tup{A, \scmap, \gwhile{\varphi}{\delta}}}$ if
  $\ask(\varphi, T, A) = \false$;
\item $\final{\tup{A, \scmap, \gwhile{\varphi}{\delta}}}$ if
  $\ask(\varphi, T, A) = \true$, and
  $\final{\tup{A, \scmap, \delta}}$.
\end{compactenum}

\noindent
Now, given a filter relation $\filter$, we define the \emph{program
  execution relation} $\gprogtrans{\alpha\sigma, \filter}$, describing
how an atomic action with parameters simultaneously evolves the data-
and program-state:
\begin{compactenum}[1.\!\!\!]
\small
\item $\tup{A, \scmap, \gact{Q(\vec{p})}{\act(\vec{p})}}
  \gprogtrans{\alpha\sigma, \filter}
  \tup{A', \scmap', \gemptyprog}$, \\if
$\tup{\tup{A, \scmap}, \act\sigma, \tup{A', \scmap'}} \in \tell_\filter$;
\item $\tup{A, \scmap, \delta_1|\delta_2} \gprogtrans{\alpha\sigma,
    \filter} \tup{A', \scmap', \delta'}$, \\if $\tup{A, \scmap,
    \delta_1} \!\gprogtrans{\alpha\sigma, \filter}\! \tup{A', \scmap',
    \delta'}$ or $\tup{A, \scmap, \delta_2} \gprogtrans{\alpha\sigma,
    \filter} \tup{A', \scmap', \delta'}$;
\item $\tup{A, \scmap, \delta_1;\delta_2} \gprogtrans{\alpha\sigma,
    \filter} \tup{A', \scmap', \delta_1';\delta_2}$, \\if $\tup{A,
    \scmap, \delta_1} \gprogtrans{\alpha\sigma, \filter} \tup{A',
    \scmap', \delta_1'}$;
\item $\tup{A, \scmap, \delta_1;\delta_2} \gprogtrans{\alpha\sigma,
    \filter} \tup{A',
    \scmap', \delta_2'}$, \\
  if $\final{\tup{A, \scmap, \delta_1}}$, and $\tup{A, \scmap,
    \delta_2} \gprogtrans{\alpha\sigma, \filter} \tup{A', \scmap',
    \delta_2'}$;
\item $\tup{A, \scmap, \gif{\varphi}{\delta_1}{\delta_2}}
  \gprogtrans{\alpha\sigma, \filter}
  \tup{A', \scmap', \delta_1'}$, \\
  if $\ask(\varphi, T, A) = \true$, and $\tup{A, \scmap, \delta_1}
  \gprogtrans{\alpha\sigma, \filter} \tup{A', \scmap', \delta_1'}$;
\item $\tup{A, \scmap, \gif{\varphi}{\delta_1}{\delta_2}} \gprogtrans{\alpha\sigma, \filter} \tup{A', \scmap', \delta_2'}$,\\
  if $\ask(\varphi, T, A) = \false$, and $\tup{A, \scmap, \delta_2}
  \gprogtrans{\alpha\sigma, \filter} \tup{A', \scmap', \delta_2'}$;
\item $\tup{A, \scmap, \gwhile{\varphi}{\delta}} \gprogtrans{\alpha\sigma, \filter} \tup{A', \scmap', \delta';\gwhile{\varphi}{\delta}}$,\\
  if $\ask(\varphi, T, A) = \true$, and $\tup{A, \scmap, \delta}
  \gprogtrans{\alpha\sigma, \filter} \tup{A', \scmap', \delta'}$.
\end{compactenum}

Given a GKAB $\gkabsym = \tup{T,
  \initABox, \actset, \ginitprog}$ and a filter relation $\filter$, we finally
define the \emph{transition system of $\gkabsym$ w.r.t.~$\filter$},
written $\ts{\gkabsym}^{\filter}$, as
$\tup{\const,T,\stateset,s_0,\abox,\trans}$, where
\begin{inparaenum}[]
\item $s_0 = \tup{\initABox, \emptyset, \ginitprog}$, and
\item $\stateset$ and $\trans$ are defined by simultaneous induction as the
  smallest sets such that $s_0 \in \stateset$, and if $\tup{A, \scmap, \delta} \in \stateset$ and
    $\tup{A, \scmap, \delta} \gprogtrans{\alpha\sigma, \filter} \tup{A', \scmap', \delta'}$,
    then $\tup{A',\scmap',\delta'}\in\stateset$
    and $\tup{A, \scmap, \delta}\trans \tup{A', \scmap', \delta'}$.
\end{inparaenum}
By suitbably concretizing the filter relation, we can obtain a
plethora of
execution semantics.

\smallskip
\noindent
\textbf{Standard and Inconsistency-Aware Semantics.}
Given a GKAB $\gkabsym = \tup{T, \initABox, \actset, \ginitprog}$, we exploit filter relations to define its
standard execution semantics (reconstructing that of \cite{CKMSZ13}
for normal KABs), and three inconsistency-aware semantics that
incorporate the repair-based approaches reviewed in Section~\ref{sec:inconsistency}.
In particular, we introduce 4 filter relations $\filter_S$, $\filter_B$, $\filter_C$, $\filter_E$, as
follows. Given an ABox $A$, an atomic action $\act(\vec{p}) \in
\actset$, a legal parameter substitution $\sigma$
for $\act$ in $A$, and a service call evaluation $\theta \in
\eval{\addfactss{A}{\act\sigma}}$, let $F^+ =
\addfactss{A}{\act\sigma}\theta$ and $F^- =
\delfactss{A}{\act\sigma}$. We then have $\tup{A, F^+, F^-, A'} \in
\filter$, where
$
\begin{cases}
  A' = (A \setminus F^-) \cup F^+,
  & \text{if } \filter = \filter_S\\
  A' \in \arset{T, (A \setminus F^-) \cup F^+},
  & \text{if } \filter = \filter_B\\
  A' = \iarset{T, (A \setminus F^-) \cup F^+},
  & \text{if } \filter = \filter_C\\
  A' = \evol(T, A, F^+, F^-), & \text{if } \begin{array}[t]{@{}l@{}}
    \filter = \filter_E \text{ and }\\[-1mm]
    F^+ \text{ is $T$-consistent}
             \end{array}
\end{cases}
$
Filter $\filter_S$ gives rise to the \emph{standard execution
  semantics} for $\gkabsym$, since it just applies the update induced
by the ground atomic action $\alpha\sigma$ (giving priority to
additions over deletions). Filter $f_B$ gives rise to the
\emph{b-repair execution semantics} for $\gkabsym$, where inconsistent
ABoxes are repaired by non-deterministically picking a b-repair.
Filter $f_C$ gives rise to the \emph{c-repair execution semantics} for
$\gkabsym$, where inconsistent ABoxes are repaired by computing their
unique c-repair. Filter $f_E$ gives rise to the \emph{b-evol execution
  semantics} for $\gkabsym$, where for updates leading to inconsistent
ABoxes, their unique bold-evolution is computed.  We call the GKABs
adopting these semantics \emph{S-GKABs}, \emph{B-GKABs},
\emph{C-GKABs}, and \emph{E-GKABs}, respectively, and
we group the last three forms of GKABs under the umbrella of
\emph{inconsistency-aware GKABs (I-GKABs)}.

\smallskip
\noindent
\textbf{Transforming S-KABs to S-GKABs.}
We close this section by showing that our S-GKABs are able to capture
normal S-KABs in the literature \cite{BCMD*13,CKMSZ13}.
In particular, we show the following.

\begin{theorem}
\label{thm:stog}
Verification of \muladom properties over S-KABs can be recast as verification
over S-GKABs.
\end{theorem}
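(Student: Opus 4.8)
The plan is to exhibit, for every S-KAB $\kabsym = \tup{T, \initABox, \actset, \procset}$, an S-GKAB $\gkabsym = \tup{T, \initABox, \actset', \ginitprog}$ whose transition system $\ts{\gkabsym}^{\filter_S}$ is ``equivalent'' to $\ts{\kabsym}^{S}$ in a sense strong enough to preserve all \muladom properties, and then to invoke the fact that \muladom model checking is invariant under such an equivalence. Concretely, I would first define a notion of equivalence between transition systems — essentially the bisimulation already alluded to in the \muladom literature, augmented to ignore the extra program-state component carried by GKAB states — and state a lemma to the effect that if $\ts{1}$ and $\ts{2}$ are equivalent in this sense with matching initial states, then for every closed \muladom formula $\Phi$, $\ts{1}\models\Phi$ iff $\ts{2}\models\Phi$. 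The verification claim then reduces to constructing $\gkabsym$ so that $\ts{\gkabsym}^{\filter_S}$ and $\ts{\kabsym}^{S}$ are equivalent.

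The construction itself is the heart of the argument. A KAB process $\procset$ is a set of condition-action rules $Q(\vec{x})\mapsto\act(\vec{x})$ whose semantics is: at any state, nondeterministically pick an applicable rule and an instantiation, execute it, and loop forever. This is exactly captured by the Golog program
\[
\ginitprog ~=~ \gwhile{\mathsf{true}}{\ \bigl(\gact{Q_1(\vec{p})}{\act_1(\vec{p})} ~\mid~ \cdots ~\mid~ \gact{Q_k(\vec{p})}{\act_k(\vec{p})}\bigr)},
\]
where $\procset = \{Q_i(\vec{x})\mapsto\act_i(\vec{x})\}_{i=1}^{k}$, taking $\actset' = \actset$ and keeping $T$ and $\initABox$ unchanged. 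One then checks that the atomic action invocation $\gact{Q_i(\vec{p})}{\act_i(\vec{p})}$ under the filter $\filter_S$ has the same notion of legal substitution (certain-answer truth of $Q_i\sigma$) and the same update function $\doo{T,A,\act_i\sigma}$ (giving priority to additions) as the KAB semantics recalled in the preliminaries — this is a direct comparison of $\addfactss{A}{\act\sigma}$, $\delfactss{A}{\act\sigma}$, and the $\exec{\kabsym}$ relation against $\tell_{\filter_S}$, and the $T$-consistency side-condition on successor states is identical in both. Moreover, by the definition of $\gfin$, the state $\tup{A,\scmap,\gwhile{\mathsf{true}}{\delta}}$ is never final, matching the fact that KAB runs never ``stop''; and the program-execution rule for \textbf{while} returns control to $\delta';\gwhile{\mathsf{true}}{\delta}$, which — since each branch $\gact{Q_i}{\act_i}$ reduces in one step to $\gemptyprog$, and $\gemptyprog;\gwhile{\mathsf{true}}{\delta}$ behaves as $\gwhile{\mathsf{true}}{\delta}$ — means every reachable GKAB state has process-state (equivalent to) the original loop. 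Hence the relation pairing a KAB state $\tup{A,\scmap}$ with the GKAB state $\tup{A,\scmap,\gwhile{\mathsf{true}}{\delta}}$ (and more generally with its one-step unfoldings) is a bisimulation.

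The main obstacle, and the step that needs the most care, is the bookkeeping around the process-state: after executing one branch, the GKAB is momentarily in a state whose process-state is $\gemptyprog;\gwhile{\mathsf{true}}{\delta}$ rather than literally $\gwhile{\mathsf{true}}{\delta}$, so the bisimulation relation must either be set up to absorb such ``transient'' syntactic variants of the program, or one shows a small normalization lemma that $\tup{A,\scmap,\gemptyprog;\delta}$ and $\tup{A,\scmap,\delta}$ have identical outgoing transitions and finality status. Either way this is routine once stated carefully. A secondary, very minor point is aligning the service-call-map component, but since both frameworks use the identical deterministic-service machinery (partial functions $\scmap:\scset\to\const$ updated by $\scmap\cup\theta$ with the same ``agree on common domain'' proviso), the $\scmap$ part of the bisimulation is the identity and requires no work. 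With the bisimulation in hand and the \muladom-invariance lemma, the theorem follows: verifying $\Phi$ over the S-KAB $\kabsym$ is the same as verifying $\Phi$ over the constructed S-GKAB $\gkabsym$. \qed
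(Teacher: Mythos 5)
Your proposal is correct and follows essentially the same route as the paper: the identical Golog program $\gwhile{\true}{(a_1|\cdots|a_{\card{\procset}})}$ built from the condition-action rules, an ABox-and-service-call-map-preserving bisimulation that ignores the program component of GKAB states, and an invariance lemma stating that such bisimilar transition systems satisfy the same closed \muladom formulas. Your explicit treatment of the transient process-states of the form $\gemptyprog;\gwhile{\true}{\delta}$ is in fact slightly more careful than the paper's own argument, which absorbs them silently into the bisimulation relation.
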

\begin{proof}[Proof sketch]
We provide a translation $\tkabs$ that, given an S-KAB $\kabsym = \tup{T,
 \initABox, \actset, \procset}$ with transition system $\ts{\kabsym}^{S}$,
generates an S-GKAB $\tkabs(\kabsym) = \tup{T, \initABox, \actset,
 \ginitprog}$. 
Program
$\ginitprog$ is obtained from $\procset$ as $ \ginitprog = \gwhile{\true}{(a_1|a_2|\ldots|a_{\card{\procset}})}$,
where, for each condition-action rule $\carule{Q_i(\vec{x})}{\act_i(\vec{x})} \in
\procset$, we have $a_i = \gact{Q_i(\vec{x})}{\act_i(\vec{x})}$.
The translation produces a program that continues
forever to non-deterministically pick an executable action with
parameters (as specified by $\procset$), or stops if no
action is executable. It can be then proven directly that for every
\muladom property $\Phi$,
  $\ts{\kabsym}^{S} \models
  \Phi$~iff~$\ts{\tkabs(\kabsym)}^{\filter_S} \models \Phi$.
%
%
\end{proof}


\section{Compilation of Inconsistency Management}
\label{sec:compilation}

This section provides a general account of inconsistency management in GKABs,
proving that all inconsistency-aware variants introduced in
Section~\ref{sec:gkab} can be reduced to S-GKABs.

\begin{theorem}
\label{thm:itos}
Verification of \muladom properties over I-GKABs can be recast as verification over S-GKABs.
\end{theorem}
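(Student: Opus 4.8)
The plan is to prove the statement separately for each inconsistency-aware filter, i.e.\ for $X$-GKABs with $X \in \{B, C, E\}$: for each such $X$ I would exhibit a translation $\tgkab$ sending an $X$-GKAB $\gkabsym = \tup{T, \initABox, \actset, \ginitprog}$ to an S-GKAB $\tgkab(\gkabsym)$, together with a translation $\tgprog$ of \muladom formulas, and prove that $\ts{\gkabsym}^{\filter_X} \models \Phi$ iff $\ts{\tgkab(\gkabsym)}^{\filter_S} \models \tgprog(\Phi)$. The tool that makes this possible is FO-rewritability of \dllitea: \emph{(i)} $T$-inconsistency of an ABox $B$ is witnessed by the answers, computed as a plain FO query over $B$, of the perfect rewriting $\rew{q_{\mathrm{unsat}}}$ of the Boolean unsatisfiability query, whose disjuncts have at most two atoms --- so the minimal $T$-inconsistent subsets of $B$ are (at most) pairs of facts selectable by ECQs; and \emph{(ii)} for a fixed tuple of individuals and a fixed predicate, ``$B \cup \{N(c)\}$ is $T$-consistent'' is likewise ECQ-expressible over $B$ with $c$ free, by specialising $\rew{q_{\mathrm{unsat}}}$ to the atoms unifiable with $N(c)$. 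In particular, using the characterisation $\iarset{T, B} = B \setminus \{f \mid f \text{ occurs in a minimal $T$-inconsistent subset of } B\}$, the $c$-repair is a FO-definable function of $B$, whereas in general neither the set $\arset{T,B}$ of $b$-repairs nor the set of bold evolutions is.

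The translation extends the alphabet of $T$ with two fresh, \emph{unconstrained} shadow copies of every concept/role name (one holding the ``full raw update'', one a shrinking ``working set'') plus a few flags, notably one marking \emph{internal} states. The program $\ginitprog$ is rewritten by replacing each atomic action invocation $\gact{Q(\vec p)}{\act(\vec p)}$ with a Golog sub-program that: \emph{(a)} checks $Q$ over the real predicates, issues $\act$'s service calls, and writes the raw update $(A \setminus F^-) \cup F^+$ into both shadow copies, leaving the real predicates untouched and setting the internal flag; \emph{(b)} transforms the shadow working set into the wanted repair; \emph{(c)} overwrites the real predicates with the shadow working set, clears the shadow copies, and resets the internal flag. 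Since the shadow predicates obey no TBox assertion, the possibly $T$-inconsistent intermediate configurations of phase \emph{(b)} live in vacuously $T$-consistent states and are never rejected by the standard execution semantics. Phase \emph{(b)} is instantiated per $X$: for $\filter_C$ it is a single atomic action whose delete-effects are guarded by the ECQs of \emph{(ii)}, realising the FO-definable $c$-repair (here (a)--(c) can even be collapsed into one action); for $\filter_B$ it is a \gWHILE-loop that, while the shadow working set still contains a conflict, \emph{picks} a minimal conflict via the guard query and deletes \emph{non-deterministically} (via $\delta_1 | \delta_2$) one of its two facts, followed by a second \gWHILE-loop that re-adds any fact of the ``full raw update'' whose re-insertion into the working set keeps $T$-consistency, iterating to a fixpoint; for $\filter_E$ the same pattern is used with $F^+$-facts protected from deletion and only $A\setminus F^-$-facts eligible for re-adding. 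I would then show that the two-loop scheme produces over all runs \emph{exactly} the maximal $T$-consistent subsets of the raw update: the removal loop alone only guarantees a vertex cover of the conflict graph, possibly non-minimal, and the add-back loop both repairs this (soundness) and keeps every $b$-repair reachable (completeness).

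The formula translation $\tgprog$ is the identity on EQL atoms and second-order variables, commutes with $\lnot, \land, \exists, \mu$, and pre-compiles the modality so that a macro-step of $\ts{\gkabsym}^{\filter_X}$ --- which in $\ts{\tgkab(\gkabsym)}^{\filter_S}$ is one transition into internal mode, then a finite block of internal transitions, then one ``commit'' transition --- is crossed in one shot: $\tgprog(\DIAM{\Phi}) = \DIAM{\bigl(\mu Z.\ (\mathsf{internal} \land \DIAM{Z}) \lor (\lnot\mathsf{internal} \land \tgprog(\Phi))\bigr)}$, with $\BOX{}$ handled by duality; the least fixpoint is legitimate because every internal block is finite. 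Finally I would establish, by a bisimulation argument in the spirit of Theorem~\ref{thm:stog} --- relating each state of $\ts{\gkabsym}^{\filter_X}$ to the committed states of $\ts{\tgkab(\gkabsym)}^{\filter_S}$ having the same ABox on the real predicates, and matching transitions modulo the internal blocks that $\tgprog$ has already absorbed --- that \muladom model checking is preserved, giving $\ts{\gkabsym}^{\filter_X} \models \Phi$ iff $\ts{\tgkab(\gkabsym)}^{\filter_S} \models \tgprog(\Phi)$.

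The hard part will be the $b$-repair (and bold-evolution) case: making the two-phase loop enumerate precisely the maximal $T$-consistent subsets --- neither overshooting to non-maximal ones (which a naive ``delete until consistent'' loop would do, hence the add-back phase) nor missing any --- and, simultaneously, setting up $\tgprog$ so that the variable-length internal repair blocks collapse into single macro-steps without perturbing the fixpoint formulas or letting EQL queries be evaluated at internal states. The $c$-repair case, and to a large extent the bold-evolution case, should be comparatively routine, resting directly on FO-rewritability.
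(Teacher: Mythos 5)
Your overall architecture --- a case split over $\filter_B$, $\filter_C$, $\filter_E$, each handled by compiling the repair step into a Golog sub-program appended to every atomic action invocation, plus a formula translation that collapses the resulting block of auxiliary transitions into one macro-step and a bisimulation argument --- is exactly the paper's strategy, and your reliance on FO-rewritability of inconsistency detection matches the paper's use of $\qunsatecq{T}$. The concrete mechanisms differ in three places. First, to keep intermediate states from being rejected you introduce unconstrained shadow copies of all predicates and repair there before committing; the paper instead simply drops the negative inclusions and functionality assertions, keeping only $T_p$ in the target S-GKAB (certain answers over consistent ABoxes depend only on $T_p$, so nothing is lost), and runs the repair directly on the real predicates between states marked by $\temp$. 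Second, your b-repair program is a two-phase ``delete one member of a conflict until consistent, then add back to maximality'' loop, whereas the paper's $\delta_b^T$ is a single deletion loop; your worry that a pure deletion loop can overshoot to a non-maximal consistent subset (cascaded conflicts such as $B_1 \ISA \neg B_2$ and $B_2 \ISA \neg B_3$ with all three facts present) is well taken --- the paper's maximality argument for its one-phase loop is only sketched --- and your add-back phase, justified exactly as you outline (in \dllitea all minimal conflicts are pairs, so a consistent set to which no single fact can be consistently added is maximal, and phase one can always choose to delete the fact outside a target repair), buys a cleanly provable enumeration of $\arset{T,\cdot}$ at the cost of a longer internal block. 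Third, your single-diamond $\DIAM{\mu Z.\ldots}$ versus the paper's $\DIAM{\DIAM{\mu Z.\ldots}}$ is an inessential bookkeeping difference.

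The one genuine gap is in the E-case. The filter $\filter_E$ is defined \emph{only when} $F^+$ is $T$-consistent; when it is not, the E-GKAB has no successor for that ground action, so your translation must abort the whole macro-step rather than repair it. Your sketch writes the raw update into the shadow copies unconditionally and then runs the protected-deletion loop, which would manufacture successors the E-GKAB does not have. The paper solves this by subjecting the duplicated vocabulary $T^n$ (which records exactly the newly added facts) to the negative constraints of $T$, so that an inconsistent $F^+$ makes the successor state itself inconsistent and the standard semantics rejects it; you would need either this trick or an explicit ECQ guard on the commit action testing $T$-consistency of the shadow copy of $F^+$. Note also that $\evol(T,A,F^+,F^-)$ is deterministic here (the pre-update ABox is consistent, so every conflict involves a new fact and priority resolves it), so no non-deterministic loop is needed in that case --- the paper uses a single action $\act^T_e$, mirroring the C-case.
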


The remainder of this section is devoted to prove this result, case by
case. Our general strategy is to show that S-GKABs are sufficiently expressive to
incorporate the repair-based approaches of
Section~\ref{sec:inconsistency}, so that an action executed
under a certain inconsistency
semantics can be compiled into a Golog program that applies the action
with the standard semantics,
and then explicitly handles the inconsistency, if needed.


We start by recalling that checking whether a \dllitea KB $\tup{T,A}$ is
inconsistent is FO rewritable, i.e., can be reduced to evaluating a boolean
query $\qunsatecq{T}$ over $A$ (interpreted as a database) \cite{CDLLR07}. To
express such queries compactly, we make use of the following abbreviations. For
role $R = P^-$, atom $R(x,y)$ denotes $P(y,x)$. For concept $B = \exists P$,
atom $B(x)$ denotes $P(x,\_)$, where `$\_$' stands for an anonymous
existentially quantified variable. Similarly, for $B = \exists P^-$, atom
$B(x)$ denotes $P(\_,x)$.

In particular, the boolean query $\qunsatecq{T}$ is:
\begin{center}
$
\qunsatecq{T} =\begin{array}[t]{@{}l@{}}
  \bigvee_{ \funct{R} \in T } \exists x,y,z.\qunsatf(\funct{R}, x,
  y, z)   \lor{}\\
\bigvee_{T \models B_1 \ISA \neg B_2} \exists x.\qunsatn(B_1 \ISA \neg B_2, x)  \lor{}\\
 \bigvee_{T \models R_1 \ISA \neg R_2} \exists x,y.\qunsatn(R_1 \ISA \neg R_2, x, y)
\end{array}
$
\end{center}
where:
\begin{compactitem}
\item $\qunsatf(\funct{R},x,y,z) = R(x,y) \land R(x,z) \land \neg [y =
z]$;
\item $\qunsatn(B_1 \ISA \neg B_2, x) = B_1(x) \land B_2(x)$;
\item  $\qunsatn(R_1 \ISA \neg R_2,x,y) = R_1(x,y) \land
R_2(x,y)$.
\end{compactitem}



\subsection{From B-GKABs to S-GKABs}\label{BGKABToSGKAB}

To encode B-GKABs into S-GKABs, we use a special fact $\temp$ to distinguish
\emph{stable} states, where an atomic action can be applied, from intermediate
states used by the S-GKABs to incrementally remove inconsistent facts from the
ABox.  Stable/repair states are marked by the absence/presence of
$\temp$. 
To set/unset $\temp$, we
define set $\actset_{rep} = \set{ \act^+_{rep}(), \act^-_{rep}()}$
of actions, where $\act^+_{rep}() :\set{\map{\true}{\add \set{\temp}}}$, and
$\act^-_{rep}() :\set{\map{\true}{\del \set{\temp}}}$.

Given a B-GKAB $\gkabsym = \tup{T, \initABox, \actset, \ginitprog}$, we
define the \emph{set $\actset_b^T$ of b-repair actions} and the set
$\setinvocation_b^T$ of \emph{b-repair atomic action invocations} as follows.
For each functionality assertion $\funct{R} \in T$, we include in
$\actset_b^T$ and $\setinvocation_b^T$ respectively:
\begin{compactitem}
\item   $\gact{\exists z.\qunsatf(\funct{R}, x, y, z)}{\act_{F}(x,y)} \in
  \setinvocation_b^T$, and 
\item  $\act_{F}(x,y)\!:\!\set{\map{R(x,z) \wedge \neg
      [z = y]}{\del \set{R( x, z)}}} \in \actset_b^T $
\end{compactitem}
This invocation repairs an inconsistency related to $\funct{R}$ by
removing all tuples causing the inconsistency, except one. 
For each negative concept inclusion $B_1 \ISA \neg B_2$ s.t.\ $T \models B_1
\ISA \neg B_2$, we include in $\actset_b^T$ and $\setinvocation_b^T$
respectively:
  \begin{compactitem}
  \item $\gact{\qunsatn(B_1 \ISA \neg B_2, x)}{\act_{B_1}(x)} \in
    \setinvocation_b^T$, and 
  \item     $\act_{B_1}(x):\set{\map{\true}{\del \set{B_1(x)}}} \in
    \actset_b^T$
  \end{compactitem}
This invocation repairs an inconsistency related to $B_1 \ISA \neg
B_2$ by removing an individual that is both in $B_1$ and $B_2$ from
$B_1$. Similarly for negative role inclusions.
Given $\setinvocation_b^T  = \set{a_1,\ldots,a_n}$, we then define the \emph{b-repair program}
\begin{center}
  $\delta_b^T  =\gwhile{\qunsatecq{T}}{(a_{1}|a_{2}|\ldots|a_n)}$,
\end{center}
Intuitively, $\delta_b^T$ iterates while the ABox is inconsistent, and
at each iteration, non-deterministically picks one of the sources of
inconsistency, and removes one or more facts causing it. Consequently,
the loop is guaranteed to terminate, in a state that corresponds to
one of the b-repairs of the initial ABox.

 With this machinery at hand, we are ready to define a translation
 $\tgkabb$ that, given $\gkabsym$, produces  S-GKAB
 $\tgkabb(\gkabsym) = \tup{T_p, \initABox, \actset \cup \actset_b^T
   \cup \actset_{rep}, \ginitprog'}$, where only the positive
 inclusion assertions $T_p$ of
 the original TBox $T$ are maintained (guaranteeing that
 $\tgkabb(\gkabsym)$ never encounters inconsistency). Program $\ginitprog'$ is obtained
 from program $\ginitprog$ of $\gkabsym$ by replacing each
 occurrence of an
atomic action invocation $\gact{Q(\vec{p})}{\act(\vec{p})}$ with
\begin{center}
  $
\gact{Q(\vec{p})}{\act(\vec{p})} ; \gact{\true}{\act^+_{rep}()} ;
 \delta^T_b ; \gact{\true}{\act^-_{rep}()}
$
\end{center}
This program concatenates the original action invocation with a corresponding
``repair'' phase. Obviously, this means that when an inconsistent ABox is
produced, a single transition in $\gkabsym$ corresponds to a sequence of
transitions in $\tgkabb(\gkabsym)$. Hence, we need to introduce a translation
$\tforb$ that takes a \muladom formula $\Phi$ over $\gkabsym$ and produces a
corresponding formula over $\tgkabb(\gkabsym)$. This is done by first
obtaining formula $\Phi' = \nnf(\Phi)$, where $\nnf(\Phi)$ denotes the
\emph{negation normal form} of $\Phi$. Then, every subformula of $\Phi$ of the
form $\DIAM{\Psi}$ becomes $\DIAM{\DIAM{\mu Z.((\temp \wedge \DIAM{Z}) \vee
  (\neg\temp \wedge \tforb(\Psi)))}}$, so as to translate a next-state
condition over $\gkabsym$ into reachability of the next stable state over
$\tgkabb(\gkabsym)$. Similarly for $\BOX{\Psi}$.

With these two translations at hand, we can show that $\ts{\gkabsym}^{\filter_B}
\models \Phi$ iff $\ts{\tgkabb(\gkabsym)}^{\filter_S} \models \tforb(\Phi)$.

\subsection{From C-GKABs to S-GKABs}\label{CGKABToSGKAB}
Making inconsistency management for C-GKABs explicit requires just a
single action, which removes all individuals that are involved in some
form of inconsistency. Hence, given a TBox $T$, we define a 0-ary
\emph{c-repair action $\act^T_c$}, where
$\eff{\act^T_c}$ is the smallest set containing the following effects:
\begin{compactitem}
\item for each assertion $\funct{R} \in T$,\\
$
  \map{\qunsatf(\funct{R}, x, y, z)}
  {\set{\del \set{R(x, y),R(x, z)}}}
$
\item for each assertion $B_1 \ISA \neg B_2$ s.t.\ $T
  \models B_1 \ISA \neg B_2$,\\
$
  \map{\qunsatn(B_1 \ISA \neg B_2, x)}
 {\set{\del \set{B_1(x), B_2(x)} }};
$
\item similarly for negative role inclusions.
\end{compactitem}
Notice that all effects are guarded by queries that extract only
individuals involved in an inconsistency. Hence, other facts are kept
unaltered, which also means that $\act^T_c$ is a no-op when applied
over a $T$-consistent ABox.
We define a translation $\tgkabc$ that, given a C-GKAB $\gkabsym = \tup{T,
 \initABox, \actset, \ginitprog}$, generates an S-GKAB $\tgkabc(\gkabsym) =
\tup{T_p, \initABox, \actset \cup \set{\act^T_c}, \ginitprog'}$, which, as for
B-GKABs, only maintains positive inclusion assertions of $T$. Program
$\ginitprog'$ is obtained from $\ginitprog$ by replacing each occurrence of
  an atomic action invocation of the form
  $\gact{Q(\vec{p})}{\act(\vec{p})}$ with
  $\gact{Q(\vec{p})}{\act(\vec{p})};\gact{\true}{\act^T_c()}$. This
  attests that each transition in $\gkabsym$ corresponds to a sequence
  of two transitions in $\tgkabc(\gkabsym)$: the first mimics the action
  execution, while the second computes the c-repair of the obtained
  ABox. 

  A \muladom property $\Phi$ over $\gkabsym$ can then be recast as a
  corresponding property over $\tgkabc(\gkabsym)$ that \asd{simply}
  substitutes each subformula $\DIAM{\Psi}$ of $\Phi$ with
  $\DIAM{\DIAM{\Psi}}$ (similarly for $\BOX{\Phi}$). By \asr{denoting
    this translation with $\tford$,} {denoting with $\tford$ this
    formula translation,} we \asr{get}{obtain
  that} $\ts{\gkabsym}^{\filter_C} \models \Phi$ iff
  $\ts{\tgkabc(\gkabsym)}^{\filter_S} \models \tford(\Phi)$.




\subsection{From E-GKABs to S-GKABs}\label{EGKABToSGKAB}
Differently from the case of B-GKABs and C-GKABs, E-GKABs pose two
challenges:
\begin{inparaenum}[\it (i)]
\item  when applying an atomic action (and managing the possibly
  arising inconsistency) it is necessary to
distinguish those facts that are newly introduced by the action from
those already present in the system;
\item the evolution semantics can be applied only if the facts to be added are
  consistent with the TBox, and hence an additional check is required to abort
  the action execution if this is not the case.
\end{inparaenum}
To this aim, given a TBox $T$, we duplicate concepts and roles in $T$,
introducing a fresh concept name $N^{n}$ for every concept name $N$ in $T$
(similarly for roles). The key idea is to insert those individuals that are
added to $N$ also in $N^{n}$, so as to trace that they are part of the update.

The first issue described above is then tackled by compiling the bold evolution
semantics into a 0-ary \emph{evolution action} $\act^T_e$, where
$\eff{\act^T_e}$ is the smallest set of effects containing:
\begin{compactitem}[$\bullet$]
\item for each assertion $\funct{R} \in T$,\\
$
  \map{\exists z.\qunsatf(\funct{R}, x, y, z) \land R^{n}(x, y)}{\set{\del \set{R(x, z)}}}
$
\item for each assertion $B_1 \!\ISA \neg B_2$ s.t.\ $T \models B_1 \ISA \neg B_2$,\\
$\map{\qunsatn(B_1 \ISA \neg B_2, x) \land B_1^{n}(x)}{\set{\del \set{B_2(x)}}} $;
%
%
%
\item similarly for negative role inclusion assertions;
\item for each concept name $N$, $\map{N^{n}(x)}{\set{\del \set{N^{n}(x)}}}$;
\item similarly for role names.
%
\end{compactitem}
These effects mirror those of Section~\ref{CGKABToSGKAB}, with the
difference that they asymmetrically remove old facts when
inconsistency arises. The last two bullets guarantee that the
content of concept and role names tracking the newly added facts are flushed.
We then define a translation $\tgkabe$ that, given an
E-GKAB $\gkabsym = \tup{T,
  \initABox, \actset, \ginitprog}$, generates an S-GKAB $\tgkabe(\gkabsym) = \tup{T_p \cup T^n, \initABox,
  \actset' \cup \set{\act^T_e}, \ginitprog'}$, where:
\begin{compactitem}[$\bullet$]
\item $T^n$ is obtained from $T$ by renaming each concept name $N$
in $T$ into $N^n$ (similarly for roles). In this way, the original
concepts/roles are only subject in $\tgkabe(\gkabsym)$ to the
positive inclusion assertions of $T$, while concepts/roles
tracking newly inserted facts are subject also to negative constraints. This blocks the
generation of the successor state when the facts to be added to the current ABox are $T$-inconsistent.
\item $\actset'$ is obtained by translating each action in $\act(\vec{p}) \in
  \actset$ into action $\act'(\vec{p})$, such that for each effect
  $\map{Q}{\add F^+, \del F^-} \in \eff{\act}$, we have $\map{Q}{\add
    F^+ \cup {F^+}^n,  \del F^-} \in \eff{\act'}$ where ${F^+}^n$
  duplicates $F^+$ by using the vocabulary for newly introduced facts.
%
\item $\ginitprog'$ is obtained from $\ginitprog$ by replacing each
  action invocation $\gact{Q(\vec{p})}{\act(\vec{p})}$
  with $\gact{Q(\vec{p})}{\act'(\vec{p})};\gact{\true}{\act^T_e()}$.
  \end{compactitem}
By exploiting the same \muladom translation used in
Section~\ref{CGKABToSGKAB}, we obtain that  $\ts{\gkabsym}^{\filter_E} \models \Phi$ iff
     $\ts{\tgkabe(\gkabsym)}^{\filter_S} \models \tford(\Phi)$.





\section{From Golog to Standard KABs}
\label{sec:gtos}

We close our tour by showing that S-GKABs can be compiled into the normal
S-KABs of \cite{BCMD*13,CKMSZ13}.

\begin{theorem}
\label{thm:gtos}
Verification of \muladom properties over S-GKABs can be recast as
verification over S-KABs.
\end{theorem}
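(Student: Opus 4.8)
The plan is to provide a translation $\tgprog$ (paired with a formula translation) that, given an S-GKAB $\gkabsym = \tup{T,\initABox,\actset,\ginitprog}$, produces an S-KAB $\tgprog(\gkabsym) = \tup{T',\initABox',\actset',\procset'}$ such that for every \muladom formula $\Phi$, we have $\ts{\gkabsym}^{\filter_S} \models \Phi$ iff $\ts{\tgprog(\gkabsym)}^{S} \models \tforg(\Phi)$. The central conceptual point is that an S-KAB has no explicit program component in its states: the "process-state" $\delta$ carried around by a GKAB must be reified \emph{into the data component}, i.e., encoded through auxiliary ABox facts over a fresh vocabulary, while the Golog program structure itself (which is finite and static) gets compiled into the condition-action rules $\procset'$.

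First I would set up the reification of program-states. Since the GKAB program $\ginitprog$ is a fixed finite term, its set of reachable sub-program-states (up to the syntactic rewriting induced by $\gprogtrans{}$, including the unfolding of \textbf{while}) is finite; I would introduce a fresh individual (or a fresh 0-ary concept) for each such syntactic program-state, together with concepts like $\mathsf{State}$, or a marker fact naming "the program currently to be executed", so that the current $\delta$ is recorded in the ABox. Then I would define $\actset'$ and $\procset'$ so that: (i) for each atomic action invocation $\gact{Q(\vec{p})}{\act(\vec{p})}$ occurring in $\ginitprog$, there is a condition-action rule whose guard conjoins $Q(\vec{p})$ with the ECQ checking that the program-marker fact names a state at which this invocation is enabled, and whose action is a modified version of $\act$ that additionally updates the program-marker facts to reflect the successor program-state; (ii) the control-flow constructs $|$, $;$, \textbf{if}, \textbf{while} and the notion of $\gfin$ (final state) are all simulated by purely "bookkeeping" actions/rules that only manipulate the program-marker facts according to the inductive rules defining $\gprogtrans{}$ and $\gfin$ in Section \ref{sec:gkab} — crucially, the conditions tested there are all ECQs over the current ABox, exactly what condition-action rules in KABs can test. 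I would take $T' = T$ extended with whatever (satisfiable, constraint-free) declarations the auxiliary vocabulary needs, $\initABox' = \initABox$ augmented with the marker fact for $\ginitprog$, and I must check that no new inconsistencies are introduced so that the standard (naive) semantics of the S-KAB faithfully mirrors $\filter_S$.

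The formula translation $\tforg$ must then bridge the mismatch that a single GKAB transition may correspond to several S-KAB transitions — the bookkeeping steps that resolve $;$, \textbf{if}, \textbf{while} before an actual $\tell$ happens. I would handle this exactly as in Section \ref{BGKABToSGKAB}: put $\Phi$ in negation normal form and replace each $\DIAM{\Psi}$ by a formula that asserts reachability, through a (finite, bounded-length) chain of bookkeeping transitions, of the next "genuine" state where a $\tell$ has just been applied — marked, as in the B-GKAB construction, by absence/presence of an auxiliary flag concept — and symmetrically for $\BOX{\Psi}$; one then argues the two transition systems are related by a bisimulation modulo these invisible bookkeeping steps, which preserves \muladom truth. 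The main obstacle I anticipate is getting this bisimulation argument fully correct: one must show the data-state ABoxes coincide (modulo the auxiliary vocabulary, which $\tforg$ never mentions, so it is invisible to the EQL queries inside $\Phi$), that the service call maps stay synchronized, that every GKAB run is matched and vice versa (including that the bookkeeping chains always terminate, which follows because the program term strictly "shrinks" except across \textbf{while}-unfoldings, whose guards are ECQs that behave identically on both sides), and that final/stuck states correspond — this requires care precisely because $\gfin$ is defined recursively over the program structure and must be recomputed via the reified marker facts.
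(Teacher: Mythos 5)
Your plan coincides with the paper's own proof: the paper reifies the current (sub)program into the ABox via flag assertions over a fresh vocabulary (start/end markers attached to uniquely identified subprogram occurrences), compiles the control-flow constructs into bookkeeping condition-action rules whose intermediate states carry a special marker $\tmp$, translates formulas by replacing $\DIAM{\Psi}$ with a least-fixpoint formula that skips $\tmp$-marked states, and establishes a ``jumping'' bisimulation (one GKAB step versus a chain of bookkeeping steps followed by a genuine action) to transfer \muladom satisfaction. The subtleties you flag -- termination of the bookkeeping chains, the recursive definition of final states (handled in the paper by a dedicated no-op marker for \textbf{while} bodies that complete without executing an action), and invisibility of the auxiliary vocabulary to the queries in $\Phi$ -- are exactly the points the paper's lemmas address.
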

\begin{proof}[Proof sketch]
  \asr{We}{As before, we} 
  introduce a translation from S-GKABs to S-KABs, and from \muladom
  properties over S-GKABs to corresponding properties over S-KABs, in
  such a way that verification in the first setting can be reduced to
  verification in the second setting. The translation is quite
  involved,
\asr{for space reasons, we refer to
  Section \ref{sec:translation-sgkab-to-skab} in the Appendix for details.}{ so for space reasons we
  only sketch it here, referring to the online appendix for details.}
%
\asd{ The key idea is to inductively interpret a Golog program as a
  structure consisting of nested processes, suitably composed through
  the Golog operators.  We mark the starting and ending point of each
  Golog subprogram, and use accessory facts in the ABox to track
  states corresponding to subprograms.  Each subprogram is then
  inductively translated into a set of condition-action rules encoding
  its entrance and termination conditions.}
%
\end{proof}
From Theorems~\ref{thm:stog} and \ref{thm:gtos}, we 
obtain that  S-KABs and S-GKABs are expressively equivalent.
From Theorems~\ref{thm:itos} and~\ref{thm:gtos}, we get
our second major result: inconsistency-management can be
compiled into an S-KAB by concatenating the two translations
from I-GKABs to S-GKABs, and then to S-KABs.
\begin{theorem}
\label{thm:itoverys}
 Verification of \muladom properties over I-GKABs can be recast as
verification over S-KABs.
\end{theorem}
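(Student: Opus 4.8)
The plan is to obtain Theorem~\ref{thm:itoverys} by \emph{composing} the two reductions already established: Theorem~\ref{thm:itos}, which recasts verification of \muladom properties over I-GKABs as verification over S-GKABs, and Theorem~\ref{thm:gtos}, which recasts verification over S-GKABs as verification over S-KABs. Concretely, given an I-GKAB $\gkabsym$ (that is, a B-GKAB, C-GKAB, or E-GKAB) and a \muladom property $\Phi$, I would first apply the relevant translation from Section~\ref{sec:compilation} — namely $\tgkabb$, $\tgkabc$, or $\tgkabe$ on the GKAB side, together with the corresponding formula translation $\tforb$ or $\tford$ on the property side — to produce an S-GKAB $\gkabsym' = \tgkabb(\gkabsym)$ (resp.\ $\tgkabc(\gkabsym)$, $\tgkabe(\gkabsym)$) and a property $\Phi' = \tforb(\Phi)$ (resp.\ $\tford(\Phi)$) such that $\ts{\gkabsym}^{\filter} \models \Phi$ iff $\ts{\gkabsym'}^{\filter_S} \models \Phi'$. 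Then I would apply the translation $\tgprog$ underlying Theorem~\ref{thm:gtos} to $\gkabsym'$, and its associated \muladom translation, obtaining an S-KAB $\kabsym'' = \tgprog(\gkabsym')$ and a property $\Phi''$ with $\ts{\gkabsym'}^{\filter_S} \models \Phi'$ iff $\ts{\kabsym''}^{S} \models \Phi''$. Chaining the two equivalences yields $\ts{\gkabsym}^{\filter} \models \Phi$ iff $\ts{\kabsym''}^{S} \models \Phi''$, which is exactly the claim.

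The remaining work is to check that the composition is \emph{well-formed}: the output of the first stage must be a legitimate input to the second. This amounts to observing that $\tgkabb$, $\tgkabc$, and $\tgkabe$ each produce a bona fide S-GKAB — with a \dllitea TBox (in fact only the positive inclusions $T_p$, augmented in the E-case with the renamed TBox $T^n$), a finite action set, and a well-formed Golog program built from the constructs of Section~\ref{sec:gkab} — and likewise that the formula translations $\tforb$ and $\tford$ produce genuine \muladom formulas (they operate on the negation normal form and replace each $\DIAM{\cdot}$/$\BOX{\cdot}$ subformula by a fixed \muladom context, so closedness and well-formedness are preserved). Since Theorem~\ref{thm:gtos} is stated for arbitrary S-GKABs, it applies verbatim to $\gkabsym'$, and no compatibility gap arises. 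I would also note that the composed formula translation is effective and the composed system is finitely presented, so the reduction is a genuine algorithmic recasting, not merely a semantic equivalence.

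The only genuinely delicate point — and the one I would flag as the main obstacle — is making sure the three cases of Theorem~\ref{thm:itos} are uniformly covered and that the per-case formula translations compose cleanly with the S-GKAB-to-S-KAB formula translation without interfering semantically: e.g., the fresh auxiliary vocabulary introduced by $\tgkabb$ ($\temp$, the actions $\actset_{rep}$, $\actset_b^T$) and by $\tgkabe$ (the renamed concepts/roles $N^n$, the action $\act^T_e$) must not clash with the auxiliary facts and condition-action rules that $\tgprog$ introduces to simulate the Golog control flow, and the nested fixpoint contexts produced by $\tforb$ must survive the further rewriting done in the S-KAB formula translation. This is a routine but careful bookkeeping argument: one picks the fresh symbols in disjoint namespaces and observes that each translation step only inspects/modifies its own reserved vocabulary, leaving the other intact. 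Once this is in place, the proof is simply: \emph{``By Theorem~\ref{thm:itos} followed by Theorem~\ref{thm:gtos}, composing the corresponding system and property translations.''}
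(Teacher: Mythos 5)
Your proposal is correct and follows essentially the same route as the paper: Theorem~\ref{thm:itoverys} is obtained there precisely by concatenating the translation of Theorem~\ref{thm:itos} (I-GKABs to S-GKABs, with the formula translations $\tforb$/$\tford$) with that of Theorem~\ref{thm:gtos} (S-GKABs to S-KABs), exactly as you describe. The freshness/bookkeeping concerns you flag are handled in the paper by reserving the special marker concept names outside every TBox vocabulary and forbidding them in \muladom properties, so no further argument is needed.
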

Even more interesting is the fact that the semantic property of
\emph{run-boundedness} \cite{BCDDM13,BCMD*13} is preserved by all
translations presented in this paper. Intuitively, run-boundedness
requires that every run of the system cumulatively encounters at most
a bounded number of individuals. Unboundedly many individuals can
still be present in the overall system, provided that they do not
accumulate in the same run.  Thanks to the preservation of
run-boundedness, and to the compilation of I-GKABs into S-KABs, we
get:
\begin{theorem}
  Verification of \muladom properties over run-bounded I-GKABs is
  decidable, and reducible to standard $\mu$-calculus finite-state
  model checking.
\end{theorem}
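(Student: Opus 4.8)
The plan is to obtain the statement as a corollary of the reductions already established, chained with the decidability result for run-bounded S-KABs (Theorem~\ref{thm:dec}). Concretely, given a run-bounded I-GKAB $\gkabsym$ and a \muladom property $\Phi$, I would first apply the compilation of Section~\ref{sec:compilation} appropriate to the adopted inconsistency semantics ($\tgkabb$, $\tgkabc$, or $\tgkabe$) to obtain an S-GKAB together with the translated formula, and then apply the Golog-to-KAB translation of Theorem~\ref{thm:gtos} to land among S-KABs; composing the two formula translations yields a \muladom property over the resulting S-KAB whose truth value coincides with that of $\Phi$, exactly as in Theorem~\ref{thm:itoverys}. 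Once we are dealing with an S-KAB, Theorem~\ref{thm:dec} gives decidability and the reduction to finite-state model checking of propositional $\mu$-calculus --- provided the S-KAB we produced is still run-bounded.

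Hence the substantive work is to show that \emph{run-boundedness is preserved by all the translations of this paper}. For the I-GKAB-to-S-GKAB step I would argue that none of the auxiliary ingredients introduced by $\tgkabb$, $\tgkabc$, $\tgkabe$ injects genuinely new individuals into a run: the repair and evolution actions are purely eliminative (they only \emph{delete} facts), the marker $\temp$ and the bookkeeping concepts/roles $N^n$ are built over a fixed finite vocabulary and reuse individuals already present in the ABox, and the only freshly invoked service calls are exactly those that the original atomic actions would have issued. Thus every run of $\ts{\tgkabb(\gkabsym)}^{\filter_S}$ (resp.\ for $\tgkabc$, $\tgkabe$) arises from a run of $\ts{\gkabsym}^{\filter_B}$ (resp.\ $\ts{\gkabsym}^{\filter_C}$, $\ts{\gkabsym}^{\filter_E}$) by interleaving finitely many ``repair-phase'' states, each of whose ABox is --- up to the bounded extra vocabulary --- contained in the ABox of the surrounding stable state; so the cumulative active domain along a run grows by at most a constant, and run-boundedness transfers with the same bound plus that constant. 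The S-GKAB-to-S-KAB step is analogous: the accessory facts tracking the nested-process structure of the Golog program range over a finite set of constants fixed statically by $\ginitprog$, so again the active domain of a run is affected only by a bounded amount.

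The main obstacle I anticipate is precisely this active-domain bookkeeping across the intermediate states created by the compilations: one must make rigorous that the repair loop $\delta_b^T$ (and its C- and E-analogues) always terminates after finitely many steps and never reintroduces a previously deleted individual, so that a single transition of the source transition system expands into a bounded-length, active-domain-non-increasing segment of the target one, and hence a run of the source maps to a run of the target with bounded cumulative active domain whenever the source run has one. This is conceptually clear, since repair actions delete facts and the set of sources of inconsistency induced by $T$ is finite, but it has to be carried out uniformly for $\tgkabb$, $\tgkabc$, $\tgkabe$ and then composed with the Golog-to-KAB translation. Having established preservation of run-boundedness, I would conclude as follows: by Theorem~\ref{thm:itoverys} verification of $\Phi$ over the run-bounded I-GKAB reduces to verification of a \muladom property over a run-bounded S-KAB, and by Theorem~\ref{thm:dec} the latter is decidable and reducible to finite-state model checking of propositional $\mu$-calculus.
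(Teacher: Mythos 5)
Your proposal is correct and follows essentially the same route as the paper: chain the I-GKAB$\to$S-GKAB and S-GKAB$\to$S-KAB reductions of Theorem~\ref{thm:itoverys}, show that each translation preserves run-boundedness (inconsistency-management actions only delete facts; the Golog compilation adds only boundedly many fresh constants), and invoke the known decidability of verification for run-bounded S-KABs. The only cosmetic difference is that the paper attributes that last step directly to the results of \cite{BCDDM13,BCMD*13} rather than to an internally stated theorem.
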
\label{thm:i-to-kab}
\begin{proof}[Proof sketch]
  \asr{The claim follows by combining the fact that all translations
    preserve run-boundedness, Theorem~\ref{thm:itoverys}, and the
    results in \cite{BCDDM13,BCMD*13} for run-bounded S-KABs.}{The
    translation from I-GKABs to S-GKABs preserves run-boundedness,
    since the actions introduced to manage inconsistency never inject
    new individuals, but only remove facts causing
    inconsistency. Run-boundedness is also preserved from S-GKABs to
    S-KABs, since only a bounded number of new individuals are
    introduced, when emulating the Golog program with condition-action
    rules. The claim follows by combining Theorem~\ref{thm:itoverys}
    with the results in \cite{BCDDM13,BCMD*13} for run-bounded
    S-KABs.}
\end{proof}


\section{Conclusion}

We introduced GKABs, which extend KABs with Golog-inspired high-level programs,
and provided a parametric execution semantics supporting an elegant treatment
of inconsistency. We have shown that verification of rich temporal properties
over (inconsistency-aware) GKABs can be recast as verification over standard
KABs, by encoding the semantics of inconsistency in terms of Golog programs and
specific inconsistency-management actions, and Golog programs into standard KAB
condition-action rules. An overview of our reductions is depicted below. Our
approach is very general, and can be seamlessly extended to account for other
mechanisms for handling inconsistency, and more in general data cleaning.
\begin{center}
  \includegraphics[width=0.30\textwidth]{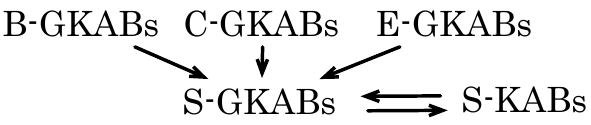}
\end{center}


{
\smallskip
\noindent
\textbf{Acknowledgments.} 
This research has been partially supported by the EU IP project Optique
(\emph{Scalable End-user Access to Big Data}), grant agreement
n.~FP7-318338, and by the UNIBZ internal project KENDO
(\emph{Knowledge-driven ENterprise Distributed cOmputing}).
}

{\small
\bibliographystyle{named}

\bibliography{main-bib}
}

\clearpage
\appendix
\section{Some Additional Basic Notions and Notation Conventions}


Given a function $f$, we often write $\tap{a \ra b}$ when $f(a) = b$
(i.e., $f$ maps $a$ to $b$). We write $\domain{f}$ to denote the
domain of $f$.

Given a substitution $\sigma$, we write $ x/c \in \sigma $ if
$\sigma(x) = c$, i.e., $\sigma$ maps $x$ into $c \in \const$ (or
sometimes we also say $\sigma$ substitutes $x$ with $c \in \const$).
We write $\sigma[x/c]$ to denote a new substitution obtained from
$\sigma$ such that $\sigma[x/c](x) = c$ and
$\sigma[x/c](y) = \sigma(y)$ (for $y \neq x$).

We call the set of concept and role names that appear in TBox $T$ a
\emph{vocabulary of TBox $T$}, denoted by $\voc(T)$.  W.l.o.g. given a
TBox $T$, we assume that $\voc(T)$ contains all possible concept and
role names. Notice that we can simply add an assertion
$N \sqsubseteq N$ (resp. $P \sqsubseteq P$) into the TBox $T$ in order
to add a concept name $N$ (resp. role name $P$) inside $\voc(T)$ such
that $\voc(T)$ contains all possible concept and role names, and
without changing the expected set of models of the TBox $T$ (hence,
preserving the deductive closures of $T$).
Moreover, we call an ABox \emph{$A$ is over $\voc(T)$} if it consists
of ABox assertions of the form either $N(o_1)$ or $P(o_1, o_2)$ where
$N, P \in \voc(T)$.

We now define some abbreviations for ABox assertions that we will use
later in order to have a compact presentation.

\begin{definition}[Abbreviations for ABox assertion]\label{def:abbreviation-abox-assertion}
  We define some notations to compactly express various ABox
  assertions as follows:
  \begin{compactitem}
  \item Given a TBox assertion $B \sqsubseteq B_1$ or $B \sqsubseteq \neg B_1$, an assertion
    $B(c)$ denotes
    \begin{compactitem}
    \item $N(c)$ if $B = N$,
    \item $P(c,c')$ if $B = \exists P$,
    \item $P(c',c)$ if $B = \exists P^-$,
    \end{compactitem}
    where `$c'$' is a constant;
  \item Given a TBox assertion $R \sqsubseteq R_1$ or
    $R \sqsubseteq \neg R_1$, an assertion $R(c_1, c_2)$ denotes
    \begin{compactitem}
    \item $P(c_1, c_2)$ if $R = P$,
    \item $P(c_2, c_1)$ if $R = P^-$.
    \end{compactitem}
  \end{compactitem}
  \ \
\end{definition}

Given a Golog program $\delta$ we define the notion of sub-programs of
$\delta$ as follows.

\begin{definition}[Sub-program]
  Given a program $\delta$, we define the notion of a
  \emph{sub-program} of $\delta$ inductively as follows:
  \begin{compactitem}
  \item $\delta$ is a sub-program of $\delta$,
  \item If $\delta$ is of the form $\delta_1|\delta_2$,
    $\delta_1;\delta_2$, or $\gif{\varphi}{\delta_1}{\delta_2}$, then
    \begin{compactitem}
    \item $\delta_1$ and $\delta_2$ both are sub-programs of $\delta$,
    \item each sub-program of $\delta_1$ is a sub-program of $\delta$,
    \item each sub-program of $\delta_2$ is a sub-program of $\delta$,
    \end{compactitem}
  \item  If $\delta$ is of the form $\gwhile{\varphi}{\delta_1}$
    \begin{compactitem}
    \item $\delta_1$ is a sub-program of $\delta$,
    \item each sub-program of $\delta_1$ is a sub-program of $\delta$,
    \end{compactitem}
  \end{compactitem}
  \ \
\end{definition}
\noindent
We say a program \emph{$\delta'$ occurs in $\delta$} if $\delta'$ is a
sub-program of $\delta$.

Given an action invocation $\gact{Q(\vec{p})}{\act(\vec{p})}$ and an
ABox $A$, when we have a substitution $\sigma$ 
is \emph{legal for $\act$ in $A$}, we often also say that $\sigma$
is a legal parameter assignment for $\act$ in $A$.




We now proceed to define the notion of a program execution trace as
well as the notion when such a trace is called terminating. Moreover,
we also define the notion of program execution result in the case of
terminating program execution trace.

\begin{definition}[Program Execution Trace]
  Let
  $\ts{\gkabsym}^{\filter} = \tup{\const, T, \stateset, s_0, \abox,
    \trans}$
  be the transition system of a GKAB
  $\gkabsym = \tup{T, \initABox, \actset, \ginitprog}$. Given a state
  $\tup{A_1, \scmap_1, \delta_1}$, a \emph{program execution trace
    $\pi$ induced by $\delta$ on $\tup{A_1, \scmap_1, \delta_1}$
    w.r.t.\ filter $\filter$} is a (possibly infinite) sequence of
  states of the form
\[
\pi = \tup{A_1, \scmap_1, \delta_1} \gexectrans \tup{A_2, \scmap_2,
  \delta_2} \gexectrans \tup{A_3, \scmap_3, \delta_3} \gexectrans
\cdots
\]
s.t.  $\tup{A_i, \scmap_i, \delta_i} \gprogtrans{\alpha_i\sigma_i,
  \filter} \tup{A_{i+1}, \scmap_{i+1}, \delta_{i+1}}$ for $i\geq1$.
\ \ 
\end{definition}

\begin{definition}[Terminating Program Execution Trace]
  Let
  $\ts{\gkabsym}^{\filter} = \tup{\const, T, \stateset, s_0, \abox,
    \trans}$
  be the transition system of a GKAB
  $\gkabsym = \tup{T, \initABox, \actset, \ginitprog}$. Given a state
  $\tup{A_1, \scmap_1, \delta_1}$, and a program execution trace $\pi$
induced by $\delta_1$ on $\tup{A_1, \scmap_1, \delta_1}$, we call
\emph{$\pi$ terminating} if
\begin{compactenum}[(1)]
\item $\tup{A_1, \scmap_1, \delta_1}$ is a final state, or
\item if $\tup{A_1, \scmap_1, \delta_1}$ is not a final state, then there
  exists a state $\tup{A_n, \scmap_n, \delta_n}$ such that we have the
  following finite program execution trace
\[
  \pi = 
  \tup{A_1,\scmap_1, \delta_1} \gexectrans \tup{A_2,\scmap_2,
    \delta_2} \gexectrans \cdots \gexectrans \tup{A_n, \scmap_n,
    \delta_n},
\]
where $\tup{A_i, \scmap_i, \delta_i}$ (for
$i \in \set{1,\ldots, n-1}$) are not final states, and
$\tup{A_n, \scmap_n, \delta_n}$ is a final state.
\end{compactenum}
In the situation (1) (resp.\ (2)), we call the ABox $A_1$ (resp.\
$A_n$) \emph{the result of executing $\delta_1$ on
  $\tup{A_1, \scmap_1, \delta_1}$} w.r.t.\ filter
$\filter$. Additionally, we also say that $\pi$ is the \emph{program
  execution trace that produces $A_1$ (resp.\ $A_n$)}.
\end{definition}
We write $\progres(A_1, \scmap_1, \delta_1)$ to denote the set of all
ABoxes that is the result of executing $\delta_1$ on
$\progres(A_1, \scmap_1, \delta_1)$ w.r.t.\ filter $\filter$. Note
that given a state $\tup{A_1, \scmap_1, \delta_1}$, it is possible to
have several terminating program execution traces.
%
%
Intuitively, a program execution trace is a sequence of states which
captures the computation of the program as well as the evolution of
the system states by the program. Additionally, it is terminating if
at some point it reaches a final state.

For a technical reason, we also reserve some fresh concept names
$\flagconceptname$, $\noopconceptname$ and $\tmpconceptname$ (i.e.,
they are outside of any TBox vocabulary), and they are not allowed to
be used in any temporal properties (i.e., in \muladom or \mula
formulas). We call them \emph{special marker concept names}.
Additionally, we make use the constants in $\const_0$ to populate
them.  We call \emph{special marker} an ABox assertion that is
obtained by applying either $\flagconceptname$, $\noopconceptname$ or
$\tmpconceptname$ to a constant in $\const_0$.  Additionally, we call
\emph{flag} a special marker formed by applying either concept name
$\flagconceptname$ or $\noopconceptname$ to a constant in $\const_0$.
Later on, we use flags as markers to impose a certain sequence of
action executions, and we use a special marker $\tmp$ (where
$\tmpconst \in \const_0$) to mark an \emph{intermediate state}.

\subsection{Inconsistency Management Related Notions}


In this section we introduce some notions related to inconsistency.
Below we introduce the notion of negative inclusion assertion
(resp. functionality assertions) violation.

\begin{definition}[Violation of a Negative Inclusion Assertion]
  Let $\tup{T,A}$ be a KB, and $T \models B_1 \sqsubseteq \neg B_2$.
  We say \emph{$B_1 \sqsubseteq \neg B_2$ is violated} if there exists
  a constant $c$ such that $\set{B_1(c), B_2(c)} \subseteq A$. In this
  situation, we also say that \emph{$B_1(c)$ (resp. $B_2(c)$) violates
    $B_1 \sqsubseteq \neg B_2$}. Similarly for roles.
\end{definition}

\begin{definition}[Violation of a Functionality Assertion]
  Let $\tup{T,A}$ be a KB, and $\funct{R} \in T$.  We say
  \emph{$\funct{R}$ is violated} if there exists constants
  $c, c_1, c_2$ such that $\set{R(c,c_1), R(c,c_2)} \subseteq A$ and
  $c_1 \neq c_2$. In this situation, we also say that \emph{$R(c,c_1)$
    (resp. $R(c,c_2)$) violates $\funct{R}$}.
\end{definition}

Next, we define the notion of a set of inconsistent ABox assertions as
follows.

\begin{definition}[Set of Inconsistent ABox Assertions]
  Given a KB $\tup{T, A}$, we define the set $\inc(A)$ containing all
  ABox assertions that participate in the inconsistencies w.r.t.\
  $T$
  as the smallest set satisfying the following:
  \begin{compactenum}
  \item For each negative inclusion assertion $B_1 \sqsubseteq \neg B_2$ s.t.\
    $T \models B_1 \sqsubseteq \neg B_2$, we have
    $B_1(c) \in \inc(A)$, if $B_1(c)$ 
    violates $B_1 \sqsubseteq \neg B_2$, 
%
  \item For each negative inclusion assertion $R_1
    \sqsubseteq \neg R_2$ s.t.\ $T \models R_1 \sqsubseteq \neg
    R_2$, we have $R_1(c_1, c_2) \in \inc(A)$ if $R_1(c_1,
    c_2)$ violates $R_1 \sqsubseteq \neg R_2$, 
%
  \item For each functional assertion $\funct{R} \in T$, we have
    $R(c_1, c_2) \in \inc(A)$, if $R(c_1, c_2)$ violates
    $\funct{R}$. 
%
  \end{compactenum}
\ \ 
\end{definition}

\begin{lemma}\label{lem:card-inc-abox}
  Given a TBox $T$ and an ABox $A$, we have $\card{\inc(A)} = 0$ if
  and only if $A$ is $T$-consistent.
\end{lemma}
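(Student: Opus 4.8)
The plan is to reduce the statement to the standard characterization of (un)satisfiability for \dllitea KBs that is recalled in Section~\ref{sec:compilation} (see also \cite{CDLLR07}): checking whether $\tup{T,A}$ is inconsistent is FO rewritable, and more precisely $A$ is $T$-consistent iff the boolean query $\qunsatecq{T}$ evaluates to $\false$ over $A$ (read as a database). The conceptual core of the argument is the observation that the disjuncts of $\qunsatecq{T}$ are in one-to-one correspondence with the three clauses defining $\inc(A)$: a match of $\qunsatf(\funct{R},x,y,z)$ over $A$ is exactly a violation of a functionality assertion $\funct{R}\in T$; a match of $\qunsatn(B_1\ISA\neg B_2,x)$ over $A$ is exactly a violation of a negative concept inclusion $B_1\ISA\neg B_2$ with $T\models B_1\ISA\neg B_2$; and likewise a match of $\qunsatn(R_1\ISA\neg R_2,x,y)$ over $A$ is a violation of a negative role inclusion entailed by $T$. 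Since $\inc(A)$ is, by definition, the smallest set closed under exactly these three kinds of contributions, we get that $\card{\inc(A)}=0$ holds iff no functionality assertion of $T$ and no negative inclusion entailed by $T$ is violated in $A$.

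For the implication ``$A$ is $T$-consistent $\Rightarrow\card{\inc(A)}=0$'' I would argue directly by contraposition, without even needing the rewriting. Assume $\inc(A)\neq\emptyset$. Then some assertion enters $\inc(A)$ through one of the three clauses; consider for instance the case $B_1(c)\in\inc(A)$ because $\set{B_1(c),B_2(c)}\subseteq A$ with $T\models B_1\ISA\neg B_2$ (the negative role inclusion case and the functionality case are handled in the same way). If $\I$ were a model of $\tup{T,A}$, then from $B_1(c),B_2(c)\in A$ we would get $\Int{c}\in\Int{B_1}$ and $\Int{c}\in\Int{B_2}$, while $\I\models T$ together with $T\models B_1\ISA\neg B_2$ forces $\Int{B_1}\cap\Int{B_2}=\emptyset$, a contradiction. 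Hence $\tup{T,A}$ has no model, i.e., $A$ is not $T$-consistent.

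For the converse, ``$\card{\inc(A)}=0\Rightarrow A$ is $T$-consistent'', I would again contrapose: assuming $A$ is $T$-inconsistent, FO rewritability of inconsistency checking for \dllitea gives that $\qunsatecq{T}$ evaluates to $\true$ over $A$, so one of its disjuncts is satisfied by some tuple of constants from $\adom{A}$. By the correspondence above, such a witness is precisely a violation of a functionality assertion of $T$ or of a negative concept/role inclusion entailed by $T$, and therefore contributes at least one assertion to $\inc(A)$; hence $\card{\inc(A)}\neq0$. Combining the two implications gives the equivalence. The only genuinely fiddly point in turning this sketch into a full proof is the bookkeeping around Definition~\ref{def:abbreviation-abox-assertion}: when a disjunct of $\qunsatecq{T}$ involving $B=\exists P$ (or $\exists P^-$) is matched, one must check that the fact of $A$ supplying the existential witness is exactly the fact that the corresponding clause of $\inc(A)$ inserts into $\inc(A)$, and conversely; this is entirely routine but has to be spelled out case by case.
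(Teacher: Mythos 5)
Your proposal is correct and follows essentially the same route as the paper, which simply declares the lemma to follow ``trivially from the definition'': both arguments reduce the claim to the standard \dllitea{} characterization of inconsistency as the presence of a violated functionality assertion or a violated negative inclusion entailed by $T$ (which is exactly what the disjuncts of $\qunsatecq{T}$ and the clauses defining $\inc(A)$ both enumerate). Your version merely spells out the two directions --- the direct semantic argument for ``consistent $\Rightarrow$ $\inc(A)=\emptyset$'' and the appeal to FO rewritability for the converse --- which the paper leaves implicit; note that the converse direction also tacitly uses the paper's standing assumption that every concept and role in $T$ is satisfiable, so that no other source of inconsistency exists.
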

\begin{proof}
  Trivially follows from the definition. Since there is no ABox
  assertion violating any functionality or negative inclusion
  assertions.
\end{proof}

\subsection{History Preserving $\mu$-calculus (\muladom)}

This section briefly explains the history preserving $\mu$-calculus
(\muladom) (defined in~\cite{BCMD*13}) as an additional explanation
w.r.t.\ the explanation in Section~\ref{sec:VerificationFormalism}.

The semantics of \muladom formulae is defined over transition systems
$\ts{} = \tup{\const, T,\Sigma,s_0,\abox,{\Rightarrow}}$.  Since \muladom
contains formulae with both individual and predicate free variables,
given a transition system $\ts{}$, we introduce:
\begin{compactenum}[1.]
\item An individual variable valuation $\vfo$, i.e., a mapping from
  individual variables $x$ to $\const{}$. 
\item A predicate variable valuation $\vso$, i.e., a mapping from the
  predicate variables $Z$ to a subset of $\Sigma$.
\end{compactenum}

Given an individual variable valuation $\vfo$, we write
$ x/c \in \vfo$ if $\vfo(x) = c$, i.e., $\vfo$ maps $x$ into
$c \in \const$ (or sometimes we also say $\vfo$ substitutes $x$ with
$c \in \const$).  We write $\vfo[x/c]$ to denote a new individual
variable valuation obtained from $\vfo$ such that $\vfo[x/c](x) = c$
and $\vfo[x/c](y) = \vfo (y)$ (for $y \neq x$). We use similar
notation for predicate variable valuations.

We assign meaning to \muladom formulas by associating to $\ts{}$,
$\vfo$ and $\vso$ an \emph{extension function} $\MODA{\cdot}$, which
maps \muladom formulas to subsets of $\Sigma$.
The extension function $\MODA{\cdot}$ is defined inductively as
follows:
\[
  \begin{array}{r@{\ }l@{\ }l@{\ }l}
    \MODA{Q} & = &\{s \in \Sigma\mid \Ans(Q\vfo, T, \abox(s)) = \mathit{true}\}\\
   \MODA{\exists x. \Phi} & =&\{s \in \Sigma \mid\exists d. d \in
   \adom{\abox(s)} \\
    &&\mbox{ and } s \in \MODAX{\Phi}{[x/d]}\}\\
%
%
    \MODA{Z}  & = & V(Z) \subseteq  \Sigma\\
    \MODA{\lnot \Phi} & = & \Sigma - \MODA{\Phi}\\
    \MODA{\Phi_1 \land \Phi_2}  & = & \MODA{\Phi_1}\cap\MODA{\Phi_2}\\
%
%
    \MODA{\DIAM{\Phi}}  & = &  \{ s \in \Sigma \mid\exists s'.\ s
    \Rightarrow s' \mbox{ and } s' \in \MODA{\Phi}\}\\
%
%
    \MODA{\mu Z.\Phi} & = &\bigcap\{ \E\subseteq  \Sigma \mid
    {\MODA{\Phi}}_{[Z/\E]} \subseteq\E \} \\
%
%
 \end{array}
\]
Beside the usual FOL abbreviations, we also make use of the following
ones: $\BOX{\Phi} = \lnot\DIAM{(\lnot\Phi)}$ and
$\nu Z.\Phi=\lnot\mu Z.\lnot\Phi[Z/\lnot Z]$.  Here, $Q\vfo$ stands
for the query obtained from $Q$ by substituting its free variables
according to $\vfo$.
%
%
When $\Phi$ is a closed formula, $\MODA{\Phi}$ does not depend on
$\vfo$ or $\vso$, and we denote the extension of $\Phi$ simply by
$\MOD{\Phi}$.  A closed formula $\Phi$ holds in a state $s \in \Sigma$
if $s \in \MOD{\Phi}$.  In this case, we write $\ts{},s \models \Phi$.
A closed formula $\Phi$ holds in $\ts{}$, briefly \emph{$\ts{}$
  satisfies $\Phi$}, 
if $\ts{},s_0\models \Phi$ (In this situation we write
$\ts{} \models \Phi$).
Given a GKAB $\gkabsym$, and a \muladom property $\Phi$, let
$\ts{\gkabsym}^\filter$ be the transition system of $\gkabsym$, we say
$\gkabsym$ satisfies $\Phi$ if $\ts{\gkabsym}^\filter$ satisfies
$\Phi$.

\subsection{S-KABs Execution Semantics}
As we need later in the proof, here we briefly review the execution
semantics of S-KAB that we consider as described in the literature~\cite{BCMD*13,CKMSZ13} by also combining the framework with the action
specification formalism in~\cite{MoCD14}. The execution semantics of
an S-KAB is defined in terms of a possibly infinite-state transition
system.  Formally, given an S-KAB
$\kabsym = \tup{T,A_0, \actset, \procset}$, we define its semantics by
the \emph{transition system}
$\ts{\kabsym}^S = \tup{\const, T, \stateset, s_0, \abox, \trans}$,
where:
\begin{inparaenum}[\it (i)]
\item $T$ is a \dllitea TBox;
\item $\stateset$ is a (possibly infinite) set of states;
\item $s_0 \in \stateset$ is the initial state;
\item $\abox$ is a function that, given a state $s\in\stateset$,
  returns an ABox associated to $s$;
\item $\trans \subseteq \Sigma\times\Sigma$ is a transition
  relation between pairs of states.
\end{inparaenum}
Intuitively, the transitions system $\ts{\kabsym}^S$ S-KAB $\kabsym$ captures
all possible evolutions of the system by the actions in accordance with the
available condition-action rules.
%
Each state $s \in \stateset$ of the transition system $\ts{\kabsym}^S$ is a tuple
$\tup{A, \scmap}$, where $A$ is an ABox and $\scmap$ is a service call map.

The semantics of an \emph{action execution} is as follows:
Given a state $s = \tup{A, \scmap}$,
let $\act \in \actset$ be an action of the form
$\act(\vec{p}):\set{e_1,\ldots,e_m}$ with
$e_i = \map{Q(\vec{x})}{\add F^+, \del F^-}$, and let $\sigma$ be a
\emph{parameter substitution} for $\vec{p}$ with values taken from
$\const$.
%
%
We say that \emph{$\act$ is executable in $A$ with a parameter
  substitution $\sigma$}, if there exists a condition-action rule
$Q(\vec{x})\mapsto\alpha(\vec{x}) \in \procset$
s.t.\ $\Ans(Q\sigma, T, A)$ is $\true$.  In that case we call $\sigma$
a \emph{legal parameter assignment} for $\act$.%
The result of the application of $\act$ to an ABox $A$ using a
parameter substitution $\sigma$ is captured by the following function:
\[
\begin{array}{@{}l@{}l@{}}
  \doo{T, A, \act\sigma}= 
&\left(A\ \setminus\ \bigcup_{e_i \text{ in } \eff{\act}}
  \bigcup_{\rho\in\Ans(Q\sigma,T,A)}
  F^-\sigma\rho \right)  \\
  &\cup \left(\bigcup_{e_i
  \text{ in } \eff{\act}} \bigcup_{\rho\in\Ans(Q\sigma,T,A)} F^+\sigma\rho\right)
\end{array}
\] 
where $e_i = \map{Q(\vec{x})}{\add F^+, \del F^-}$

Intuitively, the result of the evaluation of $\act$ is obtained by
first deleting from $A$ the assertions that is obtained from the
grounding of the facts in $F^-$ and then adds the new assertions that
is obtained from the grounding of the facts in $F^+$. The grounding of
the facts in $F^+$ and $F^-$ are obtained from all the certain answers
of the query $Q(\vec{x})$ over $\tup{T,A}$.

The result of $\doo{T, A, \act\sigma}$ is in general not a proper ABox,
because it could contain (ground) Skolem terms, attesting that in order to
produce the ABox, some service calls have to be issued.  We denote by
$\calls{\doo{T, A, \act\sigma}}$ the set of such ground service calls, and
by $\eval{T, A,\act\sigma}$ the set of substitutions that replace such
calls with concrete values taken from $\const$. Specifically,
$\eval{T, A,\act\sigma}$ is defined as
\begin{tabbing}
$\eval{T,A,\act\sigma}= \{ \theta\ |\ $ \=
                        $\theta \mbox { is a total function }$\+\\
                        $\theta: \calls{\doo{T, A, \act\sigma}} \ra \const \}$.
\end{tabbing}
%

With all these notions in place, we can now recall the execution
semantics of a KAB $\kabsym = \tup{T, A_0, \actset, \procset}$. To do
so, we first introduce a transition relation $\exec{\kabsym}$ that
connects pairs of ABoxes and service call maps due to action
execution. In particular,
$\tup{\tup{A, \scmap},\act\sigma, \tup{A', \scmap'}} \in \exec{\kabsym}$ if
the following holds:
\begin{compactenum}
\item $\act$ is \emph{executable} in $A$ 
  with parameter substitution $\sigma$;
\item there exists $\theta \in \eval{T, A,\act\sigma}$ s.t. $\theta$ and $\scmap$
  ``agree'' on the common values in their domains (in order to realize the
  deterministic service call semantics);
\item $A' = \doo{T, A, \act\sigma}\theta$;
\item $\scmap' = \scmap \cup \theta$ (i.e., updating the history of
  issued service calls).
\end{compactenum}
For more intuitive notation, we write
$\tup{A, \scmap} \exect{\act\sigma} \tup{A', \scmap'}$ to denote
$\tup{\tup{A, \scmap}, \act\sigma, \tup{A', \scmap'}} \in
\exec{\kabsym}$.


The transition system $\ts{\kabsym}^S$ of $\kabsym$ is then defined as
$\tup{\const, T, \stateset, s_0, \abox, \trans}$ where
\begin{compactitem}
\item $s_0 = \tup{\initABox,\emptyset}$, and
\item $\stateset$ and $\trans$ are defined by simultaneous induction as the
  smallest sets satisfying the following properties:
  \begin{compactenum}[\it (i)]
  \item $s_0 \in \stateset$;
  \item if $\tup{A,\scmap} \in \stateset$, then for all actions $\act \in
    \actset$,
    for all substitutions $\sigma$ for the parameters of $\act$ and
    for all $\tup{A',\scmap'}$ s.t.
    %
    $\tup{A,\scmap}  \exect{\act\sigma} \tup{A',\scmap'}$
    and $A'$ is $T$-consistent,
    we have $\tup{A',\scmap'}\in\stateset$, $\tup{A,\scmap}\trans
    \tup{A',\scmap'}$.
  \end{compactenum}
\end{compactitem}
A \emph{run} of $\ts{\kabsym}$ is a (possibly infinite) sequence $s_0s_1\cdots$
of states of $\ts{\kabsym}$ such that $s_i\trans s_{i+1}$, for all $i\geq 0$.

\section{From S-KABs to S-GKABs}

This section is devoted to present the proof of Theorem
\ref{thm:stog}. The core idea is to show that our translation $\tkabs$
transforms S-KABs into S-GKABs such that their transition systems are
``equal'' (in the sense that they have the same structure and each
corresponding state contains the same ABox and service call map).
%
%
As a consequence, they should satisfy the same \muladom formulas. 

Technically, to formalize the notion of ``equality'' between
transition systems, we introduce the notion of
E-Bisimulation. Furthermore, we show that two E-bisimilar transition
systems can not be distinguished by \muladom properties. Then, to
provide the proof of Theorem \ref{thm:stog} we simply need to show
that $\tkabs$ transforms S-KABs into S-GKABs such that their
transition systems are E-bisimilar.


\subsection{E-Bisimulation}

We now define the notion of \emph{E-Bisimulation} and
show that two E-bisimilar transition systems can not be distinguished
by a \muladom formula.

\begin{definition}[E-Bisimulation] \ \\
  Let $\ts{1} = \tup{\const, T, \Sigma_1, s_{01}, \abox_1, \trans_1}$
  and $\ts{2} = \tup{\const, T, \Sigma_2, s_{02}, \abox_2, \trans_2}$
  be transition systems, with
  $\adom{\abox_1(s_{01})} \subseteq \const$ and
  $\adom{\abox_2(s_{02})} \subseteq \const$.  An \emph{E-Bisimulation}
  between $\ts{1}$ and $\ts{2}$ is a relation
  $\B \subseteq \Sigma_1 \times\Sigma_2$ such that
  $\tup{s_1, s_2} \in \B$ implies that:
  \begin{compactenum}
  \item $\abox_1(s_1) = \abox_2(s_2)$
  \item for each $s_1'$, if $s_1 \Rightarrow_1 s_1'$ then there exist
    $s_2'$ with $ s_2 \Rightarrow_2 s_2' $ such that
    $\tup{s_1', s_2'}\in\B$.
  \item for each $s_2'$, if $ s_2 \Rightarrow_2 s_2' $ then there
    exists $s_1'$ with $s_1 \Rightarrow_1 s_1'$, such that
    $\tup{s_1', s_2'}\in\B$.
 \end{compactenum}
\ \ 
\end{definition}

\noindent
Let $\ts{1} = \tup{\const, T, \Sigma_1, s_{01}, \abox_1, \trans_1}$
and $\ts{2} = \tup{\const, T, \Sigma_2, s_{02}, \abox_2, \trans_2}$ be
transition systems,
a state $s_1 \in \Sigma_1$ is \emph{E-bisimilar} to
$s_2 \in \Sigma_2$, written $s_1 \ebsim s_2$, if there exists an
E-Bisimulation $\B$ between $\ts{1}$ and $\ts{2}$ such that
$\tup{s_1, s_2}\in\B$.
The transition system $\ts{1}$ is \emph{E-bisimilar} to $\ts{2}$,
written $\ts{1} \ebsim \ts{2}$, if there exists an E-Bisimulation $\B$
between $\ts{1}$ and $\ts{2}$ such that $\tup{s_{01}, s_{02}}\in\B$.
%


\begin{lemma}\label{lem:e-bisimilar-ts-satisfies-same-formula}
  Consider two transition systems
  $\ts{1} = \tup{\const,T,\stateset_1,s_{01},\abox_1,\trans_1}$ and
  $\ts{2} = \tup{\const,T,\stateset_2,s_{02},\abox_2,\trans_2}$ such
  that $\ts{1} \ebsim \ts{2}$.  For every $\muladom$ closed formula
  $\Phi$, we have:
  \[
  \ts{1} \models \Phi \textrm{ if and only if } \ts{2} \models \Phi.
  \]
\end{lemma}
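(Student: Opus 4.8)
The plan is a standard bisimulation-invariance argument, proved by structural induction on $\Phi$ after first generalizing the statement to open formulas. Fix an E-bisimulation $\B \subseteq \Sigma_1 \times \Sigma_2$ witnessing $\ts{1} \ebsim \ts{2}$. Call a pair of valuations $(v_1,V_1)$ over $\ts{1}$ and $(v_2,V_2)$ over $\ts{2}$ \emph{$\B$-compatible} if $v_1 = v_2$ as maps into $\const$, and for every predicate variable $Z$ and every $\tup{s_1,s_2}\in\B$ we have $s_1 \in V_1(Z)$ iff $s_2 \in V_2(Z)$. The core claim I would establish is: for every $\muladom$ formula $\Phi$, every $\B$-compatible pair of valuations, and every $\tup{s_1,s_2}\in\B$,
\[
s_1 \in (\Phi)^{\ts{1}}_{v_1,V_1} \quad\text{iff}\quad s_2 \in (\Phi)^{\ts{2}}_{v_2,V_2}.
\]
Once the claim is proved, the lemma is immediate: for closed $\Phi$ the extension does not depend on the valuations, and since $\tup{s_{01},s_{02}}\in\B$ we get $s_{01} \in (\Phi)^{\ts{1}}$ iff $s_{02} \in (\Phi)^{\ts{2}}$, i.e.\ $\ts{1}\models\Phi$ iff $\ts{2}\models\Phi$.

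For the inductive proof of the claim, the base and propositional cases are routine and, crucially, are made easy by the fact that condition~1 of E-bisimulation forces $\abox_1(s_1)=\abox_2(s_2)$: for $\Phi=Q$ this together with $v_1=v_2$ gives $\Ans(Qv_1,T,\abox_1(s_1))=\Ans(Qv_2,T,\abox_2(s_2))$; the cases $\neg\Phi$, $\Phi_1\wedge\Phi_2$ follow directly from the induction hypothesis; and $\Phi=Z$ holds by $\B$-compatibility. For $\Phi=\exists x.\Psi$, again $\abox_1(s_1)=\abox_2(s_2)$, so the two active domains $\adom{\abox_1(s_1)}$ and $\adom{\abox_2(s_2)}$ coincide; for any witness $d$ in this common set, $(v_1[x/d],V_1)$ and $(v_2[x/d],V_2)$ are still $\B$-compatible (the predicate valuations are untouched), and the induction hypothesis applies. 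For $\Phi=\DIAM{\Psi}$ I would use the two zig-zag conditions of E-bisimulation: a successor $s_1'$ of $s_1$ with $s_1'\in(\Psi)^{\ts{1}}$ yields, by condition~2, a successor $s_2'$ of $s_2$ with $\tup{s_1',s_2'}\in\B$, and the induction hypothesis then gives $s_2'\in(\Psi)^{\ts{2}}$; the converse uses condition~3 symmetrically.

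The one delicate case is $\Phi=\mu Z.\Psi$, which I expect to be the main obstacle, and I would handle it via fixpoint approximants. By the usual syntactic monotonicity restriction of the $\mu$-calculus, the operators $\mathcal{F}_i\colon E \mapsto (\Psi)^{\ts{i}}_{v_i,V_i[Z/E]}$ are monotone, and $\mu Z.\Psi$ is their least fixpoint, obtained as the union of the transfinite chain $\mu^0_i=\emptyset$, $\mu^{\beta+1}_i=\mathcal{F}_i(\mu^\beta_i)$, $\mu^\lambda_i=\bigcup_{\beta<\lambda}\mu^\beta_i$. Call $E_1\subseteq\Sigma_1$ and $E_2\subseteq\Sigma_2$ \emph{$\B$-related} if $s_1\in E_1$ iff $s_2\in E_2$ for all $\tup{s_1,s_2}\in\B$. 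The key point is that if $E_1,E_2$ are $\B$-related then so are $\mathcal{F}_1(E_1)$ and $\mathcal{F}_2(E_2)$: this is precisely the induction hypothesis for $\Psi$ applied to the valuations $(v_1,V_1[Z/E_1])$ and $(v_2,V_2[Z/E_2])$, which are $\B$-compatible exactly because $E_1,E_2$ are $\B$-related. Since $\emptyset,\emptyset$ are trivially $\B$-related and $\B$-relatedness is preserved under arbitrary unions, a transfinite induction on $\beta$ shows $\mu^\beta_1$ and $\mu^\beta_2$ are $\B$-related for all $\beta$; in the limit, $(\mu Z.\Psi)^{\ts{1}}_{v_1,V_1}$ and $(\mu Z.\Psi)^{\ts{2}}_{v_2,V_2}$ are $\B$-related, which is what is needed at $\tup{s_1,s_2}$. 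The $\nu$ operator is covered since it is an abbreviation in terms of $\mu$ and $\neg$. The only real care needed anywhere is the bookkeeping in this fixpoint case — keeping the augmented predicate valuations $\B$-compatible and justifying the transfinite approximation — whereas all other cases are essentially immediate thanks to the equality of ABoxes in $\B$-related states, which obviates the partial-isomorphism reasoning needed for coarser data-aware bisimulations.
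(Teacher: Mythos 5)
Your proof is correct. Note that the paper itself disposes of this lemma in a single sentence: it merely observes that condition~1 of E-bisimulation forces bisimilar states to carry \emph{identical} ABoxes and conditions~2--3 force matching transition structure, so the two systems are ``essentially equal'' and cannot be separated by any formula. Your fully worked-out structural induction is the rigorous version of that remark, and in technique it coincides with the detailed proofs the paper does supply for its coarser bisimulations (the invariance lemmas for J-bisimulation and L-bisimulation): there, too, the argument is an induction on formula structure, with $\DIAM{\cdot}$ handled by the zig-zag clauses and fixpoints by transfinite approximants. The one genuine presentational difference is that the paper first restricts to the fixpoint-free fragment $\ladom$ (thereby dropping predicate variables entirely), then extends to arbitrary countable disjunctions, and finally recovers $\mu$ and $\nu$ by translating them into this infinitary logic via approximants; you instead keep second-order valuations throughout and phrase the fixpoint case as preservation of $\B$-relatedness by the monotone operators $\mathcal{F}_i$. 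Both routes are sound; yours avoids the detour through infinitary Hennessy--Milner logic at the cost of carrying the $\B$-compatibility invariant on predicate valuations through every case, and it is if anything more careful than what the paper records for this particular lemma.
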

\begin{proof}
  The claim easily follows since two E-bisimilar transition systems
  are essentially equal in terms of the structure and the ABoxes that
  are contained in each bisimilar state.
\end{proof}

\subsection{Reducing the Verification of S-KABs to S-GKABs}

To reduce the verification of \muladom over S-KABs as verification
over S-GKABs, in this subsection we show that the transition system of
an S-KAB and the transition system of its corresponding S-GKAB are
E-bisimilar. Then, by using the result from the previous subsection we
can easily recast the verification problem and hence achieve our purpose.

\begin{lemma}\label{lem:skab-to-sgkab-bisimilar-state}
  Let $\kabsym$ be an S-KAB with transition system
  $\ts{\kabsym}^S$, and let $\tkabs(\kabsym)$ be an
  S-GKAB with transition system $\ts{\tkabs(\kabsym)}^{\filter_S}$
  obtain through $\tkabs$.
  Consider 
  \begin{inparaenum}[]
  \item a state $\tup{A_k,\scmap_k}$ of $\ts{\kabsym}^S$ and
  \item a state $\tup{A_g,\scmap_g, \delta_g}$ of
    $\ts{\tkabs(\kabsym)}^{\filter_S}$.
  \end{inparaenum}
  If $A_k = A_g$, and $\scmap_k = \scmap_g$, then
  $\tup{A_k,\scmap_k} \ebsim \tup{A_g,\scmap_g, \delta_g}$.
\end{lemma}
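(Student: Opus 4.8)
The plan is to exhibit an explicit E-bisimulation $\B$ between $\ts{\kabsym}^S$ and $\ts{\tkabs(\kabsym)}^{\filter_S}$ that relates every state of the former to every state of the latter carrying the same ABox and service-call map, and then invoke the definition of $\ebsim$. Recall that $\tkabs$ leaves $T$, $\initABox$, and $\actset$ untouched and turns $\procset$ into the program $\ginitprog = \gwhile{\true}{(a_1|\cdots|a_n)}$, where the invocations $a_i = \gact{Q_i(\vec{x})}{\act_i(\vec{x})}$ are in bijection with the condition-action rules $\carule{Q_i(\vec{x})}{\act_i(\vec{x})}$ of $\procset$. Consequently, $\sigma$ is a legal parameter substitution for $\act_i$ in an ABox $A$ exactly when $\act_i$ is executable in $A$ with $\sigma$ in the S-KAB sense, since both amount to $\Ans(Q_i\sigma, T, A) = \true$.

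First I would characterize the reachable program-states of $\ts{\tkabs(\kabsym)}^{\filter_S}$. Writing $\delta^{\ast} := \gwhile{\true}{(a_1|\cdots|a_n)}$ and $\delta^{\ast\ast} := \gemptyprog;\delta^{\ast}$, I claim that every state of $\ts{\tkabs(\kabsym)}^{\filter_S}$ has program component $\delta^{\ast}$ (the initial state) or $\delta^{\ast\ast}$, that neither is a final state, and that for $\delta \in \{\delta^{\ast},\delta^{\ast\ast}\}$ one has $\tup{A,\scmap,\delta} \gprogtrans{\act\sigma,\filter_S} \tup{A',\scmap',\delta'}$ iff $\delta' = \delta^{\ast\ast}$, $\act = \act_i$ for some $i$, $\sigma$ is legal for $\act_i$ in $A$, and $\tup{\tup{A,\scmap},\act_i\sigma,\tup{A',\scmap'}} \in \tell_{\filter_S}$. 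This is obtained by unwinding the program-execution and final-state rules: from $\delta^{\ast}$, only the loop rule (for \gWHILE) is applicable, since $\ask(\true,T,A)=\true$ and the body $a_1|\cdots|a_n$ — a choice of atomic invocations, none of which is ever final — is never final; the body itself steps, via the choice rule and the atomic-invocation rule, only to $\gemptyprog$, yielding residual program $\gemptyprog;\delta^{\ast} = \delta^{\ast\ast}$. From $\delta^{\ast\ast} = \gemptyprog;\delta^{\ast}$, since $\gemptyprog$ is final and admits no program-execution step, only the second sequencing rule applies, again reducing to a step of $\delta^{\ast}$ and producing $\delta^{\ast\ast}$. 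An easy induction on the construction of the state set and transition relation of $\ts{\tkabs(\kabsym)}^{\filter_S}$ then confirms the reachability claim.

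Next I would check that, whenever $\sigma$ is legal for $\act_i$ in $A$, the relation $\tell_{\filter_S}$ mirrors the S-KAB transition relation: $\tup{\tup{A,\scmap},\act_i\sigma,\tup{A',\scmap'}} \in \tell_{\filter_S}$ holds iff $\tup{A,\scmap} \exect{\act_i\sigma} \tup{A',\scmap'}$ and $A'$ is $T$-consistent. This is routine bookkeeping, the two presentations being deliberately parallel: $\ask$ is certain-answer evaluation, hence equals $\Ans$; the fact sets $\addfactss{A}{\act_i\sigma}$ and $\delfactss{A}{\act_i\sigma}$ are exactly the add- and delete-parts occurring in $\doo{T,A,\act_i\sigma}$; the induced sets of ground service calls, and of call substitutions $\theta$, coincide; the filter $\filter_S$ fixes $A' = (A\setminus\delfactss{A}{\act_i\sigma}) \cup \addfactss{A}{\act_i\sigma}\theta = \doo{T,A,\act_i\sigma}\theta$; and both sides require $\theta$ to agree with $\scmap$ and set $\scmap' = \scmap \cup \theta$, while $\tell_{\filter_S}$ additionally imposes the $T$-consistency of $A'$ that, in the S-KAB, is imposed when defining its transition relation. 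Combined with the one-step characterization of the previous paragraph, this shows that, for $\delta \in \{\delta^{\ast},\delta^{\ast\ast}\}$, $\tup{A,\scmap,\delta} \trans_g \tup{A',\scmap',\delta'}$ iff $\delta' = \delta^{\ast\ast}$ and $\tup{A,\scmap} \trans_k \tup{A',\scmap'}$ via some executable $\act_i\sigma$.

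Finally I would set $\B = \{\tup{\tup{A,\scmap},\tup{A,\scmap,\delta}} \mid \tup{A,\scmap} \in \stateset_k,\ \tup{A,\scmap,\delta} \in \stateset_g\}$, where $\stateset_k,\stateset_g$ are the state sets of $\ts{\kabsym}^S$ and $\ts{\tkabs(\kabsym)}^{\filter_S}$ (so $\delta \in \{\delta^{\ast},\delta^{\ast\ast}\}$ automatically), and verify the three E-bisimulation conditions. Condition (1) is immediate, as both components carry $A$. For condition (2), a transition $\tup{A,\scmap} \trans_k \tup{A',\scmap'}$ arises from an executable $\act_i\sigma$ with $A'$ $T$-consistent and rule $\carule{Q_i}{\act_i} \in \procset$ firing; by the correspondence above this gives $\tup{A,\scmap,\delta} \trans_g \tup{A',\scmap',\delta^{\ast\ast}}$, and since $\stateset_k$ and $\stateset_g$ are closed under their transition relations, $\tup{\tup{A',\scmap'},\tup{A',\scmap',\delta^{\ast\ast}}} \in \B$. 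Condition (3) is the symmetric argument, using that any $\tup{A,\scmap,\delta} \trans_g \tup{A',\scmap',\delta'}$ has $\delta' = \delta^{\ast\ast}$ and yields $\tup{A,\scmap} \trans_k \tup{A',\scmap'}$. Since $A_k = A_g$ and $\scmap_k = \scmap_g$ place $\tup{\tup{A_k,\scmap_k},\tup{A_g,\scmap_g,\delta_g}}$ in $\B$, we conclude $\tup{A_k,\scmap_k} \ebsim \tup{A_g,\scmap_g,\delta_g}$. The main obstacle is not depth but care: precisely pinning down, by induction on the Golog operational semantics, which program-states and program-execution steps can arise for the while-loop program, and verifying that the two notationally different but intentionally parallel definitions of $\tell_{\filter_S}$ and the S-KAB transition relation indeed coincide.
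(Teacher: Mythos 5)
Your proof is correct and follows essentially the same route as the paper's: exploit the bijection between condition-action rules and the atomic invocations in $\gwhile{\true}{(a_1|\cdots|a_n)}$ to match S-KAB transitions with S-GKAB program-execution steps, using that $\tell_{\filter_S}$ and the S-KAB transition relation compute the same update. Your version is in fact more complete than the paper's, which only sketches the forward simulation and leaves the characterization of the residual programs ($\delta^{\ast}$ vs.\ $\gemptyprog;\delta^{\ast}$), the explicit bisimulation relation, and the backward direction implicit.
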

\begin{proof}
Let
\begin{compactenum}
\item $\kabsym = \tup{T, \initABox, \actset, \procset}$, and \\
  $\ts{\kabsym}^S = \tup{\const, T, \stateset_k, s_{0k}, \abox_k,
    \trans_k}$,
\item $\tkabs(\kabsym) = \tup{T, \initABox, \actset, \ginitprog}$, and \\
  $\ts{\tkabs(\kabsym)}^{\filter_S} = \tup{\const, T, \stateset_g, s_{0g},
    \abox_g, \trans_g}$.
\end{compactenum}
To prove the lemma, we show that, for every state
$\tup{A_k', \scmap_k'}$ s.t.\
$\tup{A_k,\scmap_k} \trans_k \tup{A_k',\scmap_k'}$, there exists a state
$\tup{A'_g,\scmap'_g, \delta_g'}$ s.t.:
\begin{compactenum}
\item
  $\tup{A_g,\scmap_g, \delta_g} \trans_g \tup{A'_g,\scmap'_g,
    \delta_g'}$;
\item $A'_k = A_g'$;
\item $\scmap'_k = \scmap_g'$.
\end{compactenum}
By definition of $\ts{\kabsym}^S$, if
$\tup{A_k,\scmap_k} \trans \tup{A_k',\scmap_k'}$, then there exist
\begin{compactenum}
\item a condition action rule $\carule{Q(\vec{p})}{\act(\vec{p})}$, 
\item an action $\act \in \actset$ with parameters $\vec{p}$, 
\item an parameter substitution $\sigma$, and 
\item a substitution $\theta$.
\end{compactenum}
such that 
\begin{inparaenum}[\it (i)]
\item $\theta \in \eval{T,A_k,\act\sigma}$ and agrees with $\scmap_k$, 
\item $\act$ is executable in state $A_k$ with a parameter
  substitution $\sigma$,
\item $A_k' = \doo{T, A_k, \act\sigma}\theta$, and 
\item $\scmap_k' = \scmap_k \cup \theta$.
\end{inparaenum}

Now, since
$ \ginitprog = \gwhile{\true}{(a_1|a_2|\ldots|a_{\card{\procset}})}$,
and each $a_i$ is an action invocation obtained from a
condition-action rule in $\procset$, then there exists an action
invocation $a_i$ such that $a_i = \gact{Q(\vec{x})}{\act(\vec{x})}$.
Since $A_k = A_g$, and $\scmap_k = \scmap_g$, by considering how a
transition is created in the transition system of S-GKABs, it is easy
to see that there exists a state $\tup{A_g', \scmap_g', \delta_g'}$
such that
$\tup{A_g,\scmap_g, \delta_g} \trans_g \tup{A'_g,\scmap'_g,
  \delta_g'}$,
$A_g' = A_k'$, and $\scmap_g' = \scmap_k'$. Thus, the claim is proven.
\end{proof}

\begin{lemma}\label{lem:skab-to-sgkab-bisimilar-ts}
  Given an S-KAB $\kabsym$, we have
  $\ts{\kabsym}^S \ebsim \ts{\tkabs(\kabsym)}^{\filter_S}$
\end{lemma}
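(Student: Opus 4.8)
The plan is to derive the E-bisimilarity of the two transition systems directly from Lemma~\ref{lem:skab-to-sgkab-bisimilar-state}, instantiated on their respective initial states. Write $\kabsym = \tup{T, \initABox, \actset, \procset}$ and $\tkabs(\kabsym) = \tup{T, \initABox, \actset, \ginitprog}$, and let $\ts{\kabsym}^S = \tup{\const, T, \stateset_k, s_{0k}, \abox_k, \trans_k}$ and $\ts{\tkabs(\kabsym)}^{\filter_S} = \tup{\const, T, \stateset_g, s_{0g}, \abox_g, \trans_g}$. By the two execution semantics, $s_{0k} = \tup{\initABox, \emptyset}$ and $s_{0g} = \tup{\initABox, \emptyset, \ginitprog}$, so the ABox component of both initial states is $\initABox$ and the service call map of both is the empty function.

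First I would apply Lemma~\ref{lem:skab-to-sgkab-bisimilar-state} with $\tup{A_k, \scmap_k} = s_{0k}$ and $\tup{A_g, \scmap_g, \delta_g} = s_{0g}$: since $A_k = \initABox = A_g$ and $\scmap_k = \emptyset = \scmap_g$, the lemma yields $s_{0k} \ebsim s_{0g}$, i.e., there is an E-bisimulation $\B$ between $\ts{\kabsym}^S$ and $\ts{\tkabs(\kabsym)}^{\filter_S}$ with $\tup{s_{0k}, s_{0g}} \in \B$. By the definition of E-bisimilarity of transition systems this is precisely $\ts{\kabsym}^S \ebsim \ts{\tkabs(\kabsym)}^{\filter_S}$, which is the claim. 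Combined with Lemma~\ref{lem:e-bisimilar-ts-satisfies-same-formula} it then immediately gives Theorem~\ref{thm:stog}, namely $\ts{\kabsym}^S \models \Phi$ iff $\ts{\tkabs(\kabsym)}^{\filter_S} \models \Phi$ for every closed \muladom formula $\Phi$.

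The substantive work therefore lies in Lemma~\ref{lem:skab-to-sgkab-bisimilar-state}, and I expect the main obstacle to be completing the argument there: unlike the sketch, which only spells out the forward direction, a full proof must also establish the backward simulation condition of E-bisimulation. The key observation needed is that the translated program $\ginitprog = \gwhile{\true}{(a_1|\cdots|a_{\card{\procset}})}$ unfolds so that from any reachable state $\tup{A, \scmap, \delta_g}$, with $\delta_g$ being either $\ginitprog$ or $\gemptyprog;\ginitprog$, the program-execution relation $\gprogtrans{\act\sigma, \filter_S}$ produces exactly one successor data-state for each choice of invocation $a_i = \gact{Q_i(\vec p)}{\act_i(\vec p)}$, answer of $Q_i$, and service-call evaluation $\theta$; and conversely each such triple corresponds to an application of a condition-action rule $\carule{Q_i(\vec p)}{\act_i(\vec p)} \in \procset$. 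Since $\ask$ coincides with certain-answer computation and $\filter_S$ realizes exactly the standard update $\doo{T, A, \act\sigma}\theta$, the data-states reached on the two sides coincide and the service call maps are updated identically, so the relation underlying Lemma~\ref{lem:skab-to-sgkab-bisimilar-state} is closed under transitions in both directions. The only bookkeeping subtlety is that the process-state keeps oscillating between $\ginitprog$ and $\gemptyprog;\ginitprog$, which is irrelevant because E-bisimulation inspects only ABoxes and the transition structure, not the program component.
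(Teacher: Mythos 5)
Your proof of this lemma matches the paper's exactly: both instantiate Lemma~\ref{lem:skab-to-sgkab-bisimilar-state} on the two initial states, which share the ABox $\initABox$ and the empty service call map, and then invoke the definition of E-bisimilarity between transition systems. Your additional remarks about the backward simulation direction and the oscillation of the program component concern the proof of the supporting state-level lemma rather than this one, and are consistent with how the paper argues there.
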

\begin{proof}
Let
\begin{compactenum}
\item $\kabsym = \tup{T, \initABox, \actset, \procset}$, and \\
  $\ts{\kabsym}^S = \tup{\const, T, \stateset_k, s_{0k}, \abox_k,
    \trans_k}$,
\item $\tkabs(\kabsym) = \tup{T, \initABox, \actset, \ginitprog}$, and \\
  $\ts{\tkabs(\kabsym)}^{\filter_S} = \tup{\const, T, \stateset_g, s_{0g},
    \abox_g, \trans_g}$.
\end{compactenum}
We have that $s_{0k} = \tup{A_0, \scmap_k}$ and
$s_{0g} = \tup{A_0, \scmap_g, \delta}$ where
$\scmap_k = \scmap_g = \emptyset$. Hence, by Lemma
\ref{lem:skab-to-sgkab-bisimilar-state}, we have
$s_{0k} \ebsim s_{0g}$. Therefore, by the definition of E-bisimulation
between two transition systems, we have
$\ts{\kabsym}^S \ebsim \ts{\tkabs(\kabsym)}^{\filter_S}$.
\end{proof}

Having Lemma~\ref{lem:skab-to-sgkab-bisimilar-ts} in hand, we can
easily show that the verification of \muladom over S-KABs can be
reduced to the verification of \muladom over S-GKABs by also making
use the result from the previous subsection.

\begin{theorem}\label{thm:skab-to-sgkab}
  Given an S-KAB $\kabsym$ and a closed $\muladom$ formula $\Phi$, we
  have $\ts{\kabsym}^S \models \Phi$ iff
  $\ts{\tkabs(\kabsym)}^{\filter_S} \models \Phi$.
\end{theorem}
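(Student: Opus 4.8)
The plan is to derive the statement as an immediate consequence of the two results already established. First I would invoke Lemma~\ref{lem:skab-to-sgkab-bisimilar-ts}, which gives $\ts{\kabsym}^S \ebsim \ts{\tkabs(\kabsym)}^{\filter_S}$: the two initial states agree on both the ABox ($\initABox$) and the (empty) service call map, so they are related by the E-bisimulation constructed as in Lemma~\ref{lem:skab-to-sgkab-bisimilar-state}. Then, since $\Phi$ is a \emph{closed} $\muladom$ formula, Lemma~\ref{lem:e-bisimilar-ts-satisfies-same-formula} applies directly and yields $\ts{\kabsym}^S \models \Phi$ iff $\ts{\tkabs(\kabsym)}^{\filter_S} \models \Phi$, which is exactly the claim. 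Closedness of $\Phi$ is what lets us drop the valuations $\vfo,\vso$, so that the extension functions of the two transition systems agree on $\Phi$ at the (bisimilar) initial states.

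All the real content sits inside the two lemmas, so there is essentially no obstacle left at the level of the theorem itself. The crux handled by Lemma~\ref{lem:skab-to-sgkab-bisimilar-state} is that the Golog wrapper $\ginitprog = \gwhile{\true}{(a_1 \mid \cdots \mid a_{\card{\procset}})}$ neither blocks nor adds transitions with respect to the condition-action semantics of $\procset$: each $a_i = \gact{Q_i(\vec{x})}{\act_i(\vec{x})}$ fires under exactly the legality condition $\ask(Q_i\sigma,T,A)=\true$ used by $\exec{\kabsym}$, the filter $\filter_S$ applies exactly the update computed by $\doo{T,A,\act\sigma}$ (additions over deletions), the $T$-consistency test on the successor ABox is identical, and the service call maps evolve in lockstep ($\scmap' = \scmap \cup \theta$ in both models). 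Since after firing $a_i$ the residual program is $\gemptyprog;\ginitprog$, which by the final-state and sequencing rules behaves again as the original while-true loop, this one-step correspondence propagates by induction on the transition-system construction, giving the state-level E-bisimulation and hence the E-bisimulation of the two transition systems.

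Consequently the proof of Theorem~\ref{thm:skab-to-sgkab} is just a chaining of Lemma~\ref{lem:skab-to-sgkab-bisimilar-ts} with Lemma~\ref{lem:e-bisimilar-ts-satisfies-same-formula}, and I would write it in exactly that two-line form. The statement then serves as the semantic bridge underpinning Theorem~\ref{thm:stog}, which is obtained simply by restating it as a reduction of the verification (model-checking) problem.
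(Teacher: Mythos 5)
Your proposal matches the paper's proof exactly: the paper also derives the theorem by invoking Lemma~\ref{lem:skab-to-sgkab-bisimilar-ts} to get $\ts{\kabsym}^S \ebsim \ts{\tkabs(\kabsym)}^{\filter_S}$ and then concluding via Lemma~\ref{lem:e-bisimilar-ts-satisfies-same-formula}. Your additional remarks on why the while-true Golog wrapper induces the one-step correspondence are a faithful gloss of what Lemma~\ref{lem:skab-to-sgkab-bisimilar-state} establishes, so nothing is missing.
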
 
\begin{proof}
  By Lemma~\ref{lem:skab-to-sgkab-bisimilar-ts}, we have that
  $\ts{\kabsym}^S \ebsim \ts{\tkabs(\kabsym)}^{\filter_S}$.  Hence,
  the claim is directly follows from
  Lemma~\ref{lem:e-bisimilar-ts-satisfies-same-formula}.
\end{proof}

\smallskip
\noindent
\textbf{Proof of Theorem 1.} \\ The proof of is simply obtained since
we can translate S-KABs into S-GKABs using $\tkabs$ and then by making
use Theorem \ref{thm:skab-to-sgkab}, we basically reduce the
verification of S-KABs into S-GKABs.

\section{From S-GKABs to S-KABs}\label{sec:proof-sgkab-to-skab}

We dedicate this section to 
show that the verification of \muladom properties over
S-GKABs can be recast as verification over S-KABs which essentially
exhibit the proof of Theorem~\ref{thm:gtos}.
%
%
To this aim, technically we do the following:
\begin{compactenum}

\item We define a special bisimulation relation between two transition
  system namely \emph{jumping bisimulation}.

\item We define a generic translation $\tforj$ that takes a \muladom
  formula $\Phi$ in Negative Normal Form (NNF) as an input and
  produces a \muladom formula $\tforj(\Phi)$.

\item We show that two jumping bisimilar transition system can not be
  distinguished by any \muladom formula (in NNF) modulo the
  translation $\tforj$.

\item We define a generic translation $\tgkab$, that given an S-GKAB
  $\gkabsym$, produces an S-KAB $\tgkab(\gkabsym)$. The core idea of this
  translation is to transform the given program $\delta$ and the set
  of actions in S-GKAB $\gkabsym$ into a process (a set of
  condition-action rules) and a set of S-KAB actions, such that all
  possible sequence of action executions that is enforced by $\delta$
  can be mimicked by the process in S-KAB (which determines all possible
  sequence of action executions in S-KAB).

\item We show that the transition system of a GKAB $\gkabsym$ and the
  transition system of its corresponding S-KAB $\tgkab(\gkabsym)$
  (obtained through translation $\tgkab$) are bisimilar w.r.t.\ the
  jumping bisimulation relation.

\item Making use all of the ingredients above, we finally in the end
  show that a GKAB $\gkabsym$ satisfies a certain \muladom formula
  $\Phi$ if and only if its corresponding S-KAB $\tgkab(\gkabsym)$
  satisfies a \muladom formula $\tforj(\Phi)$.

\end{compactenum}

\subsection{Jumping Bisimulation (J-Bisimulation)}\label{sec:jumping-bisimulation}

As a start towards defining the notion of J-Bisimulation, we introduce
the notion of equality modulo flag between two ABoxes as follows:

\begin{definition}[Equal Modulo Special Markers]\label{def:equal-mod-markers}
  Given a TBox $T$, two ABoxes $A_1$ and $A_2$ over $\voc(T)$ that
  might contain special markers, 
  we say \emph{$A_1$ equal to $A_2$ modulo special markers}, written
  $A_1 \eqm A_2$ (or equivalently $A_2 \eqm A_1$),
  if the following hold:
  \begin{compactitem}
  \item For each concept name $N \in \voc(T)$ (i.e., $N$ is not a
    special marker concept name), we have a concept assertion
    $N(c) \in A_1$ if and only if a concept assertion $N(c) \in A_2$,
  \item For each role name $P \in \voc(T)$, we have a role assertion
    $P(c_1,c_2) \in A_1$ if and only if a role assertion
    $P(c_1,c_2) \in A_2$.
  \end{compactitem}
\ \ 
\end{definition}

\begin{lemma}\label{lem:equal-ABox-imply-equal-modulo-markers}
  $A_1 = A_2$ implies $A_1 \eqm A_2$.
\end{lemma}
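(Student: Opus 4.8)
The statement is an immediate consequence of the definition of $\eqm$ (Definition~\ref{def:equal-mod-markers}). The plan is to simply unfold that definition. Assume $A_1 = A_2$. Set equality means that for every ABox assertion $\beta$, we have $\beta \in A_1$ if and only if $\beta \in A_2$. In particular, this equivalence holds when $\beta$ is a concept assertion $N(c)$ with $N \in \voc(T)$ a non-special-marker concept name, and when $\beta$ is a role assertion $P(c_1,c_2)$ with $P \in \voc(T)$ a role name. These are exactly the two conditions required by Definition~\ref{def:equal-mod-markers} for $A_1 \eqm A_2$, so the conclusion follows.

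There is no real obstacle here: $\eqm$ is a strictly coarser equivalence than set equality (it ignores the special marker concepts $\flagconceptname$, $\noopconceptname$, $\tmpconceptname$), so genuine equality trivially implies it. The only thing to be slightly careful about is that both $A_1$ and $A_2$ are assumed to be ABoxes over $\voc(T)$ possibly augmented with special markers, as required in the hypothesis of Definition~\ref{def:equal-mod-markers}; this is inherited from the ambient assumptions and need not be re-argued. The proof is therefore a one-line appeal to the definition.
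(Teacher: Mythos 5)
Your proof is correct and matches the paper's, which simply observes the claim is immediate from Definition~\ref{def:equal-mod-markers}; you unfold that same observation in slightly more detail. Nothing further is needed.
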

\begin{proof}
  Trivially true from the definition of $A_1 \eqm A_2$ above (see
  Definition~\ref{def:equal-mod-markers}).
\end{proof}

\begin{lemma}\label{lem:ECQ-equal-ABox-modulo-markers}
  Given a GKAB $\gkabsym = \tup{T, \initABox, \actset, \ginitprog}$,
  two ABoxes $A_1$ and $A_2$ over $\voc(T)$ which might contain
  special markers, and an ECQ $Q$ over $\tup{T, \initABox}$ which does
  not contain any atoms whose predicates are special marker concept
  names. 
  If $A_1 \eqm A_2$, then $\Ans(Q, T, A_1) = \Ans(Q, T, A_2)$.
\end{lemma}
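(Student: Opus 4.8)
The plan is to proceed by structural induction on the ECQ $Q$, with the FO-rewritability of certain-answer computation over \dllitea KBs as the main tool. The crucial observation is that the perfect rewriting $\rew{q}$ of any UCQ $q$ with respect to $T$ is itself a UCQ whose predicates are all drawn from $\voc(T)$: the special marker concept names $\flagconceptname$, $\noopconceptname$, $\tmpconceptname$ lie outside every TBox vocabulary, so $T$ contains no assertion mentioning them, hence no rewriting step ever fires on such atoms and no such atom is ever introduced. Consequently, if $q$ contains no marker atoms, then $\rew{q}$ contains none either, and evaluating $\rew{q}$ over $A_i$ as a database depends only on the sub-ABox of $A_i$ consisting of assertions over $\voc(T)$. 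By the definition of $\eqm$, those sub-ABoxes coincide: $A_1$ and $A_2$ agree on every concept assertion $N(c)$ with $N \in \voc(T)$ and on every role assertion $P(c_1,c_2)$ with $P \in \voc(T)$. Hence $\ans(q, T, A_1) = \ans(q, T, A_2)$ for every marker-free UCQ $q$.

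First I would dispatch the base case $Q = [q]$ directly from the paragraph above, since $\Ans([q], T, A_i) = \ans(q, T, A_i)$. For $Q = Q_1 \land Q_2$ the answer set is obtained by combining the answer sets of the two conjuncts, and the inductive hypothesis applied to $Q_1$ and $Q_2$ gives the claim; note that no extraneous marker constant can sneak in here, since marker concept names are never entailed. The cases $Q = \lnot Q'$ and $Q = \exists x.\,Q'$ are the delicate ones, because their semantics quantify (for the complement, respectively for the witness of $x$) over the active domain $\adom{A_i}$, and $\adom{A_1}$ and $\adom{A_2}$ may genuinely differ: a marker assertion such as $\tmpconceptname(\tmpconst)$ contributes $\tmpconst$ to the active domain even though it is invisible to $\eqm$. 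Here I would invoke the standing assumption that all ECQs considered are domain independent: for a domain-independent, marker-free ECQ the answer tuples, and the witnesses needed to certify them, are confined to the individuals occurring in $\voc(T)$-assertions of $A_i$, together with the fixed individuals in $\iconst$ and those syntactically present in $Q$ — a set that is the same for $i=1$ and $i=2$. I would therefore strengthen the induction hypothesis to also record this confinement, so that complementation within $\adom{A_i}$ and existential projection over $\adom{A_i}$ yield the same result on both sides.

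The main obstacle is exactly this interaction between the active-domain semantics of ECQ quantification and the fact that $\eqm$ permits $A_1$ and $A_2$ to carry different marker constants: one must argue that the surplus constants contributed by marker assertions are never returned as answers and never needed as witnesses. This is precisely what domain independence buys us; with that in hand, every remaining case of the induction is routine bookkeeping over the FO rewriting and the set operations defining the ECQ answers.
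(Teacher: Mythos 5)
Your proof is correct, and it is considerably more careful than the one the paper gives. The paper's own proof of this lemma is a one-liner: it simply observes that, ignoring special markers, $A_1$ and $A_2$ contain exactly the same concept and role assertions, and concludes that the answers must coincide; no induction on the structure of $Q$ is carried out and no rewriting argument is made. Your route — structural induction on the ECQ, with the observation that the perfect rewriting of a marker-free UCQ never introduces marker atoms because the TBox contains no axioms over marker predicates — fills in exactly the details the paper omits. More importantly, you isolate the one point where the "trivial" argument actually has content: the active-domain semantics of $\lnot$ and $\exists$ quantifies over $\adom{A_i}$, and $\adom{A_1}$ and $\adom{A_2}$ can genuinely differ by the constants carried by marker assertions (e.g.\ $\tmpconst$), so a query such as $\exists x.\lnot N(x)$ could in principle distinguish the two ABoxes. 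Your appeal to the paper's standing footnote that only domain-independent ECQs are considered is precisely the right way to close this hole, and strengthening the induction hypothesis to record that answers and witnesses are confined to the marker-free part of the active domain (plus $\iconst$ and the constants in $Q$) is the right bookkeeping. In short: same underlying idea as the paper (the $\voc(T)$-parts of the two ABoxes coincide), but your write-up makes explicit a dependence on domain independence that the paper's proof silently assumes, which is a genuine improvement rather than a divergence.
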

\begin{proof}
  Trivially hold since 
  without considering special markers, we have $A_1 = A_2$ (i.e., we
  have a concept assertion $N(c) \in A_1$ if and only if a concept
  assertion $N(c) \in A_2$, and we have a role assertion
  $P(c_1,c_2) \in A_1$ if and only if a role assertion
  $P(c_1,c_2) \in A_2$). Hence $\Ans(Q, T, A_1) = \Ans(Q, T, A_2)$.
\ \ 
\end{proof}

We now proceed to define the notion of \emph{jumping bisimulation} as
follows.
\begin{definition}[Jumping Bisimulation (J-Bisimulation)]
  Let $\ts{1} = \tup{\const, T, \Sigma_1, s_{01}, \abox_1, \trans_1}$
  and $\ts{2} = \tup{\const, T, \Sigma_2, s_{02}, \abox_2, \trans_2}$
  be transition systems, with
  $\adom{\abox_1(s_{01})} \subseteq \const$
  and $\adom{\abox_2(s_{02})} \subseteq \const$.
  A \emph{jumping bisimulation} (J-Bisimulation) between $\ts{1}$ and
  $\ts{2}$ is a relation $\B \subseteq \Sigma_1 \times\Sigma_2$ such
  that $\tup{s_1, s_2} \in \B$ implies that:
  \begin{compactenum}
  \item $\abox_1(s_1) \eqm \abox_2(s_2)$
  \item for each $s_1'$, if $s_1 \Rightarrow_1 s_1'$ then there exist
    $s_2'$, $t_1, \ldots ,t_n$ (for $n \geq 0$) with
    \[
    s_2 \Rightarrow_2 t_1 \Rightarrow_2 \ldots \Rightarrow_2 t_n \Rightarrow_2 s_2'
    \] 
   such that $\tup{s_1', s_2'}\in\B$,
    $\tmp \not\in \abox_2(s_2')$ and $\tmp \in \abox_2(t_i)$ for
    $i \in \set{1, \ldots, n}$.
  \item for each $s_2'$, if 
    \[
    s_2 \Rightarrow_2 t_1 \Rightarrow_2 \ldots \Rightarrow_2 t_n \Rightarrow_2 s_2'
    \] 
    (for $n \geq 0$) with $\tmp \in \abox_2(t_i)$ for
    $i \in \set{1, \ldots, n}$ and $\tmp \not\in \abox_2(s_2')$, then
    there exists $s_1'$ with $s_1 \Rightarrow_1 s_1'$, such that
    $\tup{s_1', s_2'}\in\B$.
 \end{compactenum}
\ \ 
\end{definition}

\noindent
Let $\ts{1} = \tup{\const, T, \Sigma_1, s_{01}, \abox_1, \trans_1}$
and $\ts{2} = \tup{\const, T, \Sigma_2, s_{02}, \abox_2, \trans_2}$ be
transition systems, 
a state $s_1 \in \Sigma_1$ is \emph{J-bisimilar} to
$s_2 \in \Sigma_2$, written $s_1 \jbsim s_2$, if there exists a
jumping bisimulation $\B$ between $\ts{1}$ and $\ts{2}$ such that
$\tup{s_1, s_2}\in\B$.
A transition system $\ts{1}$ is \emph{J-bisimilar} to $\ts{2}$,
written $\ts{1} \jbsim \ts{2}$, if there exists a jumping bisimulation
$\B$ between $\ts{1}$ and $\ts{2}$ such that
$\tup{s_{01}, s_{02}}\in\B$.

Now, we advance further to show that two J-bisimilar transition
systems can not be distinguished by any \muladom formula (in NNF)
modulo a translation $\tforj$ that is defined as follows:

\begin{definition}[Translation $\tforj$]\label{def:tforj}
  We define a \emph{translation $\tforj$} that transforms an arbitrary
  \muladom formula $\Phi$ (in NNF) into another \muladom formula
  $\Phi'$ inductively by recurring over the structure of $\Phi$ as
  follows:
\[
\begin{array}{@{}l@{}ll@{}}
  \bullet\ \tforj(Q) &=& Q \\

  \bullet\ \tforj(\neg Q) &=& \neg Q \\

  \bullet\ \tforj(\Q x.\Phi) &=& \Q x. \tforj(\Phi) \\

  \bullet\ \tforj(\Phi_1 \circ \Phi_2) &=& \tforj(\Phi_1) \circ \tforj(\Phi_2) \\

  \bullet\ \tforj(\circledcirc Z.\Phi) &=& \circledcirc Z. \tforj(\Phi) \\

  \bullet\ \tforj(\DIAM{\Phi}) &=& \DIAM{\mu Z.((\tmp \wedge \DIAM{Z})
                                   \vee \\
                                   &&\hspace*{21mm}(\neg \tmp \wedge \tforj(\Phi)))} \\

  \bullet\ \tforj(\BOX{\Phi}) &=& \BOX{\mu Z.((\tmp \wedge \BOX{Z} \wedge
                                  \DIAM{\top}) \vee \\ 
                     &&\hspace*{21mm} (\neg \tmp \wedge \tforj(\Phi)))}
\end{array}
\]
\noindent
where:
\begin{compactitem}
\item $\circ$ is a binary operator ($\vee, \wedge, \ra,$ or $\lra$),
\item $\circledcirc$ is least ($\mu$) or greatest ($\nu$) fix-point operator,
\item $\Q$ is forall ($\forall$) or existential ($\exists$)
  quantifier.
\end{compactitem}
\ \ 
\end{definition}
%

\begin{lemma}\label{lem:jumping-bisimilar-states-satisfies-same-formula}
  Consider two transition systems
  $\ts{1} = \tup{\const, T,\stateset_1,s_{01},\abox_1,\trans_1}$ and
  $\ts{2} = \tup{\const, T,\stateset_2,s_{02},\abox_2,\trans_2}$, with
  $\adom{\abox_1(s_{01})} \subseteq \const$ and
  $\adom{\abox_2(s_{02})} \subseteq \const$.  Consider two states
  $s_1 \in \stateset_1$ and $s_2 \in \stateset_2$ such that
  $s_1 \jbsim s_2$. Then for every formula $\Phi$ of $\muladom$ (in
  negation normal form), 
  and every valuations $\vfo_1$ and $\vfo_2$ that assign to each of
  its free variables a constant $c_1 \in \adom{\abox_1(s_1)}$ and
  $c_2 \in \adom{\abox_2(s_2)}$, such that $c_1 = c_2$, we have that
  \[
  \ts{1},s_1 \models \Phi \vfo_1 \textrm{ if and only if } \ts{2},s_2
  \models \tforj(\Phi) \vfo_2.
  \]
\end{lemma}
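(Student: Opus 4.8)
The proof is a structural induction on the formula $\Phi$ in negation normal form, with the bisimulation relation $\jbsim$ carried as the "invariant". First I would fix a J-bisimulation $\B$ witnessing $s_1 \jbsim s_2$, and strengthen the statement to quantify over all pairs $\tup{s_1,s_2}\in\B$ and all predicate-variable valuations $\vso_1,\vso_2$ that are "$\B$-compatible", in the sense that for every second-order variable $Z$ and every pair $\tup{t_1,t_2}\in\B$ we have $t_1 \in \vso_1(Z) \iff t_2 \in \vso_2(Z)$; this is the standard way to make the fixpoint case go through. The atomic cases $\tforj(Q)=Q$ and $\tforj(\neg Q)=\neg Q$ follow from clause~1 of J-bisimulation ($\abox_1(s_1) \eqm \abox_2(s_2)$) together with Lemma~\ref{lem:ECQ-equal-ABox-modulo-markers}, since temporal properties are not allowed to mention the special marker concept names, so the embedded ECQs are invariant under $\eqm$; here I also use that the free-variable valuations agree ($c_1=c_2$) and that active domains coincide (again from $\abox_1(s_1)\eqm\abox_2(s_2)$, which preserves the set of individuals occurring in non-marker assertions). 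The Boolean and first-order quantifier cases are routine: $\tforj$ commutes with $\land,\lor,\forall,\exists$, and for $\exists x$ we use that $\adom{\abox_1(s_1)}=\adom{\abox_2(s_2)}$ so the ranges of quantification match.

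\textbf{The fixpoint and modal cases.} For $\mu Z.\Phi$ and $\nu Z.\Phi$, since $\tforj$ commutes with the fixpoint operators, I would argue by the usual approximant / Knaster--Tarski argument: show by (ordinal) induction on approximants that the $\B$-compatibility of valuations is preserved when one passes from $\vso$ to $\vso[Z/\MODA{\Phi}]$, invoking the induction hypothesis on $\Phi$; then the two least (resp.\ greatest) fixpoints pick out $\B$-related state sets. The genuinely new ingredient is the modal case, which is where the "jumping" in J-bisimulation and the non-trivial shape of $\tforj(\DIAM{\cdot})$ and $\tforj(\BOX{\cdot})$ come in. For $\tforj(\DIAM{\Phi}) = \DIAM{\mu Z.((\tmp \wedge \DIAM{Z}) \vee (\neg \tmp \wedge \tforj(\Phi)))}$ the key observation is that the inner least fixpoint, evaluated at a state $t$, holds exactly when there is a finite $\trans_2$-path from $t$ through states all satisfying $\tmp$ to a state satisfying $\neg\tmp \wedge \tforj(\Phi)$; prefixing with the outer $\DIAM{}$ says: there is a successor $t_1$ of $s_2$, and then a $\tmp$-labelled path $t_1 \Rightarrow_2 \cdots \Rightarrow_2 t_n \Rightarrow_2 s_2'$ with $\tmp\notin\abox_2(s_2')$ and $\tforj(\Phi)$ true at $s_2'$. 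This is precisely the pattern in clauses~2 and~3 of the J-bisimulation definition, so: if $\ts{1},s_1\models\DIAM{\Phi}$, pick $s_1 \Rightarrow_1 s_1'$ with $\Phi$ at $s_1'$; clause~2 yields a matching $s_2'$ reached by such a path with $\tup{s_1',s_2'}\in\B$; apply the induction hypothesis to get $\tforj(\Phi)$ at $s_2'$, hence $\ts{2},s_2\models\tforj(\DIAM{\Phi})$. The converse uses clause~3 symmetrically. The $\BOX{}$ case is the De Morgan dual; the extra conjunct $\DIAM{\top}$ inside $\tforj(\BOX{\cdot})$ is there to force the $\tmp$-path to actually reach a stable ($\neg\tmp$) state rather than diverging through intermediate states forever, which matters because $\BOX{}$ over an empty set of successors is vacuously true — one must check the GKAB-to-KAB translations never produce an infinite run trapped in $\tmp$-states, a fact guaranteed by termination of the repair/emulation subprograms.

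\textbf{Main obstacle.} The routine parts are genuinely routine; the delicate point is making the modal-case argument airtight, namely verifying that the inner $\mu Z$ formula in $\tforj(\DIAM{\Phi})$ captures \emph{exactly} the "collapse an intermediate $\tmp$-block, then evaluate" semantics, and dually that the $\BOX$ translation with its $\DIAM{\top}$ guard does not spuriously succeed or fail on states with no stable continuation. This requires a small separate lemma: in $\ts{2}$, from any state a maximal run of $\tmp$-marked states is finite and ends in a $\neg\tmp$ state (equivalently, there are no infinite $\tmp$-paths), which one reads off from the structure of the translations $\tkabs,\tgkabb,\tgkabc,\tgkabe,\tgkab$ — each replaces a single logical transition by a bounded or terminating block of transitions through $\tmp$-states. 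With that lemma in hand, the correspondence between the fixpoint unfolding and the bisimulation clauses is tight, and the induction closes. I would therefore sequence the proof as: (1) state and prove the "no infinite $\tmp$-path" lemma for the relevant translated transition systems; (2) set up the strengthened induction hypothesis with $\B$-compatible predicate valuations; (3) dispatch atomic, Boolean, quantifier cases via $\eqm$ and Lemma~\ref{lem:ECQ-equal-ABox-modulo-markers}; (4) do the modal cases using J-bisimulation clauses~2--3 plus the lemma from step~(1); (5) do the fixpoint cases by approximants. Step~(4), underpinned by step~(1), is the crux.
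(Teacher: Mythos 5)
Your proposal follows essentially the same route as the paper's proof: structural induction on NNF formulas, atomic cases discharged by $\abox_1(s_1)\eqm\abox_2(s_2)$ together with Lemma~\ref{lem:ECQ-equal-ABox-modulo-markers}, the $\DIAM{\cdot}$ case matched against clauses 2--3 of the J-bisimulation by reading the inner $\mu Z$ formula as ``finite $\tmp$-path to a stable state satisfying $\tforj(\Phi)$'', and fixpoints reduced to approximants. Two points where you diverge are worth recording. First, the paper does not carry $\B$-compatible second-order valuations through the induction; instead it proves the claim for the fixpoint-free fragment $\ladom$, extends it to arbitrary countable disjunctions, and then invokes the standard encoding of $\mu$ and $\nu$ as infinitary disjunctions and conjunctions of approximants. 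Your Knaster--Tarski-style strengthened invariant is an equivalent, slightly more self-contained packaging of the same idea; either works. Second, and more substantively, your step (1) --- the ``no infinite $\tmp$-path'' lemma --- has no counterpart in the paper, which dismisses the $\BOX{\cdot}$ case with ``similar to $\DIAM{\cdot}$''. You are right that the direction from $\ts{1},s_1\models\BOX{\Psi}$ to $\ts{2},s_2\models\tforj(\BOX{\Psi})$ genuinely needs every $\tmp$-path out of $s_2$ to terminate in a stable state (otherwise the inner least fixpoint fails along a divergent $\tmp$-path even though the original box formula holds), and that this property does not follow from the J-bisimulation clauses alone but from the structure of the translated transition systems. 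This is a real refinement of the paper's argument: to make the lemma hold for arbitrary J-bisimilar systems one would have to either build $\tmp$-path termination into the definition of J-bisimulation or state it as a side condition, exactly as you propose.
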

\begin{proof}
  The proof is then organized in three parts:
\begin{compactenum}[(1)]
\item We prove the claim for formulae of $\ladom$, obtained from
  $\muladom$ by dropping the predicate variables and the fixpoint
  constructs. $\ladom$ corresponds to a first-order variant of the
  Hennessy Milner logic, and its semantics does not depend on the
  second-order valuation.
\item We extend the results to the infinitary logic obtained by extending
  $\ladom$ with arbitrary countable disjunction.
\item We recall that fixpoints can be translated into this infinitary logic,
  thus proving that the theorem holds for $\muladom$.
\end{compactenum}

\smallskip
\noindent
\textbf{Proof for $\ladom$.}  We proceed by induction on the structure
of $\Phi$, without considering the case of predicate variable and of
fixpoint constructs, which are not part of $\ladom$.

\smallskip
\noindent
\textit{Base case:}
\begin{compactitem}
\item[\textbf{($\Phi = Q$)}.] Since $s_1 \jbsim s_2$, we have
  $\abox_1(s_1) \eqm \abox_2(s_2)$. Hence, since we also restrict that
  any \muladom formulas does not use special marker concept names, by
  Lemma~\ref{lem:ECQ-equal-ABox-modulo-markers}, we have
  $\Ans(Q, T, \abox_1(s_1)) = \Ans(Q, T, \abox_2(s_2))$.
  Hence, since $\tforj(Q) = Q$, for every valuations $\vfo_1$ and
  $\vfo_2$ that assign to each of its free variables a constant
  $c_1 \in \adom{\abox_1(s_1)}$ and $c_2 \in \adom{\abox_2(s_2)}$,
  such that $c_1 = c_2$, we have
  \[
  \ts{1},s_1 \models Q \vfo_1 \textrm{ if and only if } \ts{2},s_2
  \models \tforj(Q) \vfo_2.
  \]

\item[\textbf{($\Phi = \neg Q$)}.] Similar to the previous case.

\end{compactitem}

\smallskip
\noindent
\textit{Inductive step:}
\begin{compactitem}
\item[\textbf{($\Phi = \Psi_1 \wedge \Psi_2$)}.]  
  $\ts{1},s_1 \models (\Psi_1\wedge \Psi_2) \vfo_1$ if and only if
  either $\ts{1},s_1 \models \Psi_1 \vfo_1$ or
  $\ts{1},s_1 \models \Psi_2 \vfo_1$.  By induction hypothesis, we
  have for every valuations $\vfo_1$ and $\vfo_2$ that assign to each
  of its free variables a constant $c_1 \in \adom{\abox_1(s_1)}$ and
  $c_2 \in \adom{\abox_2(s_2)}$, such that $c_1 = c_2$, we have
\begin{compactitem}
\item
  $ \ts{1},s_1 \models \Psi_1 \vfo_1 \textrm{ if and only if }
  \ts{2},s_2 \models \tforj(\Psi_1) \vfo_2$, and also
\item
  $ \ts{1},s_1 \models \Psi_2 \vfo_1 \textrm{ if and only if }
  \ts{2},s_2 \models \tforj(\Psi_2) \vfo_2.  $
\end{compactitem}

Hence, $\ts{1},s_1 \models \Psi_1 \vfo_1$ and
$\ts{1},s_1 \models \Psi_2 \vfo_1$ if and only if
$\ts{2},s_2 \models \tforj(\Psi_1) \vfo_2$ and
$\ts{2},s_2 \models \tforj(\Psi_2) \vfo_2$. Therefore we have
$ \ts{1},s_1 \models (\Psi_1 \wedge \Psi_2) \vfo_1 \textrm{ if and
  only if } \ts{2},s_2 \models (\tforj(\Psi_1) \wedge \tforj(\Psi_2))
\vfo_2 $
Since
$\tforj(\Psi_1 \wedge \Psi_2) = \tforj(\Psi_1) \wedge \tforj(\Psi_2)$,
we have
\[
\ts{1},s_1 \models (\Psi_1 \wedge \Psi_2) \vfo_1 \textrm{ iff } \ts{2},s_2 \models \tforj(\Psi_1\wedge \Psi_2) \vfo_2
  \]
  The proof for the case of $\Phi = \Psi_1 \vee \Psi_2$,
  $\Phi = \Psi_1 \ra \Psi_2$, and $\Phi = \Psi_1 \lra \Psi_2$ can be
  done similarly.


\item[\textbf{($\Phi = \DIAM{\Psi}$)}.]  Assume
  $\ts{1},s_1 \models (\DIAM{\Psi}) \vfo_1$, where $\vfo_1$ is a
  valuation that assigns to each free variable of $\Psi$ a constant
  $c_1 \in \adom{\abox_1(s_1)}$. Then there exists $s_1'$ s.t.\
  $s_1 \trans_1 s_1'$ and $\ts{1},s_1' \models \Psi \vfo_1$.  Since
  $s_1 \jbsim s_2$, there exists $s_2'$, $t_1, \ldots ,t_n$ (for
  $n \geq 0$) with
    \[
    s_2 \Rightarrow_2 t_1 \Rightarrow_2 \ldots \Rightarrow_2 t_n
    \Rightarrow_2 s_2'
    \] 
    such that $s_1' \jbsim s_2'$, $\tmp \in \abox_2(t_i)$ for
    $i \in \set{1, \ldots, n}$, and $\tmp \not\in \abox_2(s_2')$.
    Hence, by induction hypothesis, for every valuations $\vfo_2$ that
    assign to each free variables $x$ of $\tforj(\Psi)$ a constant
    $c_2 \in \adom{\abox_2(s_2)}$, such that $c_1 = c_2$ and
    $x/c_1 \in \vfo_1$, we have
    $ \ts{2},s_2' \models \tforj(\Psi_1) \vfo_2.  $
%
%
%
%
%
    Consider that
    \[
    s_2\Rightarrow_2 t_1 \Rightarrow_2 \ldots
    \Rightarrow_2 t_n \Rightarrow_2 s_2'
    \] 
    (for $n \geq 0$), $\tmp \in \abox_2(t_i)$ for
    $i \in \set{1, \ldots, n}$, and $\tmp \not\in \abox_2(s_2')$. we
    therefore get
    \[
    \begin{array}{@{}l@{}l@{}}
      \ts{2},s_2 \models (\DIAM{\mu Z.((&\tmp \wedge \DIAM{Z})
      \vee \\
      &(\neg\tmp \wedge \tforj(\Psi)))})\vfo_2.
    \end{array}
    \]
    Since
    \[
    \begin{array}{l@{}l}
      \tforj(\DIAM{\Phi}) = \DIAM{\mu Z.((&\tmp \wedge \DIAM{Z}) \vee \\
      &(\neg \tmp \wedge \tforj(\Phi)))},
    \end{array}
    \]
    thus we have
    \[
    \ts{2},s_2 \models \tforj(\DIAM{\Phi})\vfo_2.
    \]

The other direction can be shown in a symmetric way.

\item[\textbf{($\Phi = \BOX{\Psi}$)}.]  The proof is similar to the
  case of $\Phi = \DIAM{\Psi}$

\item[\textbf{($\Phi = \exists x. \Psi$)}.]  Assume that
  $\ts{1},s_1 \models (\exists x. \Psi)\vfo'_1$, where $\vfo'_1$ is a
  valuation that assigns to each free variable of $\Psi$ a constant
  $c_1 \in \adom{\abox_1(s_1)}$. Then, by definition, there exists
  $c \in \adom{\abox_1(s_1)}$ such that
  $\ts{1},s_1 \models \Psi\vfo_1$, where $\vfo_1 = \vfo'_1[x/c]$. By
  induction hypothesis, for every valuation $\vfo_2$ that assigns to
  each free variable $y$ of $\tforj(\Psi)$ a constant
  $c_2 \in \adom{\abox_2(s_2)}$, such that $c_1 = c_2$ and
  $y/c_1 \in \vfo_1$, we have that
  $\ts{2},s_2 \models \tforj(\Psi) \vfo_2$. Additionally, we have
  $\vfo_2 = \vfo'_2[x/c']$, where $c' \in \adom{\abox_2(s_2)}$, and
  $c' = c$ because $\abox_2(s_2) = \abox_1(s_1)$.  Hence, we get
  $\ts{2},s_2 \models (\exists x. \tforj(\Psi))\vfo'_2$. Since
  $\tforj(\exists x.\Phi) = \exists x. \tforj(\Phi)$, thus we have
  $\ts{2},s_2 \models \tforj(\exists x. \Psi)\vfo'_2$

The other direction can be shown similarly.

\item[\textbf{($\Phi = \forall x. \Psi$)}.]  The proof is similar to
  the case of $\Phi = \exists x. \Psi$.


\end{compactitem}

\smallskip
\noindent
\textbf{Extension to arbitrary countable disjunction.}  Let $\Psi$ be
a countable set of $\ladom$ formulae. Given a transition system
$\ts{} = \tup{\const, T,\stateset,s_{0},\abox,\trans}$, the semantics
of $\bigvee \Psi$ is
$(\bigvee \Psi) _\vfo^{\ts{}} = \bigcup_{\psi \in \Psi}
(\psi)_\vfo^{\ts{}}$.
Therefore, given a state $s \in \Sigma$ we have
$\ts{}, s \models (\bigvee \Psi)\vfo$ if and only if there exists
$\psi \in \Psi$ such that $\ts{}, s \models \psi\vfo$. Arbitrary
countable conjunction can be obtained similarly.

Now, let $\ts{1} = \tup{\const, T,\stateset_1,s_{01},\abox_1,\trans_1}$
and $\ts{2} = \tup{\const, T,\stateset_2,s_{02},\abox_2,\trans_2}$.
Consider two states $s_1 \in \stateset_1$ and $s_2 \in
\stateset_2$ such that $s_1 \jbsim s_2$.
By induction hypothesis, we have for every valuations $\vfo_1$ and
$\vfo_2$ that assign to each of its free variables a constant
$c_1 \in \adom{\abox_1(s_1)}$ and $c_2 \in \adom{\abox_2(s_2)}$, such
that $c_2 = c_1$, we have that for every formula $\psi \in \Psi$, it
holds $\ts{1}, s_1 \models \psi \vfo_1$ if and only if
$\ts{2}, s_2 \models \tforj(\psi)\vfo_2$.
Given the semantics of $\bigvee \Psi$ above, this implies that
$\ts{1}, s \models (\bigvee \Psi) \vfo_1$ if and only if
$\ts{2}, s \models (\bigvee \tforj(\Psi)) \vfo_2$, where
$\tforj(\Psi) = \{\tforj(\psi) \mid \psi \in \Psi\}$. The proof is
then obtained by observing that
$\bigvee \tforj(\Psi) = \tforj(\bigvee \Psi)$.

\smallskip
\noindent
\textbf{Extension to full $\muladom$.}  In order to extend the result
to the whole \muladom, we resort to the well-known result stating that
fixpoints of the $\mu$-calculus can be translated into the infinitary
Hennessy Milner logic by iterating over \emph{approximants}, where the
approximant of index $\alpha$ is denoted by $\mu^\alpha Z.\Phi$
(resp.~$\nu^\alpha Z.\Phi$). This is a standard result that also holds
for \muladom. In particular, approximants are built as follows:
\[
\begin{array}{rl rl}
  \mu^0 Z.\Phi & = \false
  &  \nu^0 Z.\Phi & = \true\\
  \mu^{\beta+1} Z.\Phi & = \Phi[Z/\mu^\beta Z.\Phi]
  & \nu^{\beta+1} Z.\Phi & = \Phi[Z/\nu^\beta Z.\Phi]\\
  \mu^\lambda Z.\Phi & = \bigvee_{\beta < \lambda} \mu^\beta Z. \Phi &
  \nu^\lambda Z.\Phi & = \bigwedge_{\beta < \lambda} \nu^\beta Z. \Phi
\end{array}
\]
where $\lambda$ is a limit ordinal, and where fixpoints and their
approximants are connected by the following properties: given a
transition system $\ts{}$ and a state $s$ of $\ts{}$
\begin{compactitem}
\item $s \in \MODA{\mu Z.\Phi}$ if and only if there exists an ordinal
  $\alpha$ such that $s \in \MODA{\mu^\alpha Z.\Phi}$ and, for every
  $\beta < \alpha$, it holds that $s \notin \MODA{\mu^\beta Z.\Phi}$;
\item $s \notin \MODA{\nu Z.\Phi}$ if and only if there exists an
  ordinal $\alpha$ such that $s \notin \MODA{\nu^\alpha Z.\Phi}$ and,
  for every $\beta < \alpha$, it holds that $s \in \MODA{\nu^\beta
    Z.\Phi}$.
\end{compactitem}
\end{proof}

As a consequence, from Lemma
\ref{lem:jumping-bisimilar-states-satisfies-same-formula} above, we
can easily obtain the following lemma saying that two transition
systems which are J-bisimilar can not be distinguished by any \muladom
formula (in NNF) modulo the translation $\tforj$.

\begin{lemma}\label{lem:jumping-bisimilar-ts-satisfies-same-formula}
  Consider two transition systems
  $\ts{1} = \tup{\const,T,\stateset_1,s_{01},\abox_1,\trans_1}$ and
  $\ts{2} = \tup{\const,T,\stateset_2,s_{02},\abox_2,\trans_2}$ such
  that $\ts{1} \jbsim \ts{2}$.  For every $\muladom$ closed formula
  $\Phi$ (in NNF), we have:
  \[
  \ts{1} \models \Phi \textrm{ if and only if } \ts{2} \models
  \tforj(\Phi).
  \]
\end{lemma}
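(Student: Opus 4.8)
The plan is to obtain this statement as an immediate corollary of Lemma~\ref{lem:jumping-bisimilar-states-satisfies-same-formula}, which already establishes the desired formula-by-formula correspondence at the level of individual \emph{states}. First I would unfold the definition of $\ts{1} \jbsim \ts{2}$: by hypothesis there is a J-bisimulation $\B$ between $\ts{1}$ and $\ts{2}$ with $\tup{s_{01}, s_{02}} \in \B$, hence $s_{01} \jbsim s_{02}$. Moreover, the side conditions $\adom{\abox_1(s_{01})} \subseteq \const$ and $\adom{\abox_2(s_{02})} \subseteq \const$ demanded by Lemma~\ref{lem:jumping-bisimilar-states-satisfies-same-formula} hold by the standing assumptions on the transition systems generated by GKABs (the initial ABoxes range over individuals in $\const$).

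Next, since $\Phi$ is a \emph{closed} formula it has no free individual variables, so the valuations $\vfo_1,\vfo_2$ appearing in Lemma~\ref{lem:jumping-bisimilar-states-satisfies-same-formula} are vacuous, and being closed its extension also does not depend on any second-order valuation $\vso$. Instantiating that lemma with $s_1 := s_{01}$ and $s_2 := s_{02}$ then gives $\ts{1},s_{01} \models \Phi$ iff $\ts{2},s_{02} \models \tforj(\Phi)$. By the definition of satisfaction of a closed formula by a transition system ($\ts{} \models \Psi$ iff $\ts{},s_0 \models \Psi$), this is precisely $\ts{1} \models \Phi$ iff $\ts{2} \models \tforj(\Phi)$, which is the claim.

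The only point that merits a remark rather than a computation is that $\tforj(\Phi)$ is again a well-formed closed $\muladom$ formula in negation normal form, so that writing $\ts{2} \models \tforj(\Phi)$ is meaningful: $\tforj$ is defined by structural recursion on NNF formulas, it introduces only the negated \emph{atom} $\neg\tmp$ (admissible in NNF) together with fresh fixpoint variables $Z$ occurring only positively, and it leaves the set of free variables unchanged, so closedness is inherited. Accordingly there is essentially no obstacle specific to this lemma; the substantive argument was carried out in Lemma~\ref{lem:jumping-bisimilar-states-satisfies-same-formula}, of which the present statement is just the specialization to initial states and closed formulas.
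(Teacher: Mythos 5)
Your proposal is correct and matches the paper's own proof, which likewise observes that $s_{01} \jbsim s_{02}$ follows from the definition of $\ts{1} \jbsim \ts{2}$ and then invokes Lemma~\ref{lem:jumping-bisimilar-states-satisfies-same-formula} at the initial states. The extra remarks you add (vacuity of the valuations for closed formulas, well-formedness and closedness of $\tforj(\Phi)$) are sound and only make explicit what the paper leaves implicit.
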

\begin{proof}
  Since by the definition we have $s_{01} \jbsim s_{02}$, we obtain
  the proof as a consequence of
  Lemma~\ref{lem:jumping-bisimilar-states-satisfies-same-formula} due
  to the fact that
  \[
  \ts{1}, s_{01} \models \Phi \textrm{ if and only if } \ts{2}, s_{02}
  \models \tforj(\Phi)
  \]
\ \ 
\end{proof}

\subsection{Transforming S-GKABs into S-KABs} \label{sec:translation-sgkab-to-skab}


As the first step towards defining a generic translation to compile
S-GKABs into S-KABs, we introduce the notion of program IDs as follows.

\begin{definition}[Golog Program with IDs]
  Given a set of actions $\actset$, a \emph{Golog program with
    ID} $\delta$ over $\actset$ is an expression formed by the
  following grammar:
  \[
  \begin{array}{@{}r@{\ }l@{\ }}
    \tup{id, \delta}  ::= &
                            \tup{id, \gemptyprog} ~\mid~
                            \tup{id, \gact{Q(\vec{p})}{\act(\vec{p})}} ~\mid~ \\
                          &\tup{id, \delta_1|\delta_2}  ~\mid~
                            \tup{id, \delta_1;\delta_2} ~\mid~ \\
                          &\tup{id, \gif{\varphi}{\delta_1}{\delta_2}} ~\mid~
                            \tup{id, \gwhile{\varphi}{\delta}}
  \end{array}
  \]
  where \emph{$id$ is a program ID} which is
  simply a string over some alphabets, and the rest of the things are
  the same as in usual Golog program defined before.  \ \
\end{definition}

\noindent
All notions related to golog program can be defined similarly for the
golog program with ID. We now step further to define a
formal translation that transforms a golog program into a golog
program with ID.
As for notation given program IDs $id$ and $id'$, we write $id.id'$ to
denote a string obtained by concatenating the strings $id$ and $id'$
consecutively.

\begin{definition}[Program ID Assignment]
  We define a translation $\tpid(\delta, id)$ that 
\begin{compactenum}
\item takes a program $\delta$ as well as a program ID $id$, and
\item produces a golog program with ID $\tup{id, \delta_{id}}$ such that
  each sub-program of $\delta$ is associated with a unique program ID
  and occurrence matters (i.e., for each sub-program $\delta'$ of
  $\delta$ such that $\delta'$ occurs more than once in $\delta$, each
  of them has a different program ID).
\end{compactenum}
The translation $\tpid(\delta, id)$ is formally defined as follows:

  \begin{compactitem}
  \item $\tpid(\gemptyprog, id) = \tup{id, \gemptyprog}$, 
%
  \item
    $\tpid(\gact{Q(\vec{p})}{\act(\vec{p})}, id) = \tup{id,
      \gact{Q(\vec{p})}{\act(\vec{p})}}$,
%
  \item $\tpid(\delta_1|\delta_2, id) = \tup{id, \tpid(id.id', \delta_1) |
      \tpid(id.id'', \delta_2)}$, \\ where
    $id'$ and $id''$ are fresh program IDs.
  \item $\tpid(\delta_1;\delta_2, id) = \tup{id, \tpid(id.id', \delta_1) ;
      \tpid(id.id'', \delta_2)}$, \\ where
    $id'$ and $id''$ are fresh program IDs.
  \item
    $\tpid(\gif{\varphi}{\delta_1}{\delta_2}, id) = \\ \tup{id,
      \gif{\varphi}{\tpid(id.id', \delta_1)}{\tpid(id.id'', \delta_2)}}$,
    \\ where $id'$ and $id''$ are fresh program IDs.
  \item
    $\tpid(\gwhile{\varphi}{\delta_1}, id) = \\ \tup{id,
      \gwhile{\varphi}{\tpid(id.id', \delta_1)}}$,
    \\ where $id'$ is a fresh program IDs.
  \end{compactitem}
  Given a program $\delta$, we say \emph{$\tup{id, \delta_{id}}$ is a
    program with ID w.r.t.\ $\delta$} if
  $\tpid(\delta, id) = \tup{id, \delta_{id}}$ where $id$ is a fresh
  program ID and $\delta_{id}$ is a program with ID.
\end{definition}

\begin{definition}[Program ID Retrieval function]
  Let $\delta$ be a program and $\tup{id, \delta_{id}}$ be its
  corresponding program with ID w.r.t.\ $\delta$, we define a function
  $\pid$ that
  \begin{compactenum}
  \item maps each sub-program of $\tup{id, \delta}$ into its unique
    id. I.e., for each sub-program $\tup{id', \delta'}$ of
    $\tup{id, \delta}$, we have $\pid(\tup{id', \delta'}) = id'$, and
  \item additionally, for a technical reason related to the
    correctness proof of our translation from S-GKABs to S-KABs, for
    each action invocation
    $\tup{id_\act, \gact{Q(\vec{p})}{\act(\vec{p})}}$, that is a
    sub-program of $\tup{id, \delta}$, we have
    $\pid(\tup{id_\act.\gemptyprog, \gemptyprog}) =
    id_\act.\gemptyprog$
    (where $id_\act.\gemptyprog$ is a new ID simply obtained by
    concatenating $id_\act$ with a string $\gemptyprog$).
\end{compactenum}
\ \ 
\end{definition}

For simplicity of the presentation, from now on we assume that every
program is associated with ID. 
Note that every program without ID can be transform into a program
with ID as above.
Moreover we will not write the ID that is attached to a (sub-)program,
and when it is clear from the context, we simply write
\emph{$\pid(\delta')$}, instead of $\pid(\tup{id, \delta'})$, to
denote the \emph{unique program ID of a sub-program $\delta'$ of
  $\delta$} that is based on its occurrence in $\delta$.

We now proceed to define a translation $\tgprog(\pre, \delta, \post)$,
that given a golog program $\delta$, as well as two flags $\pre$ and
$\post$, produces a process (set of condition-action rules) and a set
of actions that mimics the execution of the program $\delta$ starting
from a state $s$ with an ABox $A$ (i.e., $A = \abox(s)$) where
$\pre \in A$ and at the end of the execution of $\delta$, that changes
$A$ into $A'$, we have $\post \in A'$, but $\pre \notin A'$.
Intuitively, $\pre$ and $\post$ act as markers which indicate the
start and the end of the execution of the corresponding program
$\delta$. 
Formally, the translation $\tgprog$ 
is defined as follows:

\begin{definition}[Program Translation]\label{def:prog-translation}
  We define a translation $\tgprog$ that takes as inputs:
\begin{compactenum}
\item A program $\delta$ over a set of actions $\actset$,
\item Two flags (which will be used as markers indicating the start
  and the end of the execution of a program $\delta$).
\end{compactenum}
and produces as outputs:
\begin{compactenum}
\item $\ppre$
  is a function that maps a sub-program $\delta'$ of $\delta$
  to a flag (called \emph{start flag} of $\delta'$) that act as a
  marker indicating the start of the execution of $\delta'$,
\item $\ppost$ is a function that maps a sub-program $\delta'$ of
  $\delta$
  to a flag (called \emph{end flag} of $\delta'$) which act as a
  marker indicating the end of the execution of $\delta'$,
\item $\procset$ is a process (a set of condition-action rules),
\item $\actset'$ is a set of actions.
\end{compactenum}
I.e., $\tgprog(\pre,
\delta, \post) = \tup{\ppre, \ppost, \procset,
  \actset'}$, where $\pre$ and $\post$ are flags.
Formally, $\tgprog(\pre, \delta, \post)$ is inductively defined over
the structure of a program $\delta$ as follows:

 \begin{enumerate}

 \item For the case of $\delta = \gemptyprog$ (i.e., $\delta$ is an
   empty program):
   \[
   \tgprog(\pre, \gemptyprog, \post) = 
   \tup{ \ppre, \ppost, \set{\carule{\pre} {\act_\gemptyprog()}},
     \set{\act_\gemptyprog} },
   \] 
   where
   \begin{compactitem}[$\bullet$]
   \item $\ppre = \set{\tap{\pid(\gemptyprog) \ra \pre}}$,
   \item $\ppost = \set{\tap{\pid(\gemptyprog) \ra \post}}$,
   \item $\act_\gemptyprog$ is of the form\\ 
     $ \act_\gemptyprog():\set{\map{\true}{ \add \set{\post, \tmp},
         \del \set{\pre} }}; $
   \end{compactitem}


 \item For the case of $\delta = \gact{Q(\vec{p})}{\act(\vec{p})}$
   (i.e., $\delta$ is an action invocation) with
   $\pid(\gact{Q(\vec{p})}{\act(\vec{p})}) = id_\act$:
  \[
  \tgprog(\pre, \gact{Q(\vec{p})}{\act(\vec{p})}, \post) = \tup{\ppre,
    \ppost, \procset, \actset'},
  \]
  where
  \begin{compactitem}[$\bullet$]
  \item
    $\ppre = \set{\tap{\pid(\gact{Q(\vec{p})}{\act(\vec{p})}) \ra \pre}}
    \cup \ppre'$,
  \item
    $\ppost = \set{\tap{\pid(\gact{Q(\vec{p})}{\act(\vec{p})}) \ra \post}}
    \cup \ppost'$,
  \item
    $\procset = \set{\carule{Q(\vec{p}) \wedge \pre}{\act'(\vec{p})}}
    \cup \procset'$,
  \item $\actset' = \set{\act'} \cup \actset''$,
    where
  \begin{center}
    $\begin{array}{ll} \eff{\act'} = &\eff{\act} \cup \\
                                     &\set{\map{\true}{ \add
                                       \set{\post} }}
                                       \cup \\
                                     &\set{\map{\true}{ \del
                                       \set{\pre, \tmp} }}
                                       \cup \\
                                     &\set{\map{\noopconcept{x}}{\del
                                       \noopconcept{x} }},
  \end{array}
  $\end{center}
\item
  $\tgprog(\post, \gemptyprog, \post) = \tup{\ppre', \ppost',
    \procset', \actset''}$, \\where $\pid(\gemptyprog) = id_\act.\gemptyprog$
%
\end{compactitem}


\item For the case of $\delta = \delta_1 | \delta_2$ (i.e., $\delta$
  is a non-deterministic choice between programs):
  \[
  \tgprog(\pre, \delta_1|\delta_2, \post) = \tup{\ppre, \ppost, \procset, \actset},
  \]
  where
  \begin{compactitem}
  \item $ \procset = \set{
      \carule{\pre}{\gamma_{\delta_1}()},
      \carule{\pre}{\gamma_{\delta_2}()}}
    \cup \procset_1 \cup \procset_2$,
  \item $\actset = \actset_1~\cup~\actset_2~\cup~\set{\gamma_{\delta_1}, \gamma_{\delta_2}}$, where
    \begin{compactitem}
    \item $\gamma_{\delta_1}():\set{\map{\true} \\
        \hspace*{10mm}{\add \set{\flagconcept{c_1}, \tmp}, \del \set{\pre} }}$,
    \item $\gamma_{\delta_2}() : \set{\map{\true} \\
        \hspace*{10mm}{\add \set{\flagconcept{c_2}, \tmp}, \del \set{\pre} }}$,
    \end{compactitem}
  \item $\tgprog(\flagconcept{c_1}, \delta_1, \post) = \tup{\ppre_1, \ppost_1, \procset_1, \actset_1}$,
  \item $\tgprog(\flagconcept{c_2}, \delta_2, \post) = \tup{\ppre_2, \ppost_2, \procset_2, \actset_2}$,
  \item $c_1, c_2 \in \const_0$ are fresh constants;
  \end{compactitem}

%
\item
  $\tgprog(\pre, \delta_1;\delta_2, \post) = \tup{\ppre, \ppost,
    \procset_1 \cup \procset_2, \actset_1 \cup \actset_2}$, where
\begin{compactitem}
\item
  $\ppre = \set{\tap{\pid(\delta_1;\delta_2) \ra \pre}} \cup \ppre_1 \cup
  \ppre_2$,
\item
  $\ppost = \set{\tap{\pid(\delta_1;\delta_2) \ra \post}} \cup \ppost_1 \cup
  \ppost_2$,
\item $\tgprog(\pre, \delta_1, \flagconcept{c}) =
  \tup{\ppre_1, \ppost_1, \procset_1, \actset_1}$,
\item $\tgprog(\flagconcept{c}, \delta_2, \post) =
  \tup{\ppre_2, \ppost_2, \procset_2, \actset_2}$, 
\item $c \in \const_0$ is a fresh constant;
\end{compactitem}


\item
  $\tgprog(\pre, \gif{\varphi}{\delta_1}{\delta_2}, \post) =
  \tup{\ppre, \ppost, \procset,\actset}$, where
\begin{compactitem}
\item
  $\ppre = \set{\tap{\pid(\gif{\varphi}{\delta_1}{\delta_2}) \ra
      \pre}} \cup\\ 
  \hspace*{12mm}\ppre_1 \cup \ppre_2$,
\item $\ppost = \set{\tap{\pid(\gif{\varphi}{\delta_1}{\delta_2}) \ra
      \post}} \cup \\
  \hspace*{12mm}\ppost_1 \cup \ppost_2$,

\item
  $ \procset = \set{
    \carule{\pre \wedge \varphi}{\gamma_{if}()},
    \carule{\pre \wedge \neg \varphi}{\gamma_{else}()}}
  \cup \\ \hspace*{12mm}\procset_1 \cup \procset_2$,
\item $\actset = \actset_1~\cup~\actset_2~\cup~\set{\gamma_{if},
    \gamma_{else}}$, where
\begin{compactitem}
\item $\gamma_{if}():\set{\map{\true}\\
    \hspace*{10mm}{\add \set{\flagconcept{c_1}, \tmp}, \del
      \set{\pre} }}$,
\item
  $\gamma_{else}() : \set{\map{\true}\\
    \hspace*{10mm}{\add \set{\flagconcept{c_2}, \tmp}, \del \set{\pre}
    }}$,
\end{compactitem}
\item
  $\tgprog(\flagconcept{c_1}, \delta_1, \post) = \tup{\ppre_1,
    \ppost_1, \procset_1, \actset_1}$,

\item
  $\tgprog(\flagconcept{c_2}, \delta_2, \post) = \tup{\ppre_2,
    \ppost_2, \procset_2, \actset_2}$,

\item $c_1, c_2 \in \const_0$ are fresh constants;
\end{compactitem}


\item $\tgprog(\pre, \gwhile{\varphi}{\delta}, \post) =
  \tup{\ppre, \ppost, \procset, \actset}$, where 
\begin{compactitem}
\item
  $\ppre = \set{ \tap{\pid(\gwhile{\varphi}{\delta}) \ra \pre}} \cup
  \ppre'$

\item
  $\ppost = \set{ \tap{\pid(\gwhile{\varphi}{\delta}) \ra \post}} \cup
  \ppost'$

\item $ \procset~=~\procset'\cup~ \procset_{loop}$, where
  $\procset_{loop}$ contains:
\begin{compactitem}
\item
  $ \carule{\pre \wedge \varphi \wedge \neg \noopconcept{noop}
  }{\gamma_{doLoop}()}$,
\item
  $ \carule{\pre \wedge (\neg \varphi \vee \noopconcept{noop}) }
  {\gamma_{endLoop}()}$,
\end{compactitem}

\item $\actset~=~\actset'~\cup \actset_{loop}$, where $\actset_{loop}$
  contains the following:
\begin{compactitem}
\item $\gamma_{doLoop}(): \set{\true  \rightsquigarrow\\
    \hspace*{2mm} \add \set{\flagconcept{lStart}, \noopconcept{noop},
      \tmp }, \\
    \hspace*{2mm}\del \set{\pre}}$,
\item $\gamma_{endLoop}(): \set{\true  \rightsquigarrow \\
    \hspace*{2mm} \add \set{\post, \tmp }, \\
    \hspace*{2mm}\del \set{\pre, \noopconcept{noop}}}$,
\end{compactitem}

\item
  $\tgprog(\flagconcept{lStart}, \delta, \pre) =
  \tup{\ppre', \ppost', \procset', \actset'}$,


\item $noop, lStart \in \const_0$ are fresh constants.
\end{compactitem}
\end{enumerate}
\ \ 
\end{definition}

For compactness reason, we often simply write $\ppre(\delta)$ to
abbreviate the notation $\ppre(\pid(\delta))$ that essentially returns
the start flag of a program with program ID $\pid(\delta)$. Similarly
for $\ppost(\delta)$.

\begin{lemma}\label{lem:program-pre-post}
  Given a program $\delta$ over a set $\actset$ of actions.  We have
  $ \tgprog(\pre, \delta, \post) = \tup{\ppre, \ppost, \procset,
    \actset}$
  if and only if
  $\ppre(\delta) = \pre \mbox{ and } \ppost(\delta) = \post $
\end{lemma}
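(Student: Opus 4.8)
The plan is to prove the biconditional by structural induction on $\delta$, with essentially all the work going into the left-to-right implication; the right-to-left implication then drops out from the fact that $\tgprog$ is a (partial) function of its three arguments, so that the $\pre$ and $\post$ recorded in the output are determined by the input.

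For the forward direction I would assume $\tgprog(\pre,\delta,\post) = \tup{\ppre,\ppost,\procset,\actset}$ and show $\ppre(\delta) = \pre$ and $\ppost(\delta) = \post$, reading off Definition~\ref{def:prog-translation} case by case. In the two base cases, $\delta = \gemptyprog$ and $\delta = \gact{Q(\vec{p})}{\act(\vec{p})}$, the output set $\ppre$ is defined so as to contain the explicit pair $\tap{\pid(\delta)\ra\pre}$ (and $\ppost$ the pair $\tap{\pid(\delta)\ra\post}$), so it suffices to check that $\ppre$ is genuinely functional at the argument $\pid(\delta)$, i.e.\ that nothing else with first component $\pid(\delta)$ is adjoined; in the action-invocation case this uses the induction hypothesis (really the first base case) applied to the recursive call $\tgprog(\post,\gemptyprog,\post)$, whose output is supported on $\pid(\gemptyprog) = id_\act.\gemptyprog \neq id_\act = \pid(\delta)$. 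In the composite cases ($\delta_1|\delta_2$, $\delta_1;\delta_2$, $\gif{\varphi}{\delta_1}{\delta_2}$, $\gwhile{\varphi}{\delta}$) the set $\ppre$ is, uniformly, the union of $\set{\tap{\pid(\delta)\ra\pre}}$ with the ``pre'' components returned by the recursive calls (symmetrically for $\ppost$); by the induction hypothesis each such component is the pre-function of the corresponding subprogram, whose domain consists of the IDs of that subprogram's subprograms. The point I would stress is that, by construction of $\tpid$, all these IDs are pairwise distinct and, crucially, none coincides with $\pid(\delta)$ itself, so the union is a well-defined function and $\ppre(\pid(\delta)) = \pre$, i.e.\ $\ppre(\delta) = \pre$; dually $\ppost(\delta) = \post$. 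The loop case deserves an explicit word, since the recursive call is $\tgprog(\flagconcept{lStart},\delta',\pre)$, with the ``post'' slot filled by $\pre$ itself; but since $\pid(\delta') \neq \pid(\gwhile{\varphi}{\delta'})$ there is still no collision, so the argument goes through.

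For the reverse direction, note that any tuple $\tup{\ppre,\ppost,\procset,\actset}$ under consideration is, by hypothesis, the output of $\tgprog$ on some inputs $(\pre',\delta,\post')$ with second argument $\delta$; since $\tgprog$ is a function, those inputs determine the tuple uniquely. Applying the forward direction to $(\pre',\delta,\post')$ yields $\ppre(\delta) = \pre'$ and $\ppost(\delta) = \post'$, so the assumptions $\ppre(\delta) = \pre$ and $\ppost(\delta) = \post$ force $\pre' = \pre$ and $\post' = \post$, whence $\tgprog(\pre,\delta,\post) = \tup{\ppre,\ppost,\procset,\actset}$. The only genuine obstacle is the bookkeeping around program identifiers: the whole argument hinges on $\tpid$ assigning a distinct ID to each occurrence of each subprogram, so that the unions $\set{\tap{\pid(\delta)\ra\pre}} \cup \ppre_1 \cup \ppre_2$ never reassign $\pid(\delta)$ and never clash between the branches. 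Making this precise --- identifying $\domain{\ppre_i}$ with the set of IDs of subprograms of the $i$-th branch via the induction hypothesis, and invoking the freshness clauses in the definitions of $\tpid$ and $\tgprog$ --- is exactly what closes the proof.
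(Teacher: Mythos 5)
Your proof is correct, and it is in essence a fully worked-out version of what the paper does: the paper's own proof is the one-liner ``Directly follows from the definition of $\tgprog$,'' relying on exactly the observation you make explicit, namely that every case of Definition~\ref{def:prog-translation} adjoins the pairs $\tap{\pid(\delta)\ra\pre}$ and $\tap{\pid(\delta)\ra\post}$ and that the freshness of the program IDs produced by $\tpid$ keeps $\ppre$ and $\ppost$ functional. Your extra care about the action-invocation case ($id_\act.\gemptyprog \neq id_\act$) and the while case (where the body's post-flag is $\pre$ but under a different key) is sound and only spells out what the paper leaves implicit.
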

\begin{proof}
  Directly follows from the definition of $\tgprog$.
\end{proof}

Having $\tgprog$ in hand, we define a translation $\tgkab$ that
compile S-GKABs into S-KABs as follows.

\begin{definition}[Translation from S-GKABs to S-KABs]
  We define a translation $\tgkab$ that takes an S-GKAB
  $\gkabsym = \tup{T, \initABox, \actset, \ginitprog}$ as the input
  and produces an S-KAB
  $\tgkab(\gkabsym) = \tup{T, \initABox', \actset', \procset'}$ s.t.\
  \begin{compactitem}
  \item $\initABox' = \initABox \cup \set{\flagconcept{start}}$, and
  \item
    $\tgprog(\flagconcept{start}, \ginitprog, \flagconcept{end}) =
    \tup{\ppre, \ppost, \procset', \actset'}$.
  \end{compactitem}
\ \ 
\end{definition}

To show some properties of the translation $\tgkab$ which
transform S-GKAB to S-KAB above, we first introduce several
preliminaries below. As the first step, we define the notion when a
state of an S-GKAB is mimicked by a state of an S-KAB as follows.

\begin{definition}\label{def:mimic-state}
  Let $\gkabsym = \tup{T, \initABox, \actset, \delta}$ be a normalized
  S-GKAB with transition system $\ts{\gkabsym}^{\filter_S}$, and
  $\tgkab(\gkabsym) = \tup{T, \initABox', \actset', \procset'}$ be an
  S-KAB with transition system $\ts{\tgkab(\gkabsym)}^S$ obtained from
  $\gkabsym$ through $\tgkab$ s.t.\
\begin{inparaenum}[(i)]
\item $\initABox' = \initABox \cup \set{\flagconcept{start}}$, and
\item $\tgprog(\flagconcept{start}, \ginitprog, \flagconcept{end}) =
  \tup{\ppre, \ppost, \procset', \actset'}$.
\end{inparaenum}
Consider two states $\tup{A_g,\scmap_g, \delta_g}$ of
$\ts{\gkabsym}^{\filter_S}$ and $\tup{A_k,\scmap_k}$ of
$\ts{\tgkab(\gkabsym)}^S$. 
We say \emph{$\tup{A_g,\scmap_g, \delta_g}$ is mimicked by
  $\tup{A_k,\scmap_k}$} (or equivalently \emph{$\tup{A_k,\scmap_k}$
  mimics $\tup{A_g,\scmap_g, \delta_g}$}), written
$\tup{A_g,\scmap_g, \delta_g} \mimic \tup{A_k,\scmap_k}$, if
\begin{compactenum}
\item $A_k \eqm A_g$,
\item $\scmap_k = \scmap_g$, and 
\item $\ppre(\delta_g) \in A_k$.
\end{compactenum}
\ \ 
\end{definition}

Next, we define the notion of temp adder/deleter action as follows.

\begin{definition}[Temp Marker Adder Action]\label{def:tmp-adder-action}
  Let $\gkabsym$ be an S-GKAB and
  $\tgkab(\gkabsym) = \tup{T, \initABox', \actset', \procset'}$ be the
  corresponding S-KAB obtained from $\gkabsym$ via $\tgkab$.  
  An action $\act \in \actset$ is \emph{a temp adder action of
    $\tgkab(\gkabsym)$} if there exists an effect $e \in \eff{\act}$
  of the form $ \map{[q^+]\land Q^-}{\add \facta, \del \factd} $ such
  that $\tmp \in \facta$.
  We write $\actsettmpa$ to denote the set of temp adder actions
  of $\tgkab(\gkabsym)$.
\end{definition}

\begin{definition}[Temp Marker Deleter Action]\label{def:tmp-deleter-action}
  Let $\gkabsym$ be an S-GKAB and
  $\tgkab(\gkabsym) = \tup{T, \initABox', \actset', \procset'}$ be the
  corresponding S-KAB obtained from $\gkabsym$ via $\tgkab$.  
  An action $\act \in \actset$ is \emph{a temp deleter action of
    $\tgkab(\gkabsym)$} if there exists an effect $e \in \eff{\act}$
  of the form $ \map{[q^+]\land Q^-}{\add \facta, \del \factd} $ such
  that $\tmp \in \factd$.
  We write $\actsettmpd$ to denote the set of temp deleter
  actions of $\tgkab(\gkabsym)$.
\end{definition}

\noindent
Roughly speaking, a temp adder action is an action that adds the
ABox assertion $\tmp$. Similarly, a temp deleter action is an
action that removes the ABox assertion $\tmp$.

\begin{lemma}\label{lem:action-set-separation}
  Let $\gkabsym$ be an S-GKAB,
  $\tgkab(\gkabsym) = \tup{T, \initABox', \actset', \procset'}$ be the
  corresponding S-KAB obtained from $\gkabsym$ via $\tgkab$, and
  $\actsettmpa$ (resp.\ $\actsettmpd$) be a set of temp adder
  (resp.\ deleter) actions of $\tgkab(\gkabsym)$. We have that
  $\actset' = \actsettmpa \uplus \actsettmpd$.
\end{lemma}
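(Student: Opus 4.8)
The plan is to prove the statement by structural induction on the Golog program, following the inductive definition of $\tgprog$ (Definition~\ref{def:prog-translation}). Since $\actset'$ is, by definition of $\tgkab$, exactly the fourth component of $\tgprog(\flagconcept{start},\ginitprog,\flagconcept{end})$, it suffices to establish the following stronger claim for every invocation $\tgprog(\pre,\delta,\post)=\tup{\ppre,\ppost,\procset,\actset}$: every action in $\actset$ carries an effect $\map{[q^+]\land Q^-}{\add \facta,\del \factd}$ with $\tmp\in\facta$ (making it a temp adder) or one with $\tmp\in\factd$ (making it a temp deleter), and no action of $\actset$ is simultaneously of both kinds. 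The lemma then follows by instantiating $\delta=\ginitprog$, $\pre=\flagconcept{start}$, $\post=\flagconcept{end}$: the covering part gives $\actset'=\actsettmpa\cup\actsettmpd$, and the ``no action of both kinds'' part gives $\actsettmpa\cap\actsettmpd=\emptyset$, hence $\actset'=\actsettmpa\uplus\actsettmpd$.

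For the base cases: if $\delta=\gemptyprog$, the unique new action $\act_\gemptyprog$ has as its only effect $\map{\true}{\add\set{\post,\tmp},\del\set{\pre}}$; it adds $\tmp$ and never deletes it, so it is a temp adder and not a temp deleter. If $\delta=\gact{Q(\vec{p})}{\act(\vec{p})}$, the new action $\act'$ carries the effect $\map{\true}{\del\set{\pre,\tmp}}$, so it is a temp deleter; it is not a temp adder, since the only fact its extra effects add is the flag $\post\ne\tmp$, while the inherited effects $\eff{\act}$ do not mention $\tmp$ at all — here I invoke the standing convention that $\tmpconceptname$ is a special marker concept name outside every TBox vocabulary and never occurring in the original GKAB's actions or queries. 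The further actions produced in this case come from the recursive call $\tgprog(\post,\gemptyprog,\post)$ and are handled by the empty-program case.

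For the inductive steps, the clauses $\delta_1|\delta_2$, $\gif{\varphi}{\delta_1}{\delta_2}$ and $\gwhile{\varphi}{\delta}$ are uniform: each introduces only fresh ``glue'' actions ($\gamma_{\delta_1},\gamma_{\delta_2}$; $\gamma_{if},\gamma_{else}$; $\gamma_{doLoop},\gamma_{endLoop}$), and direct inspection of their definitions shows each has $\tmp$ among its added facts and never deletes $\tmp$, hence is a temp adder and not a temp deleter; all remaining actions arise from the recursive applications of $\tgprog$ and are classified by the induction hypothesis. The clause $\delta_1;\delta_2$ introduces no new actions, so the claim is immediate from the induction hypothesis applied to its two recursive calls. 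Disjointness is preserved through the recursion because every recursive invocation uses fresh constants and fresh action names, so the action sets combined at each step are pairwise disjoint; taking the union of the per-component partitions yields the desired partition of $\actset$, and ultimately of $\actset'$.

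The only genuinely delicate point I expect — and the one to handle with care — is the action-invocation case, where $\act'$ inherits all of $\eff{\act}$: one must rule out that some original effect of $\act$ adds the fact $\tmp$, which would place $\act'$ in both $\actsettmpa$ and $\actsettmpd$ and destroy the disjoint union. This is precisely what the convention on special marker concept names guarantees, so stating that convention explicitly at the outset is the crux; after that, the induction is routine bookkeeping over the six clauses of Definition~\ref{def:prog-translation}.
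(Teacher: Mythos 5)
Your proof is correct and is essentially an expanded version of the paper's own argument: the paper dismisses this lemma as ``trivially true by observing Definitions~\ref{def:prog-translation}, \ref{def:tmp-adder-action}, \ref{def:tmp-deleter-action}'', and your structural induction over the clauses of $\tgprog$ simply makes that inspection explicit. You correctly isolate the one non-obvious point --- that the inherited effects $\eff{\act}$ in the action-invocation case cannot mention $\tmp$ because $\tmpconceptname$ is a reserved special marker concept name outside every TBox vocabulary --- which is exactly the convention the paper relies on.
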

\begin{proof}
  Trivially true by observing Definitions~\ref{def:prog-translation},
  \ref{def:tmp-adder-action}, \ref{def:tmp-deleter-action}.
\end{proof}

\begin{lemma}\label{lem:temp-state-produced-by-temp-act}
  Let $\gkabsym$ be an S-GKAB,
  $\tgkab(\gkabsym) = \tup{T, \initABox', \actset', \procset'}$ be the
  corresponding S-KAB (with transition system $\ts{\tgkab(\gkabsym)}^S$)
  obtained from $\gkabsym$ via $\tgkab$, and $\actsettmpa$ be a set of
  temp adder actions of $\tgkab(\gkabsym)$.
  Consider a state $\tup{A_k, \scmap_k}$ of $\ts{\tgkab(\gkabsym)}^S$,
  if there exists a state $\tup{A_k', \scmap_k'}$ such that
  $\tup{A_k, \scmap_k} \exect{\act\sigma} \tup{A_k', \scmap_k'}$, and
  $\tmp \in A_k'$ then 
  $\sigma$ is an empty substitution,
  $\act \in \actsettmpa$,
  $\act$ does not involve any service calls,
  $A_k' \eqm A_k$ and $\scmap_k' = \scmap_k$.
\end{lemma}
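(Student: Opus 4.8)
The plan is to argue by cases on the action $\act$ using the partition $\actset' = \actsettmpa \uplus \actsettmpd$ furnished by Lemma~\ref{lem:action-set-separation}, and then to read the four conclusions off the syntactic shape of the actions produced by the program translation $\tgprog$ in Definition~\ref{def:prog-translation}. First I would rule out $\act \in \actsettmpd$. Inspecting Definition~\ref{def:prog-translation}, the actions of $\tgkab(\gkabsym)$ that have a $\tmp$-deleting effect are exactly the translations $\act'$ of the atomic action invocations: $\eff{\act'}$ consists of the original effects $\eff{\act}$ (whose added facts range over $\voc(T)$ only, and hence never contain the reserved marker $\tmp$), together with $\map{\true}{\add \set{\post}}$, $\map{\true}{\del \set{\pre, \tmp}}$, and $\map{\noopconcept{x}}{\del \noopconcept{x}}$. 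Thus no effect of $\act'$ adds $\tmp$, while the effect $\map{\true}{\del \set{\pre, \tmp}}$ is unconditional and therefore always fires. By the definition of $\doo{T, A_k, \act'\sigma}$, the marker $\tmp$ then belongs to the set removed from $A_k$ and to none of the added sets, so $\tmp \notin A_k'$, contradicting the hypothesis. Hence $\act \in \actsettmpa$.

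It then remains to extract the four conclusions. By Definition~\ref{def:prog-translation}, every action in $\actsettmpa$ is one of $\act_\gemptyprog$, $\gamma_{\delta_1}$, $\gamma_{\delta_2}$, $\gamma_{if}$, $\gamma_{else}$, $\gamma_{doLoop}$, $\gamma_{endLoop}$ (instantiated for the various sub-programs of $\ginitprog$), all of which are $0$-ary; hence $\sigma$, being a parameter substitution for the empty parameter list, is the empty substitution. Moreover each such action has only effects of the form $\map{\true}{\add F^+, \del F^-}$ with $F^+$ and $F^-$ ground sets of special markers over $\const_0$; in particular $F^+$ contains no Skolem term, so $\calls{\addfactss{A_k}{\act\sigma}} = \emptyset$ and $\act$ involves no service calls. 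Consequently the only element of $\eval{\addfactss{A_k}{\act\sigma}}$ is the empty substitution, so $\theta = \emptyset$ and $\scmap_k' = \scmap_k \cup \theta = \scmap_k$. Finally, since all effects of $\act$ are unconditional and add or remove only special markers, $A_k' = (A_k \setminus F') \cup F''$ for suitable sets $F', F''$ of special markers; because the special-marker concept names $\flagconceptname$, $\noopconceptname$, $\tmpconceptname$ lie outside $\voc(T)$, this leaves every assertion $N(c)$ or $P(c_1, c_2)$ with $N, P \in \voc(T)$ untouched, i.e.\ $A_k' \eqm A_k$ by Definition~\ref{def:equal-mod-markers}.

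The only genuinely delicate point is the elimination of the case $\act \in \actsettmpd$: one must be sure that applying a $\tmp$-deleting action really yields a state without $\tmp$, which rests both on the $\tmp$-deleting effect being unconditional and on the fact that the embedded GKAB effects $\eff{\act}$ cannot re-insert $\tmp$ (as $\tmpconceptname$ is a reserved concept name outside every TBox vocabulary). All other steps are a routine reading of the translation $\tgprog$.
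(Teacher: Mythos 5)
Your proof is correct and follows essentially the same route as the paper's: use the partition $\actset' = \actsettmpa \uplus \actsettmpd$ from Lemma~\ref{lem:action-set-separation} to conclude $\act \in \actsettmpa$, then read the remaining conclusions off the shape of the actions produced by $\tgprog$. The paper's own proof is a two-line sketch that leaves implicit exactly the details you spell out (in particular the elimination of the $\actsettmpd$ case and the $0$-arity of the temp adder actions), so your version is simply a more careful rendering of the same argument.
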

\begin{proof}
  Since $\tmp \in A_k'$, then by Definition~\ref{def:tmp-adder-action} and
  Lemma~\ref{lem:action-set-separation} we must have
  $\act \in \actsettmpa$. By the definition of translation $\tgprog$
  (see Definition~\ref{def:prog-translation}), any actions in $\actsettmpa$ does
  not involve service calls and only do a manipulation on special
  markers. Thus, it is easy to see that $A_k' \eqm A_k$ and
  $\scmap_k' = \scmap_k$.
\end{proof}

\begin{lemma}\label{lem:non-temp-state-produced-by-normal-action}
  Let $\gkabsym = \tup{T, \initABox, \actset, \ginitprog}$ be an S-GKAB,
  $\tgkab(\gkabsym) = \tup{T, \initABox', \actset', \procset'}$ be the
  corresponding S-KAB (with transition system $\ts{\tgkab(\gkabsym)}^S$)
  obtained from $\gkabsym$ via $\tgkab$, and $\actsettmpa$ be a set of
  temp adder actions of $\tgkab(\gkabsym)$.
  Consider a state $\tup{A_k, \scmap_k}$ of $\ts{\tgkab(\gkabsym)}^S$,
  if there exists a state $\tup{A_k', \scmap_k'}$ such that
  $\tup{A_k, \scmap_k} \exect{\act'\sigma} \tup{A_k', \scmap_k'}$, and
  $\tmp \not\in A_k'$ then $\act' \in \actsettmpd$, and there
  exists action invocation $\gact{Q(\vec{p})}{\act(\vec{p})}$ in the
  sub-proram of $\ginitprog$ such that $\act'$ is obtained from the
  translation of $\gact{Q(\vec{p})}{\act(\vec{p})}$ via $\tgprog$.
\end{lemma}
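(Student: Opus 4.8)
The plan is to argue by cases on which action of $\tgkab(\gkabsym)$ the action $\act'$ can be, relying on the classification of $\actset'$ provided by Lemma~\ref{lem:action-set-separation}: since $\actset' = \actsettmpa \uplus \actsettmpd$, it suffices to rule out $\act' \in \actsettmpa$ and then to locate the temp deleter actions inside the output of $\tgprog$.

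First I would show that executing any temp adder action always yields a successor ABox that still contains $\tmp$. Inspecting the clauses of Definition~\ref{def:prog-translation}, each action in $\actsettmpa$ (namely $\act_\gemptyprog$, $\gamma_{\delta_1}$, $\gamma_{\delta_2}$, $\gamma_{if}$, $\gamma_{else}$, $\gamma_{doLoop}$, $\gamma_{endLoop}$) carries an effect of the form $\map{\true}{\add \facta, \del \factd}$ with $\tmp \in \facta$. Since $\Ans(\true, T, A_k)$ is the singleton containing the empty substitution, this effect fires and contributes $\tmp$ to the facts added by $\doo{T, A_k, \act'\sigma}$; and since $\actsettmpa \cap \actsettmpd = \emptyset$, no effect of $\act'$ has $\tmp$ in its deletion set. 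Hence, by definition of $\doo{\cdot}$, $\tmp \in \doo{T, A_k, \act'\sigma}$, and because $\tmp$ is a ground fact built from $\tmpconst \in \const_0$ that contains no service call, it survives the application of any $\theta$, so $\tmp \in \doo{T, A_k, \act'\sigma}\theta = A_k'$. This contradicts $\tmp \notin A_k'$, so $\act' \notin \actsettmpa$, and therefore, by Lemma~\ref{lem:action-set-separation}, $\act' \in \actsettmpd$.

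Then I would identify where temp deleter actions come from, by a straightforward structural induction on $\ginitprog$ that tracks the action-set component accumulated by $\tgprog$ starting from $\tgprog(\flagconcept{start}, \ginitprog, \flagconcept{end})$. In the base cases, the only leaves that generate actions are the empty-program clause (yielding an $\act_\gemptyprog$, which only adds $\tmp$) and the action-invocation clause (case~2 of Definition~\ref{def:prog-translation}, which yields the action $\act'$ having $\tmp$ in a deletion set, together with a further $\act_\gemptyprog$), whereas the composite clauses only introduce the auxiliary $\gamma$-actions, all of which add $\tmp$, and recurse. Hence the only actions of $\tgkab(\gkabsym)$ with $\tmp$ in some deletion set are exactly those $\act'$ obtained from translating via $\tgprog$ some action invocation $\gact{Q(\vec{p})}{\act(\vec{p})}$ occurring as a sub-program of $\ginitprog$. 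Combined with the previous paragraph, this gives the claim.

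The main obstacle will be the argument of the second paragraph: one has to make sure that the $\tmp$-adding effect of a temp adder is always enabled --- which works because its guard is the constant $\true$ --- and is never cancelled by a deletion of $\tmp$ within the very same action --- which works thanks to the disjointness $\actsettmpa \cap \actsettmpd = \emptyset$ of Lemma~\ref{lem:action-set-separation}; everything else is routine case analysis over the clauses of $\tgprog$.
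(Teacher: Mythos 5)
Your proposal is correct and follows essentially the same route as the paper's (much terser) proof: both use the partition $\actset' = \actsettmpa \uplus \actsettmpd$ from Lemma~\ref{lem:action-set-separation} to force $\act' \in \actsettmpd$, and both then inspect the clauses of $\tgprog$ to conclude that the only actions with $\tmp$ in a deletion set are those produced by translating an action invocation. Your version merely makes explicit the step the paper leaves implicit, namely that a temp adder with a $\true$-guarded add effect necessarily places $\tmp$ in the successor ABox.
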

\begin{proof}
  Since $\tmp \not\in A_k'$, then by Definition~\ref{def:tmp-deleter-action}
  and Lemma~\ref{lem:action-set-separation} we must have
  $\act' \in \actsettmpd$. By the definition of translation $\tgprog$
  (see Definition~\ref{def:prog-translation}), $\act'$ must be obtained from the
  translation of an action invocation
  $\gact{Q(\vec{p})}{\act(\vec{p})}$ in the sub-proram of
  $\ginitprog$.
\end{proof}

The following lemma shows that given two action invocations that has
different program ID, we have that their start flags are
different. I.e., any actions invocations that occur in a different
place inside a certain program will have different start
flag. This claim is formalized below.

\begin{lemma}\label{lem:action-invocation-unique-start-flag}
  Let $\gkabsym = \tup{T, \initABox, \actset, \ginitprog}$ be an
  S-GKAB,
  $\tgkab(\gkabsym) = \tup{T, \initABox', \actset', \procset'}$ be the
  corresponding S-KAB (with transition system $\ts{\tgkab(\gkabsym)}^S$)
  obtained from $\gkabsym$ via $\tgkab$, and $\actsettmpa$ be a set of
  temp adder actions of $\tgkab(\gkabsym)$.
  Consider two action invocations
  $\gact{Q_1(\vec{x})}{\act_1(\vec{x})}$ and
  $\gact{Q_2(\vec{y})}{\act_2(\vec{y})}$ that are sub-programs of
  $\ginitprog$. We have that
  $\pid(\gact{Q_1(\vec{x})}{\act_1(\vec{x})}) \neq
  \pid(\gact{Q_2(\vec{y})}{\act_2(\vec{y})})$
  if and only if
  $\ppre(\pid(\gact{Q_1(\vec{x})}{\act_1(\vec{x})})) \neq
  \ppre(\pid(\gact{Q_2(\vec{y})}{\act_2(\vec{y})}))$.
\end{lemma}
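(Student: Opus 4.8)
The plan is to obtain both directions of the biconditional from a single structural-induction statement about the program-translation function $\tgprog$ of Definition~\ref{def:prog-translation}. The direction ``$\ppre(\pid(\beta_1)) \neq \ppre(\pid(\beta_2))$ implies $\pid(\beta_1) \neq \pid(\beta_2)$'' is immediate: $\tpid$ assigns pairwise distinct program IDs to the sub-programs of $\ginitprog$, and in Definition~\ref{def:prog-translation} the map $\ppre$ returned by $\tgprog(\flagconcept{start}, \ginitprog, \flagconcept{end})$ is a union of singleton maps of the form $\set{\tap{\pid(\delta') \ra \cdot}}$, one per sub-program $\delta'$, so each ID occurs at most once as a key. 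Hence $\ppre$ is a genuine function, and $\pid(\beta_1) = \pid(\beta_2)$ forces $\ppre(\pid(\beta_1)) = \ppre(\pid(\beta_2))$, i.e.\ the contrapositive of what is needed.

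For the remaining direction, the crux is that the restriction of $\ppre$ to the action-invocation sub-programs of $\ginitprog$ is injective. I would prove, by induction on the structure of $\delta$, the following strengthened claim: whenever $\tgprog(\pre, \delta, \post) = \tup{\ppre, \ppost, \procset, \actset'}$, then \emph{(i)} $\ppre$ sends distinct action-invocation sub-programs of $\delta$ to distinct flags, and \emph{(ii)} the image of this restriction is contained in $\set{\pre} \cup S_\delta$, where $S_\delta$ is the finite set of flags built from the fresh constants of $\const_0$ chosen during this evaluation of $\tgprog$, and the elements of $\set{\pre} \cup S_\delta$ are pairwise distinct. The base cases are immediate: $\gemptyprog$ has no action-invocation sub-programs, and for $\delta = \gact{Q(\vec p)}{\act(\vec p)}$ the only one is $\delta$ itself, which receives flag $\pre$ with no fresh constants involved (the auxiliary recursive call $\tgprog(\post, \gemptyprog, \post)$ concerns a $\gemptyprog$, not an invocation). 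In each composite case $\delta \in \set{\delta_1 \mid \delta_2,\ \delta_1;\delta_2,\ \gif{\varphi}{\delta_1}{\delta_2},\ \gwhile{\varphi}{\delta_0}}$ the action-invocation sub-programs of $\delta$ are exactly those of its immediate sub-programs (the composite program itself is never an invocation), $\ppre$ coincides on those sub-programs with the union of the recursively produced maps $\ppre_i$, and the recursive calls receive start flags $f_i$ that are pairwise distinct (for choice and if, $f_1 = \flagconcept{c_1}$ and $f_2 = \flagconcept{c_2}$; for sequencing, $f_1 = \pre$ and $f_2 = \flagconcept{c}$; for the loop, the single $f_0 = \flagconcept{lStart}$). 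By the induction hypothesis each $\ppre_i$ is injective on its action invocations with image inside $\set{f_i} \cup S_{\delta_i}$; since the $f_i$ are pairwise distinct and the sets $S_{\delta_i}$ are pairwise disjoint and disjoint from all the $f_i$ (freshness), the combined map is still injective, giving \emph{(i)}, and its image lies in $\set{\pre} \cup S_\delta$ with $S_\delta$ collecting the new constants of this level together with the $S_{\delta_i}$, giving \emph{(ii)}. Instantiating with $\pre = \flagconcept{start}$, $\post = \flagconcept{end}$, $\delta = \ginitprog$ yields injectivity of $\ppre$ on action invocations, i.e.\ the desired implication.

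The step I expect to be the main obstacle is making the freshness hypothesis used throughout the composite cases fully rigorous: I need every fresh constant drawn from $\const_0$ anywhere in the entire recursive construction of $\tgkab(\gkabsym)$ to be distinct from every other and from $\flagconcept{start}$ and $\flagconcept{end}$. I would secure this by fixing once and for all an injective enumeration of the fresh constants consumed by the construction (equivalently, threading a ``used-constants'' set through $\tgprog$ so that every call draws its constants from outside that set). Given this invariant, in each composite case the start flags $f_i$ of the recursive calls, together with the sets $S_{\delta_i}$, are automatically pairwise disjoint and differ from the inherited $\pre$, and the disjointness steps and the compatibility of \emph{(ii)} across the induction all reduce to bookkeeping. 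The only remaining ingredient, needed just to phrase the statement, is that $\tpid$ produces pairwise distinct program IDs, which is immediate from its definition (fresh IDs at each branching) and which also re-justifies that $\ppre$ is a function.
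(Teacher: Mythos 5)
Your proof is correct. The paper itself offers essentially no argument here --- its proof of this lemma reads, in full, ``Trivially true by observing the definition of translation $\tgprog$'' --- and your structural induction with the strengthened invariant (injectivity of $\ppre$ on action-invocation sub-programs together with containment of its image in $\set{\pre}\cup S_\delta$, which is exactly what prevents clashes with the end flags re-used in the sequencing and while cases) is precisely the elaboration that observation requires; the global freshness bookkeeping for the constants drawn from $\const_0$ that you flag as the main obstacle is indeed the only point the paper's definition leaves implicit.
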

\begin{proof}
  Trivially true by observing the definition of translation $\tgprog$
  (see Definition~\ref{def:prog-translation}).
\end{proof}

We now progress to show a property of translation $\tgkab$ that is
related to the final states of S-GKABs transition system. Essentially,
we show that given a final state $s_g = \tup{A_g,\scmap_g, \delta_g}$
of an S-GKAB transition system and a state $s_k$ of its corresponding
S-KAB transition system such that those two states are J-bisimilar
(i.e., $s_g \jbsim s_k$), we have that there exists a state $s_k'$
that is reachable from $s_k$ (possibly through some intermediate
states) and we have that $\ppost(\delta_g)$ is in the ABox that is
contained in $s_k'$. Formally this claim is stated below.

\begin{lemma}\label{lem:final-state-add-transition}
  Given
  \begin{inparaenum}[]
  \item an S-GKAB $\gkabsym$ (with a transition system
    $\ts{\gkabsym}^{\filter_S}$), and
  \item an S-KAB $\tgkab(\gkabsym)$ (with a transition system
    $\ts{\tgkab(\gkabsym)}^S$) that is obtained from $\gkabsym$ through
    $\tgkab$.
\end{inparaenum}
Consider the states
\begin{inparaenum}[]
\item $\tup{A_g,\scmap_g, \delta_g}$ of $\ts{\gkabsym}^{\filter_S}$
  and
\item $\tup{A_k,\scmap_k}$ of $\ts{\tgkab(\gkabsym)}^S$.
\end{inparaenum}
If 
\begin{inparaitem}[]
\item $\final{\tup{A_g, \scmap_g, \delta_g}}$, and
\item $\tup{A_g,\scmap_g, \delta_g} \mimic \tup{A_k,\scmap_k}$,
\end{inparaitem}
then there exists states $\tup{A_i, m_k}$ and actions $\act_i$ (for
$i \in \set{1, \ldots, n}$, and $n \geq 0$)
such that
\begin{compactitem}
\item
  $\tup{A_k, m_k} \exect{\act_1\sigma} \tup{A_1,
    m_k}\exect{\act_2\sigma} \cdots \exect{\act_n\sigma} \tup{A_n,
    m_k}$ \\(with an empty substitution $\sigma$),
\item $\tmp \in A_i$ (for $i \in \set{1, \ldots, n}$),
\item $\ppost(\delta_g) \in A_n$, 
\item $\ppre(\delta_g) \not\in A_n$ (if
  $\ppost(\delta_g) \neq \ppre(\delta_g)$),
\item $A_n \eqm A_g$, and
\item if $\noopconcept{c} \in A_k$ (where $c \in \const_0$), then
  $\noopconcept{c} \in A_i$ (for $i \in \set{1, \ldots, n}$).
\end{compactitem}
\end{lemma}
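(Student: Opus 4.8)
The plan is to argue by structural induction on the program $\delta_g$, and within each case to branch on the last clause in the inductive definition of $\gfin$ used to justify $\final{\tup{A_g,\scmap_g,\delta_g}}$, reading off from the mutually recursive definition of $\tgprog$ (Definition~\ref{def:prog-translation}) the condition-action rules and actions that allow $\tgkab(\gkabsym)$ to ``fast-forward'' through the chain of marker-only transitions witnessing the completion of $\delta_g$. A uniform observation simplifies every case: the only actions we ever fire are $\act_\gemptyprog$, $\gamma_{\delta_1}$, $\gamma_{\delta_2}$, $\gamma_{if}$, $\gamma_{else}$, $\gamma_{doLoop}$, $\gamma_{endLoop}$, which are $0$-ary, issue no service calls, and modify only special markers; hence $\sigma$ is empty and $m_k$ stays equal to $\scmap_k=\scmap_g$ throughout, and every visited ABox agrees with $A_g$ on $\voc(T)$, hence is $T$-consistent, so each visited tuple is genuinely a state of $\ts{\tgkab(\gkabsym)}^S$ and the induction hypothesis may be reapplied to intermediate states.

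First I would dispatch the two base cases. For $\delta_g=\gact{Q(\vec{p})}{\act(\vec{p})}$ there is nothing to prove, since no state whose process-state is a bare action invocation lies in $\gfin$, so the hypothesis $\final{\tup{A_g,\scmap_g,\delta_g}}$ can never hold. For $\delta_g=\gemptyprog$ (final by clause~1 of $\gfin$), the translation supplies the rule $\carule{\ppre(\gemptyprog)}{\act_\gemptyprog()}$; since $\ppre(\gemptyprog)\in A_k$ by Definition~\ref{def:mimic-state}, a single firing of $\act_\gemptyprog$ yields $A_1=(A_k\setminus\set{\ppre(\gemptyprog)})\cup\set{\ppost(\gemptyprog),\tmp}$, so $n=1$ works, except when $\ppre(\gemptyprog)=\ppost(\gemptyprog)$ (the ``post-action'' empty program arising inside clause~2 of Definition~\ref{def:prog-translation}), where $n=0$ already satisfies every requirement.

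Then I would turn to the inductive cases. For $\delta_g=\delta_1\mid\delta_2$, clause~2 of $\gfin$ yields a final disjunct $\delta_i$; since $\ppre(\delta_g)\in A_k$, firing $\gamma_{\delta_i}$ reaches a state $\tup{A_1,\scmap_k}$ with $\tup{A_g,\scmap_g,\delta_i}\mimic\tup{A_1,\scmap_k}$ (start flag $\ppre(\delta_i)$, $\tmp\in A_1$), and the induction hypothesis finishes using $\ppost(\delta_i)=\ppost(\delta_g)$. For $\delta_g=\delta_1;\delta_2$, clause~3 makes both $\delta_i$ final; the induction hypothesis applied to $\delta_1$ produces a sub-chain ending in a state carrying $\ppost(\delta_1)=\ppre(\delta_2)$, which therefore mimics $\tup{A_g,\scmap_g,\delta_2}$, and a second application, concatenated after the first, completes the chain. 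For $\delta_g=\gif{\varphi}{\delta_1}{\delta_2}$, Lemma~\ref{lem:ECQ-equal-ABox-modulo-markers} transfers the truth value of $\varphi$ from $A_g$ to $A_k$; by clause~4 or~5 of $\gfin$ the matching branch $\delta_i$ is final, we fire $\gamma_{if}$ or $\gamma_{else}$, and recurse. Finally $\delta_g=\gwhile{\varphi}{\delta}$ splits in two: if $\ask(\varphi,T,A_g)=\false$ (clause~6) then $\ask(\varphi,T,A_k)=\false$ too, so $\gamma_{endLoop}$ is enabled at $\ppre(\delta_g)$ and one transition installs $\ppost(\delta_g)$ and $\tmp$; if $\ask(\varphi,T,A_g)=\true$ with $\delta$ final (clause~7), I first fire $\gamma_{doLoop}$ (adding $\ppre(\delta)=\flagconcept{lStart}$, $\noopconcept{noop}$ and $\tmp$, removing $\ppre(\delta_g)$), then apply the induction hypothesis to $\delta$, obtaining a marker-only sub-chain that ends carrying $\ppost(\delta)$, which by Definition~\ref{def:prog-translation} equals $\ppre(\delta_g)$ itself; here the last bullet of the induction hypothesis is essential, as it guarantees that $\noopconcept{noop}$ --- just inserted by $\gamma_{doLoop}$ --- survives the whole sub-chain, so that $\gamma_{endLoop}$ is then enabled via the $\noopconcept{noop}$ disjunct of its guard and a last transition installs $\ppost(\delta_g)$ while removing $\ppre(\delta_g)$ and $\noopconcept{noop}$.

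The hard part will be the marker bookkeeping needed for the last three bullets of the conclusion ($\ppre(\delta_g)\notin A_n$, $A_n\eqm A_g$, and preservation of $\noopconcept{c}$). I would establish it via an auxiliary claim, proved by induction on the structure of $\delta'$ alongside the definition of $\tgprog$: the actions produced by $\tgprog(\pre',\delta',\post')$ may add only special markers from $\set{\post',\tmp}$ together with the fresh $\const_0$-constants generated inside that sub-translation, and may delete only special markers from $\set{\pre',\tmp}$ together with those same fresh constants (plus, at action invocations, all $\noopconceptname$-facts --- but such invocations are never fired along the marker-only chains we build). Granting this, freshness of the constants generated by $\tgprog$ gives that $\ppre(\delta_g)$ is never re-added once removed, that no $\voc(T)$-atom is ever touched (whence $A_n\eqm A_g$), and that $\noopconcept{c}$-facts belonging to loops strictly enclosing $\delta_g$ are untouched; the genuinely delicate point is the while-loop case, where the loop's own freshly-named $\noopconceptname$-constant is consumed by $\gamma_{endLoop}$, so one must also exploit the fact that in every configuration $\tup{A_k,\scmap_k}$ with $\tup{A_g,\scmap_g,\gwhile{\varphi}{\delta}}\mimic\tup{A_k,\scmap_k}$ arising in the construction this constant is absent from $A_k$, so that the preservation clause has a vacuous hypothesis for it --- again a consequence of the marker calculus together with an analysis of how such configurations are reached. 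Checking this auxiliary calculus, and that it suffices in each case --- in particular for nested loops --- is the most laborious step; everything else is a direct transcription of the two recursive definitions.
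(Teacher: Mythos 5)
Your plan follows essentially the same route as the paper's proof: induction on the structure of $\delta_g$ guided by the clauses defining $\gfin$, firing the marker-only actions ($\act_\gemptyprog$, $\gamma_{\delta_i}$, $\gamma_{if}/\gamma_{else}$, $\gamma_{doLoop}/\gamma_{endLoop}$) supplied by $\tgprog$, chaining the two inductive calls in the sequencing case via $\ppost(\delta_1)=\ppre(\delta_2)$, and exploiting the $\noopconceptname$ marker together with the last bullet of the induction hypothesis to enable $\gamma_{endLoop}$ in the while case. The only difference is that you make the marker-bookkeeping explicit as an auxiliary claim where the paper argues it informally; this is a sound refinement, not a different approach.
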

\begin{proof}
  Let
  \begin{compactitem}
  \item $\gkabsym = \tup{T, \initABox, \actset, \delta}$, and \\
    $\ts{\gkabsym}^{\filter_S} = \tup{\const, T, \stateset_g, s_{0g},
      \abox_g, \trans_g}$),

  \item $\tgkab(\gkabsym) = \tup{T, \initABox', \actset', \procset'}$,
    and \\
    $\ts{\tgkab(\gkabsym)}^S = \tup{\const, T, \stateset_k, s_{0k},
      \abox_k, \trans_k}$
    where
    \begin{compactitem}[$\bullet$]
    \item $\initABox' = \initABox \cup \set{\flagconcept{start}}$, and
    \item
      $\tgprog(\flagconcept{start}, \ginitprog, \flagconcept{end}) = \tup{\ppre, \ppost, \procset', \actset'}$.
    \end{compactitem}
\end{compactitem}
We show the claim by induction over the definition of final states 
as follows:

\smallskip
\noindent
\textbf{Base case:}
\begin{compactitem}
\item[\textbf{[$\delta_g = \gemptyprog$]}.] Since
  $\tup{A_g,\scmap_g, \gemptyprog} \mimic \tup{A_k,\scmap_k}$, then by
  Definition~\ref{def:mimic-state} we have
  $\ppre(\gemptyprog) \in A_k$.  By the definition of $\tgprog$, we
  have a 0-ary action $\act_\gemptyprog()$ where
  \begin{compactitem}
  \item $\carule{\ppre(\gemptyprog)} {\act_\gemptyprog()}$, and
  \item
    $\eff{\act_\gemptyprog} = \set{\map{\true}\\ \hspace*{10mm}{\add
        \set{\ppost(\gemptyprog), \tmp}, \del \set{\ppre(\gemptyprog)}
      }}$
  \end{compactitem}
  Hence, by observing how an action is executed and the result of an
  action execution is constructed, we easily obtain that there exists
  $\tup{A_1, m_k}$ such that
  \begin{compactitem}
  \item
    $\tup{A_k, m_k} \exect{\act_{\gemptyprog}\sigma} \tup{A_1, m_k}$
    (with an empty substitution $\sigma$),
  \item $\tmp \in A_1$,
  \item $\ppost(\gemptyprog) \in A_1$,
  \item $\ppre(\gemptyprog) \not\in A_1$ 
    (if $\ppre(\gemptyprog) \neq \ppost(\gemptyprog)$), and
  \item $A_1 = A_g$.
  \end{compactitem}
  Additionally, it is also true that if $\noopconcept{c} \in A_k$ (for
  a constant $c \in \const_0$), then $\noopconcept{c} \in A_1$,
  because, by the definition of $\tgprog$, the action
  $\act_{\gemptyprog}$ does not delete any concept made by concept
  names $\noopconceptname$ and only actions that are obtained from the
  translation of an action invocation delete such kind of concept
  assertions. Therefore the claim is proven for this case.
\end{compactitem}

\smallskip
\noindent
\textbf{Inductive cases:}
\begin{itemize}

\item[\textbf{[$\delta_g = \delta_1|\delta_2$]}.] Since
  $\final{\tup{A_g, \scmap_g, \delta_1|\delta_2}}$, then by
  the definition of final states we have either
  \begin{compactenum}[\bf (1)]
  \item $\final{\tup{A_g, \scmap_g, \delta_1}}$, or
  \item $\final{\tup{A_g, \scmap_g, \delta_2}}$.
  \end{compactenum}
  For compactness of the proof, here we only show the case
  \textbf{(1)}. The case \textbf{(2)} can be done similarly.
  Since
  $\tup{A_g,\scmap_g, \delta_1|\delta_2} \mimic \tup{A_k,\scmap}$,
  then $\ppre(\delta_1|\delta_2) \in A_k$.
  By the definition of $\tgkab$, 
  we have
  \begin{compactitem}
  \item
    $\carule{\ppre(\delta_1|\delta_2)}{\gamma_{\delta_1}()} \in
    \procset$,
  \item
    $\gamma_{\delta_1}():\set{\map{\true} {\add \set{\ppre(\delta_1), \tmp},
        \\ \hspace*{23.5mm} \del \set{ \ppre(\delta_1|\delta_2) } }}$,
  \item $\ppost(\delta_1|\delta_2) = \ppost(\delta_1)$.
  \end{compactitem}
Then, by induction hypothesis, and also by observing how an action is
executed as well as the result of an action execution is constructed,
it is easy to see that the claim is proven.

\item[\textbf{[$\delta_g = \delta_1;\delta_2$]}.] Since
  $\final{\tup{A_g, \scmap_g, \delta_1;\delta_2}}$, then by the
  definition of final states we have that
  $\final{\tup{A_g, \scmap_g, \delta_1}}$ and
  $\final{\tup{A_g, \scmap_g, \delta_2}}$.
  Since
  $\tup{A_g,\scmap_g, \delta_1;\delta_2} \mimic \tup{A_k,\scmap_k}$,
  then $\ppre(\delta_1;\delta_2) \in A_k$.  By the definition of
  $\tgkab$ we have that  $\ppre(\delta_1; \delta_2) = \ppre(\delta_1)$,
  $\ppost(\delta_1) = \ppre(\delta_2)$, and
  $\ppost(\delta_2) = \ppost(\delta_1; \delta_2)$.
  By induction hypothesis, there exists states $\tup{A_i, m_k}$, and
  actions $\act_i$, (for $i \in \set{1, \ldots, l}$, and $n \geq 0$)
such that
\begin{compactitem}
\item
  $\tup{A_k, m_k} \exect{\act_1\sigma} \tup{A_1,
    m_k}\exect{\act_2\sigma} \cdots
  \exect{\act_l\sigma} \tup{A_l, m_k}$
\\
  (with an empty substitution $\sigma$),
\item $\tmp \in A_i$ (for $i \in \set{1, \ldots, l}$),
\item $\ppost(\delta_1) \in A_l$,
\item $\ppre(\delta_1) \not\in A_l$ (if
  $\ppre(\delta_1) \neq \ppost(\delta_1)$),
\item $A_l \eqm A_g$,
\item if $\noopconcept{c} \in A_k$ (where $c \in \const_0$), then
  $\noopconcept{c} \in A_i$ (for $i \in \set{1, \ldots, l}$).
\end{compactitem}

Now, since $A_l \eqm A_g$, $\scmap_k = \scmap_g$,
$\ppre(\delta_2) \in A_l$, then we have
$\tup{A_g, \scmap_g, \delta_2} \mimic \tup{A_l, \scmap_k}$.  Hence, by
induction hypothesis again, there exists states $\tup{A_i, m_k}$, and
actions $\act_i$ (for $i \in \set{l+1, \ldots, n}$, and $n \geq 0$)
such that
\begin{compactitem}
\item $\tup{A_l, m_k}\exect{\act_{l+1}\sigma} \tup{A_{l+1},
    m_k}\exect{\act_{l+2}\sigma} \cdots $ \\
\hspace*{40mm} $\cdots\exect{\act_n\sigma} \tup{A_n, m_k}$\\
  (with an empty substitution
  $\sigma$),
\item $\tmp \in A_i$ (for $i \in \set{l+1, \ldots, n}$),
\item $\ppost(\delta_2) \in A_n$,
\item $\ppre(\delta_2) \not\in A_n$ (if $\ppre(\delta_2) \neq \ppost(\delta_2)$),
\item $A_n \eqm A_g$,
\item if $\noopconcept{c} \in A_l$ (where $c \in \const_0$), \\ then
  $\noopconcept{c} \in A_i$ (for $i \in \set{l+1, \ldots, n}$).
\end{compactitem}
Therefore, it is easy to see that the claim is proven.

\item[\textbf{[$\delta_g = \gif{\varphi}{\delta_1}{\delta_2}$]}.]
  Since
  $\final{\tup{A_g, \scmap_g, \gif{\varphi}{\delta_1}{\delta_2}}}$,
  then by the definition of final states we have either
\begin{compactenum}[\bf (1)]
\item $\final{\tup{A_g, \scmap_g, \delta_1}}$ and
  $\ask(\varphi, T, A) = \true$, or
\item $\final{\tup{A_g, \scmap_g, \delta_2}}$ and
  $\ask(\varphi, T, A) = \false$.
\end{compactenum}
For compactness of the proof, here we only show the case
\textbf{(1)}. The case \textbf{(2)} can be done similarly. 
Now, since
$\tup{A_g,\scmap_g, \gif{\varphi}{\delta_1}{\delta_2}} \mimic
\tup{A_k,\scmap_k}$, then $\ppre(\gif{\varphi}{\delta_1}{\delta_2})
\in A_k$. 
By the definition of $\tgkab$, 
we have
\begin{compactitem}
\item $\carule{\varphi \wedge \ppre(\gif{\varphi}{\delta_1}{\delta_2})}{\gamma_{if}()} \in \procset$,
\item $\gamma_{if}():\set{\map{\true} {\add \set{\ppre(\delta_1),
        \tmp},$ \\
      \hspace*{22.5mm}
      $\del \set{\ppre(\gif{\varphi}{\delta_1}{\delta_2})} }}$,
\item $\ppost(\gif{\varphi}{\delta_1}{\delta_2}) = \ppost(\delta_1)$.
\end{compactitem}

Then, by induction hypothesis, and also by observing how an action is
executed as well as the result of an action execution is constructed,
it is easy to see that the claim is proven.

\item[\textbf{[$\delta_g = \gwhile{\varphi}{\delta}$]}.]  Since
  $\final{\tup{A_g, \scmap_g, \gwhile{\varphi}{\delta}}}$,
  then by the definition of final states, we have either
\begin{compactenum}[\bf (1)]
\item $\ask(\varphi, T, A) = \false$, or
\item $\final{\tup{A_g, \scmap_g, \delta}}$ and
  $\ask(\varphi, T, A) = \true$.
\end{compactenum}
%
%
\begin{compactitem}

\item[\textbf{Proof for the case (1)}:] Now, since
  \[
  \tup{A_g,\scmap_g, \gwhile{\varphi}{\delta}} \mimic
  \tup{A_k,\scmap_k},
  \]
  then $\ppre(\gwhile{\varphi}{\delta}) \in A_k$.  By the definition
  of $\tgkab$, we have
%
%
%
\begin{compactitem}[$\bullet$]
\item
  $ \carule{\ppre(\gwhile{\varphi}{\delta}) \wedge (\neg \varphi \vee
    \noopconcept{noop}) }\\{\gamma_{endLoop}()}$, 
\item
  $\gamma_{endLoop}(): \set{\true  \rightsquigarrow \\
    \hspace*{10mm} \add \set{\ppost(\gwhile{\varphi}{\delta}), \tmp }, \\
    \hspace*{10mm} \del \set{\ppre(\gwhile{\varphi}{\delta}),
      \noopconcept{noop}}}$,
\end{compactitem}

Then, by induction hypothesis, and also by observing how an action is
executed as well as the result of an action execution is constructed,
it is easy to see that the claim is proved.

\item[\textbf{Proof for the case (2)}:] Now, since
  \[
  \tup{A_g,\scmap_g, \gwhile{\varphi}{\delta}} \mimic
  \tup{A_k,\scmap_k},
  \]
  then $\ppre(\gwhile{\varphi}{\delta}) \in A_k$.  By the definition
  of $\tgkab$, 
  we have
\smallskip
\begin{itemize}[$\bullet$]
\item
  $ \carule{\ppre(\gwhile{\varphi}{\delta}) \wedge \varphi \wedge \neg
    \noopconcept{noop} }\\{\gamma_{doLoop}()}$,

\item
  $ \carule{\ppre(\gwhile{\varphi}{\delta}) \wedge (\neg \varphi \vee
    \noopconcept{noop}) }\\{\gamma_{endLoop}()}$, 

\item 
$\gamma_{endLoop}(): \set{\true  \rightsquigarrow \\
    \hspace*{10mm} \add \set{\ppost(\gwhile{\varphi}{\delta}), \tmp }, \\
    \hspace*{10mm} \del \set{\ppre(\gwhile{\varphi}{\delta}),
      \noopconcept{noop}}}$,

\item
  $\gamma_{doLoop}(): \set{\true  \rightsquigarrow\\
    \hspace*{10mm} \add \set{\ppre(\delta), \noopconcept{noop},
      \tmp }, \\
    \hspace*{10mm} \del \set{\ppre(\gwhile{\varphi}{\delta}) }}$.

\end{itemize}
\smallskip


Hence, it is easy to see that we have
\[
\tup{A_k,\scmap_k} \exect{\gamma_{doLoop} \sigma} \tup{A'_k,\scmap_k}
\]
where $\sigma$ is an empty substitution, and
$\set{\tmp, \ppre(\delta), \noopconcept{noop}} \subseteq A_k'$. Hence
$\tup{A_g, \scmap_g, \delta} \mimic \tup{A_k',\scmap_k}$. Since
$\final{\tup{A_g, \scmap_g, \delta}}$ and
$\tup{A_g, \scmap_g, \delta} \mimic \tup{A_k',\scmap_k}$, by induction
hypothesis, then there exists states $\tup{A_i, m_k}$, and actions
$\act_i$ (for $i \in \set{1, \ldots, n}$, and $n \geq 0$)
such that
\smallskip
\begin{itemize}[$\bullet$]
\item
  $\tup{A_k', m_k} \exect{\act_1\sigma} \tup{A_1,
    m_k}\exect{\act_2\sigma} \cdots $ \\
  \hspace*{40mm}$\cdots \exect{\act_n\sigma} \tup{A_n, m_k}$ \\(with an empty
  substitution $\sigma$),
\item $\tmp \in A_i$ (for $i \in \set{1, \ldots, n}$),
\item $\ppost(\delta) \in A_n$, 
\item $\ppre(\delta) \not\in A_n$ (if
  $\ppost(\delta) \neq \ppre(\delta)$),
\item $A_n \eqm A_g$, and
\item if $\noopconcept{c} \in A'_k$ (where $c \in \const_0$), then
  $\noopconcept{c} \in A_i$ (for $i \in \set{1, \ldots, n}$).
\end{itemize}
\smallskip Hence we have
\[
\set{\ppost(\delta), \noopconcept{noop}, \tmp} \subseteq A_n. 
\]
Now,
since by the definition of $\tgprog$ we have that
$\ppost(\delta) = \ppre(\gwhile{\varphi}{\delta})$, then the action
$\gamma_{endLoop}$ is executable in $A_n$ (notice that we do not care
whether $\ask(\varphi, T, A) = \false$, or
$\ask(\varphi, T, A) = \true$ because $\noopconcept{noop} \in
A_n$). Hence we have
\[
\tup{A_n,\scmap_k} \exect{\gamma_{endLoop} \sigma} \tup{A'_n,\scmap_k}
\]
with $\set{\tmp, \ppost(\gwhile{\varphi}{\delta})} \subseteq A'_n$,
and $\noopconcept{noop} \not\in A'_n$ (which is fine since
$\noopconcept{noop} \not\in A_k$). Thus we have that the claim
is proven.
Intuitively, the idea for the proof of this case is that since
$\final{\tup{A_g, \scmap_g, \delta}}$, there is no action
executed and no one removes the flag made by concept name
$\noopconceptname$. In that situation, for the second iteration, no
matter whether $\varphi$ (the guard of the loop) is hold or not, we
can exit the loop and additionally keeping all assertions in the ABox
(except the special markers) stay the same. Essentially it reflects
the situation that in the corresponding S-GKAB, there is no transition
was made (since $\final{\tup{A_g, \scmap_g, \delta}}$).
\end{compactitem}
\end{itemize}
\ \ 
\end{proof}

\subsection{Reducing the Verification of S-GKABs Into S-KABs}

We exploit the property of J-Bisimulation in order to show that the
verification of \muladom properties over S-GKABs can be reduce to the
verification of S-KABs.
Essentially, we show that given an S-GKAB $\gkabsym$, its transition
system $\ts{\gkabsym}^{\filter_S}$ is J-bisimilar to the transition
system $\ts{\tgkab(\gkabsym)}^{S}$ of S-KAB $\tgkab(\gkabsym)$ that is
obtained from $\gkabsym$ via the translation $\tgkab$.
Consequently, we have that both transition systems
$\ts{\gkabsym}^{\filter_S}$ and $\ts{\tgkab(\gkabsym)}^{S}$ can not be
distinguished by any \muladom (in NNF) modulo the translation
$\tforj$.


As a start, below we show that given a state $s_1$ of an S-GKAB
transition system, and a state $s_2$ of its corresponding S-KAB
transition system such that $s_2$ mimics $s_1$, we have that if $s_1$
reaches $s_1'$ in one step, then it implies that there exists $s_2'$
reachable from $s_2$ (possibly through some intermediate states
$s^t_1, \ldots, s^t_n$ that contain $\tmp$) and $s_2'$ mimics $s_1'$.

\begin{lemma}\label{lem:prog-exec-bsim}
  Let
  \begin{inparaitem}[]
  \item $\gkabsym$ be an S-GKAB with transition system
    $\ts{\gkabsym}^{\filter_S}$,
  \item $\tgkab(\gkabsym)$ be an S-KAB (obtained from $\gkabsym$ through
    $\tgkab$) with transition system $\ts{\tgkab(\gkabsym)}^S$.
  \end{inparaitem}
  Consider two states
\begin{inparaenum}[]
\item $\tup{A_g,\scmap_g, \delta_g}$ of $\ts{\gkabsym}^{\filter_S}$,
  and
\item $\tup{A_k,\scmap_k}$ of $\ts{\tgkab(\gkabsym)}^S$ 
\end{inparaenum}
such that $\tup{A_g,\scmap_g, \delta_g} \mimic \tup{A_k,\scmap_k} $.
For every state $\tup{A'_g,\scmap'_g, \delta_g'}$ such that
$ \tup{A_g,\scmap_g, \delta_g} \gprogtrans{\alpha\sigma, \filter_S}
\tup{A'_g,\scmap'_g, \delta_g'}$
(for a certain action $\act$, a legal parameter assignment $\sigma$
and a service call substitution $\theta$),
there exist states $\tup{A'_k, m'_k}$, $\tup{A_i^t, m_k}$ (for
$i \in \set{1, \ldots, n}$, where $n \geq 0$), and actions $\act'$, $\act_i$
(for $i \in \set{1, \ldots, n}$, where $n \geq 0$)
such that
\begin{compactitem}
\item
  $\tup{A_k, m_k} \exect{\act_1\sigma_e} \tup{A_1^t, m_k}
  \exect{\act_2\sigma_e} \cdots$ \\ \hspace*{15mm} $\cdots \exect{\act_n\sigma_e} \tup{A_n^t,
    m_k} \exect{\act'\sigma} \tup{A'_k, m'_k}$ \\ where
  \begin{compactitem}
  \item $\sigma_e$ is an empty substitution, 
  \item $\act'$ is obtained from $\act$ through $\tgprog$,
  \item $\tmp \in A_i^t$ (for $i \in \set{1, \ldots, n}$), $\tmp \not\in A_k'$, 
  \end{compactitem}
\item 
  $\tup{A'_g,\scmap'_g, \delta'_g} \mimic \tup{A'_k,\scmap'_k} $.
\end{compactitem}
\end{lemma}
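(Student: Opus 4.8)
The plan is to argue by structural induction on the process-state $\delta_g$, splitting into cases according to which rule of the program-execution relation $\gprogtrans{\alpha\sigma,\filter_S}$ derives $\tup{A_g,\scmap_g,\delta_g}\gprogtrans{\alpha\sigma,\filter_S}\tup{A'_g,\scmap'_g,\delta'_g}$. Three observations recur in every case. First, $\tup{A_g,\scmap_g,\delta_g}\mimic\tup{A_k,\scmap_k}$ only asks for $A_k\eqm A_g$, $\scmap_k=\scmap_g$, and $\ppre(\delta_g)\in A_k$; in particular it is insensitive to the special markers $\tmp$, $\flagconceptname$, $\noopconceptname$, so it survives the ``administrative'' transitions the translated S-KAB performs. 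Second, by Lemma~\ref{lem:ECQ-equal-ABox-modulo-markers} every ECQ guard (hence every action-invocation guard and every $\gif{}{}{}$/$\gwhile{}{}$ condition) has the same answer over $\tup{T,A_g}$ and over any ABox $\eqm$-equal to it. Third, the flag identities built into $\tgprog$ (Lemma~\ref{lem:program-pre-post}): $\ppre(\delta_1;\delta_2)=\ppre(\delta_1)$ and $\ppost(\delta_1)=\ppre(\delta_2)$ for sequencing, $\ppost(\delta_i)=\ppost(\gif{\varphi}{\delta_1}{\delta_2})$, $\ppost(\delta)=\ppre(\gwhile{\varphi}{\delta})$ for the loop body, and the $\pid$-retrieval clause giving $\ppre(\gemptyprog)=\ppost(\act)$ for the $\gemptyprog$ residual of an atomic invocation $\act$ — together with the normalization of residual process-states assumed for $\mimic$ — guarantee that the flag deposited in the S-KAB ABox after the mimicking run is exactly $\ppre(\delta'_g)$.

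For the base case $\delta_g=\gact{Q(\vec p)}{\act(\vec p)}$, the transition comes from $\tup{\tup{A_g,\scmap_g},\act\sigma,\tup{A'_g,\scmap'_g}}\in\tell_{\filter_S}$, and $\tgprog$ yields the single rule $\carule{Q(\vec p)\wedge\pre}{\act'(\vec p)}$, where $\eff{\act'}$ extends $\eff{\act}$ with effects that add $\ppost(\delta_g)$, delete $\pre$ and $\tmp$, and flush $\noopconceptname$. Since $\ppre(\delta_g)=\pre\in A_k$ and $A_k\eqm A_g$, $\sigma$ is a legal assignment for $\act'$ in $A_k$; taking the same service-call substitution $\theta$, the state $\tup{A'_k,\scmap'_k}=\doo{T,A_k,\act'\sigma}\theta$ satisfies $A'_k\eqm A'_g$ (the extra effects touch only markers), $\scmap'_k=\scmap_k\cup\theta=\scmap'_g$, $\tmp\notin A'_k$, and $\ppost(\delta_g)=\ppre(\gemptyprog)\in A'_k$. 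So $n=0$, $\act'$ is the single action of the chain, and $\tup{A'_g,\scmap'_g,\gemptyprog}\mimic\tup{A'_k,\scmap'_k}$.

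For the inductive cases, for $\delta_1|\delta_2$, $\gif{\varphi}{\delta_1}{\delta_2}$ and $\gwhile{\varphi}{\delta}$ the S-KAB first fires one or two $0$-ary ``entry'' actions ($\gamma_{\delta_i}$ for choice; $\gamma_{if}$ or $\gamma_{else}$ for the conditional, selected by $\ask(\varphi,T,A_k)=\ask(\varphi,T,A_g)$; $\gamma_{doLoop}$ for the loop), each of which adds $\tmp$ (contributing one intermediate $\tmp$-state), deletes the old flag, and installs the start flag of the relevant sub-program; one then applies the induction hypothesis to that sub-program from the reached state and concatenates chains. The entry action is executable because $\ppre(\delta_g)\in A_k$ and the guard holds over $\tup{T,A_k}$; for $\gamma_{doLoop}$ one also needs $\noopconcept{noop}\notin A_k$, which holds by an invariant: whenever the GKAB process-state is exactly $\gwhile{\varphi}{\delta}$ the mimicking S-KAB state is either initial (the fresh constant $noop$ does not occur) or was just produced by a translated atomic action, which flushes $\noopconceptname$. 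For sequencing there are two sub-cases: under rule~3 ($\delta_1\gprogtrans{\alpha\sigma}\delta_1'$, residual $\delta_1';\delta_2$) no entry action is needed since $\ppre(\delta_1;\delta_2)=\ppre(\delta_1)\in A_k$, and one applies the IH to $\delta_1$, noting $\ppre(\delta_1';\delta_2)=\ppre(\delta_1')$ after normalization; under rule~4 ($\final{\tup{A_g,\scmap_g,\delta_1}}$ and $\delta_2\gprogtrans{\alpha\sigma}\delta_2'$) one first invokes Lemma~\ref{lem:final-state-add-transition} to obtain a chain of $\tmp$-states that ``jumps over'' the completed $\delta_1$, landing in some $\tup{A_n,\scmap_k}$ with $A_n\eqm A_g$, $\ppost(\delta_1)=\ppre(\delta_2)\in A_n$, and $\noopconceptname$ flags preserved, so that $\tup{A_g,\scmap_g,\delta_2}\mimic\tup{A_n,\scmap_k}$ and the IH applies to $\delta_2$ (its first action is executable from $\tup{A_n,\scmap_k}$ because its guard does not mention $\tmp$).

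The main obstacle is the sequencing case together with the flag bookkeeping for residual programs: one must verify that, after the administrative transitions and after the ``jump'' supplied by Lemma~\ref{lem:final-state-add-transition}, the unique start flag of the \emph{normalized} residual $\delta'_g$ is precisely the flag present in the S-KAB ABox, which forces a careful inspection of the interplay between $\tgprog$, the $\pid$ retrieval function — especially its treatment of the $\gemptyprog$ residual of an atomic invocation — and the normalization of process-states. The $\noopconcept{noop}$ invariant for $\gamma_{doLoop}$ in the while case is the second subtle point; the remaining obligations ($A'_k\eqm A'_g$, $\scmap'_k=\scmap'_g$, $\tmp\notin A'_k$) propagate routinely from the base case through the concatenations.
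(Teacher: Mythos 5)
Your proposal is correct and follows essentially the same route as the paper's proof: structural induction on $\delta_g$ with the same case split, the same use of Lemma~\ref{lem:ECQ-equal-ABox-modulo-markers} for guard preservation, of Lemma~\ref{lem:program-pre-post} for the flag identities, and of Lemma~\ref{lem:final-state-add-transition} to jump over a completed $\delta_1$ in the second sequencing subcase. Your explicit treatment of the $\noopconcept{noop}$ invariant for $\gamma_{doLoop}$ and of the start flag of the residual program is, if anything, more careful than the paper's, which dispatches both points with an appeal to the definition of $\tgprog$.
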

\begin{proof}
Let 
  \begin{compactitem}[$\bullet$]
  \item $\gkabsym = \tup{T, \initABox, \actset, \delta}$, and \\ 
    $\ts{\gkabsym}^{\filter_S} = \tup{\const, T, \stateset_g, s_{0g},
      \abox_g, \trans_g}$,
  \item $\tgkab(\gkabsym) = \tup{T, \initABox', \actset', \procset'}$
    and \\
    $\ts{\tgkab(\gkabsym)}^S = \tup{\const, T, \stateset_k, s_{0k},
      \abox_k, \trans_k}$ where
    \begin{compactenum}
    \item $\initABox' = \initABox \cup \set{\flagconcept{start}}$, and
    \item
      $\tgprog(\flagconcept{start}, \ginitprog, \flagconcept{end}) =
      \tup{\ppre, \ppost, \procset', \actset'}$.
    \end{compactenum}
\end{compactitem}
We prove by induction over the structure of $\delta$.

\smallskip
\noindent
\textbf{Base case:}
\begin{enumerate}
\item[\textbf{[$\delta_g = \gemptyprog$]}.] Since
  $\final{\tup{A_g,\scmap_g, \gemptyprog}}$, then there does
  not exists $\tup{A'_g,\scmap'_g, \delta'_g}$ such that
  \[
  \tup{A_g,\scmap_g, \delta_g} \gprogtrans{\act\sigma, \filter_S}
  \tup{A'_g,\scmap'_g, \delta'_g},
  \]
  Hence, we do not need to show anything.

\item[\textbf{[$\delta_g = \gact{Q(\vec{p})}{\act(\vec{p})}$]}.] For
  compactness of the presentation, let
  $a = \gact{Q(\vec{p})}{\act(\vec{p})}$.  Since
  $\tup{A_g,\scmap_g, a} \mimic \tup{A_k,\scmap_k} $, then
  $\ppre(a) \in A_k$, by 
  the definition of $\tgkab$, we have:
  \begin{compactitem}
  \item
    $\carule{Q(\vec{p}) \wedge \ppre(a)}{\act'(\vec{p})} \in
    \procset'$,
  \item $\act' \in \actset'$. 
  \end{compactitem}
Since 
\[
\tup{A_g,\scmap_g, a}
\gprogtrans{\act\sigma, \filter_S} 
\tup{A'_g,\scmap'_g, \gemptyprog},
\]
then $\sigma \in \ask(Q, T, A_g)$. Since $A_k \eqm A_g$, and $Q$ does
not use any special marker concept names, by Lemma
\ref{lem:ECQ-equal-ABox-modulo-markers} we have
$\ask(Q, T, A_g) = \Ans(Q, T, A_k)$ and hence
$\sigma \in \ask(Q, T, A_k)$.
Now, since $\ppre(a) \in A_k$, then $\act'$ is executable in $A_k$
with legal parameter assignment $\sigma$. Additionally, considering
\[
\begin{array}{l@{}l} \eff{\act'} = &\eff{\act} \\
                                   &\cup \set{\map{\true}{ \add \set{\ppost(a)} }}\\
                                   &\cup \set{\map{\true}{ \del
                                     \set{\ppre(a),
                                     \tmp} }}\\
                                   &\cup
                                     \set{\map{\noopconcept{x}}{\del
                                     \noopconcept{x} }},
  \end{array}
\]
Then it is easy to see that we have
$\addfactss{A_g}{\act\sigma} = \addfactss{A_k}{\act'\sigma}$, and
hence
$\calls{\addfactss{A_g}{\act\sigma}} =
\calls{\addfactss{A_k}{\act'\sigma}}$.
Thus 
we have $\theta \in \calls{\addfactss{A_k}{\act'\sigma}}$.
Now, since $\scmap_g' = \theta \cup \scmap_g$, $\scmap_k = \scmap_g$
and $\theta \in \calls{\addfactss{A_k}{\act'\sigma}}$, we can
construct $\scmap_k' = \theta \cup \scmap_k$. Therefore it is easy to
see that there exists $\tup{A'_k, \scmap'_k}$, such that
\[
\tup{A_k, \scmap_k} \exect{\act'\sigma} \tup{A'_k, \scmap'_k}
\]
(with service call substition $\theta$) and $A'_g \eqm A_k'$ (by
considering how $A_k'$ is constructed), $\scmap'_g = \scmap'_k$. By
the definition of $\tgprog$ (in the translation of an action
invocation) we also have $\ppre(\gemptyprog) \in A_k'$. Thus the claim
is proven.
\end{enumerate}

\smallskip
\noindent
\textbf{Inductive case:}
\begin{enumerate}
\item[\textbf{[$\delta_g = \delta_1|\delta_2$]}.] Since 
\[
\tup{A_g,\scmap_g, \delta_1|\delta_2} \gprogtrans{\act\sigma,
  \filter_S} \tup{A'_g,\scmap'_g, \delta'},
\]
then, there are two cases, that is either
\begin{compactenum}[\bf (1)]
\item
  $\tup{A_g,\scmap_g, \delta_1} \gprogtrans{\act\sigma, \filter_S}
  \tup{A'_g,\scmap'_g, \delta'}$, or
\item
  $\tup{A_g,\scmap_g, \delta_2} \gprogtrans{\act\sigma, \filter_S}
  \tup{A'_g,\scmap'_g, \delta'}$.
\end{compactenum}
Here we only give the derivation for the first case, the second case
is similar. Since
$\tup{A_g,\scmap_g, \delta_1|\delta_2} \mimic \tup{A_k,\scmap_k} $,
then $A_g \eqm A_k$, $\scmap_g = \scmap_k$, and
$\ppre(\delta_1|\delta_2) \in A_k$.  By the definition of $\tgkab$ and
Lemma~\ref{lem:program-pre-post}, we have
\begin{compactitem}[$\bullet$]
\item
  $\carule{\ppre(\delta_1|\delta_2)}{\gamma_{\delta_1}()} \in
  \procset'$
\item $\gamma_{\delta_1} \in \actset'$, where \\
$ \gamma_{\delta_1}():\set{\map{\true} \\ \hspace*{10mm}{\add \set{\ppre(\delta_1),
        \tmp}, \del \set{\delta_1|\delta_2} }}, $
\end{compactitem}
Since $\ppre(\delta_1|\delta_2) \in A_k$, it is easy to see that 
\[
\tup{A_k, m_k} \exect{\gamma_{\delta_1}\sigma_t}\tup{A_t, m_k}
\]
where $\sigma_t$ is an empty substitution,
$\set{\ppre(\delta_1), \tmp} \in A_t$, and $A_t \eqm A_k$.
Since
$A_t \eqm A_k$ and $A_g \eqm A_k$, it is easy to see that
$A_g \eqm A_t$. Since $A_g \eqm A_t$, $\scmap_g = \scmap_k$, and
$\ppre(\delta_1) \in A_t$, then we have
$\tup{A_g,\scmap_g, \delta_1} \mimic \tup{A_t,\scmap_k} $. 
Therefore, since
$\tup{A_g, \scmap_g, \delta_1} \gprogtrans{\act\sigma, \filter_S}
\tup{A_g', \scmap_g', \delta_1'}$
and $\tup{A_g,\scmap_g, \delta_1} \mimic \tup{A_t,\scmap_k} $, by
induction hypothesis, it is easy to see that the claim is proven for
this case.

\item[\textbf{[$\delta_g = \delta_1;\delta_2$]}.] There are two cases:
\begin{compactenum}[\bf (1)]
\item
  $\tup{A_g,\scmap_g, \delta_1;\delta_2} \gprogtrans{\act\sigma,
    \filter_S} \tup{A'_g,\scmap'_g, \delta_1';\delta_2}, $
\item
  $\tup{A_g,\scmap_g, \delta_1;\delta_2} \gprogtrans{\act\sigma,
    \filter_S} \tup{A'_g,\scmap'_g, \delta_2'}, $
\end{compactenum}

\smallskip
\noindent
\textbf{Case (1).} Since
\[
\tup{A_g,\scmap_g, \delta_1;\delta_2} \gprogtrans{\act\sigma,
  \filter_S} \tup{A'_g,\scmap'_g, \delta_1';\delta_2},
\]
then we have
\[
\tup{A_g,\scmap_g, \delta_1} \gprogtrans{\act\sigma, \filter_S}
\tup{A'_g,\scmap'_g, \delta_1'},
\]
Since
$\tup{A_g,\scmap_g, \delta_1;\delta_2} \mimic \tup{A_k,\scmap_k} $,
then $A_g \eqm A_k$, $\scmap_g = \scmap_k$, and
$\ppre(\delta_1;\delta_2) \in A_k$.  By the definition of $\tgkab$ and
Lemma~\ref{lem:program-pre-post}, it is easy to see that
$\ppre(\delta_1;\delta_2) = \ppre(\delta_1)$, and hence because
$\ppre(\delta_1;\delta_2) \in A_k$, we have $\ppre(\delta_1) \in A_k$.
Now, since $A_g \eqm A_k$, $\scmap_g = \scmap_k$,
$\ppre(\delta_1) \in A_k$, then we have
$\tup{A_g,\scmap_g, \delta_1} \mimic \tup{A_k,\scmap_k} $. Thus, since
we also have
\[
\tup{A_g,\scmap_g, \delta_1} \gprogtrans{\act\sigma, \filter_S}
\tup{A'_g,\scmap'_g, \delta_1'},
\]
by using induction hypothesis we have that the claim is proven.

\smallskip
\noindent
\textbf{Case (2).}  Since
$\tup{A_g,\scmap_g, \delta_1;\delta_2} \gprogtrans{\act\sigma,
  \filter_S} \tup{A'_g,\scmap'_g, \delta_2'}, $,
then we have $\final{\tup{A_g,\scmap_g, \delta_1}}$, and
\[
\tup{A_g,\scmap_g, \delta_2} \gprogtrans{\act\sigma, \filter_S}
\tup{A'_g,\scmap'_g, \delta_2'},
\]

Since $\final{\tup{A_g,\scmap_g, \delta_1}}$ and
$\tup{A_g,\scmap_g, \delta_g} \mimic \tup{A_k,\scmap_k}$, by
Lemma~\ref{lem:final-state-add-transition}, there exist states
$\tup{A_i, m_k}$ and actions $\act_i$ (for $i \in \set{1, \ldots, n}$, and
$n \geq 0$)
such that
\begin{compactitem}
\item
  $\tup{A_k, m_k} \exect{\act_1\sigma} \tup{A_1,
    m_k}\exect{\act_2\sigma} \cdots $ \\ \hspace*{35mm} $\cdots \exect{\act_n\sigma} \tup{A_n,
    m_k}$ \\ (with empty an empty substitution $\sigma$),
\item $\tmp \in A_i$ (for $i \in \set{1, \ldots, n}$),
\item $\ppost(\delta_1) \in A_n$, $\ppre(\delta_1) \not\in A_n$, 
   and 
\item $A_n \eqm A_g$.
\end{compactitem}
Since
$\tup{A_g,\scmap_g, \delta_1;\delta_2} \mimic \tup{A_k,\scmap_k} $,
then $A_g \eqm A_k$, $\scmap_g = \scmap_k$, and
$\ppre(\delta_1;\delta_2) \in A_k$.  
By the definition of $\tgkab$ and
Lemma~\ref{lem:program-pre-post}, it is easy to see that
\begin{compactitem}
\item $\ppre(\delta_1;\delta_2) = \ppre(\delta_1)$, 
\item $\ppost(\delta_1) = \ppre(\delta_2)$, 
\item $\ppost(\delta_1;\delta_2) = \ppost(\delta_2)$, 
\end{compactitem}
Hence, because $\ppost(\delta_1) \in A_n$, and
$\ppost(\delta_1) = \ppre(\delta_2)$, we have
$\ppre(\delta_2) \in A_k$. 
Now, since $A_g \eqm A_n$, $\scmap_g = \scmap_k$,
$\ppre(\delta_2) \in A_n$, then we have
$\tup{A_g,\scmap_g, \delta_2} \mimic \tup{A_n,\scmap_k} $. Thus, since
we also have
\[
\tup{A_g,\scmap_g, \delta_2} \gprogtrans{\act\sigma, \filter_S}
\tup{A'_g,\scmap'_g, \delta_2'},
\]
by using induction hypothesis we have that the claim is proven.

\item[\textbf{[$\delta_g = \gif{\varphi}{\delta_1}{\delta_2}$]}.]
  There are two cases:
\begin{compactenum}[\bf (1)]
\item
  $\tup{A_g, \scmap_g, \gif{\varphi}{\delta_1}{\delta_2}}
  \gprogtrans{\alpha\sigma, \filter_S} \tup{A_g', \scmap_g',
    \delta_1'}$,
\item
  $\tup{A_g, \scmap_g, \gif{\varphi}{\delta_1}{\delta_2}}
  \gprogtrans{\alpha\sigma, \filter_S} \tup{A_g', \scmap_g',
    \delta_2'}$.
\end{compactenum}
Here we only consider the first case. The second case is similar. 

\noindent
\smallskip \textbf{Case (1).} Since
\[
\tup{A_g, \scmap_g, \gif{\varphi}{\delta_1}{\delta_2}}
\gprogtrans{\alpha\sigma, \filter_S} \tup{A_g', \scmap_g', \delta_1'},
\] 
then we have
\[
\tup{A_g, \scmap_g, \delta_1} \gprogtrans{\act\sigma, \filter_S}
\tup{A_g', \scmap_g', \delta_1'}.
\] 
with $\ask(\varphi, T, A_g) = \true$.  
Since
$\tup{A_g,\scmap_g, \gif{\varphi}{\delta_1}{\delta_2}} \mimic
\tup{A_k,\scmap_k} $,
then $A_g \eqm A_k$, $\scmap_g = \scmap_k$, and
$\ppre(\gif{\varphi}{\delta_1}{\delta_2}) \in A_k$.
By the definition of $\tgkab$ and Lemma~\ref{lem:program-pre-post}, we have
\begin{compactitem}
\item
  $\carule{\ppre(\gif{\varphi}{\delta_1}{\delta_2}) \wedge
    \varphi}{\gamma_{if}()} \in \procset'$
\item $\gamma_{if} \in \actset'$, where \\
  $
\begin{array}{l@{}l}
\gamma_{if}():\set{\map{\true} {&\add \set{\ppre(\delta_1), \tmp}, \\
      &\del \set{\ppre(\gif{\varphi}{\delta_1}{\delta_2})} }},
\end{array}
$
\end{compactitem}
Since $A_k \eqm A_g$, $\ask(\varphi, T, A_g) = \true$, and $\varphi$
does not use any special marker concept names, by Lemma
\ref{lem:ECQ-equal-ABox-modulo-markers} we have
$\ask(\varphi, T, A_k) = \true$.
Now, since $\ppre(\gif{\varphi}{\delta_1}{\delta_2}) \in A_k$, and
$\ask(\varphi, T, A_k) = \true$, it is easy to see that
\[
\tup{A_k, m_k} \exect{\gamma_{if}\sigma_t}\tup{A_t, m_k}
\]
where $\sigma_t$ is an empty substitution,
$\set{\ppre(\delta_1), \tmp} \in A_t$, and $A_t \eqm A_k$. Since
$A_t \eqm A_k$ and $A_g \eqm A_k$, it is easy to see that
$A_g \eqm A_t$. Since $A_g \eqm A_t$, $\scmap_g = \scmap_k$, and
$\ppre(\delta_1) \in A_t$, then we have
$\tup{A_g,\scmap_g, \delta_1} \mimic \tup{A_t,\scmap_k} $. 
Thus, since
$\tup{A_g, \scmap_g, \delta_1} \gprogtrans{\act\sigma, \filter_S}
\tup{A_g', \scmap_g', \delta_1'}$
and $\tup{A_g,\scmap_g, \delta_1} \mimic \tup{A_t,\scmap_k} $, by
induction hypothesis, it is easy to see that the claim is proven for
this case.

\item[\textbf{[$\delta_g = \gwhile{\varphi}{\delta}$]}.]  Since
\[  
\tup{A_g, \scmap_g,
  \gwhile{\varphi}{\delta}} \gprogtrans{\act\sigma,
  \filter_S} \tup{A_g', \scmap_g', \delta';\gwhile{\varphi}{\delta}},
\]
then we have $\ask(\varphi, T, A) = \true$ and
\[
\tup{A_g, \scmap_g, \delta} \gprogtrans{\act\sigma, \filter_S}
\tup{A_g', \scmap_g', \delta'}.
\]
Since
$\tup{A_g,\scmap_g, \gwhile{\varphi}{\delta}} \mimic
\tup{A_k,\scmap_k}$,
then $A_g \eqm A_k$, $\scmap_g = \scmap_k$, and
$\ppre(\gwhile{\varphi}{\delta}) \in A_k$.
By the definition of $\tgkab$ and Lemma~\ref{lem:program-pre-post}, we have
\begin{compactitem}[$\bullet$]
\item
  $ \carule{(\ppre(\gwhile{\varphi}{\delta}) \vee \ppost(\delta)) \wedge \varphi \wedge \neg
    \noopconcept{noop} }{\gamma_{doLoop}()} \in \procset'$,
\item $\gamma_{doLoop}(): \set{\true  \rightsquigarrow\\
    \hspace*{10mm} \add \set{\ppre(\delta), \noopconcept{noop},
      \tmp }, \\
    \hspace*{10mm} \del \set{ \ppre(\gwhile{\varphi}{\delta}),
      \ppost(\delta) }}$,
\end{compactitem}
Since $A_k \eqm A_g$, $\ask(\varphi, T, A_g) = \true$, and $\varphi$
does not use any special marker concept names, by Lemma
\ref{lem:ECQ-equal-ABox-modulo-markers} we have
$\ask(\varphi, T, A_k) = \true$. Additionally, it is easy to see from
the definition of $\tgprog$ that $\noopconcept{noop} \not\in A_k$.
Now, since $\ppre(\gif{\varphi}{\delta_1}{\delta_2}) \in A_k$,
$\ask(\varphi, T, A_k) = \true$, and $\noopconcept{noop} \not\in A_k$,
it is easy to see that
\[
\tup{A_k, m_k} \exect{\gamma_{doLoop}\sigma_t}\tup{A_t, m_k}
\]
%
%
where $\sigma_t$ is an empty substitution,
$\set{\ppre(\delta), \tmp} \in A_t$, and $A_t \eqm A_k$. Since
$A_t \eqm A_k$ and $A_g \eqm A_k$, it is easy to see that
$A_g \eqm A_t$. Since $A_g \eqm A_t$, $\scmap_g = \scmap_k$, and
$\ppre(\delta) \in A_t$, then we have
$\tup{A_g,\scmap_g, \delta} \mimic \tup{A_t,\scmap_k} $.
Thus, since
$\tup{A_g, \scmap_g, \delta} \gprogtrans{\act\sigma, \filter_S}
\tup{A_g', \scmap_g', \delta'}$
and $\tup{A_g,\scmap_g, \delta} \mimic \tup{A_t,\scmap_k} $, by
induction hypothesis, 
there exist states $\tup{A'_k, m'_k}$, $\tup{A_i^t, m_k}$ (for
$i \in \set{1, \ldots, n}$, where $n \geq 0$), and actions $\act'$, $\act_i$
(for $i \in \set{1, \ldots, n}$, where $n \geq 0$)
such that
\begin{compactitem}
\item
  $\tup{A_t, m_k} \exect{\act_1\sigma_e} \tup{A_1^t, m_k}
  \exect{\act_2\sigma_e} \cdots $ \\ \hspace*{20mm} $\cdots \exect{\act_n\sigma_e} \tup{A_n^t,
    m_k} \exect{\act'\sigma} \tup{A'_k, m'_k}$ \\ where
  \begin{compactitem}
  \item $\sigma_e$ is an empty substitution, 
  \item $\act'$ is obtained from $\act$ through $\tgprog$,
  \item $\tmp \in A_i^t$ (for $i \in \set{1, \ldots, n}$), $\tmp \not\in A_k'$, 
  \end{compactitem}
\item 
  $\tup{A'_g,\scmap'_g, \delta'_g} \mimic \tup{A'_k,\scmap'_k} $.
\end{compactitem}
The proof for this case is then completed by also observing that by
the definition of program execution relation (on the case of while
loops), we have that we repeat the while loop at the end of the
execution of program $\delta$, and this situation is captured in the
definition of $\tgprog$ by having that
$\ppost(\delta) = \ppre(\gwhile{\varphi}{\delta})$.
\end{enumerate}
\ \ 
\end{proof}

We now proceed to show another crucial lemma for showing the
bisimulation between S-GKAB transition system and the transition
system of its corresponding S-KAB that is obtained via
$\tgprog$. Basically, we show that given a state $s_1$ of an S-GKAB
transition system, and a state $s_2$ of its corresponding S-KAB
transition system such that $s_2$ mimics $s_1$, 
we have that if $s_2$ reaches $s_2'$ (possibly through some
intermediate states $s^t_1, \ldots, s^t_n$ that contains $\tmp$), then
$s_1$ reaches $s_1'$ in one step and $s_1'$ is mimicked by $s_2'$.

\begin{lemma}\label{lem:kab-transition-bsim}
  Let
  \begin{inparaitem}[]
  \item $\gkabsym  = \tup{T, \initABox, \actset, \delta}$ be an S-GKAB with transition system
    $\ts{\gkabsym}^{\filter_S}$,
  \item $\tgkab(\gkabsym) = \tup{T, \initABox', \actset', \procset'}$
    be an S-KAB (obtained from $\gkabsym$ through $\tgkab$) with
    transition system $\ts{\tgkab(\gkabsym)}^S$,
  \end{inparaitem}
  where
  \begin{compactenum}
  \item $\initABox' = \initABox \cup \set{\flagconcept{start}}$, and
  \item
    $\tgprog(\flagconcept{start}, \ginitprog, \flagconcept{end}) =
    \tup{\ppre, \ppost, \procset', \actset'}$.
  \end{compactenum}
    Consider two states
  \begin{inparaenum}[]
  \item $\tup{A_g,\scmap_g, \delta_g}$ of $\ts{\gkabsym}^{\filter_S}$,
    and
  \item $\tup{A_k,\scmap_k}$ of $\ts{\tgkab(\gkabsym)}^S$ 
  \end{inparaenum}
  such that $\tup{A_g,\scmap_g, \delta_g} \mimic \tup{A_k,\scmap_k}$.
  For every state $\tup{A'_k, m'_k}$
%
  such that
  \begin{compactitem}
  \item there exist $\tup{A_i^t, m_k}$ (for $i \in \set{1,\ldots, n}$,
    $n \geq 0$), and
  \item 
    $\tup{A_k, m_k} \exect{\act_1\sigma_e} \tup{A_1^t, m_k}
    \exect{\act_2\sigma_e} \cdots $ \\ \hspace*{20mm} $\cdots \exect{\act_n\sigma_e} \tup{A_n^t,
      m_k} \exect{\act'\sigma} \tup{A'_k, m'_k}$ \\ where
    \begin{compactitem}
    \item $\sigma_e$ is an empty substitution,
    \item $\act_i \in \actsettmpa$ (for $i \in \set{1, \ldots, n}$),
    \item $\act' \in \actsettmpd$,
    \item $\carule{Q(\vec{p})}{\act'(\vec{p})} \in \procset'$, 
    \item if $n = 0$, then $\sigma \in \Ans(Q,T,A_k)$, otherwise
      $\sigma \in \Ans(Q,T,A_n^t)$,
    \item $\tmp \in A_i^t$ (for $i \in \set{1,\ldots, n}$),
      $\tmp \not\in A_k'$,
    \end{compactitem}
  \end{compactitem}
  then there exists a state $\tup{A'_g,\scmap'_g, \delta_g'}$ such
  that
  \begin{compactitem}
  \item
    $ \tup{A_g,\scmap_g, \delta_g} \gprogtrans{\act\sigma,
      \filter_S} \tup{A'_g,\scmap'_g, \delta_g'}, $
  \item $\act'$ is obtained from the translation of a certain action
    invocation $\gact{Q(\vec{p})}{\act(\vec{p})}$ via $\tgprog$, 
  \item 
    $\tup{A'_g,\scmap'_g, \delta'_g} \mimic \tup{A'_k,\scmap'_k} $.
  \end{compactitem}
\ \ 
\end{lemma}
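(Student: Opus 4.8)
The plan is to prove this by structural induction on the program-state $\delta_g$, along with the (bisimulation-invariant) observation that any S-KAB state $\tup{A_k,m_k}$ mimicking some $\tup{A_g,\scmap_g,\delta_g}$ is free of the marker $\tmp$ and of every $\noop$-flag: by Definition~\ref{def:prog-translation} the only temp-deleter actions are the translations of atomic action invocations, and these also flush every $\noop$-flag, while the initial state is clean, so every $\tmp$-free reachable S-KAB state is also $\noop$-free. One also records the easy invariant that for every reachable program-state $\delta_g$ one has $\ppost(\delta_g)=\flagconcept{end}$, which has no condition--action rule. First I would pin down the two ends of the given S-KAB passage. Since $\tmp\notin A_k'$, Lemma~\ref{lem:non-temp-state-produced-by-normal-action} gives $\act'\in\actsettmpd$ and exhibits $\act'$ as the $\tgprog$-translation of a unique atomic action invocation $\gact{Q(\vec p)}{\act(\vec p)}$ of $\ginitprog$ (uniqueness via Lemma~\ref{lem:action-invocation-unique-start-flag}), whose producing rule has $\ppre(\gact{Q(\vec p)}{\act(\vec p)})$ in its guard. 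Dually, by Lemma~\ref{lem:temp-state-produced-by-temp-act} each $\act_i$ ($i\le n$) lies in $\actsettmpa$, carries the empty substitution, uses no service call, and only touches special markers, so $A_i^t\eqm A_{i-1}^t$ (with $A_0^t=A_k$), hence $A_n^t\eqm A_k\eqm A_g$ and $m_k$ is unchanged along the $\act_i$'s; thus $\act_1,\dots,\act_n$ is a pure ``navigation'' through the flag structure of $\tgprog$, and the task is to show it mirrors how the program-execution relation descends into $\delta_g$ to select $\act$.

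For $\delta_g=\gemptyprog$ the state is final, so there is nothing to produce; one checks the hypothesis is vacuous, since $\ppre(\gemptyprog)=\flagconcept{end}$ and the only enabled action merely re-asserts $\flagconcept{end}$ and $\tmp$, so no temp-deleter is ever enabled. For $\delta_g=\gact{Q(\vec p)}{\act(\vec p)}$, the only rule enabled in the clean state $A_k$ is the one producing $\act'\in\actsettmpd$, so by Lemma~\ref{lem:action-set-separation} we must have $n=0$; since $A_k\eqm A_g$ and $Q$ mentions no special marker, Lemma~\ref{lem:ECQ-equal-ABox-modulo-markers} gives $\Ans(Q,T,A_k)=\ask(Q,T,A_g)$, so $\sigma$ is legal for $\act$ in $A_g$, we take $\tup{A_g,\scmap_g,\delta_g}\gprogtrans{\act\sigma,\filter_S}\tup{A_g',\scmap_g',\gemptyprog}$ under $\filter_S$ with the service-call substitution read off the passage $\tup{A_k,m_k}\exect{\act'\sigma}\tup{A_k',m_k'}$, and inspection of $\eff{\act'}$ yields $A_g'\eqm A_k'$, $\scmap_g'=m_k'$ and $\ppre(\gemptyprog)=\ppost(\gact{Q(\vec p)}{\act(\vec p)})\in A_k'$; this is the mirror image of the base case of Lemma~\ref{lem:prog-exec-bsim}.

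The non-terminal connectives are handled by peeling off the first navigation step and invoking the induction hypothesis on the relevant subprogram. If $\delta_g=\delta_1|\delta_2$ then $n\ge 1$ and $\act_1\in\{\gamma_{\delta_1},\gamma_{\delta_2}\}$; WLOG $\act_1=\gamma_{\delta_1}$, giving $\ppre(\delta_1)\in A_1^t\eqm A_g$, hence $\tup{A_g,\scmap_g,\delta_1}\mimic\tup{A_1^t,m_k}$ and the remaining passage still satisfies the hypothesis, so the IH for $\delta_1$ produces a transition whose target mimics $\tup{A_k',m_k'}$, which program-execution rule~2 lifts. The case $\delta_g=\gif{\varphi}{\delta_1}{\delta_2}$ is identical once Lemma~\ref{lem:ECQ-equal-ABox-modulo-markers} transfers $\ask(\varphi,T,A_k)=\ask(\varphi,T,A_g)$, fixing $\gamma_{if}$ vs.\ $\gamma_{else}$ and thus the branch of rule~5 or~6. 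For $\delta_g=\gwhile{\varphi}{\delta}$ with $\ask(\varphi,T,A_k)=\true$, $\act_1=\gamma_{doLoop}$ introduces $\tmp$, $\ppre(\delta)$ and a fresh $\noopconcept{noop}$ that interferes with no rule inside $\delta$, so the IH for $\delta$ applies and rule~7 lifts it, using $\ppre(\delta';\gwhile{\varphi}{\delta})=\ppre(\delta')$ and the fact that $\act'$ re-flushes $\noopconcept{noop}$ so the target is again clean; if $\ask(\varphi,T,A_k)=\false$ the loop state is final and the hypothesis is vacuous for the same reason as in the $\gemptyprog$ case.

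The delicate case is sequencing, $\delta_g=\delta_1;\delta_2$, where $\ppre(\delta_1)=\ppre(\delta_1;\delta_2)\in A_k$ and $\ppost(\delta_1)=\ppre(\delta_2)$. Here one must decide whether the navigation stays inside $\delta_1$ (and selects an atomic action there) or traverses $\delta_1$ entirely and finds the action inside $\delta_2$. I would resolve this with an auxiliary claim, proved by a companion induction on $\delta_1$ and amounting to the converse of Lemma~\ref{lem:final-state-add-transition}: a temp-adder navigation starting from a clean state containing $\ppre(\delta_1)$ reaches a state containing $\ppost(\delta_1)$ (with all start flags of subprograms of $\delta_1$ absent) only if $\tup{A_g,\scmap_g,\delta_1}$ is final, and then that state's ABox is $\eqm A_g$. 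Granting it: (i) if $\tup{A_g,\scmap_g,\delta_1}$ is not final, the navigation cannot traverse $\delta_1$, so the atomic action lies in a subprogram of $\delta_1$; apply the IH to $\delta_1$ and lift by rule~3, noting $\ppre(\delta_1';\delta_2)=\ppre(\delta_1')$; (ii) if $\tup{A_g,\scmap_g,\delta_1}$ is final, split the navigation at the first $A_j^t$ containing $\ppre(\delta_2)$ (it exists by the claim, with $A_j^t\eqm A_g$), observe $\tup{A_g,\scmap_g,\delta_2}\mimic\tup{A_j^t,m_k}$ and that the suffix satisfies the hypothesis, apply the IH to $\delta_2$, and lift by rule~4, which is applicable exactly because $\tup{A_g,\scmap_g,\delta_1}$ is final. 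I expect this auxiliary claim --- characterizing ``navigating out of a subprogram'' as equivalent to the subprogram being final, and as ABox-preserving --- to be the main technical obstacle; the rest is a routine transcription of $\tgprog$ and the program-execution rules together with Lemmas~\ref{lem:non-temp-state-produced-by-normal-action}, \ref{lem:temp-state-produced-by-temp-act}, \ref{lem:action-invocation-unique-start-flag} and \ref{lem:ECQ-equal-ABox-modulo-markers}.
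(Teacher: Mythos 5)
Your proposal is correct and follows essentially the same route as the paper: a structural induction on $\delta_g$ driven by Lemmas~\ref{lem:temp-state-produced-by-temp-act}, \ref{lem:non-temp-state-produced-by-normal-action}, \ref{lem:action-invocation-unique-start-flag} and \ref{lem:ECQ-equal-ABox-modulo-markers}, peeling the navigation actions off the flag structure of $\tgprog$ and lifting the inductive step through the program-execution rules. The one place you go beyond the paper is the sequencing case, where you make explicit (and propose to prove) the converse of Lemma~\ref{lem:final-state-add-transition} --- that traversing $\delta_1$'s flags to reach $\ppost(\delta_1)$ forces $\tup{A_g,\scmap_g,\delta_1}$ to be final and preserves the ABox modulo markers --- a fact the paper's case split uses only implicitly; making it a standalone claim is a sound and arguably necessary refinement, not a change of method.
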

\begin{proof}
  Let
\begin{compactitem}
\item
  $\ts{\gkabsym}^{\filter_S}\!=\!\tup{\const, T, \stateset_g, s_{0g},
    \abox_g, \trans_g}$, and
\item
  $\ts{\tgkab(\gkabsym)}^S\!=\!\tup{\const, T, \stateset_k, s_{0k},
    \abox_k, \trans_k}$.
\end{compactitem}
%
  We prove the claim by induction over the structure of $\delta$.

\smallskip
\noindent
\textbf{Base case:}
\begin{enumerate}
\item[\textbf{[$\delta_g = \gemptyprog$]}.] Since
  $\delta_g = \gemptyprog$, by the definition of $\tgprog$, there must
  not exist $\tup{A'_k, m'_k}$ and $\tup{A_i^t, m_k}$ (for $i \in
  \set{1,\ldots, n}$, and $n \geq 0$) such that \\
$
    \tup{A_k, m_k} \exect{\act_1\sigma_e} \tup{A_1^t, m_k}
    \exect{\act_2\sigma_e} \cdots \\ \hspace*{20mm} \cdots \exect{\act_n\sigma_e} \tup{A_n^t,
    m_k} \exect{\act'\sigma} \tup{A'_k, m'_k},
$ \\
  where
    \begin{compactitem}
    \item $\sigma_e$ is an empty substitution,
    \item $\act_i \in \actsettmpa$ (for $i \in \set{1,\ldots, n}$),
    \item $\act' \in \actsettmpd$,
    \item $\carule{Q(\vec{p})}{\act'(\vec{p})} \in \procset'$, 
    \item if $n = 0$, then $\sigma \in \Ans(Q,T,A_k)$, otherwise
      $\sigma \in \Ans(Q,T,A_n^t)$,
    \item $\tmp \in A_i^t$ (for $i \in \set{1,\ldots, n}$),
      $\tmp \not\in A_k'$,
    \end{compactitem}

    The intuition is that the translation $\tgprog$ translates empty
    programs into actions that only add $\tmp$. 

  \item[\textbf{[$\delta_g = \gact{Q(\vec{p})}{\act(\vec{p})}$]}.]
    Assume that there exist states $\tup{A'_k, m'_k}$ and 
    $\tup{A_i^t, m_k}$ (for $i \in \set{1,\ldots, n}$, and $n \geq 0$) such
    that \\
    $\tup{A_k, m_k} \exect{\act_1\sigma_e} \tup{A_1^t, m_k}
    \exect{\act_2\sigma_e} \cdots \\ \hspace*{20mm} \cdots
    \exect{\act_n\sigma_e} \tup{A_n^t, m_k} \exect{\act'\sigma}
    \tup{A'_k, m'_k}$
  where
    \begin{compactitem}
    \item $\sigma_e$ is an empty substitution,
    \item $\act_i \in \actsettmpa$ (for $i \in \set{1,\ldots, n}$),
    \item $\act' \in \actsettmpd$,
    \item $\carule{Q(\vec{p})}{\act'(\vec{p})} \in \procset'$, 
    \item if $n = 0$, then $\sigma \in \Ans(Q,T,A_k)$, otherwise
      $\sigma \in \Ans(Q,T,A_n^t)$,
    \item $\tmp \in A_i^t$ (for $i \in \set{1,\ldots, n}$),
      $\tmp \not\in A_k'$,
    \end{compactitem}
    Additionally, w.l.o.g., let $\theta$ be the corresponding
    substitution that evaluates service calls in the transition
    \[
    \tup{A_n^t, m_k} \exect{\act'\sigma} \tup{A'_k, m'_k}
    \]
    Now, since
    $\tup{A_g,\scmap_g, \gact{Q(\vec{p})}{\act(\vec{p})} } \mimic
    \tup{A_k,\scmap_k} $,
    then $\ppre(\gact{Q(\vec{p})}{\act(\vec{p})}) \in A_k$.  Moreover,
    since we also have $\delta_g = \gact{Q(\vec{p})}{\act(\vec{p})}$,
    by the definition of $\tgkab$ and
    Lemma~\ref{lem:action-invocation-unique-start-flag}, we have that
    $\act'$ must be obtained from $\gact{Q(\vec{p})}{\act(\vec{p})}$
    and hence we have that
    $\carule{Q(\vec{p}) \wedge
      \ppre(\gact{Q(\vec{p})}{\act(\vec{p})})}{\act'(\vec{p})} \in
    \procset'$.

    Now, by our assumption above and by
    Lemma~\ref{lem:temp-state-produced-by-temp-act}, we have that
    $A_k \eqm A_1^t$, $A_i^t \eqm A_{i+1}^t$ (for
    $i \in \set{1,\ldots, n}$), and $A_n^t \eqm A_k'$, and hence we
    have $A_g \eqm A_n^t$. Since $A_g \eqm A_n^t$, and $Q$ does not
    use any special marker concept names, by
    Lemma~\ref{lem:ECQ-equal-ABox-modulo-markers} we have
    $\ask(Q, T, A_g) = \Ans(Q, T, A_n^t)$ and hence
    $\sigma \in \ask(Q, T, A_g)$.
  Additionally, considering
  \[
  \begin{array}{l@{}l} \eff{\act'} = &\eff{\act} \\
                                  &\cup \set{\map{\true}{ \add
                                    \set{\ppost(\gact{Q(\vec{p})}{\act(\vec{p})})}
                                    }}\\
                                  &\cup \set{\map{\true}{ \del
                                    \set{\ppre(\gact{Q(\vec{p})}{\act(\vec{p})}),\\
                                  &\hspace*{22mm}\tmp} }}\\
                                  &\cup
                                    \set{\map{\noopconcept{x}}{\del
                                    \noopconcept{x} }},
  \end{array}
  \]
  Then it is easy to see that we have
\[
 \addfactss{A_g}{\act\sigma} =\addfactss{A_k}{\act'\sigma}
  \setminus \ppost(\gact{Q(\vec{p})}{\act(\vec{p})}), 
\]
and hence
  \[
  \calls{\addfactss{A_g}{\act\sigma}} = \calls{\addfactss{A_k}{\act'\sigma}}
  \]
  and $\theta \in \calls{\addfactss{A_g}{\act\sigma}}$.
   Since $\scmap_k' = \theta \cup \scmap_k$, $\scmap_k = \scmap_g$ and
   $\theta \in \calls{\addfactss{A_g}{\act\sigma}}$, we can
   construct $\scmap_g' = \theta \cup \scmap_g$. 
   Thus, it is easy to see that there exists
   $\tup{A'_g,\scmap'_g, \gemptyprog}$ such that
  \[
  \tup{A_g,\scmap_g, \gact{Q(\vec{p})}{\act(\vec{p})}  } \gprogtrans{\act\sigma, \filter_S}
  \tup{A'_g,\scmap'_g, \gemptyprog},
  \]
  (with service call substition $\theta$) and $A'_g \eqm A_k'$ (by
  considering how $A_k'$ is constructed), $\scmap'_g = \scmap'_k$.
  Moreover, by the definition of $\tgprog$ (in the translation of an
  action invocation) we also have $\ppre(\gemptyprog) \in A_k'$
  (because $\ppost(\gact{Q(\vec{p})}{\act(\vec{p})}) = \ppre(\gemptyprog)$). Thus
  the claim is proven.
\end{enumerate}

\smallskip
\noindent
\textbf{Inductive case:}
\begin{enumerate}
\item[\textbf{[$\delta_g = \delta_1|\delta_2$]}.]  Assume that there
  exist states $\tup{A'_k, m'_k}$ and $\tup{A_i^t, m_k}$ (for
  $i \in \set{1,\ldots, n}$, and $n \geq 0$) such that\\
  $\tup{A_k, m_k} \exect{\act_1\sigma_e} \tup{A_1^t, m_k}
  \exect{\act_2\sigma_e} \cdots$
  \\
  \hspace*{25mm}$\cdots \exect{\act_n\sigma_e} \tup{A_n^t, m_k}
  \exect{\act'\sigma} \tup{A'_k, m'_k}$
  where
    \begin{compactitem}
    \item $\sigma_e$ is an empty substitution,
    \item $\act_i \in \actsettmpa$ (for $i \in \set{1,\ldots, n}$), 
    \item $\act' \in \actsettmpd$, 
    \item $\carule{Q(\vec{p})}{\act'(\vec{p})} \in \procset'$, 
    \item if $n = 0$, then $\sigma \in \Ans(Q,T,A_k)$, otherwise
      $\sigma \in \Ans(Q,T,A_n^t)$,
    \item $\tmp \in A_i^t$ (for $i \in \set{1,\ldots, n}$), 
      $\tmp \not\in A_k'$,
    \end{compactitem}
    By the definition of $\tgprog$ on the translation of a program of
    the form $\delta_1 | \delta_2$ and
    $\gact{Q(\vec{p})}{\act(\vec{p})}$, there exists
    $j \in \set{1,\ldots,n}$ such that
    $\ppre(\delta_1|\delta_2) \in A_{j-1}^t$, and either
    \begin{compactenum}
    \item $\act_j = \gamma_{\delta_1}$, and
      $\ppre(\delta_1) \in A_j^t$, or
    \item $\act_j = \gamma_{\delta_2}$, and
      $\ppre(\delta_2) \in A_j^t$.
    \end{compactenum}
    where $\gamma_{\delta_1}$ and $\gamma_{\delta_2}$ are the actions
    obtained from the translation of $\delta_1 | \delta_2$ by
    $\tgprog$, and it might be the case that $A_k = A_{j-1}^t$ (if
    $j = 1$). Now, by our assumption above and by
    Lemma~\ref{lem:temp-state-produced-by-temp-act}, we have that
    $A_k \eqm A_j^t$, and hence it is easy to see that we also have
    $A_g \eqm A_j^t$. Thus, essentially we have \\
      $\tup{A_{j-1}^t, m_k} \exect{\act_j\sigma_e} \tup{A_{j}^t, m_k}
      \exect{\act_{j+1}\sigma_e} \cdots $ \\ \hspace*{25mm}$\cdots \exect{\act_n\sigma_e}
      \tup{A_n^t, m_k} \exect{\act'\sigma} \tup{A'_k, m'_k}$,
%
  and 
  \begin{compactenum}
  \item if $\act_j = \gamma_{\delta_1}$, then
    $\tup{A_g, \scmap_g, \delta_1} \mimic \tup{A_j^t,\scmap_k}$
    (because $A_g \eqm A_j^t$, $\scmap_g = \scmap_k$,
    $\ppre(\delta_1) \in A_j^t$), otherwise
  \item if $\act_j = \gamma_{\delta_2}$, then
    $\tup{A_g, \scmap_g, \delta_2} \mimic \tup{A_j^t,\scmap_k}$
    (because $A_g \eqm A_j^t$, $\scmap_g = \scmap_k$,
    $\ppre(\delta_2) \in A_j^t$).
  \end{compactenum}
  Therefore by induction hypothesis, 
  it is easy to see that the claim is proven by also considering
  the definition of program execution relation.

\item[\textbf{[$\delta_g = \delta_1;\delta_2$]}.]  Assume that there
  exist states $\tup{A'_k, m'_k}$ and $\tup{A_i^t, m_k}$ (for
  $i \in \set{1,\ldots, n}$, and $n \geq 0$) such that \\
    $\tup{A_k, m_k} \exect{\act_1\sigma_e} \tup{A_1^t, m_k}
    \exect{\act_2\sigma_e} \cdots $ \\ \hspace*{25mm}$\cdots \exect{\act_n\sigma_e} \tup{A_n^t,
      m_k} \exect{\act'\sigma} \tup{A'_k, m'_k}$
  where
    \begin{compactitem}
    \item $\sigma_e$ is an empty substitution,
    \item $\act_i \in \actsettmpa$ (for $i \in \set{1,\ldots, n}$), 
    \item $\act' \in \actsettmpd$, 
    \item $\carule{Q(\vec{p})}{\act'(\vec{p})} \in \procset'$, 
    \item if $n = 0$, then $\sigma \in \Ans(Q,T,A_k)$, otherwise
      $\sigma \in \Ans(Q,T,A_n^t)$,
    \item $\tmp \in A_i^t$ (for $i \in \set{1,\ldots, n}$),
      $\tmp \not\in A_k'$,
    \end{compactitem}
    By the definition of $\tgprog$ on the translation of
    $\delta_1 ; \delta_2$ and $\gact{Q(\vec{p})}{\act(\vec{p})}$, as
    well as Lemma~\ref{lem:final-state-add-transition}, then there are two cases:

%
    \begin{compactenum}[\bf (a)]
    \item there exists $j \in \set{1,\ldots,n}$ such that
      $\ppre(\delta_1) \in A_j^t$, and
      $\ppost(\delta_1) \not\in A_l^t$ for $l \in \set{j+1,\ldots,n}$
      (capturing the case when $\tup{A_g, \scmap_g, \delta_1}$ is not
      a final state).
    \item there exists $j \in \set{1,\ldots,n-1}$ and
      $l \in \set{j+1,\ldots,n}$ such that
      $\ppre(\delta_1) \in A_j^t$, $\ppost(\delta_1) \in A_{l}^t$,
      $\ppre(\delta_2) \in A_{l}^t$,
      $\ppost(\delta_1) = \ppre(\delta_2)$ (capturing the case when
      $\final{\tup{A_g, \scmap_g, \delta_1}}$).
    \end{compactenum}
%
%
    Now, by our assumption above and by
    Lemma~\ref{lem:temp-state-produced-by-temp-act}, we have that
    \begin{compactenum}
    \item[- \textbf{For the case} \textbf{(a)}:] $A_k \eqm A_j^t$, and hence it is
      easy to see that we also have $A_g \eqm A_j^t$. Thus we have
      that $\tup{A_g, \scmap_g, \delta_1} \mimic
      \tup{A_j^t,\scmap_k}$.
    \item[- \textbf{For the case} \textbf{(b)}:] $A_k \eqm A_l^t$, and hence it is
      easy to see that we also have $A_g \eqm A_l^t$. Thus we have
      that $\tup{A_g, \scmap_g, \delta_2} \mimic
      \tup{A_l^t,\scmap_k}$. 
    \end{compactenum}
    Therefore by induction hypothesis, it is easy to see that the
    claim is proven by also considering the definition of program
    execution relation.

  \item[\textbf{[$\delta_g = \gif{\varphi}{\delta_1}{\delta_2}$]}.]
    Assume that there exist states $\tup{A'_k, m'_k}$ and
    $\tup{A_i^t, m_k}$ (for $i \in \set{1,\ldots, n}$, and $n \geq 0$)
    such that \\
    $\tup{A_k, m_k} \exect{\act_1\sigma_e} \tup{A_1^t, m_k}
    \exect{\act_2\sigma_e} \cdots $ \\ \hspace*{25mm}$\cdots \exect{\act_n\sigma_e} \tup{A_n^t,
      m_k} \exect{\act'\sigma} \tup{A'_k, m'_k}$
  where
    \begin{compactitem}
    \item $\sigma_e$ is an empty substitution,
    \item $\act_i \in \actsettmpa$ (for $i \in \set{1,\ldots, n}$), 
    \item $\act' \in \actsettmpd$, 
    \item $\carule{Q(\vec{p})}{\act'(\vec{p})} \in \procset'$, 
    \item if $n = 0$, then $\sigma \in \Ans(Q,T,A_k)$, otherwise
      $\sigma \in \Ans(Q,T,A_n^t)$,
    \item $\tmp \in A_i^t$ (for $i \in \set{1,\ldots, n}$),
      $\tmp \not\in A_k'$,
    \end{compactitem}
    By the definition of $\tgprog$ on the translation of a program of
    the form $\gif{\varphi}{\delta_1}{\delta_2}$ and
    $\gact{Q(\vec{p})}{\act(\vec{p})}$, there exists
    $j \in \set{1,\ldots,n}$ such that
    $\ppre(\gif{\varphi}{\delta_1}{\delta_2}) \in A_{j-1}^t$ and either
    \begin{compactenum}
    \item $\act_j = \gamma_{if}$, $\ppre(\delta_1) \in A_j^t$, and
      $\Ans(\varphi, T, A_{j-1}^t)$, or
    \item $\act_j = \gamma_{else}$, $\ppre(\delta_2) \in A_j^t$, and
      $\Ans(\varphi, T, A_{j-1}^t)$.
    \end{compactenum}
    where $\gamma_{if}$ and $\gamma_{else}$ are the actions obtained
    from the translation of $\gif{\varphi}{\delta_1}{\delta_2}$ by
    $\tgprog$ and it might be the case that $A_k = A_{j-1}^t$ (if
    $j = 1$). Now, by our assumption above and by
    Lemma~\ref{lem:temp-state-produced-by-temp-act}, we have that
    $A_k \eqm A_j^t$, and hence it is easy to see that we also have
    $A_g \eqm A_j^t$. Thus, essentially we have \\
      $\tup{A_{j-1}^t, m_k} \exect{\act_j\sigma_e} \tup{A_{j}^t, m_k}
      \exect{\act_{j+1}\sigma_e} \cdots$ \\ \hspace*{25mm} $\cdots \exect{\act_n\sigma_e}
      \tup{A_n^t, m_k} \exect{\act'\sigma} \tup{A'_k, m'_k}$,
  and
  \begin{compactenum}
  \item if $\act_j = \gamma_{if}$, then
    $\tup{A_g, \scmap_g, \delta_1} \mimic \tup{A_j^t,\scmap_k}$
    (because $A_g \eqm A_j^t$, $\scmap_g = \scmap_k$,
    $\ppre(\delta_1) \in A_j^t$), otherwise
  \item if $\act_j = \gamma_{else}$, then
    $\tup{A_g, \scmap_g, \delta_2} \mimic \tup{A_j^t,\scmap_k}$
    (because $A_g \eqm A_j^t$, $\scmap_g = \scmap_k$,
    $\ppre(\delta_2) \in A_j^t$).
  \end{compactenum}
  Therefore by induction hypothesis, 
  it is easy to see that the claim is proven by also considering
  the definition of program execution relation.

\item[\textbf{[$\delta_g = \gwhile{\varphi}{\delta_1}$]}.]  Assume
  that there exist states $\tup{A'_k, m'_k}$ and $\tup{A_i^t, m_k}$ (for
  $i \in \set{1,\ldots, n}$, and $n \geq 0$) such that \\
    $\tup{A_k, m_k} \exect{\act_1\sigma_e} \tup{A_1^t, m_k}
    \exect{\act_2\sigma_e} \cdots $ \\ \hspace*{25mm} $\cdots \exect{\act_n\sigma_e} \tup{A_n^t,
      m_k} \exect{\act'\sigma} \tup{A'_k, m'_k}$
  where
    \begin{compactitem}
    \item $\sigma_e$ is an empty substitution,
    \item $\act_i \in \actsettmpa$ (for $i \in \set{1,\ldots, n}$), 
    \item $\act' \in \actsettmpd$, 
    \item $\carule{Q(\vec{p})}{\act'(\vec{p})} \in \procset'$, 
    \item if $n = 0$, then $\sigma \in \Ans(Q,T,A_k)$, otherwise
      $\sigma \in \Ans(Q,T,A_n^t)$,
    \item $\tmp \in A_i^t$ (for $i \in \set{1,\ldots, n}$),
      $\tmp \not\in A_k'$,
    \end{compactitem}
    By the definition of $\tgprog$ on the translation of a program of
    the form $\gwhile{\varphi}{\delta_1}$ and
    $\gact{Q(\vec{p})}{\act(\vec{p})}$, there exists
    $j \in \set{1,\ldots,n}$ such that
    \begin{compactitem}
    \item $\act_j
      = \gamma_{doLoop}$
      ($\gamma_{doLoop}$
      is the action obtained during the translation of
      $\gwhile{\varphi}{\delta_1}$ by $\tgprog$),
    \item $\ppre(\gwhile{\varphi}{\delta_1})
      \in A_{j-1}^t$ (where $A_{j-1}^t = A_k$ when $j = 1$), and
    \item $\ppre(\delta_1) \in A_{j}^t$.
    \end{compactitem}
    Now, by our assumption above and by
    Lemma~\ref{lem:temp-state-produced-by-temp-act}, we have that $A_k
    \eqm A_j^t$, and hence it is easy to see that $A_g \eqm
    A_j^t$. Thus, essentially we have \\
    $\tup{A_{j-1}^t, m_k} \exect{\act_j\sigma_e} \tup{A_{j}^t, m_k}
    \exect{\act_{j+1}\sigma_e} \cdots $ \\ \hspace*{25mm}$\cdots \exect{\act_n\sigma_e}
    \tup{A_n^t, m_k} \exect{\act'\sigma} \tup{A'_k, m'_k}$,
  and $\tup{A_g, \scmap_g, \delta_1} \mimic \tup{A_j^t,\scmap_k}$
  (because $A_g \eqm A_j^t$, $\scmap_g = \scmap_k$,
  $\ppre(\delta_1) \in A_j^t$).
  Therefore by induction hypothesis, it is easy to see that the claim
  is proven by also considering the definition of program execution
  relation.
\end{enumerate}
\end{proof}

Now we will show that given a state $s_g$ of an S-GKAB transition
system and a state $s_k$ of its corresponding S-KAB transition system
such that $s_g$ is mimicked by $s_k$, then we have $s_g$ and $s_k$ are
J-bisimilar. Formally this claim is stated and shown below.

\begin{lemma}\label{lem:sgkab-to-kab-bisimilar-state}
  Let $\gkabsym$ be an S-GKAB with transition system
  $\ts{\gkabsym}^{\filter_S}$, and let $\tgkab(\gkabsym)$ be an S-KAB
  with transition system $\ts{\tgkab(\gkabsym)}^S$ obtained from
  $\gkabsym$ through $\tgkab$.  Consider
\begin{inparaenum}[]
\item a state $\tup{A_g,\scmap_g, \delta_g}$ of
  $\ts{\gkabsym}^{\filter_S}$ and
\item a state $\tup{A_k,\scmap_k}$ of $\ts{\tgkab(\gkabsym)}^S$.
\end{inparaenum}
If $\tup{A_g,\scmap_g, \delta_g} \mimic \tup{A_k,\scmap_k} $ then
$\tup{A_g,\scmap_g, \delta_g} \jbsim \tup{A_k,\scmap_k}$.
\end{lemma}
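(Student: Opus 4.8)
The plan is to show that the relation
\[
\B = \{(\tup{A_g,\scmap_g,\delta_g},\tup{A_k,\scmap_k}) \mid \tup{A_g,\scmap_g,\delta_g} \in \stateset_g,\ \tup{A_k,\scmap_k} \in \stateset_k,\ \tup{A_g,\scmap_g,\delta_g} \mimic \tup{A_k,\scmap_k}\}
\]
is a J-bisimulation between $\ts{\gkabsym}^{\filter_S}$ and $\ts{\tgkab(\gkabsym)}^S$; the lemma then follows immediately, since the pair in the statement belongs to $\B$. The first J-bisimulation condition is trivial: if $\tup{A_g,\scmap_g,\delta_g} \mimic \tup{A_k,\scmap_k}$ then $A_k \eqm A_g$ by Definition~\ref{def:mimic-state}, and $\eqm$ is symmetric.

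For the forward condition, assume $\tup{A_g,\scmap_g,\delta_g} \trans_g \tup{A_g',\scmap_g',\delta_g'}$. By construction of $\ts{\gkabsym}^{\filter_S}$ this means $\tup{A_g,\scmap_g,\delta_g} \gprogtrans{\act\sigma,\filter_S} \tup{A_g',\scmap_g',\delta_g'}$ for some atomic action $\act$ and legal parameter assignment $\sigma$. Since $\tup{A_g,\scmap_g,\delta_g} \mimic \tup{A_k,\scmap_k}$, Lemma~\ref{lem:prog-exec-bsim} yields $\tmp$-labelled states $\tup{A_1^t,m_k},\dots,\tup{A_n^t,m_k}$ (for some $n \geq 0$) and a state $\tup{A_k',\scmap_k'}$ with $\tmp \notin A_k'$, such that $\tup{A_k,\scmap_k}$ reaches $\tup{A_n^t,m_k}$ by $n$ temp-adder transitions through the $\tup{A_i^t,m_k}$, then $\tup{A_n^t,m_k} \exect{\act'\sigma} \tup{A_k',\scmap_k'}$, and $\tup{A_g',\scmap_g',\delta_g'} \mimic \tup{A_k',\scmap_k'}$. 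All these tuples are genuine states of $\ts{\tgkab(\gkabsym)}^S$, as they are reachable and the temp-adder actions only touch special markers, hence preserve $T$-consistency. Taking the intermediate states to be the $\tup{A_i^t,m_k}$ and the witness to be $\tup{A_k',\scmap_k'}$ discharges this case, since $(\tup{A_g',\scmap_g',\delta_g'},\tup{A_k',\scmap_k'}) \in \B$.

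For the backward condition, assume $\tup{A_k,\scmap_k} \trans_k t_1 \trans_k \cdots \trans_k t_n \trans_k \tup{A_k',\scmap_k'}$ with $\tmp \in \abox_k(t_i)$ for all $i$ and $\tmp \notin A_k'$. I would first normalize the shape of this chain via the structural lemmas on $\tgkab(\gkabsym)$: by Lemma~\ref{lem:action-set-separation} every action of $\tgkab(\gkabsym)$ is a temp adder or a temp deleter; by Lemma~\ref{lem:temp-state-produced-by-temp-act} each of the first $n$ transitions (landing in a state containing $\tmp$) uses a temp-adder action applied with the empty substitution and no service calls, leaving the ABox unchanged modulo special markers; and by Lemma~\ref{lem:non-temp-state-produced-by-normal-action} the last transition uses a temp-deleter action $\act'$ obtained, through $\tgprog$, from the translation of some action invocation $\gact{Q(\vec{p})}{\act(\vec{p})}$ occurring in $\ginitprog$, so that the corresponding condition-action rule lies in $\procset'$ and executability in $t_n$ supplies the required membership of $\sigma$ in $\Ans(\cdot)$. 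This puts the chain exactly in the form demanded by the hypotheses of Lemma~\ref{lem:kab-transition-bsim}, whose conclusion gives a state $\tup{A_g',\scmap_g',\delta_g'}$ with $\tup{A_g,\scmap_g,\delta_g} \gprogtrans{\act\sigma,\filter_S} \tup{A_g',\scmap_g',\delta_g'}$ --- hence $\tup{A_g,\scmap_g,\delta_g} \trans_g \tup{A_g',\scmap_g',\delta_g'}$ --- and $\tup{A_g',\scmap_g',\delta_g'} \mimic \tup{A_k',\scmap_k'}$, i.e.\ $(\tup{A_g',\scmap_g',\delta_g'},\tup{A_k',\scmap_k'}) \in \B$. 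If no such chain exists, the condition holds vacuously.

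I expect the backward condition to be the main obstacle: one must argue that an arbitrary maximal run of $\tmp$-labelled states in $\ts{\tgkab(\gkabsym)}^S$ decomposes precisely as a sequence of temp-adder transitions followed by a single temp-deleter transition arising from an action-invocation translation, so as to fit the rigid hypotheses of Lemma~\ref{lem:kab-transition-bsim}. Lemmas~\ref{lem:action-set-separation}, \ref{lem:temp-state-produced-by-temp-act}, and \ref{lem:non-temp-state-produced-by-normal-action} carry this load, together with the fact that temp-adder actions leave the ABox fixed modulo special markers, so the guard $Q$ of the concluding invocation is evaluated over an ABox that is $\eqm$-equal to $A_g$ and Lemma~\ref{lem:ECQ-equal-ABox-modulo-markers} transfers its answers.
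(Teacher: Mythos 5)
Your proposal is correct and follows essentially the same route as the paper's proof: the forward transfer condition is discharged by Lemma~\ref{lem:prog-exec-bsim}, and the backward condition by first normalizing the $\tmp$-chain via Lemmas~\ref{lem:action-set-separation}, \ref{lem:temp-state-produced-by-temp-act}, and \ref{lem:non-temp-state-produced-by-normal-action} and then invoking Lemma~\ref{lem:kab-transition-bsim}. The only (harmless) presentational difference is that you explicitly exhibit the full mimic relation as the witnessing J-bisimulation, whereas the paper verifies the two transfer conditions for an arbitrary mimicking pair and leaves the relation implicit.
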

\begin{proof}
Let 
\begin{compactitem}
\item $\gkabsym = \tup{T, \initABox, \actset, \ginitprog}$ and \\
  $\ts{\gkabsym}^{\filter_S} = \tup{\const, T, \stateset_g, s_{0g},
    \abox_g, \trans_g}$, 
\item $\tgkab(\gkabsym) = \tup{T, \initABox', \actset', \procset'}$,
  and \\
  $\ts{\tgkab(\gkabsym)}^S = \tup{\const, T, \stateset_k, s_{0k},
    \abox_k, \trans_k}$
\end{compactitem}

\noindent
We have to show:
\begin{enumerate}[\bf (1)]
\item for every state
$\tup{A'_g,\scmap'_g, \delta'_g}$ such that
$\tup{A_g,\scmap_g, \delta_g} \trans_g \tup{A'_g,\scmap'_g,
  \delta'_g},$
there exist states $\tup{A'_k,\scmap'_k}$,
$\tup{A_1^t,\scmap_k} \ldots \tup{A_n^t,\scmap_k}$ (for $n \geq 0$)
with \\
$
\hspace*{5mm} \tup{A_k,\scmap_k} \trans_k \tup{A_1^t,\scmap_k} \trans_k \cdots \\ \hspace*{25mm} \cdots
\trans_k \tup{A_n^t,\scmap_k} \trans_k \tup{A'_k,\scmap'_k}
$\\
such that:
\begin{compactenum}
\item $\tmp \not\in A'_k$, $\tmp \in A^t_i$ for
  $i \in \set{1, \ldots, n}$, and
\item 
  $\tup{A'_g,\scmap'_g, \delta'_g} \mimic \tup{A'_k,\scmap'_k} $.
\end{compactenum}

\item for every state $\tup{A'_k,\scmap'_k}$ such that there exist
  states $\tup{A_1^t,\scmap_1} \ldots \tup{A_n^t,\scmap_n}$ (for
  $n \geq 0$)
  and \\
  $ \hspace*{5mm}\tup{A_k,\scmap_k} \trans_k \tup{A_1^t,\scmap_1}
  \trans_k \cdots $
  \\
  \hspace*{20mm}$\cdots \trans_k \tup{A_n^t,\scmap_n} \trans_k
  \tup{A'_k,\scmap'_k}
  $\\
  where $\tmp \not\in A'_k$, and $\tmp \in A^t_i$ for
  $i \in \set{1, \ldots, n}$,
  then there exists a state $\tup{A'_g,\scmap'_g, \delta'_g}$ with
  $\tup{A_g,\scmap_g, \delta_g} \trans_g \tup{A'_g,\scmap'_g,
    \delta'_g}$
  such that 
  $\tup{A'_g,\scmap'_g, \delta'_g} \mimic \tup{A'_k,\scmap'_k} $.

\end{enumerate}

\begin{enumerate}

\item[\textbf{Proof for (1):}] 
%
%
Assume
$ \tup{A_g,\scmap_g, \delta_g} \trans \tup{A'_g,\scmap'_g, \delta'_g},
$ then by the definition of GKABs transition system we have
\[
\tup{A_g,\scmap_g, \delta_g} \gprogtrans{\act\sigma, \filter_S}
\tup{A'_g,\scmap'_g, \delta'_g}.
\]
Additionally, 
it is easy to see that $A'_g$ is $T$-consistent.
By Lemma~\ref{lem:prog-exec-bsim}, there exist states $\tup{A_i^t, m_k}$,
and actions $\act_i$ (for $i \in \set{1, \ldots, n}$, where $n \geq 0$)
such that
\begin{compactitem}
\item
  $\tup{A_k, m_k} \exect{\act_1\sigma_e} \tup{A_1^t, m_k}
  \exect{\act_2\sigma_e} \cdots $ \\ \hspace*{15mm} $\cdots \exect{\act_n\sigma_e} \tup{A_n^t, m_k}
  \exect{\act'\sigma} \tup{A'_k, m'_k}$ \\where
  \begin{compactitem}
  \item $\sigma_e$ is an empty substitution, 
  \item $\act'$ is obtained from $\act$ through $\tgprog$,
  \item $\tmp \in A_i^t$ (for $i \in \set{1, \ldots, n}$), $\tmp \not\in A_k'$
  \end{compactitem}
\item 
  $\tup{A'_g,\scmap'_g, \delta'_g} \mimic \tup{A'_k,\scmap'_k} $,
\end{compactitem}
Additionally, since $A'_g$ is $T$-consistent and $A'_g \eqm A_k'$ then
$A'_k$ is $T$-consistent. 
As a consequence, we have that the claim is easily proven, since by
the definition of S-KABs standard transition systems, we have
\begin{center}
$
  \hspace*{-15mm}\tup{A_k, m_k} \trans_k \tup{A_1^t, m_k} \trans_k
  \cdots$ \\ \hspace*{25mm} $\cdots \trans_k \tup{A_n^t, m_k} \trans_k \tup{A'_k,
  m'_k}
$
\end{center}
where 
\begin{compactenum}
\item $\tmp \not\in A'_k$, and $\tmp \in A^t_i$ for
  $i \in \set{1, \ldots, n}$, and
\item 
  $\tup{A'_g,\scmap'_g, \delta'_g} \mimic \tup{A'_k,\scmap'_k} $,
\end{compactenum}

\item[\textbf{Proof for (2):}] 
%
  Assume \\
  $ \hspace*{5mm}\tup{A_k,\scmap_k} \trans_k \tup{A_1^t,\scmap_k}
  \trans_k \cdots \\ \hspace*{25mm} \cdots \trans_k
  \tup{A_n^t,\scmap_k} \trans_k \tup{A'_k,\scmap'_k}
  $\\
  where $n > 0$, $\tmp \not\in A'_k$, and $\tmp \in A^t_i$ for
  $i \in \set{1, \ldots, n}$. the definition of S-KABs transition
  systems, then we have \\ 
  \hspace*{5mm}$ \tup{A_k, m_k} \exect{\act_1\sigma_1} \tup{A_1^t, m_1}
  \exect{\act_2\sigma_2} \cdots $
  \\
  \hspace*{20mm}$\cdots \exect{\act_n\sigma_n} \tup{A_n^t, m_n}
  \exect{\act'\sigma} \tup{A'_k, m'_k}.  $ \\
  For some actions $\act'$, $\act_i$ (for $i \in \set{1,\ldots, n}$), and
  substitutions $\sigma'$, $\sigma_i$ (for $i \in \set{1,\ldots, n}$). 
  Let $\actsettmpa$ (resp.\ $\actsettmpd$) be the set of temp adder
  (resp.\ deleter) actions of $\tgkab(\gkabsym)$, since
  $\tmp \not\in A'_k$, and $\tmp \in A^t_i$ for
  $i \in \set{1, \ldots, n}$, by
  Lemmas~\ref{lem:temp-state-produced-by-temp-act} and
  \ref{lem:non-temp-state-produced-by-normal-action}, we have that
\begin{compactitem}
\item $\sigma_i$ is an empty substitution (for
  $i \in \set{1,\ldots, n}$).
\item $\act_i \in \actsettmpa$ (for $i \in \set{1,\ldots, n}$), and
\item $\act' \in \actsettmpd$, 
\item there exists an action invocation
  $\gact{Q(\vec{p})}{\act(\vec{p})}$ that is a sub-program of
  $\ginitprog$ such that $\act'$ is obtained from the translation of
  $\gact{Q(\vec{p})}{\act(\vec{p})}$ by $\tgprog$,
\item $\act_i$ (for $i \in \set{1,\ldots, n}$) does not involve any
  service calls, and hence $m_i = m_{i+1}$ (for $i \in \set{1,\ldots, n-1}$).
\end{compactitem}

Therefore, by Lemma~\ref{lem:kab-transition-bsim}, then there exists a
state $\tup{A'_g,\scmap'_g, \delta_g'}$ such that
\begin{compactitem}
\item
  $ \tup{A_g,\scmap_g, \delta_g} \gprogtrans{\alpha\sigma, \filter_S}
  \tup{A'_g,\scmap'_g, \delta_g'}, $
  \item $\act'$ is obtained from the translation of a certain action
    invocation $\gact{Q(\vec{p})}{\act(\vec{p})}$ via $\tgprog$.
  \item 
  $\tup{A'_g,\scmap'_g, \delta'_g} \mimic \tup{A'_k,\scmap'_k} $.
\end{compactitem}
Since
$\tup{A_g,\scmap_g, \delta_g} \gprogtrans{\act\sigma, \filter_S}
\tup{A'_g,\scmap'_g, \delta'_g}$,
by the definition of GKABs transition systems, we have that
$ \tup{A_g,\scmap_g, \delta_g} \trans \tup{A'_g,\scmap'_g, \delta'_g}
$.
Thus it is easy that the claim is proven since we also have that
$\tup{A'_g,\scmap'_g, \delta'_g} \mimic \tup{A'_k,\scmap'_k} $.
\end{enumerate}
\ \ 
\end{proof}
%

Having Lemma~\ref{lem:sgkab-to-kab-bisimilar-state} in hand, we can
easily show that given an S-GKAB, its transition system is J-bisimilar
to the transition system of its corresponding S-KAB that is obtained
via the translation $\tgkab$.

\begin{lemma}\label{lem:sgkab-to-kab-bisimilar-ts}
  Given an S-GKAB $\gkabsym$, we have
  $\ts{\gkabsym}^{\filter_S} \jbsim \ts{\tgkab(\gkabsym)}^{S} $
\end{lemma}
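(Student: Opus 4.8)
The plan is to reduce the claim entirely to the state-level result already proved in Lemma~\ref{lem:sgkab-to-kab-bisimilar-state}: whenever a state of $\ts{\gkabsym}^{\filter_S}$ is mimicked by a state of $\ts{\tgkab(\gkabsym)}^{S}$, the two states are J-bisimilar. Since J-bisimilarity between transition systems is defined as the existence of a J-bisimulation relating the two initial states, it suffices to exhibit a $\mimic$-relationship between the respective initial states and then invoke Lemma~\ref{lem:sgkab-to-kab-bisimilar-state}.

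Concretely, writing $\gkabsym = \tup{T, \initABox, \actset, \ginitprog}$, I would first unfold the two initial states. By the definition of the GKAB transition system, the initial state of $\ts{\gkabsym}^{\filter_S}$ is $s_{0g} = \tup{\initABox, \emptyset, \ginitprog}$. By the definition of $\tgkab$, we have $\tgkab(\gkabsym) = \tup{T, \initABox \cup \set{\flagconcept{start}}, \actset', \procset'}$ with $\tgprog(\flagconcept{start}, \ginitprog, \flagconcept{end}) = \tup{\ppre, \ppost, \procset', \actset'}$, so the initial state of $\ts{\tgkab(\gkabsym)}^{S}$ is $s_{0k} = \tup{\initABox \cup \set{\flagconcept{start}}, \emptyset}$.

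Next I would verify the three conditions of Definition~\ref{def:mimic-state} for $s_{0g} \mimic s_{0k}$: (i) $\initABox \cup \set{\flagconcept{start}} \eqm \initABox$, which holds because $\flagconcept{start}$ is a special marker (formed from the marker concept name $\flagconceptname$ and a constant in $\const_0$), while $\eqm$ only compares assertions over $\voc(T)$; (ii) the service call maps coincide, both being $\emptyset$; and (iii) $\ppre(\ginitprog) \in \initABox \cup \set{\flagconcept{start}}$, which follows from Lemma~\ref{lem:program-pre-post}: since $\tgprog(\flagconcept{start}, \ginitprog, \flagconcept{end}) = \tup{\ppre, \ppost, \procset', \actset'}$, we get $\ppre(\ginitprog) = \flagconcept{start}$, and $\flagconcept{start}$ is manifestly in $\initABox \cup \set{\flagconcept{start}}$. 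Hence $s_{0g} \mimic s_{0k}$, so Lemma~\ref{lem:sgkab-to-kab-bisimilar-state} yields $s_{0g} \jbsim s_{0k}$, and therefore $\ts{\gkabsym}^{\filter_S} \jbsim \ts{\tgkab(\gkabsym)}^{S}$ by the definition of J-bisimilarity between transition systems.

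I do not expect any real obstacle at this stage: all the substantive work has already been carried out in Lemma~\ref{lem:sgkab-to-kab-bisimilar-state} and the auxiliary lemmas it depends on (in particular Lemmas~\ref{lem:prog-exec-bsim} and~\ref{lem:kab-transition-bsim}, which handle the forward and backward simulation of transitions through intermediate $\tmp$-marked states). The only point needing a moment's care is condition (i), i.e.\ checking that inserting the fresh flag $\flagconcept{start}$ into $\initABox$ does not disturb equality modulo special markers; this is immediate from the standing convention that the special marker concept names lie outside $\voc(T)$ and are never mentioned in temporal properties.
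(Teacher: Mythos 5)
Your proposal is correct and follows essentially the same route as the paper: both establish $s_{0g} \mimic s_{0k}$ by checking that the initial ABoxes agree modulo the special marker $\flagconcept{start}$, that both service call maps are empty, and that $\ppre(\ginitprog) = \flagconcept{start}$ via Lemma~\ref{lem:program-pre-post}, and then invoke Lemma~\ref{lem:sgkab-to-kab-bisimilar-state} to lift this to J-bisimilarity of the transition systems. No gaps.
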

\begin{proof}
Let
\begin{compactenum}
\item $\gkabsym = \tup{T, \initABox, \actset, \ginitprog}$, and \\
  $\ts{\gkabsym}^{\filter_S} = \tup{\const, T, \stateset_g, s_{0g},
    \abox_g, \trans_g}$,
\item $\tgkab(\gkabsym) = \tup{T, \initABox', \actset', \procset'}$,
  and \\
  $\ts{\tgkab(\gkabsym)}^S = \tup{\const, T, \stateset_k,
    s_{0k}, \abox_k, \trans_k}$ 
\end{compactenum}
We have that $s_{0g} = \tup{A_0, \scmap_g, \delta_g}$ and
$s_{0k} = \tup{A'_0, \scmap_k}$ where
$\scmap_g = \scmap_k = \emptyset$.
Since $A_0' = A_0 \cup \set{\flagconcept{start}}$, and
$\flagconceptname$ is a special vocabulary outside the vocabulary of
$T$, hence $A'_0 \eqm A_0$. 
Now, by Lemma~\ref{lem:program-pre-post}, we have
$\ppre(\ginitprog) = \flagconcept{start}$ and
$\ppost(\ginitprog) = \flagconcept{end}$. 
Furthermore, since $\flagconcept{start} \in A_0'$, then we have
$s_{0g} \mimic s_{0k}$. Hence by
Lemma~\ref{lem:sgkab-to-kab-bisimilar-state}, we have
$s_{0g} \jbsim s_{0k}$. Therefore,
we have
$\ts{\gkabsym}^{\filter_S} \jbsim \ts{\tgkab(\gkabsym)}^S $.
\end{proof}

Having all of these machinery in hand, we are now ready to show that
the verification of \muladom properties over S-GKABs can be recast as
verification over S-KAB as follows.

\begin{theorem}\label{thm:sgkab-to-kab}
  Given an S-GKAB $\gkabsym$ and a closed $\muladom$ property $\Phi$
  in NNF,
\begin{center}
  $\ts{\gkabsym}^{\filter_S} \models \Phi$ if and only if
  $\ts{\tgkab(\gkabsym)}^S \models \tforj(\Phi)$
\end{center}
\end{theorem}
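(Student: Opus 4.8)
The plan is to obtain Theorem~\ref{thm:sgkab-to-kab} as an immediate consequence of the two results already assembled above: that the transition system of an S-GKAB is jumping-bisimilar to that of its $\tgkab$-image, and that jumping-bisimilar transition systems agree on every closed $\muladom$ formula in NNF modulo the formula rewriting $\tforj$. Concretely, I would first invoke Lemma~\ref{lem:sgkab-to-kab-bisimilar-ts} to get $\ts{\gkabsym}^{\filter_S} \jbsim \ts{\tgkab(\gkabsym)}^{S}$, and then apply Lemma~\ref{lem:jumping-bisimilar-ts-satisfies-same-formula} to this pair with the formula $\Phi$, which yields $\ts{\gkabsym}^{\filter_S} \models \Phi$ iff $\ts{\tgkab(\gkabsym)}^{S} \models \tforj(\Phi)$. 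At the level of this theorem there is no further calculation.

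The actual content sits in the two lemmas being combined, and that is where the obstacles lie. On the bisimulation side, one has to trace the chain: the two initial states are in the $\mimic$ relation (using Lemma~\ref{lem:program-pre-post} to locate $\ppre(\ginitprog)=\flagconcept{start}$ and the fact that $\flagconcept{start}$ is a special marker outside $\voc(T)$, so the initial ABoxes coincide modulo special markers), $\mimic$ implies $\jbsim$ (Lemma~\ref{lem:sgkab-to-kab-bisimilar-state}), and that in turn rests on the forward simulation ``one GKAB program step corresponds to a burst of $\tmp$-marked S-KAB transitions followed by one non-$\tmp$ transition'' (Lemma~\ref{lem:prog-exec-bsim}), its converse (Lemma~\ref{lem:kab-transition-bsim}), and the handling of final configurations (Lemma~\ref{lem:final-state-add-transition}). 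I expect the main difficulty here to be the $\gwhile{\varphi}{\delta}$ case: one must argue that the $\noopconceptname$ flag set by $\gamma_{doLoop}$ forces the S-KAB to exit the loop precisely when the body program $\delta$ is already final, so that the \emph{absence} of a transition in the GKAB is matched by the absence of a non-marker transition in the S-KAB, and vice versa.

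On the logical side, the obstacle is that $\tforj$ unfolds each modality $\DIAM{\Psi}$ (resp.\ $\BOX{\Psi}$) into a fixpoint that ``fast-forwards'' through the intermediate $\tmp$-states introduced by $\tgprog$. This is dispatched via the standard reduction of $\muladom$ fixpoints to infinitary $\ladom$ by approximants, and then matching modalities along the jumping paths of the J-bisimulation (Lemma~\ref{lem:jumping-bisimilar-states-satisfies-same-formula}); care is needed only to keep the active-domain quantification aligned, which follows because J-bisimilar states carry ABoxes equal modulo special markers (and special-marker concept names are forbidden in temporal properties, so Lemma~\ref{lem:ECQ-equal-ABox-modulo-markers} applies). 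Since both ingredients are fully in place in the preceding development, the theorem follows with no additional effort beyond citing them.
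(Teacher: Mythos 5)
Your proof is correct and matches the paper's own argument exactly: the theorem is obtained by citing Lemma~\ref{lem:sgkab-to-kab-bisimilar-ts} for $\ts{\gkabsym}^{\filter_S} \jbsim \ts{\tgkab(\gkabsym)}^{S}$ and then applying Lemma~\ref{lem:jumping-bisimilar-ts-satisfies-same-formula}. Your additional remarks on where the real work lies (the $\mimic$-to-$\jbsim$ chain, the while-loop case, and the fixpoint unfolding in $\tforj$) accurately reflect the supporting lemmas but are not needed at the level of this theorem.
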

\begin{proof}
  By Lemma~\ref{lem:sgkab-to-kab-bisimilar-ts}, we have that
  $\ts{\gkabsym}^{\filter_S} \jbsim \ts{\tgkab(\gkabsym)}^S$.
  Hence, by
  Lemma~\ref{lem:jumping-bisimilar-ts-satisfies-same-formula}, we have
  that for every $\muladom$ property $\Phi$
\[
\ts{\gkabsym}^{\filter_S} \models \Phi \textrm{ if and only if }
\ts{\tgkab(\gkabsym)}^S \models \tforj(\Phi)
\]
\end{proof}

\subsubsection{Proof of Theorem~\ref{thm:gtos}.} The proof of Theorem
\ref{thm:gtos} is then essentially a consequence of Theorem
\ref{thm:sgkab-to-kab}.

\section{From B-GKABs to S-GKABs}\label{sec:correctnessBGKABtoSGKAB}

This section aims to show that the verification of \muladom properties
over B-GKABs can be recast as verification over S-GKABs.
%
%
%
Formally, given a B-GKAB $\gkabsym$ and a $\muladom$ formula $\Phi$,
we show that $\ts{\gkabsym}^{\filter_B} \models \Phi$ if and only if
$\ts{\tgkabb(\gkabsym)}^{\filter_S} \models \tforb(\Phi)$ (This claim
is formally stated and proven in Theorem~\ref{thm:bgkab-to-sgkab}).
To this aim, our approach is as follows: we first introduce a notion
of Leaping bisimulation (L-Bisimulation) and show that two
L-bisimilar transition systems can not be distinguished by any
\muladom properties modulo the translation $\tforb$. 
%
Then, we show that the b-repair program is always terminate and
produces the same result as the result of b-repair over a knowledge
base.
Using those results, we show that given a B-GKAB, its transition
system is L-bisimilar to the transition of its corresponding S-GKAB
that is obtained through the translation $\tgkabb$. As a consequence,
using the property of L-bisimulation, we have that they can not be
distinguished by any \muladom properties modulo the translation
$\tforb$.

\subsection{Leaping Bisimulation (L-Bisimulation)}\label{sec:leaping-bisimulation}

We define the notion of \emph{leaping bisimulation} as follows.
\begin{definition}[Leaping Bisimulation (L-Bisimulation)]
  Let $\ts{1} = \tup{\const, T, \Sigma_1, s_{01}, \abox_1, \trans_1}$
  and $\ts{2} = \tup{\const, T, \Sigma_2, s_{02}, \abox_2, \trans_2}$
  be transition systems, with
  $\adom{\abox_1(s_{01})} \subseteq \const$ and
  $\adom{\abox_2(s_{02})} \subseteq \const$.  A \emph{leaping
    bisimulation} (L-Bisimulation) between $\ts{1}$ and $\ts{2}$ is a
  relation $\B \subseteq \Sigma_1 \times\Sigma_2$ that
  $\tup{s_1, s_2} \in \B$ implies that:
  \begin{compactenum}
  \item $\abox_1(s_1) = \abox_2(s_2)$
  \item for each $s_1'$, if $s_1 \Rightarrow_1 s_1''$ then there exist
    $s_2'$, $s_2''$, $t_1, \ldots ,t_n$ (for $n \geq 0$) with
    \[
    s_2 \Rightarrow_2 s_2' \Rightarrow_2 t_1 \Rightarrow_2 \ldots
    \Rightarrow_2 t_n \Rightarrow_2 s_2''
    \] 
   such that $\tup{s_1'', s_2''}\in\B$,
    $\temp \not\in \abox_2(s_2'')$ and $\temp \in \abox_2(t_i)$ for
    $i \in \set{1, \ldots, n}$.
  \item for each $s_2''$, if 
    \[
    s_2 \Rightarrow_2 s_2'\Rightarrow_2 t_1 \Rightarrow_2 \ldots
    \Rightarrow_2 t_n \Rightarrow_2 s_2''
    \] 
    (for $n \geq 0$) with $\temp \in \abox_2(t_i)$ for
    $i \in \set{1, \ldots, n}$ and $\temp \not\in \abox_2(s_2'')$, then
    there exists $s_1''$ with $s_1 \Rightarrow_1 s_1''$, such that
    $\tup{s_1'', s_2''}\in\B$.
 \end{compactenum}
\ \ 
\end{definition}

\noindent
Let $\ts{1} = \tup{\const, T, \Sigma_1, s_{01}, \abox_1, \trans_1}$
and $\ts{2} = \tup{\const, T, \Sigma_2, s_{02}, \abox_2, \trans_2}$ be
transition systems,
a state $s_1 \in \Sigma_1$ is \emph{L-bisimilar} to
$s_2 \in \Sigma_2$, written $s_1 \lbsim s_2$, if there exists an
L-bisimulation $\B$ between $\ts{1}$ and $\ts{2}$ such that
$\tup{s_1, s_2}\in\B$.
A transition system $\ts{1}$ is \emph{L-bisimilar} to $\ts{2}$,
written $\ts{1} \lbsim \ts{2}$, if there exists an L-bisimulation $\B$
between $\ts{1}$ and $\ts{2}$ such that $\tup{s_{01}, s_{02}}\in\B$.

Now, we advance further to show that two transition systems which are
L-bisimilar can not be distinguished by any \muladom formula (in NNF)
modulo the translation $\tforb$ which is defined in detail as follows:

\begin{definition}[Translation $\tforb$]\label{def:tforb}
  We define a \emph{translation $\tforb$} that transforms an arbitrary
  \muladom formula $\Phi$ (in NNF) into another \muladom formula
  $\Phi'$ inductively by recurring over the structure of $\Phi$ as
  follows:
\[
\small
\begin{array}{@{}l@{}ll@{}}
  \bullet\ \tforb(Q) &=& Q \\

  \bullet\ \tforb(\neg Q) &=& \neg Q \\

  \bullet\ \tforb(\Q x.\Phi) &=& \Q x. \tforb(\Phi) \\

  \bullet\ \tforb(\Phi_1 \circ \Phi_2) &=& \tforb(\Phi_1) \circ \tforb(\Phi_2) \\

  \bullet\ \tforb(\circledcirc Z.\Phi) &=& \circledcirc Z. \tforb(\Phi) \\

  \bullet\ \tforb(\DIAM{\Phi}) &=& \\
                     &&\hspace*{-20mm}\DIAM{\DIAM{\mu Z.((\temp \wedge \DIAM{Z})
                        \vee (\neg \temp \wedge \tforb(\Phi)))}} \\

  \bullet\ \tforb(\BOX{\Phi}) &=& \\
                     &&\hspace*{-20mm}\BOX{\BOX{\mu Z.((\temp \wedge \BOX{Z} \wedge
                        \DIAM{\top}) \vee 
                        (\neg \temp \wedge \tforb(\Phi)))}}
\end{array}
\]
\noindent
where:
\begin{compactitem}
\item $\circ$ is a binary operator ($\vee, \wedge, \ra,$ or $\lra$),
\item $\circledcirc$ is least ($\mu$) or greatest ($\nu$) fix-point operator,
\item $\Q$ is forall ($\forall$) or existential ($\exists$)
  quantifier.
\end{compactitem}
\ \ 
\end{definition}

\begin{lemma}\label{lem:leaping-bisimilar-states-satisfies-same-formula}
  Consider two transition systems
  $\ts{1} = \tup{\const, T,\stateset_1,s_{01},\abox_1,\trans_1}$ and
  $\ts{2} = \tup{\const, T,\stateset_2,s_{02},\abox_2,\trans_2}$, with
  $\adom{\abox_1(s_{01})} \subseteq \const$ and
  $\adom{\abox_2(s_{02})} \subseteq \const$.  Consider two states
  $s_1 \in \stateset_1$ and $s_2 \in \stateset_2$ such that
  $s_1 \lbsim s_2$. Then for every formula $\Phi$ of $\muladom$ (in
  negation normal form), and every valuations $\vfo_1$ and $\vfo_2$
  that assign to each of its free variables a constant
  $c_1 \in \adom{\abox_1(s_1)}$ and $c_2 \in \adom{\abox_2(s_2)}$,
  such that $c_1 = c_2$, we have that
  \[
  \ts{1},s_1 \models \Phi \vfo_1 \textrm{ if and only if } \ts{2},s_2
  \models \tforb(\Phi) \vfo_2.
  \]
\end{lemma}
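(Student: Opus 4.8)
The plan is to follow, essentially line for line, the proof of Lemma~\ref{lem:jumping-bisimilar-states-satisfies-same-formula}, adapting it to the two-step ``leaping'' shape of $\lbsim$. As there, I would split the argument into three stages: (1) prove the claim for the fragment $\ladom$ obtained from $\muladom$ by dropping the predicate variables and the fixpoint constructs (a first-order Hennessy--Milner logic whose semantics is independent of the second-order valuation); (2) extend it to the infinitary logic obtained by closing $\ladom$ under arbitrary countable disjunction (and, dually, conjunction); (3) recover full $\muladom$ using the standard translation of fixpoints into this infinitary logic via the ordinal approximants $\mu^\alpha Z.\Phi$ and $\nu^\alpha Z.\Phi$. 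Stages (2) and (3) are completely routine and can be imported verbatim from the proof of Lemma~\ref{lem:jumping-bisimilar-states-satisfies-same-formula}, so the work is concentrated in stage~(1).

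For stage~(1) I would induct on the NNF structure of $\Phi$. In the base cases $\Phi = Q$ and $\Phi = \neg Q$, clause~(1) of L-bisimulation gives $\abox_1(s_1) = \abox_2(s_2)$, hence $\Ans(Q,T,\abox_1(s_1)) = \Ans(Q,T,\abox_2(s_2))$ (using Lemma~\ref{lem:ECQ-equal-ABox-modulo-markers} together with the fact that temporal formulas never mention special marker concept names); since $\tforb$ acts as the identity on these formulas the equivalence follows. The Boolean and quantifier cases $\Phi = \Psi_1 \circ \Psi_2$, $\Phi = \exists x.\Psi$, $\Phi = \forall x.\Psi$ are handled exactly as in Lemma~\ref{lem:jumping-bisimilar-states-satisfies-same-formula}: one uses $\abox_1(s_1) = \abox_2(s_2)$ to match the active domains over which the quantifiers range, and the fact that $\tforb$ commutes with $\wedge,\vee,\ra,\lra,\exists,\forall$.

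The modal cases are the heart of the proof. For $\Phi = \DIAM{\Psi}$: from $\ts{1},s_1 \models (\DIAM{\Psi})\vfo_1$ pick $s_1''$ with $s_1 \Rightarrow_1 s_1''$ and $\ts{1},s_1'' \models \Psi\vfo_1$; clause~(2) of L-bisimulation supplies $s_2', t_1, \ldots, t_n, s_2''$ with $s_2 \Rightarrow_2 s_2' \Rightarrow_2 t_1 \Rightarrow_2 \cdots \Rightarrow_2 t_n \Rightarrow_2 s_2''$, $\temp \in \abox_2(t_i)$, $\temp \notin \abox_2(s_2'')$, and $s_1'' \lbsim s_2''$. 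By the induction hypothesis $\ts{2},s_2'' \models \tforb(\Psi)\vfo_2$. I would then check that $\mu Z.((\temp \wedge \DIAM{Z}) \vee (\neg\temp \wedge \tforb(\Psi)))$ holds at the first state following $s_2'$ (at $t_1$, or at $s_2''$ itself when $n = 0$): its $(n{+}1)$-st approximant is satisfied, because each $t_i$ carries $\temp$ and has successor $t_{i+1}$ (or $s_2''$), while $s_2''$ carries $\neg\temp$ and satisfies $\tforb(\Psi)$. Prepending the two transitions $s_2 \Rightarrow_2 s_2' \Rightarrow_2 (\cdot)$ then yields $\ts{2},s_2 \models \tforb(\DIAM{\Psi})\vfo_2$, since $\tforb(\DIAM{\Psi}) = \DIAM{\DIAM{\mu Z.((\temp \wedge \DIAM{Z}) \vee (\neg\temp \wedge \tforb(\Psi)))}}$. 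The converse direction unrolls the least fixpoint: $\ts{2},s_2 \models \tforb(\DIAM{\Psi})\vfo_2$ forces a finite path $s_2 \Rightarrow_2 s_2' \Rightarrow_2 t_1 \Rightarrow_2 \cdots \Rightarrow_2 t_n \Rightarrow_2 s_2''$ with $\temp$ on the $t_i$'s, $\neg\temp$ at $s_2''$, and $\tforb(\Psi)$ holding at $s_2''$; clause~(3) of L-bisimulation then provides $s_1''$ with $s_1 \Rightarrow_1 s_1''$ and $s_1'' \lbsim s_2''$, and the induction hypothesis gives $\ts{1},s_1'' \models \Psi\vfo_1$, hence $\ts{1},s_1 \models (\DIAM{\Psi})\vfo_1$. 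The case $\Phi = \BOX{\Psi}$ is dual, the conjunct $\DIAM{\top}$ inside $\tforb(\BOX{\Psi})$ ensuring that no maximal run of $\temp$-states dead-ends before reaching a $\neg\temp$ state.

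I expect the only real obstacle to be the bookkeeping in the modal cases, namely verifying that the least-fixpoint subformula appearing in $\tforb$ exactly captures the pattern ``finitely many $\temp$-marked intermediate states, then one $\neg\temp$ state satisfying $\tforb(\Psi)$'' and that this matches the two-step leaping shape of $\lbsim$ --- which is precisely why $\tforb$ uses the doubled modality $\DIAM{\DIAM{\cdot}}$ (resp.\ $\BOX{\BOX{\cdot}}$) where $\tforj$ used a single one. Everything else is a transcription of the argument already given for J-bisimulation.
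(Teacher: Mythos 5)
Your proposal matches the paper's proof essentially step for step: the same three-stage decomposition ($\ladom$, infinitary extension, approximants for fixpoints), the same compositional treatment of the Boolean and quantifier cases, and the same use of clauses (2) and (3) of L-bisimulation in the modal cases, where you merely spell out the approximant bookkeeping that the paper leaves implicit. The only cosmetic difference is that in the base case you route through Lemma~\ref{lem:ECQ-equal-ABox-modulo-markers}, whereas the paper uses the strict equality $\abox_1(s_1) = \abox_2(s_2)$ given directly by clause (1) of L-bisimulation; this is harmless.
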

\begin{proof}
  The proof is then organized in three parts:
\begin{compactenum}[(1)]
\item We prove the claim for formulae of $\ladom$, obtained from
  $\muladom$ by dropping the predicate variables and the fixpoint
  constructs. $\ladom$ corresponds to a first-order variant of the
  Hennessy Milner logic, and its semantics does not depend on the
  second-order valuation.
\item We extend the results to the infinitary logic obtained by extending
  $\ladom$ with arbitrary countable disjunction.
\item We recall that fixpoints can be translated into this infinitary logic,
  thus proving that the theorem holds for $\muladom$.
\end{compactenum}

\smallskip
\noindent
\textbf{Proof for $\ladom$.}  We proceed by induction on the structure
of $\Phi$, without considering the case of predicate variable and of
fixpoint constructs, which are not part of $\ladom$.

\smallskip
\noindent
\textit{Base case:}
\begin{compactitem}
\item[\textbf{($\Phi = Q$)}.] Since $s_1 \lbsim s_2$, we have
  $\abox_1(s_1) = \abox_2(s_2)$. Hence, we have
  $\Ans(Q, T, \abox_1(s_1)) = \Ans(Q, T, \abox_2(s_2))$.
  Hence, since $\tforb(Q) = Q$, for every valuations $\vfo_1$ and
  $\vfo_2$ that assign to each of its free variables a constant
  $c_1 \in \adom{\abox_1(s_1)}$ and $c_2 \in \adom{\abox_2(s_2)}$,
  such that $c_1 = c_2$, we have
  \[
  \ts{1},s_1 \models Q \vfo_1 \textrm{ if and only if } \ts{2},s_2
  \models \tforb(Q) \vfo_2.
  \]

\item[\textbf{($\Phi = \neg Q$)}.] Similar to the previous case.

\end{compactitem}

\smallskip
\noindent
\textit{Inductive step:}
\begin{compactitem}
\item[\textbf{($\Phi = \Psi_1 \wedge \Psi_2$)}.]  
  $\ts{1},s_1 \models (\Psi_1\wedge \Psi_2) \vfo_1$ if and only if
  either $\ts{1},s_1 \models \Psi_1 \vfo_1$ or
  $\ts{1},s_1 \models \Psi_2 \vfo_1$.  By induction hypothesis, we
  have for every valuations $\vfo_1$ and $\vfo_2$ that assign to each
  of its free variables a constant $c_1 \in \adom{\abox_1(s_1)}$ and
  $c_2 \in \adom{\abox_2(s_2)}$, such that $c_1 = c_2$, we have
\begin{compactitem}
\item
  $ \ts{1},s_1 \models \Psi_1 \vfo_1 \textrm{ if and only if }
  \ts{2},s_2 \models \tforb(\Psi_1) \vfo_2$, and also
\item
  $ \ts{1},s_1 \models \Psi_2 \vfo_1 \textrm{ if and only if }
  \ts{2},s_2 \models \tforb(\Psi_2) \vfo_2.  $
\end{compactitem}

Hence, $\ts{1},s_1 \models \Psi_1 \vfo_1$ and
$\ts{1},s_1 \models \Psi_2 \vfo_1$ if and only if
$\ts{2},s_2 \models \tforb(\Psi_1) \vfo_2$ and
$\ts{2},s_2 \models \tforb(\Psi_2) \vfo_2$. Therefore we have
$ \ts{1},s_1 \models (\Psi_1 \wedge \Psi_2) \vfo_1 \textrm{ if and
  only if } \ts{2},s_2 \models (\tforb(\Psi_1) \wedge \tforb(\Psi_2))
\vfo_2 $. 
Since
$\tforb(\Psi_1 \wedge \Psi_2) = \tforb(\Psi_1) \wedge \tforb(\Psi_2)$,
we have
\[
\ts{1},s_1 \models (\Psi_1 \wedge \Psi_2) \vfo_1 \textrm{ iff } \ts{2},s_2 \models \tforb(\Psi_1\wedge \Psi_2) \vfo_2
  \]
  The proof for the case of $\Phi = \Psi_1 \vee \Psi_2$,
  $\Phi = \Psi_1 \ra \Psi_2$, and $\Phi = \Psi_1 \lra \Psi_2$ can be
  done similarly.


\item[\textbf{($\Phi = \DIAM{\Psi}$)}.]  Assume
  $\ts{1},s_1 \models (\DIAM{\Psi}) \vfo_1$, where $\vfo_1$ is a
  valuation that assigns to each free variable of $\Psi$ a constant
  $c_1 \in \adom{\abox_1(s_1)}$. Then there exists $s_1''$ such that
  $s_1 \trans_1 s_1''$ and $\ts{1},s_1'' \models \Psi \vfo_1$.  Since
  $s_1 \lbsim s_2$, there exists $s_2'$, $s_2''$, $t_1, \ldots ,t_n$
  (for $n \geq 0$) with
    \[
    s_2 \Rightarrow_2 s_2' \Rightarrow_2 t_1 \Rightarrow_2 \cdots \Rightarrow_2 t_n
    \Rightarrow_2 s_2''
    \] 
    such that $s_1'' \lbsim s_2''$, $\temp \in \abox_2(t_i)$ for
    $i \in \set{1, \ldots, n}$, and $\temp \not\in \abox_2(s_2'')$.
    Hence, by induction hypothesis, for every valuations $\vfo_2$ that
    assign to each free variables $x$ of $\tforb(\Psi)$ a constant
    $c_2 \in \adom{\abox_2(s_2)}$, such that $c_1 = c_2$ and
    $x/c_1 \in \vfo_1$, we have
    $ \ts{2},s_2'' \models \tforb(\Psi_1) \vfo_2.  $
%
%
%
%
%
    Considering that
    \[
    s_2\Rightarrow_2 s_2'\Rightarrow_2 t_1 \Rightarrow_2 \cdots
    \Rightarrow_2 t_n \Rightarrow_2 s_2''
    \] 
    (for $n \geq 0$), $\temp \in \abox_2(t_i)$ for
    $i \in \set{1, \ldots, n}$, and $\temp \not\in \abox_2(s_2'')$. We
    therefore get
    \[
    \begin{array}{@{}l@{}l@{}}
      \ts{2},s_2 \models (\DIAM{\DIAM{\mu Z.((&\temp \wedge \DIAM{Z})
      \vee \\
      &(\neg\temp \wedge \tforb(\Psi)))}})\vfo_2.
    \end{array}
    \]
    Since
    \[
    \begin{array}{l@{}l}
      \tforb(\DIAM{\Phi}) = \DIAM{\DIAM{\mu Z.((&\temp \wedge \DIAM{Z}) \vee \\
      &(\neg \temp \wedge \tforb(\Phi)))}},
    \end{array}
    \]
    thus we have
    \[
    \ts{2},s_2 \models \tforb(\DIAM{\Phi})\vfo_2.
    \]

The other direction can be shown in a symmetric way.

\item[\textbf{($\Phi = \BOX{\Psi}$)}.]  The proof is similar to the
  case of $\Phi = \DIAM{\Psi}$

\item[\textbf{($\Phi = \exists x. \Psi$)}.]  Assume that
  $\ts{1},s_1 \models (\exists x. \Psi)\vfo'_1$, where $\vfo'_1$ is a
  valuation that assigns to each free variable of $\Psi$ a constant
  $c_1 \in \adom{\abox_1(s_1)}$. Then, by definition, there exists
  $c \in \adom{\abox_1(s_1)}$ such that
  $\ts{1},s_1 \models \Psi\vfo_1$, where $\vfo_1 = \vfo'_1[x/c]$. By
  induction hypothesis, for every valuation $\vfo_2$ that assigns to
  each free variable $y$ of $\tforb(\Psi)$ a constant
  $c_2 \in \adom{\abox_2(s_2)}$, such that $c_1 = c_2$ and
  $y/c_1 \in \vfo_1$, we have that
  $\ts{2},s_2 \models \tforb(\Psi) \vfo_2$. Additionally, we have
  $\vfo_2 = \vfo'_2[x/c']$, where $c' \in \adom{\abox_2(s_2)}$, and
  $c' = c$ because $\abox_2(s_2) = \abox_1(s_1)$.  Hence, we get
  $\ts{2},s_2 \models (\exists x. \tforb(\Psi))\vfo'_2$. Since
  $\tforb(\exists x.\Phi) = \exists x. \tforb(\Phi)$, thus we have
  $\ts{2},s_2 \models \tforb(\exists x. \Psi)\vfo'_2$

The other direction can be shown similarly.

\item[\textbf{($\Phi = \forall x. \Psi$)}.]  The proof is similar to
  the case of $\Phi = \exists x. \Psi$.


\end{compactitem}

\smallskip
\noindent
\textbf{Extension to arbitrary countable disjunction.}  Let $\Psi$ be
a countable set of $\ladom$ formulae. Given a transition system
$\ts{} = \tup{\const, T,\stateset,s_{0},\abox,\trans}$, the semantics
of $\bigvee \Psi$ is
$(\bigvee \Psi) _\vfo^{\ts{}} = \bigcup_{\psi \in \Psi}
(\psi)_\vfo^{\ts{}}$.
Therefore, given a state $s \in \Sigma$ we have
$\ts{}, s \models (\bigvee \Psi)\vfo$ if and only if there exists
$\psi \in \Psi$ such that $\ts{}, s \models \psi\vfo$. Arbitrary
countable conjunction can be obtained similarly.

Now, let $\ts{1} = \tup{\const, T,\stateset_1,s_{01},\abox_1,\trans_1}$
and $\ts{2} = \tup{\const, T,\stateset_2,s_{02},\abox_2,\trans_2}$.
Consider two states $s_1 \in \stateset_1$ and $s_2 \in
\stateset_2$ such that $s_1 \lbsim s_2$.
By induction hypothesis, we have for every valuations $\vfo_1$ and
$\vfo_2$ that assign to each of its free variables a constant
$c_1 \in \adom{\abox_1(s_1)}$ and $c_2 \in \adom{\abox_2(s_2)}$, such
that $c_2 = c_1$, we have that for every formula $\psi \in \Psi$, it
holds $\ts{1}, s_1 \models \psi \vfo_1$ if and only if
$\ts{2}, s_2 \models \tforb(\psi)\vfo_2$.
Given the semantics of $\bigvee \Psi$ above, this implies that
$\ts{1}, s \models (\bigvee \Psi) \vfo_1$ if and only if
$\ts{2}, s \models (\bigvee \tforb(\Psi)) \vfo_2$, where
$\tforb(\Psi) = \{\tforb(\psi) \mid \psi \in \Psi\}$. The proof is
then obtained by observing that
$\bigvee \tforb(\Psi) = \tforb(\bigvee \Psi)$.

\smallskip
\noindent
\textbf{Extension to full $\muladom$.}  In order to extend the result
to the whole \muladom, we resort to the well-known result stating that
fixpoints of the $\mu$-calculus can be translated into the infinitary
Hennessy Milner logic by iterating over \emph{approximants}, where the
approximant of index $\alpha$ is denoted by $\mu^\alpha Z.\Phi$
(resp.~$\nu^\alpha Z.\Phi$). This is a standard result that also holds
for \muladom. In particular, approximants are built as follows:
\[
\begin{array}{rl rl}
  \mu^0 Z.\Phi & = \false
  &  \nu^0 Z.\Phi & = \true\\
  \mu^{\beta+1} Z.\Phi & = \Phi[Z/\mu^\beta Z.\Phi]
  & \nu^{\beta+1} Z.\Phi & = \Phi[Z/\nu^\beta Z.\Phi]\\
  \mu^\lambda Z.\Phi & = \bigvee_{\beta < \lambda} \mu^\beta Z. \Phi &
  \nu^\lambda Z.\Phi & = \bigwedge_{\beta < \lambda} \nu^\beta Z. \Phi
\end{array}
\]
where $\lambda$ is a limit ordinal, and where fixpoints and their
approximants are connected by the following properties: given a
transition system $\ts{}$ and a state $s$ of $\ts{}$
\begin{compactitem}
\item $s \in \MODA{\mu Z.\Phi}$ if and only if there exists an ordinal
  $\alpha$ such that $s \in \MODA{\mu^\alpha Z.\Phi}$ and, for every
  $\beta < \alpha$, it holds that $s \notin \MODA{\mu^\beta Z.\Phi}$;
\item $s \notin \MODA{\nu Z.\Phi}$ if and only if there exists an
  ordinal $\alpha$ such that $s \notin \MODA{\nu^\alpha Z.\Phi}$ and,
  for every $\beta < \alpha$, it holds that $s \in \MODA{\nu^\beta
    Z.\Phi}$.
\end{compactitem}

\end{proof}

As a consequence, from Lemma
\ref{lem:leaping-bisimilar-states-satisfies-same-formula} above, we
can easily obtain the following lemma saying that two transition
systems which are L-bisimilar can not be distinguished by any \muladom
formula (in NNF) modulo a translation $\tforb$.

\begin{lemma}\label{lem:leaping-bisimilar-ts-satisfies-same-formula}
  Consider two transition systems
  $\ts{1} = \tup{\const,T,\stateset_1,s_{01},\abox_1,\trans_1}$ and
  $\ts{2} = \tup{\const,T,\stateset_2,s_{02},\abox_2,\trans_2}$ such
  that $\ts{1} \lbsim \ts{2}$.  For every closed $\muladom$ formula
  $\Phi$ in NNF, we have:
  \[
  \ts{1} \models \Phi \textrm{ if and only if } \ts{2} \models
  \tforb(\Phi).
  \]
\end{lemma}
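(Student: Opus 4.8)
The plan is to obtain this transition-system-level statement as an immediate corollary of the state-level result just established, namely Lemma~\ref{lem:leaping-bisimilar-states-satisfies-same-formula}, exactly in the spirit of how Lemma~\ref{lem:jumping-bisimilar-ts-satisfies-same-formula} is derived from Lemma~\ref{lem:jumping-bisimilar-states-satisfies-same-formula} for J-bisimulation.

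First I would unfold the hypothesis $\ts{1} \lbsim \ts{2}$: by definition there is an L-bisimulation $\B$ between $\ts{1}$ and $\ts{2}$ with $\tup{s_{01}, s_{02}} \in \B$, hence $s_{01} \lbsim s_{02}$ (and, as part of the notion of L-bisimulation, $\adom{\abox_1(s_{01})} \subseteq \const$ and $\adom{\abox_2(s_{02})} \subseteq \const$). Since $\Phi$ is closed it has no free individual variables, so the requirement in Lemma~\ref{lem:leaping-bisimilar-states-satisfies-same-formula} that the valuations $\vfo_1$ and $\vfo_2$ agree on the free variables is vacuously satisfied, and the extension of a closed formula does not depend on the chosen valuations. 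Applying that lemma to $s_{01}$ and $s_{02}$ therefore yields
\[
\ts{1}, s_{01} \models \Phi \quad\text{iff}\quad \ts{2}, s_{02} \models \tforb(\Phi).
\]
Finally, recalling that for a closed formula $\Psi$ one has $\ts{} \models \Psi$ by definition iff $\ts{}, s_0 \models \Psi$, the displayed equivalence is precisely $\ts{1} \models \Phi$ iff $\ts{2} \models \tforb(\Phi)$, which is the claim.

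The point worth emphasising is that all the real content has already been discharged in Lemma~\ref{lem:leaping-bisimilar-states-satisfies-same-formula}: the induction on the structure of $\ladom$ formulae (in particular the $\DIAM{\cdot}$ and $\BOX{\cdot}$ cases, where the ``leaping'' over $\temp$-marked intermediate states is matched by the nested modalities in $\tforb$), the extension to arbitrary countable disjunction, and the reduction of fixpoints to their ordinal approximants. Consequently the present lemma needs no new argument beyond the above unwrapping of the definitions, and there is no genuine obstacle left to overcome here — the hard work was in proving the state-level lemma.
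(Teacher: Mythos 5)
Your proof is correct and matches the paper's own argument: both simply observe that $\ts{1} \lbsim \ts{2}$ gives $s_{01} \lbsim s_{02}$ and then invoke Lemma~\ref{lem:leaping-bisimilar-states-satisfies-same-formula} at the initial states, with the valuation condition trivial for closed formulas. Your version just spells out the definitional unwrapping slightly more explicitly than the paper does.
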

\begin{proof}
  Since by the definition we have $s_{01} \lbsim s_{02}$, we obtain
  the proof as a consequence of
  Lemma~\ref{lem:leaping-bisimilar-states-satisfies-same-formula} due
  to the fact that
  \[
  \ts{1}, s_{01} \models \Phi \textrm{ if and only if } \ts{2}, s_{02}
  \models \tforb(\Phi)
  \]
\ \ 
\end{proof}

\subsection{Termination and Correctness of B-repair Program}


We now proceed to show that the b-repair program is always terminate
and produces the same result as the result of b-repair over a
knowledge base. To this aim, we first need to introduce some
preliminaries.
Below, we prove that every execution steps of a b-repair program always
reduces the number of ABox assertions that participate in the
inconsistency. Formally, it is stated below:

\begin{lemma}\label{lem:brepair-action-decrease}
  Given a TBox $T$, a $T$-inconsistent ABox $A$, a service call map
  $\scmap$, and a set $\actset_b^T$ of b-repair action over $T$. Let
  $\act \in \actset_b^T$ be an arbitrary b-repair action, and $\sigma$
  be a legal parameter assignment for $\act$. If
  $(\tup{A,\scmap}, \act\sigma, \tup{A', \scmap'}) \in
  \tell_{\filter_S}$, then $\card{\inc(A)} > \card{\inc(A')}$.
\end{lemma}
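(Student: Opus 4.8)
The plan is to prove the strict inequality $\card{\inc(A)} > \card{\inc(A')}$ through the stronger set-level fact $\inc(A') \subsetneq \inc(A)$, which suffices since ABoxes, and hence $\inc(A) \subseteq A$, are finite. Two ingredients drive the argument: every action in $\actset_b^T$ only \emph{removes} facts, so that $A' \subseteq A$; and the set $\inc(\cdot)$ of inconsistency-participating assertions is anti-monotone with respect to~$\subseteq$.

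First I would unfold the hypothesis $(\tup{A,\scmap}, \act\sigma, \tup{A', \scmap'}) \in \tell_{\filter_S}$. By construction of $\actset_b^T$, a b-repair action $\act$ has only deletion effects (each effect has the form $\map{Q}{\del F^-}$ with empty $\add$-part) and mentions no service-call terms; hence $\addfactss{A}{\act\sigma} = \emptyset$, the associated call substitution is empty, $\scmap' = \scmap$, and, since $\filter = \filter_S$, we obtain $A' = (A \setminus \delfactss{A}{\act\sigma}) \cup \emptyset = A \setminus \delfactss{A}{\act\sigma} \subseteq A$. (As in $\tgkabb(\gkabsym)$, where these actions are used, the $\tell$-relation here is understood with respect to the positive TBox $T_p$ obtained from $T$, under which $A$ is trivially consistent, so certain-answer evaluation of guards and effect queries is well behaved and $\delfactss{A}{\act\sigma}$ is exactly the set of $A$-facts matched by the deletion effect.) Next I would observe that $\inc$ is anti-monotone: if $A' \subseteq A$ then $\inc(A') \subseteq \inc(A)$; this is immediate from the definition of $\inc$, since each $\beta \in \inc(A')$ lies in $A'$ together with a companion assertion witnessing a violation of some negative inclusion or functionality assertion of $T$, and both remain in $A$ because $A' \subseteq A$. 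In particular $\inc(A') \subseteq A'$.

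It then remains to exhibit one assertion of $\inc(A)$ that is deleted by $\act\sigma$, i.e.\ that belongs to $\delfactss{A}{\act\sigma}$ and hence not to $A'$, and \textit{a fortiori} not to $\inc(A')$. For this I would case-split on the shape of $\act$, following the definition of $\actset_b^T$. If $\act = \act_F(x,y)$ for some functionality assertion $\funct{R} \in T$: legality of $\sigma$ for the corresponding invocation, which guards $\act_F(x,y)$ by $\exists z.\qunsatf(\funct{R}, x, y, z)$, forces this guard to evaluate to $\true$, and this yields $R(x\sigma,y\sigma) \in A$ together with some $R(x\sigma,z_0) \in A$ with $z_0 \neq y\sigma$; then $R(x\sigma,z_0)$ violates $\funct{R}$, so $R(x\sigma,z_0) \in \inc(A)$, while $z_0$ is an answer of the deletion query $(R(x,z)\wedge\neg[z=y])\sigma$, so $R(x\sigma,z_0) \in \delfactss{A}{\act\sigma}$. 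If $\act = \act_{B_1}(x)$ for a negative concept inclusion with $T \models B_1 \ISA \neg B_2$ (or the analogous action for a negative role inclusion): legality of $\sigma$ makes the guard $\qunsatn(B_1 \ISA \neg B_2, x)$ true, i.e.\ $B_1(x\sigma), B_2(x\sigma) \in A$, so $B_1(x\sigma) \in \inc(A)$, and since the unique effect deletes $B_1(x)$ we get $B_1(x\sigma) \in \delfactss{A}{\act\sigma}$. In every case the deleted witness belongs to $\inc(A) \setminus \inc(A')$, so $\inc(A') \subsetneq \inc(A)$ and therefore $\card{\inc(A)} > \card{\inc(A')}$.

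The step I expect to be the main obstacle is the bookkeeping in the case analysis that ties the \emph{legality} of $\sigma$ --- a condition phrased through certain-answer semantics of the guard ECQ --- to the existence of a concrete assertion of $A$ that simultaneously lies in $\inc(A)$ and is scheduled for deletion. Making this precise for the functionality case uses the \dllitea restriction that functional roles are not specialised, so that an entailed $R$-fact is actually an $R$-assertion of $A$ (up to an inverse); and in all cases it requires checking that evaluating guard and effect queries over $\tup{T_p, A}$ coincides with detecting genuine violation witnesses in $A$. Once this alignment is in place, the remaining steps are routine set manipulations.
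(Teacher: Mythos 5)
Your proof is correct and follows essentially the same route as the paper's: a case analysis over the shapes of b-repair actions, using legality of $\sigma$ (truth of the guard) to exhibit a violating assertion of $\inc(A)$ that is deleted and hence absent from $A'$. Your version is in fact slightly more careful than the paper's, since you make explicit the anti-monotonicity of $\inc$ under $\subseteq$ (needed to rule out new inconsistencies in $A'$) and the reading of $\tell_{\filter_S}$ over the positive TBox $T_p$, both of which the paper's proof leaves implicit.
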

\begin{proof}
%
  We proof the claim by reasoning over all cases of b-repair actions
  as follows:
\begin{compactitem}
\item[\textbf{Case 1:}] \textit{The actions obtained from functionality
    assertion $\funct{R} \in T_f$.} \\
  Let $\act_{F}$ be such action and has the following form:
  \[
  \act_{F}(x,y):\set{\map{R(x,z) \wedge \neg [z = y]}{\del \set{R(x,z)}}}.
  \]
  Suppose, $\act_{F}$ is executable in $A$ with legal parameter
  assignment $\sigma$.  Since we have
  \[
  \gact{\exists z.\qunsatf(\funct{R}, x, y, z)}{\act_{F}(x,y)} \in
  \setinvocation_b^T,
  \]
  then there exists $c \in \adom{A}$ and
  $\set{c_1, c_2, c_3, \ldots c_n} \subseteq \adom{A}$ such that
  $\set{R(c,c_1), R(c,c_2), \ldots, R(c,c_n)} \subseteq A$ where
  $n \geq 2$.  W.l.o.g.\ let $\sigma$ substitute $x$ to $c$, and $y$
  to $c_1$, then we have
  $(\tup{A,\scmap}, \act\sigma, \tup{A', \scmap}) \in
  \tell_{\filter_S}$,
  where $A' = A \setminus \set{R(c,c_2), \ldots, R(c,c_n)}$. Therefore
  we have $\card{\inc(A)} > \card{\inc(A')}$.

\item[\textbf{Case 2:}] \textit{The actions obtained from negative
    concept $B_1 \ISA \neg B_2$ such that
    $T \models B_1 \ISA \neg B_2$.}  Let $\act_{B_1}$ be such action
  and has the following form:
  \[
  \act_{B_1}(x):\set{\map{\true}{\del \set{B_1(x)}}}.
  \]
  Suppose, $\act_{B_1}$ is executable in $A$ with legal parameter
  $\sigma$.  Since we have
  \[
  \gact{\qunsatn(B_1 \ISA \neg B_2, x)}{\act_{B_1}(x)} \in
  \setinvocation_b^T,
  \]
  then there exists $c \in \adom{A}$ such that
  $\set{B_1(c), B_2(c)} \subseteq A$.  W.l.o.g.\ let $\sigma$
  substitute $x$ to $c$, then we have
  $(\tup{A,\scmap}, \act\sigma, \tup{A', \scmap}) \in
  \tell_{\filter_S}$,
  where $A' = A \setminus \set{B_1(c}$. Therefore we have
  $\card{\inc(A)} > \card{\inc(A')}$.

\item[\textbf{Case 3:}] \textit{The actions obtained from negative
    role
    inclusion $R_1 \ISA \neg R_2$ s.t.\ $T \models R_1 \ISA \neg
    R_2$.} 
  The proof is similar to the case 2.
\end{compactitem}
\end{proof}

Having Lemma~\ref{lem:brepair-action-decrease} in hand, we are ready
to show the termination of b-repair program as follows:

\begin{lemma}\label{lem:bprog-termination}
  Given a TBox $T$, and a filter $\filter_S$. A b-repair program
  $\delta^T_{b}$ over $T$ is always terminate. I.e., given a state
  $\tup{A, \scmap, \delta^T_{b}}$, every program execution trace
  induced by $\delta^T_{b}$ on $\tup{A, \scmap, \delta^T_{b}}$ w.r.t.\
  filter $\filter_S$ is terminating.
\end{lemma}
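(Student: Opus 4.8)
The plan is to prove termination by complete induction on the cardinality $\card{\inc(A)}$ of the set of ABox assertions participating in some inconsistency. The guiding intuition is that the loop of $\delta^T_b = \gwhile{\qunsatecq{T}}{(a_1|\cdots|a_n)}$ fires a b-repair action only while the ABox is still $T$-inconsistent, and by Lemma~\ref{lem:brepair-action-decrease} each firing strictly decreases $\card{\inc(\cdot)}$; since this is a non-negative integer, it can strictly decrease only finitely often, after which — by Lemma~\ref{lem:card-inc-abox} — the ABox is $T$-consistent, the guard $\qunsatecq{T}$ fails, and the reached configuration is final.

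First I would pin down the shape of the configurations that occur along a program execution trace. Unfolding the rules for $\gprogtrans{}$ on $\gwhile{}{}$, on nondeterministic choice, and on atomic action invocations shows that a transition out of $\tup{A,\scmap,\delta^T_b}$ exists only if $\ask(\qunsatecq{T},T,A)=\true$, and then necessarily leads to a configuration $\tup{A',\scmap',\gemptyprog;\delta^T_b}$ with $\tup{\tup{A,\scmap},a_i\sigma,\tup{A',\scmap'}}\in\tell_{\filter_S}$ for some $i$; and from a configuration $\tup{A,\scmap,\gemptyprog;\delta^T_b}$, since $\gemptyprog$ is final, the sequencing rule forces any outgoing transition to again come from the embedded loop, producing once more a configuration of the form $\tup{A',\scmap',\gemptyprog;\delta^T_b}$. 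Hence every configuration after the first has process-state $\gemptyprog;\delta^T_b$. I would also check that no non-final configuration is stuck: if $\card{\inc(A)}>0$ then some negative inclusion or functionality assertion of $T$ is violated in $A$, so the guard of the corresponding $a_i$ has a match, $a_i$ is executable, it issues no service calls, and (since $\tgkabb(\gkabsym)$ keeps only the positive inclusions $T_p$, the $T$-consistency side-condition of $\tell_{\filter_S}$ is vacuous there) the transition indeed exists.

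Then I would run the induction over the invariant ``every program execution trace from $\tup{A,\scmap,\delta'}$ with $\delta'\in\{\delta^T_b,\ \gemptyprog;\delta^T_b\}$ is terminating, of length at most $\card{\inc(A)}+1$''. For the base case $\card{\inc(A)}=0$, Lemma~\ref{lem:card-inc-abox} gives that $A$ is $T$-consistent, so $\ask(\qunsatecq{T},T,A)=\false$ and, by the rules defining $\gfin$ for $\gemptyprog$, for sequencing, and for $\gwhile{}{}$, the configuration is final — so the only trace is the one-state trace, which is terminating. For the inductive step $\card{\inc(A)}>0$, the structural analysis above shows that any transition goes to some $\tup{A',\scmap',\gemptyprog;\delta^T_b}$ with $\card{\inc(A')}<\card{\inc(A)}$ by Lemma~\ref{lem:brepair-action-decrease}, so the induction hypothesis applies to that successor and, prepending the transition, we get the claim for the current configuration. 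Instantiating the invariant with $\delta'=\delta^T_b$ yields the lemma.

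I expect the only real work here to be this bookkeeping: disentangling the mutually recursive definitions of $\gprogtrans{}$ and $\gfin$ to certify that one loop iteration is realized by exactly one b-repair transition, that a $T$-consistent configuration in the relevant process-state is genuinely final, and that a $T$-inconsistent one is never stuck. The substantive content — the strict decrease of $\card{\inc(\cdot)}$ and its equivalence with $T$-consistency — is already isolated in Lemmas~\ref{lem:brepair-action-decrease} and~\ref{lem:card-inc-abox}, so the argument itself is short once the configuration-shape invariant is established.
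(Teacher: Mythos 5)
Your proposal is correct and follows essentially the same route as the paper's proof: both arguments rest on the strict decrease of $\card{\inc(\cdot)}$ from Lemma~\ref{lem:brepair-action-decrease}, the equivalence of $\card{\inc(A)}=0$ with $T$-consistency from Lemma~\ref{lem:card-inc-abox}, and the observation that the guard $\qunsatecq{T}$ is satisfiable exactly while some repair action is executable. The extra bookkeeping you supply (the invariant that every non-initial configuration has process-state $\gemptyprog;\delta^T_b$, and the explicit non-stuckness check) is a sound and slightly more rigorous packaging of what the paper leaves implicit.
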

\begin{proof}
We divide the proof into two cases:

\smallskip
\noindent
\textbf{Case 1: $A$ is $T$-consistent. }\\
Trivially true, since $\ask(\qunsatecq{T}, T, A) = \false$, we have
$\final{\tup{A, \scmap, \delta^T_b}}$, by the definition.

\smallskip
\noindent
\textbf{Case 2: $A$ is $T$-inconsistent. }\\
Given a state $\tup{A, \scmap, \delta^T_{b}}$ such that $A$ is
$T$-inconsistent, w.l.o.g.\ let
\[
\pi = \tup{A, \scmap, \delta^T_{b}} \gexectrans \tup{A_1, \scmap,
  \delta_1} \gexectrans \tup{A_2, \scmap, \delta_2} \gexectrans \cdots
\]
be an arbitrary program execution trace induced by $\delta^T_{b}$ on
$\tup{A, \scmap, \delta^T_{b}}$ w.r.t.\ filter $\filter_S$. Notice
that the service call map $\scmap$ always stay the same since every
b-repair action $\act \in \actset^T_b$ (which is the only action that
might appears in $\delta^T_b$) does not involve any service
calls. Now, we have to show that eventually there exists a state
$\tup{A_n, \scmap, \delta_n}$, such that
\[
\pi = \tup{A, \scmap, \delta^T_{b}} \gexectrans \tup{A_1, \scmap,
  \delta_1} \gexectrans \cdots \gexectrans \tup{A_n, \scmap, \delta_n}
\]
and $\final{\tup{A_n, \scmap, \delta_n}}$. 
By Lemma~\ref{lem:brepair-action-decrease}, we have that
\[
\card{\inc(A)} > \card{\inc(A_1)} > \card{\inc(A_2)} > \cdots
\]
Additionally, due to the following facts:
\begin{compactenum}[(1)]
\item Since we assume that every concepts (resp.\ roles) are
  satisfiable, inconsistency can only be caused by
  \begin{compactenum}
  \item pair of assertions $B_1(c)$ and $B_2(c)$ (resp.\
    $R_1(c_1,c_2)$ and $R_2(c_1,c_2)$) that violate a negative
    inclusion assertion $B_1 \sqsubseteq \neg B_2$ (resp.\
    $R_1 \sqsubseteq \neg R_2$) such that \
    $T \models B_1 \sqsubseteq \neg B_2$ (resp.\
    $T \models R_1 \sqsubseteq \neg R_2$), or
  \item $n$-number role assertions
    \[
    R(c, c_1), R(c, c_2), \ldots, R(c, c_n)
    \]
    that violate a functionality assertion $\funct{R} \in T$.
  \end{compactenum}
\item To deal with both source of inconsistency in the point (1):
  \begin{compactenum}
  \item we consider all negative concept inclusions $B_1
    \sqsubseteq \neg B_2$
    such that $T \models B_1 \sqsubseteq \neg B_2$
    when constructing the b-repair actions $\actset^T_b$ (i.e., we
    saturate the negative inclusion assertions w.r.t.\ $T$ obtaining
    all derivable negative inclusion assertions from $T$). Moreover,
    for each negative concept inclusion $B_1 \sqsubseteq \neg B_2$
    such that $T \models B_1 \sqsubseteq \neg B_2$, we have an action
    which remove $B_1(c)$ (for a constant $c$) in case
    $B_1 \sqsubseteq \neg B_2$ is violated. Similarly for negative
    role inclusions. 
    
%
  \item we consider all functionality assertions $\funct{R} \in T$
    when constructing the b-repair actions $\actset^T_b$, and each
    $\act_F \in \actset^T_b$ removes all role assertions that violates
    $\funct{R}$, except one.
  \end{compactenum}
\item Observe that $\ask(\qunsatecq{T}, T, A_n) = \true$ as long as
  $\card{\inc(A)} > 0$ (for any ABox $A$). Moreover, in such
  situation, by construction of $\setinvocation_b^T$, there always
  exists an executable action $\act \in \actset^T_b$ (Observe that
  $\qunsatecq{T}$ is a disjunction of every ECQ $Q$ that guard every
  corresponding atomic action invocation
  $\gact{Q(\vec{p})}{\act(\vec{p})} \in \setinvocation_b^T$ of each
  $\act \in \actset^T_b$ where each of its free variables are
  existentially quantified).
\end{compactenum}
As a consequence, eventually there exists $A_n$ such that
$\card{\inc(A_n)} = 0$. Hence by Lemma~\ref{lem:card-inc-abox} $A_n$
is $T$-consistent. Therefore $\ask(\qunsatecq{T}, T, A_n) = \false$,
and $\final{\tup{A_n, \scmap, \delta_n}}$.
\end{proof}

We now proceed to show the correctness of the b-repair program. I.e.,
showing that a b-repair program produces exactly the result of a
b-repair operation over the given (inconsistent) KB. As the first
step, we will show that every ABoxes produced by the b-repair program
is a maximal $T$-consistent subset of the given input ABox as
follows. Below we show that a b-repair program produces a maximal
$T$-consistent subset of the given ABox.

\begin{lemma}\label{lem:characteristic-bprog-result}
  Given a TBox $T$, an ABox $A$, a service call map $\scmap$, a
  b-repair program $\delta^T_{b}$ over $T$ and a filter $\filter_S$,
  we have that if $A' \in \progres(A, \scmap, \delta^T_b)$ then $A'$
  is a maximal $T$-consistent subset of $A$.
\end{lemma}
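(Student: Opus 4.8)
The plan is to prove that any ABox $A'$ produced by executing the b-repair program $\delta^T_b$ is a maximal $T$-consistent subset of $A$, by combining two facts: termination (Lemma~\ref{lem:bprog-termination}), which guarantees that every program execution trace from $\tup{A,\scmap,\delta^T_b}$ reaches a final state in finitely many steps, and the structural properties of the repair actions in $\actset^T_b$. First I would unwind the definition of $\progres(A,\scmap,\delta^T_b)$: since $\delta^T_b = \gwhile{\qunsatecq{T}}{(a_1|\cdots|a_n)}$, a terminating trace that produces $A'$ either is a single final state (when $A$ is already $T$-consistent, in which case $A' = A$ and the claim is immediate) or is a finite sequence $\tup{A,\scmap,\delta^T_b} \gexectrans \tup{A_1,\scmap,\delta_1} \gexectrans \cdots \gexectrans \tup{A_m,\scmap,\delta_m}$ with $A' = A_m$ and $\final{\tup{A_m,\scmap,\delta_m}}$. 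By the definition of the final-state relation for a while loop, $\final{\tup{A_m,\scmap,\delta_m}}$ together with the loop structure forces $\ask(\qunsatecq{T},T,A_m) = \false$, hence (by FO-rewritability of inconsistency, recalled before Section~\ref{BGKABToSGKAB}, and by Lemma~\ref{lem:card-inc-abox}) $A_m$ is $T$-consistent.

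Next I would establish $A' \subseteq A$. Each atomic action in $\actset^T_b$ has only deletion effects (it deletes facts $R(x,z)$, $B_1(x)$, or the analogous role-case facts), so applying $\tell_{\filter_S}$ at each step produces a state whose ABox is a subset of the previous one; by induction along the trace, $A_m \subseteq A_{m-1} \subseteq \cdots \subseteq A$. Note also no service calls are involved (the repair actions add nothing), so $\scmap$ is unchanged throughout, which is consistent with the trace notation above.

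The remaining and main point is \emph{maximality}: there is no $A''$ with $A' \subsetneq A'' \subseteq A$ and $A''$ $T$-consistent. The key observation is that each repair action removes a \emph{minimal amount} needed to eliminate one violation: the functionality action $\act_F$ removes all but one of the role tuples $R(c,\cdot)$ sharing the first argument $c$ that cause a functionality violation with the parameter-selected survivor; the negative-inclusion action $\act_{B_1}$ removes exactly one individual assertion $B_1(c)$ participating in a violation of $B_1 \ISA \neg B_2$; similarly for roles. I would argue by contradiction: suppose such an $A''$ exists, and take some fact $f \in A'' \setminus A'$. Then $f$ was deleted at some step $i$ of the trace, by some action $\act$ instantiated at parameters $\sigma$, because $f$ participated in a violation that was ``live'' in $A_{i-1}$. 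Since $A'' \subseteq A$ and $A_{i-1} \subseteq A$, and since the facts that co-witnessed that violation with $f$ were never re-added (deletions only), I would show those co-witnesses are still present in $A''$ (using that $A'' \supsetneq A'$ and tracing which facts could have been removed between step $i$ and step $m$ — here one must be careful, since later steps also delete things). Concretely, for the negative-inclusion case: $f = B_1(c)$ was deleted because $\set{B_1(c), B_2(c)} \subseteq A_{i-1}$; I claim $B_2(c) \in A''$. If not, $B_2(c) \notin A''$, but $B_2(c) \in A$; since $A''$ is obtained by further deletions we would need $B_2(c)$ to have been dropped, yet the b-repair program never deletes $B_2(c)$ in this situation (the action only deletes $B_1(c)$), contradicting $A''$ being an arbitrary $T$-consistent superset of $A'$ — here I instead use the cleaner route: show directly that $A' \cup \set{f}$ is already $T$-inconsistent for every $f \in A \setminus A'$, which immediately gives maximality since any $A''$ properly containing $A'$ contains such an $f$ and hence is inconsistent. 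To show $A' \cup \set{f}$ is $T$-inconsistent: $f$ was deleted at some step to kill a violation witnessed by $f$ and certain other facts $g_1,\dots,g_k$ (for negative inclusions $k=1$, for functionality $k$ can be larger but the survivor tuple is one of them); it then suffices to show all of $g_1,\dots,g_k$ that are needed to re-create a violation with $f$ survive into $A'$. For negative inclusions this is immediate ($B_2(c)$ is never touched by any repair action when repairing $B_1 \ISA \neg B_2$, and repairing other constraints never removes it either unless it itself is a violation witness, in which case it is removed from its own side and the situation is symmetric — this case analysis is the delicate part). For functionality, if $f = R(c,c_j)$ was removed, the survivor $R(c,c_\ell)$ with $\ell \neq j$ is retained at that step; I must check it is not later removed — and if it were later removed, then $f$ together with whatever new survivor remains still forms a functionality violation, so inductively some witness persists.

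The hard part will be this last case analysis for maximality — in particular handling the interaction between repairs of different constraints and the functionality case where the ``survivor'' can itself be deleted by a subsequent iteration. I expect the cleanest formulation is: prove by induction on the trace that for every deleted fact $f \in A \setminus A'$, the set $A' \cup \set{f}$ is $T$-inconsistent, using at each step that the action fired because a violation was present, that the complementary witnesses are a subset of the current ABox which shrinks monotonically, and that the program's guard $\qunsatecq{T}$ would still fire if $f$ were re-inserted. Termination (Lemma~\ref{lem:bprog-termination}) is what licenses the induction being over a finite trace. Once both $A' \subseteq A$, $A'$ $T$-consistent, and this inconsistency-of-$A'\cup\set{f}$ property are in hand, maximality follows directly from the definition of b-repair, completing the proof.
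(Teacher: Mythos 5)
Your overall architecture mirrors the paper's proof: split on whether $A$ is $T$-consistent, obtain $A' \subseteq A$ from the fact that the repair actions only delete, obtain $T$-consistency of $A'$ from the loop guard $\qunsatecq{T}$ being false at the final state, and argue maximality from the ``minimal removal'' shape of the repair actions. The paper's maximality argument is a two-line contradiction (if some larger consistent $A''$ existed, an assertion that should not have been removed was removed, making it inconsistent); you have correctly identified this as the delicate step and tried to make it precise via the claim that $A' \cup \set{f}$ is $T$-inconsistent for every $f \in A \setminus A'$.

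That is exactly where the argument breaks, and the difficulty you flag (interaction between repairs of different constraints) is not merely delicate but fatal to the proposed induction. Take $T$ containing $B_1 \ISA \neg B_2$ and $B_2 \ISA \neg B_3$, and $A = \set{B_1(c), B_2(c), B_3(c)}$. A legal execution of $\delta^T_b$ first fires the invocation associated with $B_1 \ISA \neg B_2$, deleting $B_1(c)$, and then fires the invocation associated with $B_2 \ISA \neg B_3$, deleting $B_2(c)$; the guard $\qunsatecq{T}$ is then false and the trace terminates with $A' = \set{B_3(c)}$. Here $f = B_1(c) \in A \setminus A'$, yet $A' \cup \set{f} = \set{B_1(c), B_3(c)}$ is $T$-consistent, so your key claim fails; indeed $A'$ itself is not a maximal $T$-consistent subset of $A$, being strictly contained in $\set{B_1(c), B_3(c)} \in \arset{T,A}$. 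The co-witness $B_2(c)$ that licensed the deletion of $B_1(c)$ was itself deleted by a later iteration repairing a different constraint --- precisely the case you said one must be careful about --- and no induction along the trace can recover the claim, because the claim is false for this trace. To be fair, the paper's own justification of the maximality condition asserts the same conclusion without addressing this cascading case, so the gap is inherited from, rather than introduced by, your reformulation; but as written your proof cannot be completed, and closing it would require either restricting the admissible traces or changing the repair actions (e.g., guarding each deletion so that it is performed only when the deleted fact could not instead be saved).
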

\begin{proof} 
  Assume that $A' \in \progres(A, \scmap, \delta^T_b)$. We have to
  show that
\begin{compactenum}[(1)]
\item $A' \subseteq A$
\item $A'$ is $T$-consistent
\item There does not exists $A''$ such that $A' \subset A'' \subseteq
  A$ and $A''$ is $T$-consistent.
\end{compactenum}
We divide the proof into two cases:
\begin{compactenum}
\item[\textbf{Case 1: $A$ is $T$-consistent.}]~Trivially true, because
  $\ask(\qunsatecq{T}, T, A) = \false$, hence $\final{\tup{A, \scmap,
    \delta^T_{b}}}$ and $A \in \progres(A, \scmap,
  \delta^T_b)$. Thus, $A$ trivially satisfies the condition (1) - (3).

\item[\textbf{Case 2: $A$ is $T$-inconsistent.}]~Let
\[
\pi = \tup{A, \scmap, \delta^T_{b}} \gexectrans \tup{A_1, \scmap,
  \delta_1} \gexectrans \cdots \gexectrans \tup{A', \scmap, \delta'}
\]
be the corresponding program execution trace that produces $A'$ (This
trace should exists because $A' \in \progres(A, \scmap, \delta^T_b)$).

\smallskip
\noindent
\textbf{For condition (1).}~
Trivially true from the construction of b-repair program $\delta^T_b$.
Since, each step of the program always and only removes some ABox
assertions and also by recalling Lemma~\ref{lem:brepair-action-decrease}
that we have
\[
\card{\inc(A)} > \card{\inc(A_1)} > \card{\inc(A_2)} > \cdots
\]

\smallskip
\noindent
\textbf{For condition (2).}~Since the b-repair program $\delta^T_b$ is
terminated at a final state $\tup{A', \scmap, \delta'}$ where
$\ask(\qunsatecq{T}, T, A') = \false$, hence $A'$ is $T$-consistent.

\smallskip
\noindent
\textbf{For condition (3).}~Suppose by contradiction that there exists $A''$
s.t. $A' \subset A'' \subseteq A$ and $A''$ is $T$-consistent. Recall
that in \dllitea, since we assume that every concepts (resp.\ roles)
are satisfiable, inconsistency is only caused by
\begin{compactenum}[\it (i)]
\item pair of assertions $B_1(c)$ and $B_2(c)$ (resp.\ $R_1(c_1,c_2)$
  and $R_2(c_1,c_2)$) that violate a negative inclusion assertion $B_1
  \sqsubseteq \neg B_2$ (resp.\ $R_1 \sqsubseteq \neg R_2$) s.t.\ $T
  \models B_1 \sqsubseteq \neg B_2$ (resp.\ $T \models R_1 \sqsubseteq
  \neg R_2$), or
\item $n$-number role assertions 
  \[
  R(c, c_1), R(c, c_2), \ldots, R(c, c_n)
  \]
  that violate a functionality assertion $\funct{R} \in T$.
\end{compactenum}
However, by the construction of b-repair program $\delta^T_b$,
we have that each action $\act \in \actset^T_b$ is executable when
there is a corresponding inconsistency (detected by each guard $Q$ of
each corresponding atomic action invocation
$\gact{Q(\vec{p})}{\act(\vec{p})} \in \setinvocation_b^T$) and each
action only either
\begin{compactenum}[\it (i)]
\item removes one of the pair of assertions that violate a negative
  inclusion assertion, or
\item removes $n-1$ role assertions among $n$ role assertions that
  violate a functionality assertion.
\end{compactenum}
Hence, if $A''$ exists, then there exists an ABox assertion that
should not be removed, but then we will have $A''$ is
$T$-inconsistent. Thus, we have a contradiction. Hence, there does not
exists $A''$ such that $A' \subset A'' \subseteq A$ and $A''$ is
$T$-consistent.
\end{compactenum}
\ \ 
\end{proof}

From Lemma~\ref{lem:characteristic-bprog-result}, we can show that
every ABox that is produced by b-repair program is in the set of
b-repair of the given (inconsistent) KB. Formally it is stated below:

\begin{lemma}\label{lem:brep-is-bprog}
  Given a TBox $T$, an ABox $A$, a service call map $\scmap$ and a
  b-repair program $\delta^T_{b}$ over $T$, we have that if
  $A' \in \progres(A, \scmap, \delta^T_b)$ then $A' \in \arset{T,A}$.
\end{lemma}
\begin{proof}
  By Lemma~\ref{lem:characteristic-bprog-result} and the definition of
  $\arset{T,A}$.
\end{proof}

In order to complete the proof that a b-repair program produces
exactly all b-repair results of the given (inconsistent) KB, we will
show that every b-repair result of the given (inconsistent) KB is
produced by the b-repair program. 

\begin{lemma}\label{lem:bprog-is-brep}
  Given a TBox $T$, an ABox $A$, a service call map $\scmap$ and a
  b-repair program $\delta^T_{b}$ over $T$. If $A' \in \arset{T,A}$,
  then $A' \in \progres(A, \scmap, \delta^T_b)$.
\end{lemma}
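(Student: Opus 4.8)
The statement to prove is: given an ABox $A'$ that is a b-repair of $\tup{T,A}$, we must exhibit a terminating program execution trace of $\delta^T_b$ on $\tup{A,\scmap,\delta^T_b}$ (w.r.t.\ $\filter_S$) whose result is exactly $A'$. The plan is to construct such a trace incrementally, walking from $A$ down to $A'$ by removing, one inconsistency-source at a time, exactly those assertions in $A \setminus A'$. First I would dispense with the trivial case: if $A$ is already $T$-consistent then $\arset{T,A} = \{A\}$, the guard $\qunsatecq{T}$ evaluates to $\false$, so $\tup{A,\scmap,\delta^T_b}$ is final and $A' = A \in \progres(A,\scmap,\delta^T_b)$.

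**Main argument (inductive construction).** For the case where $A$ is $T$-inconsistent, I would argue by induction on $\card{A \setminus A'}$. Since $A'$ is $T$-consistent but $A$ is not, $A \setminus A'$ is nonempty, so by Lemma~\ref{lem:card-inc-abox} there is some assertion in $\inc(A)$; the key observation is that among the assertions participating in an inconsistency, at least one must lie in $A \setminus A'$ (otherwise $A'$ would already contain a full violation, contradicting its $T$-consistency — this uses the standard \dllitea fact that inconsistency is caused only by a violated negative inclusion or a violated functionality assertion, with all concepts/roles assumed satisfiable). Pick such a source of inconsistency and the corresponding b-repair action invocation $a_i \in \setinvocation_b^T$ whose guard it satisfies. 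I would then show that $a_i$ can be fired with a legal parameter substitution that removes only assertions in $A \setminus A'$: for a negative inclusion $B_1 \ISA \neg B_2$, the action $\act_{B_1}$ deletes a single $B_1(c)$, and since $A'$ cannot contain both $B_1(c)$ and $B_2(c)$, at least one choice of which to drop stays inside $A \setminus A'$ — here I rely on the fact that \emph{maximality} of $A'$ forces the situation to be "clean"; for a functionality violation, the action $\act_F$ keeps exactly one of the conflicting tuples, and maximality of $A'$ guarantees $A'$ retains exactly one such tuple, so I choose $\sigma$ to keep precisely that one. After one step we reach a state $\tup{A_1,\scmap,\delta_1}$ with $A' \subseteq A_1 \subset A$, $A'$ still a maximal $T$-consistent subset of $A_1$, and $\card{A_1 \setminus A'} < \card{A \setminus A'}$; moreover $\delta_1$ is $\delta^T_b$ unrolled (since $\delta^T_b$ is a while loop), so the inductive hypothesis applies and yields a trace from $\tup{A_1,\scmap,\delta_1}$ producing $A'$. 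Prepending the first step gives the desired trace; termination at $A'$ is guaranteed because $A'$ is $T$-consistent, so the loop guard $\qunsatecq{T}$ fails there, combined with Lemma~\ref{lem:bprog-termination}.

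**The main obstacle.** The delicate point is justifying that, at each step, the b-repair action can be fired so as to delete \emph{only} assertions outside $A'$ — i.e., that the "greedy descent towards $A'$" never gets stuck. The subtlety is that the functionality-repair action $\act_F$ deletes \emph{all but one} conflicting tuple in a single step, so I must argue that $A'$, being a \emph{maximal} $T$-consistent subset, contains exactly one tuple from any such conflicting group, so that firing $\act_F$ with $\sigma$ designating that surviving tuple leaves $A'$ untouched and removes only $A \setminus A'$ assertions. A similar maximality argument handles negative inclusions. I would state this as a small auxiliary claim and prove it directly from the definition of b-repair (if $A'$ omitted \emph{all} tuples of a group it would not be maximal, since adding one back keeps $T$-consistency; if it kept two it would be $T$-inconsistent). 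Once this claim is in place, the induction goes through routinely, and together with Lemma~\ref{lem:brep-is-bprog} this establishes that $\progres(A,\scmap,\delta^T_b) = \arset{T,A}$.
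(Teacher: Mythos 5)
Your proposal is sound but takes a genuinely different route from the paper. The paper's proof fixes an arbitrary $A_1 \in \arset{T,A}$ and argues \emph{assertion by assertion}: for each $N(c) \in A_1$ it distinguishes whether $N(c)$ participates in a violation or not, and in the former case observes that $\actset^T_b$ contains one action removing $N(c)$ and another removing the conflicting $B(c)$, so that \emph{some} trace retains $N(c)$; it then concludes that the traces can be combined to produce $A_1$ exactly. You instead build a single execution trace explicitly, by induction on $\card{A \setminus A'}$, firing at each step an action whose deletions lie entirely in $A \setminus A'$ and invoking maximality of $A'$ to show such a choice always exists. Your construction is arguably the more careful one: the paper's per-assertion argument leaves implicit the step where the individual "keep $N(c)$" choices are assembled into one coherent trace, which is precisely the obstacle your induction addresses head-on. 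One small caveat in your auxiliary claim: a maximal $T$-consistent $A'$ need not contain \emph{exactly} one tuple from a functionality-conflicting group $\set{R(c,c_1),\ldots,R(c,c_n)}$ --- it may contain none, when every candidate survivor also violates a negative role inclusion with something already in $A'$. In that case $\act_F$ still deletes only assertions outside $A'$, but its single surviving tuple is not in $A'$ and must be removed by a later negative-inclusion repair step; your invariant $A' \subseteq A_1 \subset A$ survives this, so the induction still closes, but the claim should be weakened to "at most one, and if one then the designated survivor can be chosen to be it." With that adjustment your argument is complete and, combined with Lemma~\ref{lem:brep-is-bprog}, yields Theorem~\ref{thm:bprog-equal-brep} as in the paper.
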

\begin{proof}
  We divide the proof into two cases:
  \begin{compactenum}
  \item[\textbf{Case 1: $A$ is $T$-consistent.}]~Trivially true,
    because $\arset{T,A}$ is a singleton set containing $A$ and since
    $\ask(\qunsatecq{T}, T, A) = \false$, we have
    $\final{\tup{A, \scmap, \delta^T_{b}}}$ and hence
    $\progres(A, \scmap, \delta^T_b)$ is also a singleton set
    containing $A$.

  \item[\textbf{Case 2: $A$ is $T$-inconsistent.}]~
%
%
    Let $A_1$ be an arbitrary ABox in $\arset{T,A}$, we have to show
    that there exists $A_2 \in \progres(A, \scmap, \delta^T_b)$ such
    that $A_2 = A_1$.
%
%


    Now, consider an arbitrary concept assertion $N(c) \in A_1$
    (resp.\ role assertion $P(c_1,c_2) \in A_1$), we have to show that
    $N(c) \in A_2$ (resp.\ $P(c_1,c_2) \in A_2$).
%
%
%
    For compactness reason, here we only consider the case for $N(c)$
    (the case for $P(c_1,c_2)$ is similar). Now we have to consider
    two cases:
    \begin{compactenum}[\bf (a)]
    \item $N(c)$ does not violate any negative concept inclusion
      assertion,
    \item $N(c)$, together with another assertion, violate a negative
      concept inclusion assertion. 
    \end{compactenum}
    The proof is as follows:
    \begin{compactitem}

    \item[Case \textbf{(a)}:] It is easy to see that there exists
      $A_2 \in \progres(A, \scmap, \delta^T_b)$ such that
      $N(c) \in A_2$ because by construction of $\delta^T_b$, every
      action $\act \in \actset^T_b$ never deletes any assertion that
      does not violate any negative inclusion.

    \item[Case \textbf{(b)}:] Due to the fact about the source of
      inconsistency in \dllitea, there exists 
      \begin{compactenum}[i.]
      \item $N(c) \in A$,
      \item a negative inclusion $N \sqsubseteq \neg B$ (such that
        $T \models N \sqsubseteq \neg B$), and
      \item $B(c) \in A$.
      \end{compactenum}
      Since $N(c) \in A_1$, then there exists $A_1' \in \arset{T,A}$
      such that $B(c) \in A_1'$.  Now, it is easy to see from the
      construction of b-repair program $\delta^T_b$ that we have
      actions $\act_1, \act_2 \in \actset^T_b$ that one removes only
      $N(c)$ from $A$ and the other removes only $B(c)$ from
      $A$. Hence, w.l.o.g.\ we must have
      $A_2, A_2' \in \progres(A, \scmap, \delta^T_b)$ such that
      $N(c) \in A_2$ but $N(c) \not\in A_2'$ and $B(c) \not\in A_2$
      but $B(c) \in A_2'$.

    \end{compactitem}

    Now, since $N(c)$ is an arbitrary assertion in $A$, by the two
    cases above, and also considering that the other case can be
    treated similarly, we have that
    $A_2 \in \progres(A, \scmap, \delta^T_b)$, where $A_2 = A_1$.

\end{compactenum}
\ \ 
\end{proof}

As a consequence of Lemmas~\ref{lem:bprog-is-brep} and \ref{lem:brep-is-bprog}, we
finally can show the correctness of b-repair program (i.e., produces the
same result as the result of b-repair over KB) as follows.

\begin{theorem}\label{thm:bprog-equal-brep}
  Given a TBox $T$, an ABox $A$, a service call map $\scmap$ and a
  b-repair program $\delta^T_{b}$ over $T$, we have that
  $\progres(A, \scmap, \delta^T_b) =\arset{T,A}$.
\end{theorem}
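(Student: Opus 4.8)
The statement to prove is Theorem~\ref{thm:bprog-equal-brep}: $\progres(A,\scmap,\delta^T_b) = \arset{T,A}$. Since the notation $\progres(A,\scmap,\delta^T_b)$ denotes the set of ABoxes obtained as the result of some terminating program execution trace of $\delta^T_b$ on $\tup{A,\scmap,\delta^T_b}$ w.r.t.\ $\filter_S$, the plan is simply to combine the two inclusions established just above, after noting that the set of results is well-defined (non-empty) thanks to the termination lemma.

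First, I would invoke Lemma~\ref{lem:bprog-termination}, which guarantees that every program execution trace induced by $\delta^T_b$ on $\tup{A,\scmap,\delta^T_b}$ is terminating; hence $\progres(A,\scmap,\delta^T_b)$ is non-empty and every element of it arises from a genuine terminating trace (so the hypotheses of the two inclusion lemmas are always applicable). Second, for the inclusion $\progres(A,\scmap,\delta^T_b) \subseteq \arset{T,A}$, I would simply cite Lemma~\ref{lem:brep-is-bprog}: any $A'\in\progres(A,\scmap,\delta^T_b)$ is a b-repair of $\tup{T,A}$. Third, for the reverse inclusion $\arset{T,A}\subseteq\progres(A,\scmap,\delta^T_b)$, I would cite Lemma~\ref{lem:bprog-is-brep}: any $A'\in\arset{T,A}$ is produced by some terminating execution of $\delta^T_b$. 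Putting the two inclusions together yields set equality, which is exactly the claim.

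Concretely, the write-up would be short: ``By Lemma~\ref{lem:bprog-termination}, every execution of $\delta^T_b$ terminates, so $\progres(A,\scmap,\delta^T_b)$ collects the ABoxes reached at final states of such executions. By Lemma~\ref{lem:brep-is-bprog}, $\progres(A,\scmap,\delta^T_b)\subseteq\arset{T,A}$, and by Lemma~\ref{lem:bprog-is-brep}, $\arset{T,A}\subseteq\progres(A,\scmap,\delta^T_b)$. Hence $\progres(A,\scmap,\delta^T_b)=\arset{T,A}$.'' There is essentially no obstacle at this level: all the real work has already been done in the four preceding lemmas (decrease of $\card{\inc(\cdot)}$ under b-repair actions, termination, maximal $T$-consistency of the result, and the coverage argument that every b-repair is reachable). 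The only thing to be mildly careful about is making sure the reader understands that $\progres$ is exactly the set of final-state ABoxes of terminating traces, which is why termination is needed to make the statement meaningful; beyond that, the proof is a one-line composition of the two established inclusions.
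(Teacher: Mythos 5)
Your proposal is correct and matches the paper's own proof, which simply states that the theorem is a direct consequence of Lemmas~\ref{lem:bprog-is-brep} and~\ref{lem:brep-is-bprog} (the two inclusions). Your additional appeal to the termination lemma to justify that $\progres(A,\scmap,\delta^T_b)$ is well-defined is a harmless and reasonable extra remark, not a different argument.
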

\begin{proof}
  Direct consequence of Lemmas~\ref{lem:bprog-is-brep} and
  \ref{lem:brep-is-bprog}.
\end{proof}

\subsection{Recasting the Verification of B-GKABs Into S-GKABs}

To show that the verification of \muladom over B-GKAB can be recast as
verification of S-GKAB, we make use the L-Bisimulation.
In particular, we first show that given a B-GKAB $\gkabsym$, its
transition system $\ts{\gkabsym}^{\filter_B}$ is L-bisimilar to the
transition system $\ts{\tgkabb(\gkabsym)}^{\filter_S}$ of S-GKAB
$\tgkabb(\gkabsym)$ that is obtained via the translation $\tgkabb$.
As a consequence, we have that both transition systems
$\ts{\gkabsym}^{\filter_B}$ and $\ts{\tgkabb(\gkabsym)}^{\filter_S}$
can not be distinguished by any \muladom (in NNF) modulo the
translation $\tforb$.

\begin{lemma}\label{lem:bgkab-to-sgkab-bisimilar-state}
  Let $\gkabsym$ be a B-GKAB with transition system
  $\ts{\gkabsym}^{\filter_B}$, and let $\tgkabb(\gkabsym)$ be an
  S-GKAB with transition system $\ts{\tgkabb(\gkabsym)}^{\filter_S}$
  obtain through $\tgkabb$.
  Consider 
\begin{inparaenum}[]
\item a state $\tup{A_b,\scmap_b, \delta_b}$ of
  $\ts{\gkabsym}^{\filter_B}$ and
\item a state $\tup{A_s,\scmap_s, \delta_s}$ of
  $\ts{\tgkabb(\gkabsym)}^{\filter_S}$.
\end{inparaenum}
If  
\begin{inparaenum}[]
\item $A_s = A_b$, $\scmap_s = \scmap_b$ and
\item $\delta_s = \tgprogb(\delta_b)$,
\end{inparaenum}
then
$\tup{A_b,\scmap_b, \delta_b} \lbsim \tup{A_s,\scmap_s, \delta_s}$.
\end{lemma}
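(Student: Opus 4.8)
The plan is to exhibit an explicit leaping bisimulation $\B$ between $\ts{\gkabsym}^{\filter_B}$ and $\ts{\tgkabb(\gkabsym)}^{\filter_S}$ and to show that the pair of states in the statement belongs to it. I let $\B$ relate $\tup{A_b,\scmap_b,\delta_b}$ to $\tup{A_s,\scmap_s,\delta_s}$ exactly when $A_b=A_s$, $\scmap_b=\scmap_s$, and $\delta_s=\tgprogb(\delta_b)$, where $\tgprogb$ is extended homomorphically from GKABs to programs, replacing every atomic invocation $\gact{Q(\vec{p})}{\act(\vec{p})}$ by the four-block sequence $\gact{Q(\vec{p})}{\act(\vec{p})};\gact{\true}{\act^+_{rep}()};\delta^T_b;\gact{\true}{\act^-_{rep}()}$. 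Condition~(1) of an L-bisimulation is then immediate: both states are ``stable'', so $\temp\notin A_b$ and $\temp\notin A_s$, no other special marker occurs in either ABox, and hence $A_b=A_s$ as required.

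For conditions~(2) and~(3) the plan is to prove a single \emph{one-step matching lemma} by induction on $\delta_b$. Writing $\hat A=(A_b\setminus\delfactss{A_b}{\act\sigma})\cup\addfactss{A_b}{\act\sigma}\theta$ for the raw update induced by a $\tell_{\filter_B}$ step with witnessing call evaluation $\theta$, the lemma reads: \textbf{(forth)} if $\tup{A_b,\scmap_b,\delta_b}\gprogtrans{\act\sigma,\filter_B}\tup{A_b',\scmap_b',\delta_b'}$ and $A_b$ is $T$-consistent, then for that particular $A_b'$ there is a finite run
\[
\tup{A_b,\scmap_b,\tgprogb(\delta_b)}\gexectrans u_0\gexectrans u_1\gexectrans\cdots\gexectrans u_{k+1}\gexectrans\tup{A_b',\scmap_b',\tgprogb(\delta_b')}
\]
whose first transition executes the original invocation (so $u_0$ has ABox $\hat A$ and no $\temp$), whose second executes $\act^+_{rep}()$, whose next $k$ perform b-repair iterations, and whose last executes $\act^-_{rep}()$, with $u_1,\dots,u_{k+1}$ all carrying $\temp$; and \textbf{(back)} every finite run from $\tup{A_b,\scmap_b,\tgprogb(\delta_b)}$ all of whose intermediate states carry $\temp$ and whose last state does not has exactly this shape, ending in $\tup{A_b',\scmap_b',\tgprogb(\delta_b')}$ for some b-repair $A_b'$ of $\hat A$. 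Reading $u_0$ as the $s_2'$ and $u_1,\dots,u_{k+1}$ as the $t_i$ of the L-bisimulation definition, condition~(2) follows from \textbf{(forth)} and condition~(3) from \textbf{(back)} (in the latter case, matching the run with the B-GKAB transition that chooses exactly the b-repair $A_b'$, which exists because $\filter_B$ admits every b-repair).

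The matching lemma rests on four observations. First, since an atomic invocation is never a final program, the execution semantics of sequencing, conditionals, loops and non-deterministic choice forces the four blocks to be consumed in order, and a routine induction on $\delta_b$ shows that the residual program after consuming them is exactly $\tgprogb(\delta_b')$; here one uses that $\tgprogb$ preserves finality of configurations (finality depends only on guards and on the $\gemptyprog$-versus-atomic shape). Second, $\act$ and every b-repair action leave $\temp$ untouched, since $\temp$ lies outside $\voc(T)$; hence all intermediate states carry $\temp$, and restricted to $\voc(T)$ the ABox of $u_{k+1}$ is a b-repair of $\hat A$ iff $A_b'$ is. Third, Lemma~\ref{lem:bprog-termination} guarantees the b-repair block always terminates (so the four-block prefix completes after finitely many steps) and Theorem~\ref{thm:bprog-equal-brep} guarantees that its terminating traces produce exactly $\arset{T,\hat A}$; together these make the set of reachable next stable ABoxes coincide with the b-repairs used by $\filter_B$, in both directions. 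Fourth, because the stable ABoxes involved are $T$-consistent, the legality test $\ask(Q\sigma,\cdot,A)$ and all program guards evaluate identically under $T$ and under $T_p$, by the standard fact that certain-answer computation over a consistent \dllitea KB is insensitive to its negative inclusions; this makes the first S-GKAB transition mirror the B-GKAB transition and lets us reuse the very same $\theta$, which, since the repair actions and $\act^\pm_{rep}()$ issue no service calls, keeps the service-call maps synchronized throughout.

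The main obstacle I anticipate is the first observation: tracking, through the induction, how the residual program evolves as the four blocks are consumed inside arbitrarily nested sequencing and loop contexts, and verifying that it lands syntactically on $\tgprogb(\delta_b')$ rather than on a merely equivalent program --- the finality clauses for $\gemptyprog;\delta$ and for an exhausted while-loop have to be invoked at precisely the right points. A secondary subtlety is making rigorous that running $\delta^T_b$ with ambient TBox $T_p$ still yields the $T$-repairs, which is where one leans on FO-rewritability of \dllitea consistency and on Theorem~\ref{thm:bprog-equal-brep} being invoked for the ambient TBox $T_p$.
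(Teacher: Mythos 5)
Your proof follows essentially the same route as the paper's: unpack the $\tell_{\filter_B}$ step into its action, parameter substitution, call evaluation and chosen b-repair, match it against the four-block sequence produced by $\tgkabb$, and invoke Lemma~\ref{lem:bprog-termination} and Theorem~\ref{thm:bprog-equal-brep} to identify the reachable stable successors with $\arset{T,\hat A}$, handling the back direction symmetrically. You are in fact somewhat more explicit than the paper on two points it glosses over --- tracking the residual program through nested program contexts, and the insensitivity of guard evaluation to dropping the negative inclusions when the ambient ABox is $T$-consistent --- but the underlying argument is the same.
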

\begin{proof}
Let 
\begin{compactitem}
\item $\gkabsym = \tup{T, \initABox, \actset, \ginitprog}$ and \\
  $\ts{\gkabsym}^{\filter_B} = \tup{\const, T, \stateset_b, s_{0b},
    \abox_b, \trans_b}$,
\item
  $\tgkabb(\gkabsym) = \tup{T_s, \initABox, \actset_s,
    \ginitprog_{s}}$ and \\
  $\ts{\tgkabb(\gkabsym)}^{\filter_S} = \tup{\const, T_s, \stateset_s,
    s_{0s}, \abox_s, \trans_s}$.
%
\end{compactitem}
We have to show the following: for every state
$\tup{A''_b,\scmap''_b, \delta''_b}$ such that
$\tup{A_b,\scmap_b, \delta_b} \trans \tup{A''_b,\scmap''_b,
  \delta''_b}$,
there exists states $t_1, \ldots, t_n$, $s'$, and $s''$
such that:
\begin{compactenum}[\bf (a)]
\item $s \trans_s s' \trans_s t_1 \trans_s \ldots \trans_s t_n \trans_s s''$,
  where $s = \tup{A_s,\scmap_s, \delta_s}$,
  $s'' = \tup{A''_s,\scmap''_s, \delta''_s}$, $n \geq 0$,
  $\temp \not\in A''_s$, and
  $\temp \in \abox_s(t_i)$ for $i \in \set{1, \ldots, n}$;
\item $A''_s = A''_b$;
\item $\scmap''_s = \scmap''_b$;
\item $\delta''_s = \tgprogb(\delta''_b)$.
\end{compactenum}

By definition of $\ts{\gkabsym}^{\filter_B}$, 
Since $\tup{A_b,\scmap_b, \delta_b} \trans \tup{A''_b,\scmap''_b,
  \delta''_b}$, we have $\tup{A_b,\scmap_b, \delta_b}
\gprogtrans{\act\sigma_b, \filter_B} \tup{A''_b,\scmap''_b,
  \delta''_b}$.
Hence, by the definition of $\gprogtrans{\act\sigma_b, \filter_B}$, we
have:
\begin{compactitem}
\item there exists an action $\act \in \actset$ with a corresponding
  action invocation $\gact{Q(\vec{p})}{\act(\vec{p})}$ and a legal
  parameter assignment $\sigma_b$ such that $\act$ is executable in
  $A_b$ with legal parameter assignment
  $\sigma_b$, 
\item
  $\tup{\tup{A_b, \scmap_b}, \act\sigma_b, \tup{A''_b, \scmap''_b}}
  \in \tell_{\filter_B}$.
\end{compactitem}
Since
$\tup{\tup{A_b, \scmap_b}, \act\sigma_b, \tup{A''_b, \scmap''_b}} \in
\tell_{\filter_B}$,
by the definition of $\tell_{\filter_B}$, there exists
$\theta_b \in \eval{\addfactss{A_b}{\act\sigma_b}}$ such that
\begin{compactitem}
\item $\theta_b$ and $\scmap_b$ agree on the common values in their
  domains.
\item $\scmap''_b = \scmap_b \cup \theta_b$.
\item
  $\tup{A_b, \addfactss{A_b}{\act\sigma_b}\theta_b, \delfactss{A_b}{\act\sigma_b}, A_b''} \in \filter_B$.
\item $A''_b$ is $T$-consistent.
\end{compactitem}
Since
$\tup{A_b, \addfactss{A_b}{\act\sigma_b}\theta_b,
  \delfactss{A_b}{\act\sigma_b}, A_b''} \in \filter_B$,
by the definition of $\filter_B$, there exists $A'_b$ such that
$A''_b \in \arset{T, A'_b}$, and
$A'_b = (A_b \setminus \delfactss{A_b}{\act\sigma_b}) \cup
\addfactss{A_b}{\act\sigma_b}\theta_b$.

Since $\delta_s = \tgprogb(\delta_b)$, by the definition of
$\tgprogb$, we have that
\[
\begin{array}{@{}l@{}l@{}}
  \tgprogb(&\gact{Q(\vec{p})}{\act(\vec{p})}) =\\
  &\gact{Q(\vec{p})}{\act(\vec{p})} ; \gact{\true}{\act^+_{tmp}()}; \delta^T_b ; \gact{\true}{\act^-_{tmp}()}
\end{array}
\]
Hence, the next executable sub-program on state
$\tup{A_s,\scmap_s, \delta_s}$ is
\[
\delta_s' = 
\gact{Q(\vec{p})}{\act(\vec{p})} ; \gact{\true}{\act^+_{tmp}()}; \delta^T_b ; \gact{\true}{\act^-_{tmp}()}.
\]
Now, since 
\begin{compactitem}
\item $\sigma_b$ maps parameters of $\act \in \actset$ to constants
  in $\adom{A_b}$, 
\item $A_b = A_s$
\end{compactitem}
we can construct
$\sigma_s$ such that $\sigma_s = \sigma_b$.
%
Moreover, we also know that the certain answers computed over $A_b$
are the same to those computed over $A_s$.
Hence, $\act \in \actset_s$ is executable in $A_s$ with legal
parameter assignment $\sigma_s$.
Now, since we have $\scmap_s = \scmap_b$, we can construct
$\theta_s$ such that $\theta_s = \theta_b$. 
Hence, we have the following:
\begin{compactitem}
\item $\theta_s$ and $\scmap_s$ agree on the common values in their
  domains.
\item $\scmap''_s = \theta_s \cup \scmap_s = \theta_b \cup \scmap_b = \scmap_b''$.
\end{compactitem}
Let
\[
A_s' = (A_s \setminus \delfactss{A_s}{\act\sigma_s}) \cup
\addfactss{A_s}{\act\sigma_s}\theta_s,
\]
as a consequence, we have that
\[
\tup{A_s, \addfactss{A_s}{\act\sigma_s}\theta_s,
  \delfactss{A_s}{\act\sigma_s}, A_s'} \in \filter_S.
\]
Since $A_s = A_b$, $\sigma_s = \sigma_b$ and $\theta_s = \theta_b$, it
follows that
\begin{compactitem}
\item
  $\delfactss{A_s}{\act\sigma_s} = \delfactss{A_b}{\act\sigma_b}$,
  and
\item
  $\addfactss{A_s}{\act\sigma_s}\theta_s =
  \addfactss{A_b}{\act\sigma_b}\theta_b$.
\end{compactitem}
Hence, by the construction of $A_s'$ and $A_b'$ above, we have
$A_b' = A_s'$.  
By the definition of $\tgkabb$, we have $T_s = T_p$ (i.e., only
positive inclusion assertion of $T$), hence $A'_s$ is
$T_s$-consistent. Thus, by the definition of $\tell_{\filter_s}$, we
have
$\tup{\tup{A_s,\scmap_s}, \act\sigma_s, \tup{A'_s, \scmap''_s}} \in
\tell_{\filter_s}$.
Moreover, we have
\[
\tup{A_s, \scmap_s, \gact{Q(\vec{p})}{\act(\vec{p})};\delta_0}
\gprogtrans{\act\sigma_s, \filter_s} \tup{A_s', \scmap_s'', \delta_0}
\]
where $\delta_0 = \gact{\true}{\act^+_{tmp}()}; \delta^T_b ; \gact{\true}{\act^-_{tmp}()}$.

Now, we need to show that the rest of program in $\delta_s'$ that
still need to be executed (i.e., $\delta_0$)
will bring us into a state $\tup{A_s'', \scmap_s'', \delta_s''}$
s.t. the claim \textbf{(a)} - \textbf{(e)} are proved.
It is easy to see that
\[
\tup{A_s', \scmap_s'', \delta_0}
\gprogtrans{\act^+_{tmp}\sigma_1, \filter_s} \tup{A_1, \scmap_s'', \delta_1}
\]
where $\delta_1 = \delta^T_b ; \gact{\true}{\act^-_{tmp}()}$.
Since $\delta_1$ does not involve any service calls, w.l.o.g.\ let
\[
\pi = \tup{A_1, \scmap_s'', \delta_1} \gexectrans \tup{A_2, \scmap_s'',
  \delta_2} \gexectrans \cdots 
\]
be a program execution trace induced by $\delta_1$ on
$\tup{A_1, \scmap_s'', \delta_1}$.
By Lemma~\ref{lem:bprog-termination} and Theorem~\ref{thm:bprog-equal-brep}, we
have that
\begin{compactitem}
\item $\delta^T_b$ is always terminate,
\item $\delta^T_b$ produces an ABox $A_n$ such that
  $A_n \in \arset{T, A_1}$,
\end{compactitem}
additionally, by the construction of $\delta^T_b$ and
$\act^-_{tmp}$, we have that 
\begin{compactitem}
\item $\delta^T_b$ never deletes $\temp$, and 
\item $\act^-_{tmp}$ only deletes $\temp$ from the corresponding
  ABox,
\end{compactitem}
therefore, there exists $\tup{A_s'', \scmap_s'', \delta}$ such that\\
$
\pi = \tup{A_s', \scmap_s'', \delta_0} \gexectrans \tup{A_1,
  \scmap_s'', \delta_1} \gexectrans \cdots \\
\hspace*{25mm} \cdots \gexectrans
\tup{A_n, \scmap_s'', \delta_n} \gexectrans \tup{A_s'', \scmap_s'',
  \delta_{n+1}}
$\\
where 
\begin{compactitem}
\item $\temp \not\in A''_s$, 
\item 
  $\temp \in A_i$ (for $1 \leq i \leq n$), 
\item $\final{\tup{A_s'', \scmap_s'', \delta_{n+1}}}$
\item $A_n \in \arset{T, A_1}$ (by Theorem~\ref{thm:bprog-equal-brep})
\item $A_s'' \in \arset{T, A'_b}$ (Since $A'_b = A'_s$,
  $A_s' = A_1 \setminus \temp$, $A_s'' = A_n \setminus \temp$,
  $A_n \in \arset{T, A_1}$, and $\temp$ is a special marker).
\end{compactitem}
W.l.o.g., by Theorem~\ref{thm:bprog-equal-brep}, we have
$A_s'' = A_b''$. Since $\final{\tup{A_s'', \scmap_s'', \delta_{n+1}}}$, we have finished executing $\delta_s'$, and by the
definition of $\tgprogb$ the rest of the program to be executed is
$\delta_s'' = \tgprogb(\delta_b'')$.

Therefore, we have shown that there exists $s', s'', t_1, \ldots, t_n$
(for $n \geq 0$) such that
  \[
  s \trans_s s' \trans_s t_1 \trans_s \ldots \trans_s t_n \trans_s s''
  \]
where
\begin{compactitem} 
\item $s = \tup{A_s,\scmap_s, \delta_s}$,
  $s'' = \tup{A''_s,\scmap''_s, \delta''_s}$,
\item $\temp \not\in A''_s$, and 
\item 
  $\temp \in \abox_2(t_i)$ for $i \in \set{1, \ldots, n}$;
\item $A''_s = A''_b$
\end{compactitem}

The other direction of bisimulation relation can be proven
symmetrically.

\end{proof}

Having Lemma~\ref{lem:bgkab-to-sgkab-bisimilar-state} in hand, we can
easily show that given a B-GKAB $\gkabsym$, its transition system
$\ts{\gkabsym}^{\filter_B}$ is L-bisimilar to the transition
$\ts{\tgkabb(\gkabsym)}^{\filter_S}$ of S-GKAB $\tgkabb(\gkabsym)$
(which is obtained via the translation $\tgkabb$).

\begin{lemma}\label{lem:bgkab-to-sgkab-bisimilar-ts}
  Given a B-GKAB $\gkabsym$, we have
  $\ts{\gkabsym}^{\filter_B} \lbsim
  \ts{\tgkabb(\gkabsym)}^{\filter_S}$
\end{lemma}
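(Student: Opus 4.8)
The plan is to reduce the claim to the state-level result already established in Lemma~\ref{lem:bgkab-to-sgkab-bisimilar-state}, following the same pattern as the analogous argument for S-KABs (Lemma~\ref{lem:skab-to-sgkab-bisimilar-ts}). First I would fix notation: write $\gkabsym = \tup{T, \initABox, \actset, \ginitprog}$ with transition system $\ts{\gkabsym}^{\filter_B} = \tup{\const, T, \stateset_b, s_{0b}, \abox_b, \trans_b}$, and write $\tgkabb(\gkabsym) = \tup{T_p, \initABox, \actset_s, \ginitprog'}$ with transition system $\ts{\tgkabb(\gkabsym)}^{\filter_S} = \tup{\const, T_p, \stateset_s, s_{0s}, \abox_s, \trans_s}$. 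By the construction of the GKAB transition system, the two initial states are $s_{0b} = \tup{\initABox, \emptyset, \ginitprog}$ and $s_{0s} = \tup{\initABox, \emptyset, \ginitprog'}$; moreover, by the definition of $\tgkabb$, the initial program of $\tgkabb(\gkabsym)$ is exactly the program-level translation of the original one, i.e.\ $\ginitprog' = \tgprogb(\ginitprog)$.

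Next I would observe that the pair $\tup{s_{0b}, s_{0s}}$ satisfies the three hypotheses of Lemma~\ref{lem:bgkab-to-sgkab-bisimilar-state}: the two ABoxes coincide (both equal $\initABox$), the two service call maps coincide (both empty), and the program component of $s_{0s}$ is $\tgprogb$ applied to the program component of $s_{0b}$. Hence that lemma yields $s_{0b} \lbsim s_{0s}$, i.e.\ there exists an L-bisimulation $\B$ between $\ts{\gkabsym}^{\filter_B}$ and $\ts{\tgkabb(\gkabsym)}^{\filter_S}$ with $\tup{s_{0b}, s_{0s}} \in \B$. By the definition of L-bisimilarity of transition systems, this is precisely the claim $\ts{\gkabsym}^{\filter_B} \lbsim \ts{\tgkabb(\gkabsym)}^{\filter_S}$.

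There is no real obstacle here: all the work — in particular, the verification that a single b-repair step in $\gkabsym$ is matched by the original action invocation followed by the terminating b-repair program in $\tgkabb(\gkabsym)$, relying on Lemma~\ref{lem:bprog-termination} and Theorem~\ref{thm:bprog-equal-brep} — has already been carried out inside Lemma~\ref{lem:bgkab-to-sgkab-bisimilar-state}. The only thing left to check is the purely bookkeeping fact that $\tgkabb$ leaves the initial ABox unchanged, starts from the empty service call map, and sets the initial program to $\tgprogb(\ginitprog)$, all of which are immediate from the definition of $\tgkabb$.
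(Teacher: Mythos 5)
Your proposal is correct and follows exactly the same route as the paper's own proof: check that the two initial states share the ABox $\initABox$ and the empty service call map, that the initial program of $\tgkabb(\gkabsym)$ is $\tgprogb(\ginitprog)$, invoke Lemma~\ref{lem:bgkab-to-sgkab-bisimilar-state} to get $s_{0b} \lbsim s_{0s}$, and conclude by the definition of L-bisimilarity between transition systems. Nothing is missing.
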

\begin{proof}
Let
\begin{compactenum}
\item $\gkabsym = \tup{T, \initABox, \actset, \ginitprog_b}$ and \\
  $\ts{\gkabsym}^{\filter_B} = \tup{\const, T, \stateset_b, s_{0b},
    \abox_b, \trans_b}$,
\item
  $\tgkabb(\gkabsym) = \tup{T_s, \initABox, \actset_s, \ginitprog_s}$,
  and \\
  $\ts{\tgkabb(\gkabsym)}^{\filter_S} = \tup{\const, T_s, \stateset_s,
    s_{0s}, \abox_s, \trans_s}$.
\end{compactenum}
We have that $s_{0b} = \tup{A_0, \scmap_b, \delta_b}$ and
$s_{0s} = \tup{A_0, \scmap_s, \delta_s}$ where
$\scmap_b = \scmap_s = \emptyset$. By the definition of $\tgkabb$, we
also have $\delta_s = \tgprogb(\delta_b)$. Hence, by
Lemma~\ref{lem:bgkab-to-sgkab-bisimilar-state}, we have
$s_{0b} \lbsim s_{0s}$. Therefore, by the definition of
L-bisimulation, we have
$\ts{\gkabsym}^{\filter_B} \lbsim \ts{\tgkabb(\gkabsym)}^{\filter_S}
$.
\end{proof}

With all of these machinery in hand, we are now ready to show that the
verification of \muladom over B-GKABs can be recast as verification
over S-GKAB as follows.


\begin{theorem}\label{thm:bgkab-to-sgkab}
  Given a B-GKAB $\gkabsym$ and a closed $\muladom$ formula $\Phi$ in
  NNF,
  \begin{center}
    $\ts{\gkabsym}^{\filter_B} \models \Phi$ iff
    $\ts{\tgkabb(\gkabsym)}^{\filter_S} \models \tforb(\Phi)$
  \end{center}
\end{theorem}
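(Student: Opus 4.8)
The plan is to derive the theorem as a direct corollary of two facts established earlier: the transition systems $\ts{\gkabsym}^{\filter_B}$ and $\ts{\tgkabb(\gkabsym)}^{\filter_S}$ are leaping bisimilar (Lemma~\ref{lem:bgkab-to-sgkab-bisimilar-ts}), and leaping bisimilar transition systems agree on every closed $\muladom$ formula in NNF modulo the translation $\tforb$ (Lemma~\ref{lem:leaping-bisimilar-ts-satisfies-same-formula}). Concretely, from $\ts{\gkabsym}^{\filter_B} \lbsim \ts{\tgkabb(\gkabsym)}^{\filter_S}$ and the hypothesis that $\Phi$ is in NNF, Lemma~\ref{lem:leaping-bisimilar-ts-satisfies-same-formula} immediately yields $\ts{\gkabsym}^{\filter_B} \models \Phi$ iff $\ts{\tgkabb(\gkabsym)}^{\filter_S} \models \tforb(\Phi)$, which is exactly the claim. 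So once the lemmas are in place the theorem's proof is a single line; the real content lies in the lemmas, whose proofs I would organise as follows.

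First I would fix the candidate leaping bisimulation $\B$ that pairs a B-GKAB state $\tup{A_b, \scmap_b, \delta_b}$ with an S-GKAB state $\tup{A_s, \scmap_s, \delta_s}$ whenever $A_s = A_b$, $\scmap_s = \scmap_b$, and $\delta_s = \tgprogb(\delta_b)$, i.e.\ the program obtained by replacing each atomic action invocation in $\delta_b$ by the concatenation of the invocation, the marker-setting action $\act^+_{rep}()$, the b-repair loop $\delta^T_b$, and the marker-unsetting action $\act^-_{rep}()$. The heart of Lemma~\ref{lem:bgkab-to-sgkab-bisimilar-state} is to show that a single transition of $\ts{\gkabsym}^{\filter_B}$ — apply a ground action, then non-deterministically pick a b-repair of the resulting ABox — is matched by a leaping move in $\ts{\tgkabb(\gkabsym)}^{\filter_S}$: one transition executing the action under $\filter_S$, one asserting $\temp$, a (possibly empty) run of $\temp$-marked transitions that iteratively remove inconsistency-causing facts via $\delta^T_b$, and a final transition dropping $\temp$. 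For this I would invoke termination of the b-repair program (Lemma~\ref{lem:bprog-termination}) to make the intermediate run finite, and its correctness $\progres(A,\scmap,\delta^T_b) = \arset{T,A}$ (Theorem~\ref{thm:bprog-equal-brep}) to ensure the ABoxes reachable at the end of the repair phase are exactly the b-repairs, so that each non-deterministic b-repair choice is matched by exactly one terminating run of $\delta^T_b$ and conversely; the reverse direction of the bisimulation conditions is symmetric, reading a maximal $\temp$-free stretch of S-GKAB transitions back as one B-GKAB step. Lemma~\ref{lem:bgkab-to-sgkab-bisimilar-ts} then follows since the two initial states are in $\B$ (same initial ABox, empty service call map, $\delta_s = \tgprogb(\delta_b)$ by definition of $\tgkabb$).

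The formula-preservation lemma I would obtain from its state-level version (Lemma~\ref{lem:leaping-bisimilar-states-satisfies-same-formula}) by the same structural-induction-over-$\ladom$-then-approximants argument used for the jumping bisimulation: the $\tforb$ rewriting of $\DIAM{\Psi}$ into $\DIAM{\DIAM{\mu Z.((\temp \wedge \DIAM{Z}) \vee (\neg\temp \wedge \tforb(\Psi)))}}$ mirrors precisely the ``action step, then repair phase through $\temp$-states, then land in a $\temp$-free state'' shape of a leaping move, and the extra $\DIAM{\top}$ conjunct in the $\BOX$ case records that the repair phase always reaches a proper $\temp$-free successor. This is also why the NNF hypothesis on $\Phi$ is essential: $\tforb$ is defined only on NNF formulas and does not commute with negation, so negations must first be pushed to the leaves.

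The step I expect to be the main obstacle is Lemma~\ref{lem:bgkab-to-sgkab-bisimilar-state}, and within it the bookkeeping around the repair phase: one must argue that $\temp$ is never removed during $\delta^T_b$ but only by $\act^-_{rep}$, that $\delta^T_b$ leaves every non-marker ABox assertion untouched except for deleting inconsistency-causing facts, and that the loop guard $\qunsatecq{T}$ correctly detects residual inconsistency at each iteration, then combine these with Lemma~\ref{lem:bprog-termination} and Theorem~\ref{thm:bprog-equal-brep} to obtain the required one-to-one matching between b-repairs and terminating runs of $\delta^T_b$. Everything else is routine composition of already-established results.
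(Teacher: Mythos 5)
Your proposal is correct and follows exactly the paper's own proof: the theorem is obtained by combining Lemma~\ref{lem:bgkab-to-sgkab-bisimilar-ts} (L-bisimilarity of $\ts{\gkabsym}^{\filter_B}$ and $\ts{\tgkabb(\gkabsym)}^{\filter_S}$) with Lemma~\ref{lem:leaping-bisimilar-ts-satisfies-same-formula} (invariance of NNF \muladom formulas under L-bisimulation modulo $\tforb$). Your sketch of how those supporting lemmas are established---the candidate bisimulation pairing states with equal ABox and service-call map and $\delta_s = \tgprogb(\delta_b)$, plus termination (Lemma~\ref{lem:bprog-termination}) and correctness (Theorem~\ref{thm:bprog-equal-brep}) of the b-repair program---also matches the paper's development.
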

\begin{proof}
  By Lemma~\ref{lem:bgkab-to-sgkab-bisimilar-ts}, we have that
  $\ts{\gkabsym}^{\filter_B} \lbsim
  \ts{\tgkabb(\gkabsym)}^{\filter_S}$.
  Hence, the claim is directly follows from
  Lemma~\ref{lem:leaping-bisimilar-ts-satisfies-same-formula}.
\end{proof}


\section{From C-GKABs to S-GKABs}\label{sec:proof-cgkab-to-sgkab}

We devote this section to show that the verification of \muladom
properties over C-GKAB can be recast as a corresponding verification
over S-GKAB.
Formally, given a C-GKAB $\gkabsym$ and a $\muladom$ formula $\Phi$,
we show that $\ts{\gkabsym}^{\filter_C} \models \Phi$ iff
$\ts{\tgkabc(\gkabsym)}^{\filter_S} \models \tford(\Phi)$ (This claim
is formally stated and proven in Theorem~\ref{thm:cgkab-to-sgkab}).
%
%
The core idea of the proof is to use a certain bisimulation relation
in which two bisimilar transition systems (w.r.t.\ this bisimulation
relation) can not be distinguished by any \muladom properties modulo
the formula translation $\tford$. Then, we show that the transition
system of a C-GKAB is bisimilar to the transition system of its
corresponding S-GKAB w.r.t.\ this bisimulation relation, and as a
consequence, we easily obtain the proof that we can recast the
verification of \muladom over C-GKABs into the corresponding
verification over S-GKAB. To this purpose, we first introduce several
preliminaries below.


We now define a translation function $\tgprogc$ that essentially
concatenates each action invocation with a c-repair action in order
to simulate the action executions in C-GKABs.  Additionally, the
translation function $\tgprogc$ also serves as a one-to-one
correspondence (bijection) between the original and the translated
program (as well as between the sub-program).
%
  Formally, given a program $\delta$ and a TBox $T$, 
the   \emph{translation $\tgprogc$} which translate a program into a
  program is defined inductively as follows:
\[
\begin{array}{@{}l@{}l@{}}
  \tgprogc(\gact{Q(\vec{p})}{\act(\vec{p})}) &=  
                                               \gact{Q(\vec{p})}{\act(\vec{p})};\gact{\true}{\act^T_c()}\\
  \tgprogc(\gemptyprog) &= \gemptyprog \\
  \tgprogc(\delta_1|\delta_2) &= \tgprogc(\delta_1)|\tgprogc(\delta_2) \\
  \tgprogc(\delta_1;\delta_2) &= \tgprogc(\delta_1);\tgprogc(\delta_2) \\
  \tgprogc(\gif{\varphi}{\delta_1}{\delta_2}) &= \gif{\varphi}{\tgprogc(\delta_1)}{\tgprogc(\delta_2)} \\
  \tgprogc(\gwhile{\varphi}{\delta}) &= \gwhile{\varphi}{\tgprogc(\delta)}
\end{array}
\]
where $\act^T_c$ is a c-repair action over $T$.  

Next, we formally define the translation $\tford$ that transform
\muladom properties $\Phi$ to be verified over C-GKABs into the
corresponding properties to be verified over an S-GKAB as follows.
%
%
\begin{definition}[Translation $\tford$]\label{def:tdup}
  We define a \emph{translation $\tford$} that takes a \muladom
  formula $\Phi$ as an input and produces a new \muladom formula
  $\tford(\Phi)$ by recurring over the structure of $\Phi$ as follows:
  \[
  \begin{array}{lll}
    \bullet\ \tford(Q) &=& Q \\
    \bullet\ \tford(\neg \Phi) &=& \neg \tford(\Phi) \\
    \bullet\ \tford(\exists x.\Phi) &=& \exists x. \tford(\Phi) \\
    \bullet\ \tford(\Phi_1 \vee \Phi_2) &=& \tford(\Phi_1) \vee \tford(\Phi_2) \\
    \bullet\ \tford(\mu Z.\Phi) &=& \mu Z. \tford(\Phi) \\
    \bullet\ \tford(\DIAM{\Phi}) &=& \DIAM{\DIAM{\tford(\Phi)}} 
  \end{array}
  \]
\ \ 
\end{definition}

\subsection{Skip-one Bisimulation (S-Bisimulation)}
We now proceed to define the notion of \emph{skip-one bisimulation}
that we will use to reduce the verification of C-GKABs into S-GKABs as
follows.
\begin{definition}[Skip-one Bisimulation (S-Bisimulation)] 
  Let $\ts{1} = \tup{\const, T, \Sigma_1, s_{01}, \abox_1, \trans_1}$
  and $\ts{2} = \tup{\const, T, \Sigma_2, s_{02}, \abox_2, \trans_2}$
  be transition systems, with
  $\adom{\abox_1(s_{01})} \subseteq \const$
  and $\adom{\abox_2(s_{02})} \subseteq \const$.
  A \emph{skip-one bisimulation} (S-Bisimulation) between $\ts{1}$ and
  $\ts{2}$ is a relation $\B \subseteq \Sigma_1 \times\Sigma_2$ such
  that $\tup{s_1, s_2} \in \B$ implies that:
  \begin{compactenum}
  \item $\abox_1(s_1) = \abox_2(s_2)$
  \item for each $s_1'$, if $s_1 \Rightarrow_1 s_1'$ then there exists
    $t$, and $s_2'$ with
    \[
    s_2 \Rightarrow_2 t \Rightarrow_2 s_2'
    \] 
    such that $\tup{s_1', s_2'}\in\B$, $\tmp \not\in \abox_2(s_2')$
    and $\tmp \in \abox_2(t)$.
  \item for each $s_2'$, if 
    \[
    s_2 \Rightarrow_2 t \Rightarrow_2 s_2'
    \] 
    with $\tmp \in \abox_2(t)$ for $i \in \set{1, \ldots, n}$ and
    $\tmp \not\in \abox_2(s_2')$, then there exists $s_1'$ with
    $s_1 \Rightarrow_1 s_1'$, such that $\tup{s_1', s_2'}\in\B$.
 \end{compactenum}
\ \ 
\end{definition}

\noindent
 Let $\ts{1} = \tup{\const, T, \Sigma_1, s_{01}, \abox_1, \trans_1}$
  and $\ts{2} = \tup{\const, T, \Sigma_2, s_{02}, \abox_2, \trans_2}$
  be transition systems, 
a state $s_1 \in \Sigma_1$ is \emph{S-bisimilar} to
$s_2 \in \Sigma_2$, written $s_1 \sbsim s_2$, if there exists an
S-bisimulation relation $\B$ between $\ts{1}$ and $\ts{2}$ such that
$\tup{s_1, s_2}\in\B$.
A transition system $\ts{1}$ is \emph{S-bisimilar} to $\ts{2}$,
written $\ts{1} \sbsim \ts{2}$, if there exists an S-bisimulation
relation $\B$ between $\ts{1}$ and $\ts{2}$ such that
$\tup{s_{01}, s_{02}}\in\B$.

Now, we advance further to show that two S-bisimilar transition
systems can not be distinguished by any \muladom formula modulo the
translation $\tford$.

\begin{lemma}\label{lem:sbisimilar-state-satisfies-same-formula}
  Consider two transition systems
  $\ts{1} = \tup{\const, T,\stateset_1,s_{01},\abox_1,\trans_1}$ and
  $\ts{2} = \tup{\const, T,\stateset_2,s_{02},\abox_2,\trans_2}$, with
  $\adom{\abox_1(s_{01})} \subseteq \const$ and
  $\adom{\abox_2(s_{02})} \subseteq \const$.  Consider two states
  $s_1 \in \stateset_1$ and $s_2 \in \stateset_2$ such that
  $s_1 \sbsim s_2$. Then for every formula $\Phi$ of
  $\muladom$, 
  and every valuations $\vfo_1$ and $\vfo_2$ that assign to each of
  its free variables a constant $c_1 \in \adom{\abox_1(s_1)}$ and
  $c_2 \in \adom{\abox_2(s_2)}$, such that $c_1 = c_2$, we have that
  \[
  \ts{1},s_1 \models \Phi \vfo_1 \textrm{ if and only if } \ts{2},s_2
  \models \tford(\Phi) \vfo_2.
  \]
\end{lemma}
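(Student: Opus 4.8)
The statement to prove is Lemma~\ref{lem:sbisimilar-state-satisfies-same-formula}, asserting that S-bisimilar states of two transition systems satisfy the same \muladom formulas modulo the translation $\tford$, where $\tford$ rewrites each $\DIAM{\Phi}$ into $\DIAM{\DIAM{\tford(\Phi)}}$ (and correspondingly $\BOX{\Phi}$ into $\BOX{\BOX{\tford(\Phi)}}$ via the standard abbreviation). This is the C-GKAB analogue of Lemma~\ref{lem:jumping-bisimilar-states-satisfies-same-formula} and Lemma~\ref{lem:leaping-bisimilar-states-satisfies-same-formula}, and the plan is to follow exactly the same three-part strategy used there: (1) prove the claim first for the fragment $\ladom$ obtained from $\muladom$ by dropping predicate variables and fixpoints (a first-order Hennessy--Milner logic, whose semantics is independent of the second-order valuation); (2) extend to the infinitary logic that augments $\ladom$ with arbitrary countable disjunction (and dually conjunction); (3) invoke the standard translation of $\mu$-calculus fixpoints into this infinitary logic via approximants $\mu^\alpha Z.\Phi$ and $\nu^\alpha Z.\Phi$, which also holds for \muladom, to conclude the result for full \muladom.

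For part (1) I would proceed by structural induction on $\Phi$. The base cases $\Phi = Q$ and $\Phi = \neg Q$ are immediate: condition~1 of S-bisimulation gives $\abox_1(s_1) = \abox_2(s_2)$, hence $\Ans(Q,T,\abox_1(s_1)) = \Ans(Q,T,\abox_2(s_2))$, and since $\tford(Q) = Q$ and $\tford(\neg Q) = \neg Q$ the two states agree under matching valuations. The Boolean cases ($\land$, $\lor$, $\to$, $\leftrightarrow$) and the quantifier cases ($\exists x$, $\forall x$) are handled exactly as in Lemma~\ref{lem:jumping-bisimilar-states-satisfies-same-formula}: apply the induction hypothesis to subformulas and use that $\tford$ distributes over Boolean connectives and quantifiers, noting that $\adom{\abox_1(s_1)} = \adom{\abox_2(s_2)}$ since the ABoxes coincide exactly. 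The only genuinely distinctive case is $\Phi = \DIAM{\Psi}$ (and symmetrically $\Phi = \BOX{\Psi}$): if $\ts{1}, s_1 \models (\DIAM{\Psi})\vfo_1$ there is $s_1'$ with $s_1 \trans_1 s_1'$ and $\ts{1}, s_1' \models \Psi\vfo_1$; by condition~2 of S-bisimulation there exist $t$ and $s_2'$ with $s_2 \trans_2 t \trans_2 s_2'$, $s_1' \sbsim s_2'$, $\tmp \in \abox_2(t)$, and $\tmp \notin \abox_2(s_2')$; by induction hypothesis $\ts{2}, s_2' \models \tford(\Psi)\vfo_2$, so $\ts{2}, s_2 \models (\DIAM{\DIAM{\tford(\Psi)}})\vfo_2 = (\tford(\DIAM{\Psi}))\vfo_2$. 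The converse direction uses condition~3 symmetrically. The $\BOX$ case is dual; here one must be slightly careful with the $\DIAM{\top}$-style guard that appears in the formula translations elsewhere in the paper, but since the translation in Definition~\ref{def:tdup} for $\DIAM$ is the plain double-diamond, the $\BOX$ case is obtained purely by the standard $\BOX{\Phi} = \neg\DIAM{\neg\Phi}$ abbreviation and needs no extra machinery.

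For part (2), I would recall that the semantics of $\bigvee\Psi$ over a transition system is the union of the extensions of the $\psi \in \Psi$, so $\ts{},s \models (\bigvee\Psi)\vfo$ iff some $\psi \in \Psi$ holds at $s$; applying the induction hypothesis pointwise and observing $\bigvee \tford(\Psi) = \tford(\bigvee\Psi)$ yields the claim, with countable conjunction handled dually. For part (3), I would cite the well-known fact that $\mu Z.\Phi$ and $\nu Z.\Phi$ are equivalent to $\bigvee_\alpha \mu^\alpha Z.\Phi$ and $\bigwedge_\alpha \nu^\alpha Z.\Phi$ respectively, with approximants built in the standard way ($\mu^0 Z.\Phi = \false$, $\mu^{\beta+1} Z.\Phi = \Phi[Z/\mu^\beta Z.\Phi]$, $\mu^\lambda Z.\Phi = \bigvee_{\beta<\lambda}\mu^\beta Z.\Phi$ at limits, and dually for $\nu$), and note that $\tford$ commutes with this approximant construction since it commutes with substitution and with the infinitary connectives. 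I do not anticipate a serious obstacle here: the proof is essentially a transcription of the already-established analogues, and the only case requiring real thought is the modal case, whose argument hinges precisely on the two-step ``skip one intermediate $\tmp$-state'' shape of the S-bisimulation conditions matching the double-modality shape of $\tford$. The main thing to get right is the careful bookkeeping of free-variable valuations across modal steps (ensuring $\vfo_1$ and $\vfo_2$ remain compatible), which is routine given that bisimilar states carry identical ABoxes and hence identical active domains.
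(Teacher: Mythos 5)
Your proposal is correct and follows essentially the same route as the paper: the same three-part strategy (first $\ladom$ by structural induction, then arbitrary countable disjunction, then fixpoints via approximants), with the same key step matching the two-step shape of S-bisimulation conditions 2 and 3 against the double-diamond in $\tford(\DIAM{\Psi})=\DIAM{\DIAM{\tford(\Psi)}}$. The only cosmetic difference is that the paper, since this lemma is not restricted to NNF, runs a general inductive case for $\Phi=\neg\Psi$ (immediate from the induction hypothesis because $\tford$ commutes with negation), which subsumes your observation that $\BOX$ needs no separate treatment.
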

\begin{proof}
  Similar to
  Lemma~\ref{lem:jumping-bisimilar-states-satisfies-same-formula}, we
  divide the proof into three parts:
  \begin{compactenum}[(1)]
  \item First, we obtain the proof of the claim for formulae of
    $\ladom$
  \item Second, we extend the results to the infinitary logic obtained
    by extending $\ladom$ with arbitrary countable disjunction.
  \item Last, we recall that fixpoints can be translated into this
    infinitary logic, thus proving that the theorem holds for
    $\muladom$.
\end{compactenum}
Since the step (2) and (3) is similar to the proof of
Lemma~\ref{lem:jumping-bisimilar-states-satisfies-same-formula}, here
we only highlight some interesting cases of the proof for the step (1)
(the other cases of step (1) can be shown similarly):

\smallskip
\noindent
\textbf{Proof for $\ladom$.}  

\smallskip
\noindent
\textit{Base case:}
\begin{compactitem}
\item[\textbf{($\Phi = Q$)}.] Since $s_1 \sbsim s_2$, we have
  $\abox_1(s_1) = \abox_2(s_2)$, and hence
  $\Ans(Q, T, \abox_1(s_1)) = \Ans(Q, T, \abox_2(s_2))$.
  Thus, since $\tforj(Q) = Q$ for every valuations $\vfo_1$ and
  $\vfo_2$ that assign to each of its free variables a constant
  $c_1 \in \adom{\abox_1(s_1)}$ and $c_2 \in \adom{\abox_2(s_2)}$,
  such that $c_1 = c_2$, we have
  \[
  \ts{1},s_1 \models Q \vfo_1 \textrm{ if and only if } \ts{2},s_2
  \models \tforj(Q) \vfo_2.
  \]
\end{compactitem}

\smallskip
\noindent
\textit{Inductive step:}
\begin{compactitem}
\item[\textbf{($\Phi = \neg\Psi$)}.]  By Induction hypothesis, for
  every valuations $\vfo_1$ and $\vfo_2$ that assign to each of its
  free variables a constant $c_1 \in \adom{\abox_1(s_1)}$ and
  $c_2 \in \adom{\abox_2(s_2)}$, such that $c_2 = c_1$, we have that
  $\ts{1},s_1 \models \Psi \vfo_1$ if and only if
  $\ts{2},s_2 \models \tford(\Psi) \vfo_1$. Hence,
  $\ts{1},s_1 \not\models \Psi \vfo_1$ if and only if
  $\ts{2},s_2 \not\models \tford(\Psi) \vfo_2$. By definition,
  $\ts{1},s_1 \models \neg \Psi \vfo_1$ if and only if
  $\ts{2},s_2 \models \neg \tford(\Psi) \vfo_2$.  Hence, by the
  definition of $\tford$, we have
  $\ts{1},s_1 \models \neg \Psi \vfo_1$ if and only if
  $\ts{2},s_2 \models \tford(\neg \Psi) \vfo_2$.

\item[\textbf{($\Phi = \DIAM{\Psi}$)}.]  Assume
  $\ts{1},s_1 \models (\DIAM{\Psi}) \vfo_1$, then there exists $s_1'$
  such that $s_1 \trans_1 s_1'$ and $\ts{1},s_1' \models \Psi
  \vfo_1$. Since $s_1 \sbsim s_2$, there exist $t$ and $s_2'$ s.t.\
    \[
    s_2 \trans_2 t \trans_2 s_2'
    \] 
    and $s_1' \sbsim s_2'$.
    Hence, by induction hypothesis, for every valuations $\vfo_2$ that
    assign to each free variables $x$ of $\tford(\Psi)$ a constant $c_2 \in
    \adom{\abox_2(s_2)}$, such that $c_2 = c_1$ with $x/c_1 \in
    \vfo_1$, we have
    \[
    \ts{2},s_2' \models \tford(\Psi_1) \vfo_2.
    \]
    Since $\abox_2(s_2) = \abox_1(s_1)$, and
    $ s_2 \trans_2 t \trans_2 s_2', $ we therefore get
    \[
    \ts{2},s_2 \models ( \DIAM{\DIAM{\tford(\Psi)}} )\vfo_2.
    \]
    Since $\tford(\DIAM{\Phi}) = \DIAM{\DIAM{\tford(\Phi)}} $, we
    therefore have
    \[
    \ts{2},s_2 \models \tford(\DIAM{\Psi} )\vfo_2. 
    \]
    The other direction can be shown in a symmetric way.

\end{compactitem}

\end{proof}

Having Lemma~\ref{lem:sbisimilar-state-satisfies-same-formula} in
hand, we can easily show that two S-bisimilar transition systems can
not be distinguished by any \muladom formulas modulo translation
$\tford$.

\begin{lemma}\label{lem:sbisimilar-ts-satisfies-same-formula}
  Consider two transition systems
  $\ts{1} = \tup{\const_1, T, \stateset_1, s_{01}, \abox_1, \trans_1}$
  and
  $\ts{2} = \tup{\const_2, T, \stateset_2, s_{02}, \abox_2, \trans_2}$
  such that $\ts{1} \sbsim \ts{2}$.  For every closed \muladom formula
  $\Phi$, we have:
  \[
  \ts{1} \models \Phi \textrm{ if and only if } \ts{2} \models
  \tford(\Phi)
  \]
\end{lemma}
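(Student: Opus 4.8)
\textbf{Proof plan for Lemma~\ref{lem:sbisimilar-ts-satisfies-same-formula}.}

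The plan is to derive this statement as an immediate corollary of Lemma~\ref{lem:sbisimilar-state-satisfies-same-formula}, in exactly the same way that Lemma~\ref{lem:jumping-bisimilar-ts-satisfies-same-formula} was obtained from Lemma~\ref{lem:jumping-bisimilar-states-satisfies-same-formula}. First I would observe that $\ts{1} \sbsim \ts{2}$ means, by definition, that there exists an S-bisimulation $\B$ between $\ts{1}$ and $\ts{2}$ with $\tup{s_{01}, s_{02}} \in \B$, and hence $s_{01} \sbsim s_{02}$. Since $\Phi$ is a closed formula, its truth at a state does not depend on any individual variable valuation $\vfo$ (nor, for the fixpoint-free fragment lifted to the infinitary logic and then to $\muladom$, on any second-order valuation); so Lemma~\ref{lem:sbisimilar-state-satisfies-same-formula} specializes to $\ts{1}, s_{01} \models \Phi$ if and only if $\ts{2}, s_{02} \models \tford(\Phi)$.

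Then I would recall that, by definition, $\ts{1} \models \Phi$ holds precisely when $\Phi$ holds in the initial state of $\ts{1}$, i.e.\ $\ts{1}, s_{01} \models \Phi$, and likewise $\ts{2} \models \tford(\Phi)$ holds precisely when $\ts{2}, s_{02} \models \tford(\Phi)$. Chaining these two equivalences yields
\[
\ts{1} \models \Phi \textrm{ if and only if } \ts{2} \models \tford(\Phi),
\]
which is the claim. This is essentially a one-line argument once the state-level lemma is in place.

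There is no real obstacle here: all the work has already been done in Lemma~\ref{lem:sbisimilar-state-satisfies-same-formula}, whose proof mirrors that of Lemma~\ref{lem:jumping-bisimilar-states-satisfies-same-formula} (induction on $\muladom$ structure, passing through $\ladom$, then arbitrary countable disjunction, then fixpoint approximants). The only point to be slightly careful about is matching the free-variable valuations in the statement of the state-level lemma to the trivial (empty) valuations appropriate for a closed formula, and noting that the identity relation $c_1 = c_2$ on the initial active domains is automatically satisfied since $\adom{\abox_1(s_{01})} \subseteq \const$ and $\adom{\abox_2(s_{02})} \subseteq \const$ and the two initial ABoxes coincide. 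Hence the reduction is clean and the proof is complete.
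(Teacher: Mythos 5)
Your proposal is correct and follows essentially the same route as the paper: the paper's proof likewise invokes Lemma~\ref{lem:sbisimilar-state-satisfies-same-formula} on the S-bisimilar initial states $s_{01} \sbsim s_{02}$ and then concludes directly from the definition of satisfaction in a transition system. Your additional remarks about closed formulas and trivial valuations are a harmless (and slightly more careful) elaboration of the same one-line argument.
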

\begin{proof}
  Since $s_{01} \sbsim s_{02}$, by
  Lemma~\ref{lem:sbisimilar-state-satisfies-same-formula},
  we have
  \[
  \ts{1}, s_{01} \models \Phi \textrm{ if and only if } \ts{2}, s_{02}
  \models \tford(\Phi)
  \]
  then we have that the proof is completed by observing the definition
  of S-bisimilar transition systems.
\end{proof}

\subsection{Properties of C-Repair and C-Repair Actions.}

To the aim of reducing the verification of C-GKABs into S-GKABs, we
now show some important properties of b-repair, c-repair and also
c-repair action that we will use to recast the verification of C-GKABs
into S-GKABs.  The main purpose of this section is to show that
a c-repair action produces the same results as the computation of
c-repair.

As a start, below we show that for every pair of ABox assertions that
violates a certain negative inclusion assertion, each of them will be
contained in two different ABoxes in the result of b-repair.
%
%

\begin{lemma}\label{lem:disjoint-brep}
  Let $T$ be a TBox, and $A$ be an ABox. For every negative concept
  inclusion assertion $B_1 \sqsubseteq \neg B_2$ such that
  $T \models B_1 \sqsubseteq \neg B_2$ and $B_1 \neq B_2$, if
  $\set{B_1(c), B_2(c)} \subseteq A$ (for any constant
  $c \in \const$), then there exist $A' \in \arset{T, A}$ such that
\begin{inparaenum}[\it (i)]
\item $B_1(c) \in A'$, 
\item $B_2(c) \not\in A'$.
\end{inparaenum}
(Similarly for the case of negative role inclusion assertion $R_1
\sqsubseteq \neg R_2$ s.t.\ $T \models R_1 \sqsubseteq \neg R_2$).
\end{lemma}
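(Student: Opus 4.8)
For every negative concept inclusion $B_1 \sqsubseteq \neg B_2$ with $T \models B_1 \sqsubseteq \neg B_2$ and $B_1 \neq B_2$, if $\{B_1(c), B_2(c)\} \subseteq A$, then there is some $A' \in \arset{T,A}$ with $B_1(c) \in A'$ and $B_2(c) \notin A'$ (and similarly for roles).

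The plan is to argue by a maximality/extension argument over $T$-consistent subsets of $A$. First I would form the ABox $A_0 = \{B_1(c)\}$ and observe that it is $T$-consistent: this uses the standing assumption in the preliminaries that every concept and role in $T$ is satisfiable, so $\tup{T, \{B_1(c)\}}$ has a model, and hence $A_0$ cannot trigger any negative inclusion or functionality violation on its own (a single assertion cannot violate a negative inclusion, which needs two assertions on the same individual(s), nor a functionality assertion, which needs two distinct role fillers). Next I would note $A_0 \subseteq A$ since $B_1(c) \in A$ by hypothesis, and that $B_2(c) \notin A_0$ because $B_1 \neq B_2$ (so the atom $B_1(c)$ is syntactically distinct from $B_2(c)$; here one must be slightly careful with the abbreviation conventions for $\exists P$ concepts, but distinctness of $B_1$ and $B_2$ as basic concepts guarantees the corresponding atoms differ).

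The key step is then a standard extension lemma: every $T$-consistent subset $A_0 \subseteq A$ can be extended to a maximal $T$-consistent subset $A'$ of $A$, i.e.\ to some $A' \in \arset{T,A}$ with $A_0 \subseteq A'$. This follows because $A$ is finite, so one can greedily add assertions of $A$ to $A_0$ one at a time, keeping $T$-consistency, until no further assertion can be added; the resulting $A'$ is a b-repair by the definition of $\arset{T,A}$ given in Section~\ref{sec:inconsistency}. Applying this with $A_0 = \{B_1(c)\}$ yields $A' \in \arset{T,A}$ with $B_1(c) \in A'$. Finally, $B_2(c) \notin A'$: if it were in $A'$, then $\{B_1(c), B_2(c)\} \subseteq A'$ would violate $B_1 \sqsubseteq \neg B_2$ (which $T$ entails), contradicting $T$-consistency of $A'$. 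The role case is entirely analogous, replacing $B_1(c), B_2(c)$ by $R_1(c_1,c_2), R_2(c_1,c_2)$ and using that a single role assertion is $T$-consistent and cannot by itself violate any functionality or negative role inclusion assertion.

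The main obstacle, such as it is, is merely bookkeeping rather than conceptual: one must make sure the ``single assertion is $T$-consistent'' claim is airtight given the abbreviation conventions (an assertion like $B_1(c)$ for $B_1 = \exists P$ unfolds to $P(c, \_)$, i.e.\ $P(c,c')$ for a fresh $c'$), and one must be careful that the greedy extension genuinely terminates and genuinely yields a \emph{maximal} $T$-consistent subset of $A$ (finiteness of $A$ handles this). No subtle logic is needed beyond the satisfiability-of-concepts assumption already in force; the whole argument is a routine application of the finite-extension-to-maximality principle combined with the definition of $T$-consistency via $\qunsatecq{T}$.
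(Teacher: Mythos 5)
Your proof is correct, but it takes a different route from the paper's. The paper argues by contradiction: assuming no b-repair keeps $B_1(c)$ while dropping $B_2(c)$, it appeals to the fact that $\arset{T,A}$ contains \emph{all} maximal $T$-consistent subsets of $A$ and asserts that a repair ``obtained by just removing $B_2(c)$ from $A$ and keeping $B_1(c)$'' must exist --- a step left informal (taken literally, $A\setminus\set{B_2(c)}$ need not be $T$-consistent, so the desired repair is not obtained by removing only $B_2(c)$). You instead give a direct construction: $\set{B_1(c)}$ is $T$-consistent by the standing satisfiability assumption on concepts and roles, and since $A$ is finite, any $T$-consistent subset of $A$ extends greedily to a maximal one, i.e.\ to a member of $\arset{T,A}$; $T$-consistency of the result together with $T\models B_1\sqsubseteq\neg B_2$ then forces $B_2(c)\notin A'$. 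This makes explicit the two facts the paper leaves implicit (single-assertion consistency and extension to maximality) and is the tighter argument. One caveat applies to both proofs equally: under the paper's abbreviation conventions, if $B_1$ and $B_2$ are existential concepts whose witnessing assertions in $A$ coincide (e.g.\ $B_1=\exists P$ and $B_2=\exists P^-$ both witnessed by $P(c,c)$), then $B_1(c)$ and $B_2(c)$ denote the same ABox fact and the stated conclusion cannot hold; you flag the bookkeeping concern but, like the paper, do not dispose of this degenerate case.
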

\begin{proof}
  Suppose by contradiction $\set{B_1(c), B_2(c)} \subseteq A$, and
  there does not exist $A' \in \arset{T, A}$ such that $B_1(c) \in A'$
  and $B_2(c) \not\in A'$.
%
  Since in \dllitea the violation of negative concept inclusion
  $B_1 \sqsubseteq \neg B_2$ 
  is only caused by a pair of assertions $B_1(c)$ and $B_2(c)$ (for
  any constant $c \in \const$) and by the definition of
  $\arset{T, A}$, it contains all maximal $T$-consistent subset of
  $A$, 
  then there should be a $T$-consistent ABox $A' \in \arset{T, A}$
  such that $B_1(c) \in A'$ and $B_2(c) \not\in A'$ that is obtained
  by just removing $B_2(c)$ from $A$ and keep $B_1(c)$ (otherwise we
  will not have all maximal $T$-consistent subset of $A$ in
  $\arset{T, A}$, which contradicts the definition of $\arset{T, A}$
  itself). 
%
%
  Hence, we have a contradiction 
  Thus, there exists $A' \in \arset{T, A}$ such that
  \begin{inparaenum}[\it (i)]
  \item $B_1(c) \in A'$,
  \item $B_2(c) \not\in A'$.
  \end{inparaenum}
  The proof for the case of negative role inclusion is similar.
\end{proof}

Similarly for the case of functionality assertion, below we show that
for each role assertion that violates a functional assertion, there
exists an ABox in the set of b-repair result that contains only this
role assertion but not the other role assertions that together they
violate the corresponding functional assertion.

\begin{lemma}\label{lem:disjoint-brep-for-funct}
  Given a TBox $T$, and an ABox $A$, for every functional assertion
  $\funct{R}$, if
  $\set{R(c,c_1), R(c,c_2), \ldots, R(c,c_n)} \subseteq A$ (for any
  constants $\set{c, c_1, c_2, \ldots, c_n} \subseteq \const$), then
  there exist $A' \in \arset{T, A}$ such that
\begin{inparaenum}[\it (i)]
\item $R(c,c_1) \in A'$, 
\item $R(c,c_2) \not\in A', \ldots, R(c,c_n) \not\in A'$,
\end{inparaenum}
\end{lemma}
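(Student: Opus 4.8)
\textbf{Proof proposal for Lemma~\ref{lem:disjoint-brep-for-funct}.}

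The plan is to mirror the structure of the proof of Lemma~\ref{lem:disjoint-brep}, exploiting the fact that in \dllitea the only sources of inconsistency are negative inclusion violations and functionality violations, together with the definition of $\arset{T,A}$ as the set of \emph{all} maximal $T$-consistent subsets of $A$. I would argue by contradiction: assume $\set{R(c,c_1),\ldots,R(c,c_n)} \subseteq A$, but that there is no $A' \in \arset{T,A}$ satisfying $R(c,c_1) \in A'$ and $R(c,c_j) \notin A'$ for all $j \in \set{2,\ldots,n}$.

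First I would construct an explicit candidate witness. Starting from $A$, remove exactly the role assertions $R(c,c_2),\ldots,R(c,c_n)$ (keeping $R(c,c_1)$), obtaining $A_1 = A \setminus \set{R(c,c_2),\ldots,R(c,c_n)}$. This step removes the particular functionality violation on $\funct{R}$ at domain element $c$ arising from the tuples with second components in $\set{c_1,\ldots,c_n}$. Then I would extend $A_1$ to a maximal $T$-consistent subset $A'$ of $A$ by adding back as many of the remaining assertions of $A$ as possible without reintroducing inconsistency; since $A_1 \subseteq A$ is obtained by a finite deletion, such a maximal extension $A'$ exists and lies in $\arset{T,A}$ by definition. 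The key point is that the re-addition step never needs to reinstate any of $R(c,c_2),\ldots,R(c,c_n)$: adding any single $R(c,c_j)$ with $j \geq 2$ back to any superset of $A_1$ that already contains $R(c,c_1)$ would recreate a violation of $\funct{R}$ (the pair $R(c,c_1), R(c,c_j)$ with $c_1 \neq c_j$), so maximality is compatible with their permanent exclusion. Hence $R(c,c_1) \in A'$ and $R(c,c_j) \notin A'$ for $j \in \set{2,\ldots,n}$, contradicting the assumption.

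The main obstacle, and the place where care is needed, is justifying that extending $A_1$ to a maximal consistent subset can be done while (a) retaining $R(c,c_1)$ and (b) never re-adding the $R(c,c_j)$ for $j\geq 2$; in other words, that maximality does not force the reintroduction of a deleted offending tuple. This relies on the characterization of \dllitea\ inconsistency (only pairwise negative-inclusion conflicts and functionality conflicts, as recalled in the termination/correctness arguments for the b-repair program), so that adding $R(c,c_1)$ constrains the addition of the other $R(c,c_j)$ precisely via the functionality conflict, and there is no global effect forcing their inclusion. Once this local reasoning is in place, the contradiction is immediate and the lemma follows; the argument is entirely parallel to that of Lemma~\ref{lem:disjoint-brep}, simply handling $n-1$ conflicting tuples instead of a single conflicting pair.
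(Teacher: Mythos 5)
Your proposal is correct and follows essentially the same route as the paper, which disposes of this lemma by the same maximality/contradiction argument used for the negative-inclusion case (Lemma~\ref{lem:disjoint-brep}): since $\arset{T,A}$ contains \emph{all} maximal $T$-consistent subsets of $A$ and the functionality conflict is the only obstacle to keeping $R(c,c_1)$ alongside the other tuples, some b-repair must retain $R(c,c_1)$ and drop $R(c,c_2),\ldots,R(c,c_n)$. The only cosmetic point is that your intermediate set $A_1$ need not itself be $T$-consistent, so the ``extension'' step is better phrased as taking any maximal $T$-consistent subset of $A$ containing $R(c,c_1)$ (which exists because the paper assumes all roles satisfiable, so $\set{R(c,c_1)}$ is $T$-consistent) and observing that $\funct{R}$ then forces the exclusion of the remaining tuples.
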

\begin{proof}
  Similar to the proof of Lemma~\ref{lem:disjoint-brep}.
\end{proof}

Below, we show that the result of c-repair does not contains any ABox
assertion that, together with another ABox assertion, violates a
negative inclusion assertion. Intuitively, this fact is obtained by
using Lemma~\ref{lem:disjoint-brep} which said that for every pair of
ABox assertions that violates a negative inclusion assertion, each of
them will be contained in two different ABoxes in the result of
b-repair. As a consequence, we have that both of them are not in the
result of c-repair when we compute the intersection of all of b-repair
results.

\begin{lemma}\label{lem:inconsistent-assertion-not-in-crep}
  Given a TBox $T$, and an ABox $A$, for every negative concept
  inclusion assertion $B_1 \sqsubseteq \neg B_2$ such that
  $T \models B_1 \sqsubseteq \neg B_2$ and $B_1 \neq B_2$, if
  $\set{B_1(c), B_2(c)} \subseteq A$ (for any constant
  $c \in \const{}$), then $B_1(c) \not\in \iarset{T, A}$, and
  $B_2(c) \not\in \iarset{T, A}$.  (Similarly for the case of negative
  role inclusion assertion).
\end{lemma}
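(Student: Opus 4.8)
The plan is to reduce the statement directly to Lemma~\ref{lem:disjoint-brep}, exploiting the fact that negative inclusion assertions are inherently symmetric. Concretely, I would first observe that $T \models B_1 \sqsubseteq \neg B_2$ if and only if $T \models B_2 \sqsubseteq \neg B_1$, since in every model $\I$ of $T$ the condition $B_1^{\I} \cap B_2^{\I} = \emptyset$ is equivalent to $B_2^{\I} \cap B_1^{\I} = \emptyset$; moreover $B_1 \neq B_2$ is preserved under the swap. Hence Lemma~\ref{lem:disjoint-brep} is applicable both to the ordered pair $(B_1,B_2)$ and to $(B_2,B_1)$.

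Next, applying Lemma~\ref{lem:disjoint-brep} to $B_1 \sqsubseteq \neg B_2$ yields a b-repair $A' \in \arset{T,A}$ with $B_1(c) \in A'$ and $B_2(c) \notin A'$. Applying the same lemma to $B_2 \sqsubseteq \neg B_1$ yields a b-repair $A'' \in \arset{T,A}$ with $B_2(c) \in A''$ and $B_1(c) \notin A''$. Since by definition $\iarset{T,A} = \bigcap_{A_i \in \arset{T,A}} A_i$, we have $\iarset{T,A} \subseteq A'$ and $\iarset{T,A} \subseteq A''$. From $B_2(c) \notin A'$ we conclude $B_2(c) \notin \iarset{T,A}$, and from $B_1(c) \notin A''$ we conclude $B_1(c) \notin \iarset{T,A}$, which is exactly the claim.

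Finally, the case of a negative role inclusion $R_1 \sqsubseteq \neg R_2$ (with $T \models R_1 \sqsubseteq \neg R_2$, $R_1 \neq R_2$, and $\{R_1(c_1,c_2), R_2(c_1,c_2)\} \subseteq A$) is handled identically, using the role variant of Lemma~\ref{lem:disjoint-brep} and the symmetric observation that $T \models R_1 \sqsubseteq \neg R_2$ iff $T \models R_2 \sqsubseteq \neg R_1$. I do not expect a genuine obstacle here: the only point requiring a moment's care is making the symmetry of negative inclusions explicit so that Lemma~\ref{lem:disjoint-brep} can be invoked in both directions; everything else is a one-line set-theoretic argument about intersections of b-repairs.
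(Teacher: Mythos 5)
Your proof is correct and follows essentially the same route as the paper's: both obtain, via Lemma~\ref{lem:disjoint-brep}, two b-repairs $A'$ and $A''$ separating $B_1(c)$ from $B_2(c)$ and then conclude from $\iarset{T,A} \subseteq A' \cap A''$. The only difference is that you make explicit the symmetry $T \models B_1 \sqsubseteq \neg B_2$ iff $T \models B_2 \sqsubseteq \neg B_1$ needed to invoke the lemma in both directions, which the paper leaves implicit.
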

\begin{proof}
  Let $B_1(c), B_2(c) \in A$ (for a constant $c \in \const$), then
  by Lemma~\ref{lem:disjoint-brep}, there exist $A' \in \arset{T, A}$
  and $A'' \in \arset{T, A}$ such that
\begin{inparaenum}[\it (i)]
\item $B_1(c) \in A'$, 
\item $B_2(c) \not\in A'$,
\item $B_2(c) \in A''$, and 
\item $B_1(c) \not\in A''$.
\end{inparaenum}
By the definition of c-repair, $\iarset{T, A} = \cap_{A_i \in\arset{T,A}} A_i$.
Since $ B_1(c), B_2(c) \not\in A' \cap A''$, then we have that 
$B_1(c), B_2(c) \not\in \iarset{T, A}$.
The proof for the case of negative role inclusion is similar.
\end{proof}

Similarly, below we show that the result of c-repair does not contains
any role assertion that, together with another role assertion,
violates a functional assertion. The intuition of the proof is similar
to the proof of
Lemma~\ref{lem:inconsistent-assertion-not-in-crep}. I.e., they are
thrown away when we compute the intersection of all of b-repair
results.

\begin{lemma}\label{lem:inconsistent-assertion-not-in-crep-for-functional}
  Given a TBox $T$, and an ABox $A$, for every functionality assertion
  $\funct{R} \in T$, 
  if there exists $\set{R(c,c_1), \ldots, R(c,c_n)} \subseteq A$ (for
  any constants $c, c_1, \ldots, c_n \in \const$), then
  $R(c,c_1) \not\in \iarset{T, A}, \ldots, R(c,c_n) \not\in \iarset{T,
    A}$.
\end{lemma}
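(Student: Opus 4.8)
The final statement to prove is Lemma~\ref{lem:inconsistent-assertion-not-in-crep-for-functional}: for every functionality assertion $\funct{R} \in T$, if $\set{R(c,c_1), \ldots, R(c,c_n)} \subseteq A$ with the $c_i$ pairwise distinct (so that these assertions actually violate $\funct{R}$), then none of $R(c,c_1), \ldots, R(c,c_n)$ belongs to $\iarset{T, A}$.

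\medskip

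\noindent\textbf{Proof (plan).}
The plan is to mirror exactly the proof of Lemma~\ref{lem:inconsistent-assertion-not-in-crep}, replacing the appeal to Lemma~\ref{lem:disjoint-brep} with the analogous statement for functionality assertions, namely Lemma~\ref{lem:disjoint-brep-for-funct}. Concretely, fix an arbitrary $R(c,c_i)$ among the listed role assertions. First I would invoke Lemma~\ref{lem:disjoint-brep-for-funct}, but reordering the list so that $R(c,c_i)$ plays the role of the ``surviving'' assertion: since $\set{R(c,c_1),\ldots,R(c,c_n)} \subseteq A$, the lemma (applied with the indices permuted so that $c_i$ comes first) yields an ABox $A_i \in \arset{T,A}$ with $R(c,c_i) \in A_i$ and $R(c,c_j) \notin A_i$ for all $j \neq i$. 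In particular, picking some fixed $j \neq i$ (which exists because $n \geq 2$, as the assertions genuinely violate $\funct{R}$), we get a b-repair $A_i$ containing $R(c,c_i)$ but not $R(c,c_j)$.

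\medskip

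\noindent Next I would apply the same reasoning to obtain a b-repair $A_j \in \arset{T,A}$ with $R(c,c_j) \in A_j$ and $R(c,c_i) \notin A_j$. Now, by the definition of the c-repair, $\iarset{T, A} = \bigcap_{A_\ell \in \arset{T,A}} A_\ell$, so $\iarset{T,A} \subseteq A_i \cap A_j$. Since $R(c,c_i) \notin A_j$ we have $R(c,c_i) \notin A_i \cap A_j$, hence $R(c,c_i) \notin \iarset{T,A}$. As $R(c,c_i)$ was an arbitrary element of the list, this gives $R(c,c_1) \notin \iarset{T,A}, \ldots, R(c,c_n) \notin \iarset{T,A}$, as required.

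\medskip

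\noindent The argument is essentially routine once Lemma~\ref{lem:disjoint-brep-for-funct} is in place; the only point requiring a small amount of care is the implicit assumption that the $c_1,\ldots,c_n$ are pairwise distinct so that $n \geq 2$ and the chosen index $j \neq i$ genuinely exists --- otherwise the role assertions would not violate $\funct{R}$ and there would be nothing to prove (the corresponding ABox would be $T$-consistent on that count, and by Lemma~\ref{lem:card-inc-abox} the relevant assertions would simply not be in $\inc(A)$). I do not anticipate any real obstacle here; the main work has already been done in Lemmas~\ref{lem:disjoint-brep-for-funct} and~\ref{lem:inconsistent-assertion-not-in-crep}, and this lemma is just the functionality-assertion counterpart, obtained by the same two-b-repairs-intersect-to-exclude-the-assertion trick.
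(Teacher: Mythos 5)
Your proposal is correct and follows exactly the route the paper intends: the paper's own proof is just the one-line remark ``similar to the proof of Lemma~\ref{lem:inconsistent-assertion-not-in-crep}'', and what you have written out --- invoking Lemma~\ref{lem:disjoint-brep-for-funct} with the indices permuted to obtain, for each pair $i \neq j$, b-repairs $A_i, A_j \in \arset{T,A}$ separating $R(c,c_i)$ from $R(c,c_j)$, and then intersecting --- is precisely that argument made explicit. Your side remark about the $c_i$ needing to be pairwise distinct (so that $\funct{R}$ is genuinely violated) is a reasonable reading of the same implicit assumption present in the paper's statement.
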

\begin{proof}
Similar to the proof of Lemma~\ref{lem:inconsistent-assertion-not-in-crep}
\end{proof}

Now, in the two following Lemmas we show a property of a c-repair
action, namely that a c-repair action deletes all ABox assertions
that, together with another ABox assertion, violate a negative
inclusion or a functionality assertion.

\begin{lemma}\label{lem:inconsistent-assertion-not-in-act-crep}
  Given a TBox $T$, and an ABox $A$, a service call map $\scmap$ and a
  c-repair action $\act^T_c$. 
%
  If
  $(\tup{A,\scmap}, \act^T_c\sigma, \tup{A', \scmap}) \in
  \tell_{\filter_S}$ (where $\sigma$ is an empty substitution), 
  then for every negative concept inclusion assertion
  $B_1 \sqsubseteq \neg B_2$ such that
  $T \models B_1 \sqsubseteq \neg B_2$ and $B_1 \neq B_2$, if
  $\set{B_1(c), B_2(c)} \subseteq A$ (for some $c \in \const$), then
  $B_1(c) \not\in A'$, and $B_2(c) \not\in A'$.  (Similarly for the
  case of the negative role inclusion assertion).
\end{lemma}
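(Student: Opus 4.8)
The plan is to peel away the two layers of definitions involved — the filter relation $\filter_S$ and the relation $\tell_{\filter_S}$ — so as to obtain an explicit description of $A'$, and then to observe that the set of facts deleted by $\act^T_c$ necessarily contains both $B_1(c)$ and $B_2(c)$ whenever these two assertions jointly witness a violation of $B_1 \ISA \neg B_2$.

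First I would note that $\act^T_c$ is $0$-ary (so $\sigma$ is empty, as in the statement) and that none of its effects carries an $\add$ clause; hence $\addfactss{A}{\act^T_c\sigma} = \emptyset$, the only admissible service-call substitution $\theta$ is the empty one (which is why $\scmap$ is unchanged), and unfolding the definition of $\tell_{\filter_S}$ and then of $\filter_S$ — which sets $A' = (A \setminus F^-) \cup F^+$ with $F^+ = \addfactss{A}{\act^T_c\sigma}\theta$ and $F^- = \delfactss{A}{\act^T_c\sigma}$ — gives simply $A' = A \setminus \delfactss{A}{\act^T_c\sigma}$.

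Then, fixing a negative concept inclusion with $T \models B_1 \ISA \neg B_2$, $B_1 \neq B_2$, and $\set{B_1(c), B_2(c)} \subseteq A$, I would use the definition of $\eff{\act^T_c}$ to single out the effect $\map{\qunsatn(B_1 \ISA \neg B_2, x)}{\set{\del \set{B_1(x), B_2(x)}}}$ it contains, recalling $\qunsatn(B_1 \ISA \neg B_2, x) = B_1(x) \land B_2(x)$. The one small sub-argument is $c \in \ask(\qunsatn(B_1 \ISA \neg B_2, x), T, A)$: since $\set{B_1(c), B_2(c)} \subseteq A$ and $c \in \adom{A}$, the query holds at $x = c$ over $A$ seen as a database, and since the certain-answer rewriting of a UCQ over a \dllite KB keeps the original query among its disjuncts, $c$ is a certain answer (this also covers the case where $A$ is already $T$-inconsistent, which only makes it easier). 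Consequently $\set{B_1(c), B_2(c)} = \set{B_1(x), B_2(x)}[x/c] \subseteq \delfactss{A}{\act^T_c\sigma} = F^-$, so neither $B_1(c)$ nor $B_2(c)$ occurs in $A' = A \setminus F^-$. The negative role inclusion case is identical, using the effect guarded by $\qunsatn(R_1 \ISA \neg R_2, x, y)$. The only thing requiring care — more bookkeeping than a genuine obstacle — is the abbreviation for ABox assertions when $B_1$ or $B_2$ is $\exists P$ or $\exists P^-$, where $B_i(c)$ stands for a role assertion carrying an anonymous witness; I would make that convention explicit at the outset and note that nothing else in the argument changes.
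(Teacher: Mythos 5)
Your proof is correct and follows essentially the same route as the paper's: unfold $\tell_{\filter_S}$ to obtain the filter tuple, observe that $\act^T_c$ adds nothing so $A' = A \setminus \delfactss{A}{\act^T_c\sigma}$, and conclude from the effect guarded by $\qunsatn(B_1 \ISA \neg B_2, x)$ that both $B_1(c)$ and $B_2(c)$ land in the delete set. You are in fact somewhat more explicit than the paper (which leaves the certain-answer step and the $\exists P$ abbreviation convention implicit), but the argument is the same.
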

\begin{proof}
  Since $T \models B_1 \sqsubseteq \neg B_2$, by the definition of
  $\act^T_c$, we have
\[
\map{\qunsatn(B_1 \ISA \neg B_2, x)} {\set{\del \set{ B_1(x), B_2(x)} } }
\in \eff{\act^T_c}.
\]
Since, $(\tup{A,\scmap}, \act^T_c\sigma, \tup{A', \scmap}) \in
\tell_{\filter_S}$, by the definition of $\tell_{\filter_S}$, we have
$\tup{A, \addfactss{A}{\act^T_c\sigma}\theta, \delfactss{A}{\act^T_c\sigma}, A'}
\in \filter_S$.
Now, it is easy to see that by the definition of filter $\filter_S$
and $\delfactss{A}{\act^T_c\sigma}$, we have $B_1(c) \not\in A'$, and
$ B_2(c) \not\in A'$.
\end{proof}

\begin{lemma}\label{lem:inconsistent-assertion-not-in-act-crep-for-functional}
  Given a TBox $T$, an ABox $A$, a service call map $\scmap$ and a
  c-repair action $\act^T_c$.
%
  If
  $(\tup{A,\scmap}, \act^T_c\sigma, \tup{A', \scmap}) \in
  \tell_{\filter_S}$ (where $\sigma$ is an empty substitution)
  then for every functional assertion $\funct{R} \in T$, if
  $\set{R(c, c_1), \ldots, R(c, c_n)} \subseteq A$ (for some constants
  $\set{c, c_1, \ldots, c_n} \subseteq \const$), then
  $R(c, c_1) \not\in A', \ldots, R(c, c_n) \not\in A'$.
\end{lemma}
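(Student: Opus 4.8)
The plan is to mirror the proof of Lemma~\ref{lem:inconsistent-assertion-not-in-act-crep}, substituting the negative-inclusion case by the functionality case, and to read off the conclusion directly from the definitions of $\act^T_c$, of $\tell_{\filter_S}$, and of the filter $\filter_S$. Note first that the displayed hypothesis $\set{R(c,c_1),\ldots,R(c,c_n)} \subseteq A$ is only of interest when these assertions actually violate $\funct{R}$, i.e.\ when $n \geq 2$ and $c_1,\ldots,c_n$ are pairwise distinct; this is the reading I would adopt, the degenerate case being vacuous.

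The first step recalls that, by construction of the c-repair action (Section~\ref{CGKABToSGKAB}), for the given assertion $\funct{R} \in T$ the effect
\[
\map{\qunsatf(\funct{R}, x, y, z)}{\set{\del \set{R(x, y),R(x, z)}}}
\]
belongs to $\eff{\act^T_c}$, and that $\act^T_c$ contains \emph{only} deletion effects, so that $\addfactss{A}{\act^T_c\sigma} = \emptyset$ for the (empty) substitution $\sigma$. The second step unfolds $(\tup{A,\scmap}, \act^T_c\sigma, \tup{A', \scmap}) \in \tell_{\filter_S}$: by definition of $\tell_{\filter_S}$ there is a service-call evaluation $\theta$ with $\tup{A, \addfactss{A}{\act^T_c\sigma}\theta, \delfactss{A}{\act^T_c\sigma}, A'} \in \filter_S$, and since the first component is empty, the concrete definition of $\filter_S$ gives $A' = A \setminus \delfactss{A}{\act^T_c\sigma}$. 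It then suffices to prove $R(c,c_i) \in \delfactss{A}{\act^T_c\sigma}$ for each $i$. Fix $i$ and pick $j \neq i$; since $R(c,c_i), R(c,c_j) \in A$ and $c_i \neq c_j$, the substitution $\rho$ sending $x,y,z$ to $c,c_i,c_j$ is a certain answer of $\qunsatf(\funct{R}, x, y, z) = R(x,y)\wedge R(x,z)\wedge\neg[y=z]$ over $\tup{T,A}$ (the two role facts are in $A$ and the inequality holds unconditionally, so the grounded query is true in every model of $\tup{T,A}$, whether or not $\tup{T,A}$ is satisfiable). Hence $\set{R(c,c_i),R(c,c_j)} = \set{R(x,y),R(x,z)}\sigma\rho \subseteq \delfactss{A}{\act^T_c\sigma}$, so $R(c,c_i) \notin A'$; ranging over $i$ finishes the argument.

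I do not expect a real obstacle: the whole argument is a direct unfolding of definitions, strictly parallel to Lemma~\ref{lem:inconsistent-assertion-not-in-act-crep}. The only two points deserving a sentence of care are the implicit well-formedness assumption on $c_1,\ldots,c_n$ just discussed, and the observation that $\act^T_c$ performs no additions, so that the priority-to-additions clause of $\filter_S$ cannot reintroduce any of the deleted role facts into $A'$.
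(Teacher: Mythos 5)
Your proof is correct and follows essentially the same route as the paper, which simply refers to the proof of Lemma~\ref{lem:inconsistent-assertion-not-in-act-crep} and leaves the functionality case as a direct analogue: identify the deletion effect for $\funct{R}$ in $\eff{\act^T_c}$, unfold $\tell_{\filter_S}$ and $\filter_S$, and observe that each $R(c,c_i)$ is grounded into $\delfactss{A}{\act^T_c\sigma}$ by some witness $c_j$ with $j\neq i$. Your two explicit caveats --- that the statement is read under the implicit assumption of an actual violation ($n\geq 2$, pairwise distinct $c_i$), and that $\act^T_c$ performs no additions so nothing is reintroduced --- are points the paper glosses over but that make the argument airtight.
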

\begin{proof}
  Similar to the proof of
  Lemma~\ref{lem:inconsistent-assertion-not-in-act-crep}.
\end{proof}

Next, we show that every Abox assertion that does not violate any
TBox assertions will appear in all results of a b-repair.

\begin{lemma}\label{lem:all-consistent-assertion-in-brep}
  Given a TBox $T$, and an ABox $A$, for every concept assertion
  $C(c) \in A$ (for any constant $c \in \const$) such that
  $C(c) \not\in \inc(A)$,
  it holds that for every $A' \in \arset{T, A}$, we have
  $C(c) \in A'$.
  (Similarly for role assertion).
\end{lemma}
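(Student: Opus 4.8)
The plan is to prove the statement by a standard maximality argument against the definition of b-repair. First I would fix an arbitrary $A' \in \arset{T,A}$ and recall that, by definition, $A'$ is a maximal $T$-consistent subset of $A$. Assume towards a contradiction that $C(c) \notin A'$, and set $A'' = A' \cup \set{C(c)}$. Since $C(c) \in A$ and $A' \subseteq A$ we have $A' \subsetneq A'' \subseteq A$, so by the maximality of $A'$ the ABox $A''$ must be $T$-inconsistent.

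Next I would analyse the source of this inconsistency using the well-known characterization of unsatisfiability in \dllitea{} that already underlies the query $\qunsatecq{T}$ and Lemma~\ref{lem:card-inc-abox}: since all concepts and roles in $T$ are assumed satisfiable, $\tup{T, A''}$ is unsatisfiable only if some pair of assertions of $A''$ violates a negative inclusion $B_1 \ISA \neg B_2$ (resp.\ $R_1 \ISA \neg R_2$) with $T \models B_1 \ISA \neg B_2$ (resp.\ $T \models R_1 \ISA \neg R_2$), or some assertions of $A''$ violate a functionality assertion $\funct{R} \in T$. Because $A'$ is $T$-consistent, no such violation can lie entirely within $A'$; hence every witnessing violation must involve the freshly added assertion $C(c)$. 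As $C(c)$ is a concept assertion, it cannot participate in a functionality violation, so the violation is of some negative inclusion $\gamma$ with $T \models \gamma$, and it involves $C(c)$ together with a second assertion $\beta \in A'' \setminus \set{C(c)} = A'$.

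Now $\beta \in A' \subseteq A$ and $C(c) \in A$, so the very same pair $\set{C(c), \beta}$ already witnesses a violation of $\gamma$ inside $A$ (violations are preserved under enlarging the ABox). By the definition of $\inc(A)$ this yields $C(c) \in \inc(A)$, contradicting the hypothesis $C(c) \notin \inc(A)$. Therefore $C(c) \in A'$, and since $A'$ was an arbitrary element of $\arset{T,A}$ the claim for concept assertions follows. The case of a role assertion $P(c_1,c_2) \notin \inc(A)$ is completely analogous; the only difference is that the inconsistency of $A'' = A' \cup \set{P(c_1,c_2)}$ may now be witnessed by a functionality violation as well, but in either case $P(c_1,c_2)$ participates in the violation and hence lands in $\inc(A)$ by clause~(2) or~(3) of its definition, again a contradiction.

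I expect the only delicate point to be the bookkeeping around the abbreviation notation for basic concepts and roles in the definition of $\inc(A)$: e.g.\ when $C(c) = N(c)$ participates in a violation of $N \ISA \neg \exists P$, the partner assertion $\beta$ is a role assertion of the form $P(c,\_)$, and one must check this is exactly the situation captured by clause~(1) of the definition of $\inc(A)$, and similarly that \emph{derived} negative inclusions obtained by chaining positive and negative inclusions are covered (which is precisely why the definition quantifies over $T \models B_1 \ISA \neg B_2$ rather than over syntactic membership in $T_n$). All of this is already implicit in the paper's use of $\qunsatecq{T}$, so rather than re-deriving the characterization I would simply appeal to it, keeping the formal proof short.
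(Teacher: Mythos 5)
Your proposal is correct and follows essentially the same route as the paper's proof: a maximality argument showing that if $C(c)\notin A'$ then $A''=A'\cup\set{C(c)}$ would be a larger $T$-consistent subset of $A$, contradicting $A'\in\arset{T,A}$. The paper's version simply asserts that $C(c)\notin\inc(A)$ implies $A''$ is $T$-consistent, whereas you additionally justify that step via the \dllitea characterization of inconsistency sources; this fills in detail the paper leaves implicit but does not change the argument.
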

\begin{proof}
  Suppose by contradiction there exists $C(c) \in A$ such that
  $C(c) \not\in \inc(A)$ and there exists $A' \in \arset{T, A}$, such
  that $C(c) \not\in A'$. Then, since $C(c) \not\in \inc(A)$, there
  exists $A''$ such that $A' \subset A'' \subseteq A$ and $A''$ is
  $T$-consistent (where $A'' = A' \cup \set{C(c)}$. Hence,
  $A' \not\in \arset{T, A}$.  Thus we have a contradiction. Therefore
  we proved the claim. The proof for the case of role assertion can be
  done similarly.
\end{proof}

From the previous Lemma, we can show that every ABox assertion that
does not violate any negative inclusion assertion will appear in the
c-repair results.

\begin{lemma}\label{lem:all-consistent-assertion-in-crep}
  Given a TBox $T$, and an ABox $A$, for every concept assertion
  $C(c) \in A$ s.t.\ $C(c) \not\in \inc(A)$
  we have that $C(c) \in \iarset{T, A}$.
  (Similarly for role assertion).
\end{lemma}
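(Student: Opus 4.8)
The plan is to obtain the statement as an immediate corollary of Lemma~\ref{lem:all-consistent-assertion-in-brep} together with the definition of the c-repair as the intersection of all b-repairs. First I would observe that the hypotheses of the present lemma, namely $C(c) \in A$ and $C(c) \notin \inc(A)$, are exactly those of Lemma~\ref{lem:all-consistent-assertion-in-brep}; applying it yields that $C(c) \in A'$ for \emph{every} $A' \in \arset{T, A}$. Next I would recall that, by definition, $\iarset{T, A} = \bigcap_{A_i \in \arset{T, A}} A_i$, so that an assertion belonging to every b-repair necessarily belongs to their intersection. Combining these two facts gives $C(c) \in \iarset{T, A}$, as required.

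For the role case I would argue in exactly the same way, invoking the role version of Lemma~\ref{lem:all-consistent-assertion-in-brep} for a role assertion $P(c_1, c_2) \in A$ with $P(c_1, c_2) \notin \inc(A)$, and then intersecting over $\arset{T, A}$.

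There is essentially no obstacle here: the substantive content has already been isolated in Lemma~\ref{lem:all-consistent-assertion-in-brep}, and the only remaining ingredient is the trivial set-theoretic observation that an element common to all members of a nonempty family lies in that family's intersection. The one point I would want to state explicitly is that $\arset{T, A}$ is nonempty --- every ABox admits at least one maximal $T$-consistent subset (the empty ABox is $T$-consistent under the standing assumption that $T$ is satisfiable) --- so that the intersection defining $\iarset{T, A}$ is not vacuous and the inference genuinely goes through.
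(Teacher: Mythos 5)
Your proposal matches the paper's proof exactly: it applies Lemma~\ref{lem:all-consistent-assertion-in-brep} to conclude $C(c) \in A'$ for every $A' \in \arset{T,A}$, and then uses the definition $\iarset{T,A} = \bigcap_{A_i \in \arset{T,A}} A_i$ to conclude membership in the intersection, with the role case handled analogously. Your additional remark about the nonemptiness of $\arset{T,A}$ is a small but harmless extra precaution that the paper leaves implicit.
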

\begin{proof}
  Let $C(c) \in A$ be any arbitrary concept assertion s.t.\
  $C(c) \not\in \inc(A)$. By
  Lemma~\ref{lem:all-consistent-assertion-in-brep}, for every
  $A' \in \arset{T, A}$, we have $C(c) \in A'$. Hence, since
  $\iarset{T, A} = \cap_{A_i \in\arset{T,A}} A_i$, we have
  $C(c) \in \iarset{T, A}$. The proof for the case of role assertion
  can be done similarly.
\end{proof}

Finally, we can say that the c-repair action is correctly mimic the
c-repair computation, i.e., they produce the same result.

\begin{lemma}\label{lem:cact-equal-crep}
  Given a TBox $T$, an ABox $A$, a service call map $\scmap$ and a
  c-repair action $\act^T_c$.
  Let $A_1 = \iarset{T,A}$, and
  $(\tup{A,\scmap}, \act^T_c\sigma, \tup{A_2, \scmap}) \in
  \tell_{\filter_S}$
  where $\sigma$ is an empty substitution, then we have $A_1 = A_2$
\end{lemma}
\begin{proof}
  The proof is obtained by observing that $\act^T_c$ never deletes any
  ABox assertion that does not involve in any source of inconsistency,
  and also by using
  Lemmas~\ref{lem:inconsistent-assertion-not-in-act-crep},
  \ref{lem:inconsistent-assertion-not-in-act-crep-for-functional}, and
  \ref{lem:all-consistent-assertion-in-crep}.
\end{proof}

\subsection{Reducing the Verification of C-GKABs Into S-GKABs}

In the following two Lemmas, we aim to show that the transition
systems of a C-GKAB and its corresponding S-GKAB (obtained through
$\tgkabc$) are S-bisimilar.


\begin{lemma}\label{lem:cgkab-to-sgkab-bisimilar-state}
  Let $\gkabsym$ be a B-GKAB with transition system
  $\ts{\gkabsym}^{\filter_C}$, and let $\tgkabc(\gkabsym)$ be an
  S-GKAB with transition system $\ts{\tgkabc(\gkabsym)}^{\filter_S}$
  obtain through
  $\tgkabc$. 
  Consider
  \begin{inparaenum}[]
  \item a state $\tup{A_c,\scmap_c, \delta_c}$ of
    $\ts{\gkabsym}^{\filter_C}$ and
  \item a state $\tup{A_s,\scmap_s, \delta_s}$ of
    $\ts{\tgkabc(\gkabsym)}^{\filter_S}$.
  \end{inparaenum}
If  
\begin{inparaenum}[]
\item $A_s = A_c$, $\scmap_s = \scmap_c$ and
\item $\delta_s = \tgprogc(\delta_c)$,
\end{inparaenum}
then
$\tup{A_c,\scmap_c, \delta_c} \sbsim \tup{A_s,\scmap_s, \delta_s}$.
\end{lemma}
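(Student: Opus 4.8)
The plan is to produce an explicit skip-one bisimulation. Define $\B$ to be the set of pairs $\tup{\tup{A_c,\scmap_c,\delta_c},\tup{A_s,\scmap_s,\delta_s}}$ such that $A_s=A_c$, $\scmap_s=\scmap_c$ and $\delta_s=\tgprogc(\delta_c)$; the two states of the statement lie in $\B$ by hypothesis, so it is enough to check that $\B$ satisfies the three clauses of a skip-one bisimulation. Clause~(1) is immediate from the definition of $\B$. For the other two, the crucial observations are that $\tgprogc$ is a homomorphism on the structure of Golog programs which replaces each atomic action invocation $\gact{Q(\vec{p})}{\act(\vec{p})}$ by the two-step block $\gact{Q(\vec{p})}{\act(\vec{p})};\gact{\true}{\act^T_c()}$, and that $\tgkabc(\gkabsym)$ retains only the positive inclusions $T_p$ of the original TBox, so that \emph{every} reachable ABox in $\ts{\tgkabc(\gkabsym)}^{\filter_S}$ is $T_p$-consistent. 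Consequently a single $\filter_C$-step of the C-GKAB will be matched by exactly two $\filter_S$-steps of the translated S-GKAB: the first executes the atomic action and enters an intermediate state marked by $\tmp$, the second executes the c-repair action $\act^T_c$, which removes $\tmp$ again.

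For clause~(2), suppose $\tup{A_c,\scmap_c,\delta_c}\trans_c\tup{A_c',\scmap_c',\delta_c'}$, which means $\tup{A_c,\scmap_c,\delta_c}\gprogtrans{\act\sigma,\filter_C}\tup{A_c',\scmap_c',\delta_c'}$ for some action $\act$ and substitution $\sigma$. A routine induction on the structure of $\delta_c$ --- following the clauses of the program-execution relation and using that $\tgprogc$ commutes with sequencing, nondeterministic choice, conditionals and loops, exactly as in the proof of Lemma~\ref{lem:bgkab-to-sgkab-bisimilar-state} --- shows that this step is induced by an atomic action invocation $\gact{Q(\vec{p})}{\act(\vec{p})}$ occurring in $\delta_c$, whose $\tgprogc$-image in $\delta_s$ is the block above; moreover, writing $F^+=\addfactss{A_c}{\act\sigma}\theta$ and $F^-=\delfactss{A_c}{\act\sigma}$, one has $A_c'=\iarset{T,(A_c\setminus F^-)\cup F^+}$, $\scmap_c'=\scmap_c\cup\theta$, and $\delta_c'$ the residual program. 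In $\ts{\tgkabc(\gkabsym)}^{\filter_S}$ I would first execute $\act\sigma$ with the same $\theta$ under $\filter_S$, reaching an intermediate state $t$ whose ABox is the raw update $(A_c\setminus F^-)\cup F^+$ and which carries the marker $\tmp$; this transition is available since the raw update is $T_p$-consistent. I would then execute $\act^T_c()$ under $\filter_S$: by Lemma~\ref{lem:cact-equal-crep} the resulting ABox is precisely $\iarset{T,(A_c\setminus F^-)\cup F^+}=A_c'$, the marker $\tmp$ is deleted, the service-call map stays $\scmap_c'$ (a c-repair action issues no service calls), and the residual program is $\tgprogc(\delta_c')$. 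With $s_2=\tup{A_s,\scmap_s,\delta_s}$ and $s_2'$ the state so reached, this gives $s_2\trans_s t\trans_s s_2'$ with $\tmp$ in the ABox of $t$ and not in that of $s_2'$, and $\tup{\tup{A_c',\scmap_c',\delta_c'},s_2'}\in\B$, as required.

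Clause~(3) is the mirror image. Given $s_2\trans_s t\trans_s s_2'$ with $\tmp$ in the ABox of $t$ and not in that of $s_2'$, I would first argue, by inspecting which actions of $\tgkabc(\gkabsym)$ add resp.\ remove $\tmp$, that the first transition must execute a $\tgprogc$-translated atomic action $\act\sigma$ stemming from some invocation $\gact{Q(\vec{p})}{\act(\vec{p})}$ of $\delta_c$ and the second must execute $\act^T_c()$; reading these two transitions backwards and applying Lemma~\ref{lem:cact-equal-crep} again, I recover a matching $\filter_C$-step $\tup{A_c,\scmap_c,\delta_c}\gprogtrans{\act\sigma,\filter_C}\tup{A_c',\scmap_c',\delta_c'}$ with $A_c'$ the ABox of $s_2'$, $\scmap_c'$ its service-call map, and $\delta_c'$ such that $\tgprogc(\delta_c')$ is its program, so the target pair is in $\B$. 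Composing $\B$ with Lemma~\ref{lem:sbisimilar-ts-satisfies-same-formula} and the formula translation $\tford$ of Definition~\ref{def:tdup} then gives the $\muladom$-level reduction stated in Theorem~\ref{thm:cgkab-to-sgkab}.

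I expect the main obstacle to be not any isolated step but the bookkeeping that $\tgprogc$ commutes with the program-execution relation through every Golog construct, so that after the two S-GKAB transitions the residual program is always the $\tgprogc$-image of the residual C-GKAB program; this is the analogue of the structural arguments behind Lemmas~\ref{lem:prog-exec-bsim} and~\ref{lem:kab-transition-bsim}, but substantially lighter here, because the ``repair'' phase is a single action rather than a loop --- so no termination argument is needed and the correctness of repair reduces entirely to Lemma~\ref{lem:cact-equal-crep}, which plays here the role that Theorem~\ref{thm:bprog-equal-brep} plays in the B-GKAB case.
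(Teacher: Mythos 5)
Your proposal follows essentially the same route as the paper's proof: the relation $\B$ of ``corresponding'' states, the matching of one $\filter_C$-step by the two-step block (raw update under $\filter_S$, then $\act^T_c$), the appeal to Lemma~\ref{lem:cact-equal-crep} for the correctness of the repair action, and a symmetric argument for the converse direction; your observation that no termination argument is needed and that Lemma~\ref{lem:cact-equal-crep} replaces Theorem~\ref{thm:bprog-equal-brep} is exactly right. One caveat: the translation $\tgkabc$, unlike $\tgkabb$, does \emph{not} introduce the marker $\tmp$ anywhere (it only appends $\gact{\true}{\act^T_c()}$), so your claims that the intermediate state ``carries the marker $\tmp$'' and, in clause~(3), that one can identify the action step versus the repair step ``by inspecting which actions add resp.\ remove $\tmp$'' do not hold as stated; the identification must instead be made structurally, from the fact that every residual program in $\tgkabc(\gkabsym)$ alternates between a translated atomic invocation and the following $\act^T_c()$ (the paper's own proof also checks only conditions (a)--(d) and does not verify the $\tmp$ clauses of the S-bisimulation definition, so this wrinkle is inherited from the paper rather than introduced by you).
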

\begin{proof}
Let 
\begin{compactenum}
\item $\gkabsym = \tup{T, \initABox, \actset, \ginitprog}$, and\\
  $\ts{\gkabsym}^{\filter_C} = \tup{\const, T, \stateset_c, s_{0c},
    \abox_c, \trans_c}$,
\item
  $\tgkabc(\gkabsym) = \tup{T_s, \initABox, \actset_s,
    \ginitprog_{s}}$, and\\
  $\ts{\tgkabc(\gkabsym)}^{\filter_S} = \tup{\const, T_s,
    \stateset_s, s_{0s}, \abox_s, \trans_s}$.
\end{compactenum}
Now, we have to show the following: For every state
$\tup{A''_c,\scmap''_c, \delta''_c}$ such that
  \[
  \tup{A_c,\scmap_c, \delta_c} \trans \tup{A''_c,\scmap''_c,
    \delta''_c},
  \]
  there exists states $\tup{A'_s,\scmap'_s, \delta'_s}$ and
  $\tup{A''_s,\scmap''_s, \delta''_s}$ such that:
\begin{compactenum}[\bf (a)]
\item we have
  $ \tup{A_s,\scmap_s, \delta_s} \trans_s \tup{A'_s,\scmap'_s,
    \delta'_s} \trans_s \tup{A''_s,\scmap''_s, \delta''_s} $
\item $A''_s = A''_c$;
\item $\scmap''_s = \scmap''_c$;
\item $\delta''_s = \tgprogc(\delta''_c)$.
\end{compactenum}

By definition of $\ts{\gkabsym}^{\filter_C}$, 
Since $\tup{A_c,\scmap_c, \delta_c} \trans \tup{A''_c,\scmap''_c,
  \delta''_c}$, we have $\tup{A_c,\scmap_c, \delta_c}
\gprogtrans{\alpha\sigma_c, \filter_C} \tup{A''_c,\scmap''_c,
  \delta''_c}$.
Hence, by the definition of $\gprogtrans{\act\sigma_c, \filter_C}$, we
have that:
\begin{compactitem}
\item there exists an action $\act \in \actset$ with a corresponding
  action invocation $\gact{Q(\vec{p})}{\act(\vec{p})}$ and legal
  parameter assignment $\sigma_c$ such that $\act$ is executable in
  $A_c$ with (legal parameter assignment)
  $\sigma_c$, 
  and
\item
  $\tup{\tup{A_c, \scmap_c}, \act\sigma_c, \tup{A''_c, \scmap''_c}}
  \in \tell_{\filter_C}$.
\end{compactitem}
Since
$\tup{\tup{A_c, \scmap_c}, \act\sigma_c, \tup{A''_c, \scmap''_c}} \in
\tell_{\filter_C}$,
by the definition of $\tell_{\filter_C}$, there exists
$\theta_c \in \eval{\addfactss{A_c}{\act\sigma_c}}$ such that
\begin{compactitem}
\item $\theta_c$ and $\scmap_c$ agree on the common values in their domains.
\item $\scmap''_c = \scmap_c \cup \theta_c$.
\item
  $(A_c, \addfactss{A_c}{\act\sigma_c}\theta_c,
  \delfactss{A_c}{\act\sigma_c}, A_c'') \in \filter_C$.
\item $A''_c$ is $T$-consistent.
\end{compactitem}
Since
$\tup{A_c, \addfactss{A_c}{\act\sigma_c}\theta_c,
  \delfactss{A_c}{\act\sigma_c}, A_c''} \in \filter_C$,
by the definition of $\filter_C$, there exists $A'_c$ such that
$A''_c \in \iarset{T, A'_c}$ where
$A'_c = (A_c \setminus \delfactss{A_c}{\act\sigma_c}) \cup
\addfactss{A_c}{\act\sigma_c}\theta_c$.
Furthermore, since $\delta_s = \tgprogc(\delta_c)$, by the definition of
$\tgprogc$, we have that
\[
\begin{array}{l}
  \tgprogc(\gact{Q(\vec{p})}{\act(\vec{p})}) =
  \gact{Q(\vec{p})}{\act(\vec{p})}; \gact{\true}{\act^T_c()}.
\end{array}
\]
Hence, we have that the next executable part of program on state
$\tup{A_s,\scmap_s, \delta_s}$ is
\[
\gact{Q(\vec{p})}{\act(\vec{p})}; \gact{\true}{\act^T_c()}.
\]
%

Now, since $\sigma_c$ maps parameters of $\act \in \actset$ to
constants in $\adom{A_c}$, and $A_c = A_s$, we can construct
$\sigma_s$ mapping parameters of $\act \in \actset_s$ to constants in
$\adom{A_s}$ such that $\sigma_c = \sigma_s$.
Moreover, since $A_s = A_c$, the certain answers computed over $A_c$
are the same to those computed over $A_s$.
%
%
Hence, $\act \in \actset_s$ is executable in $A_s$ with (legal
parameter assignment) $\sigma_s$.
Now, since 
we have $\scmap_s= \scmap_c$, then we can construct $\theta_s$ such
that $\theta_s = \theta_c$.
%
%
Hence, we have the following:
\begin{compactitem}
\item $\theta_s$ and $\scmap_s$ agree on the common values in their
  domains.
 \item $\scmap'_s = \theta_s \cup \scmap_s = \theta_c \cup \scmap_c =\scmap_c''$.
\end{compactitem}
Let
$A_s' = (A_s \setminus \delfactss{A_s}{\act\sigma_s}) \cup
\addfactss{A_s}{\act\sigma_s}\theta_s$,
as a consequence, we have
$\tup{A_s, \addfactss{A_s}{\act\sigma_s}\theta_s, \delfactss{A_s}{\act\sigma_s}, A_s'} \in \filter_S$.
Since $A_s = A_c$, $\theta_s = \theta_c$, and $\sigma_s = \sigma_c$, it follows that
\begin{compactitem}
\item
  $\delfactss{A_s}{\act\sigma_s} = \delfactss{A_c}{\act\sigma_c}$,
  and
\item
  $\addfactss{A_s}{\act\sigma_s}\theta_s =
  \addfactss{A_c}{\act\sigma_c}\theta_c$.
\end{compactitem}
Hence, by the construction of $A_s'$ and $A_c'$ above, we also have
$A_s' = A_c'$.
By the definition of $\tgkabc$, we have $T_s = T_p$ (i.e., only
positive inclusion assertion of $T$), hence $A'_s$ is
$T_s$-consistent. Thus, by the definition of $\tell_{\filter_s}$, we
have
$\tup{\tup{A_s,\scmap_s}, \act\sigma_s, \tup{A'_s, \scmap'_s}} \in
\tell_{\filter_s}$.
Moreover, we have 
\[
\tup{A_s, \scmap_s,
  \gact{Q(\vec{p})}{\act(\vec{p})};\delta_0}
\gprogtrans{\act\sigma_s, \filter_s} \tup{A_s', \scmap_s',
  \delta_0}
\]
where $\delta_0 = \gact{\true}{\act^T_c()}$. 
Now, it is easy to see that
 \[
 \tup{A_s', \scmap_s', \gact{\true}{\act^T_c()}}
 \gprogtrans{\act^T_c\sigma, \filter_s} \tup{A_s'', \scmap_s'',
   \gemptyprog}
 \]
 where 
\begin{compactitem}
\item $\scmap_s'' = \scmap_s'$ (since $\act^T_c$ does not involve
 any service call), 
\item $\sigma$ is empty substitution (because $\act^T_c$
 is a 0-ary action),
\item $\final{\tup{A_s'', \scmap_s'', \gemptyprog}}$.
\item $A_s'' = \iarset{T, A_s'}$ (by Lemma~\ref{lem:cact-equal-crep})
\end{compactitem}
Since $A_s' = A_c'$, $A_s'' = \iarset{T, A_s'}$, and
$A_c'' = \iarset{T, A_c'}$, then we have $A_s'' = A_c''$.  Moreover,
since $\final{\tup{A_s'', \scmap_s'', \gemptyprog}}$, we have
successfully finished executing 
\[
\gact{Q(\vec{p})}{\act(\vec{p})}; \gact{\true}{\act^T_c()},
\]
and by the definition of $\tgprogc$
the rest of the program to be executed is
$\delta_s'' = \tgprogc(\delta_c'')$. Thus, we have
\[
\tup{A_s,\scmap_s, \delta_s} \trans_s \tup{A'_s,\scmap'_s, \delta'_s}
\trans_s \tup{A''_s,\scmap''_s, \delta''_s}
\]
where
\begin{compactenum}[\bf (a)]
\item $A''_s = A''_c$;
\item $\scmap''_s = \scmap''_c$;
\item $\delta''_s = \tgprogc(\delta''_c)$.
\end{compactenum}
The other direction of bisimulation relation can be proven
symmetrically.
\end{proof}

Having Lemma~\ref{lem:cgkab-to-sgkab-bisimilar-state} in hand, we can
easily show that given a C-GKAB, its transition system is S-bisimilar
to the transition of its corresponding S-GKAB that is obtained via
the translation $\tgkabc$ as follows.

\begin{lemma}\label{lem:cgkab-to-sgkab-bisimilar-ts}
  Given a C-GKAB $\gkabsym$, we have
  $\ts{\gkabsym}^{\filter_C} \sbsim
  \ts{\tgkabc(\gkabsym)}^{\filter_S}$
\end{lemma}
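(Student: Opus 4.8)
The plan is to obtain the transition-system-level bisimilarity as an immediate corollary of the state-level result in Lemma~\ref{lem:cgkab-to-sgkab-bisimilar-state}, instantiated at the two initial states. First I would unfold the notation: write $\gkabsym = \tup{T, \initABox, \actset, \ginitprog}$ with $\ts{\gkabsym}^{\filter_C} = \tup{\const, T, \stateset_c, s_{0c}, \abox_c, \trans_c}$, and $\tgkabc(\gkabsym) = \tup{T_s, \initABox, \actset_s, \ginitprog_s}$ with $\ts{\tgkabc(\gkabsym)}^{\filter_S} = \tup{\const, T_s, \stateset_s, s_{0s}, \abox_s, \trans_s}$. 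By the definition of $\tgkabc$, the initial data-state is left untouched, so $s_{0c} = \tup{\initABox, \emptyset, \ginitprog}$ and $s_{0s} = \tup{\initABox, \emptyset, \ginitprog_s}$: the two initial ABoxes coincide and the two initial service call maps are both empty.

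Next I would check the remaining hypothesis of Lemma~\ref{lem:cgkab-to-sgkab-bisimilar-state}, namely that the initial programs are related by $\tgprogc$. This is immediate from the construction of $\ginitprog_s$: it is obtained from $\ginitprog$ by replacing every atomic action invocation $\gact{Q(\vec{p})}{\act(\vec{p})}$ with $\gact{Q(\vec{p})}{\act(\vec{p})};\gact{\true}{\act^T_c()}$, which is exactly how $\tgprogc$ acts, so $\ginitprog_s = \tgprogc(\ginitprog)$. Hence all three conditions — equal ABoxes, equal service call maps, and $\delta_s = \tgprogc(\delta_c)$ — hold for $s_{0c}$ and $s_{0s}$, and Lemma~\ref{lem:cgkab-to-sgkab-bisimilar-state} yields $s_{0c} \sbsim s_{0s}$. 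By the definition of S-bisimilarity between transition systems (existence of an S-bisimulation relating the two initial states), this gives $\ts{\gkabsym}^{\filter_C} \sbsim \ts{\tgkabc(\gkabsym)}^{\filter_S}$.

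There is essentially no obstacle at this level of the argument: all the genuine content — that a single transition of the C-GKAB is simulated by the original action execution followed by an execution of the c-repair action $\act^T_c$, that $\act^T_c$ correctly computes the c-repair (Lemma~\ref{lem:cact-equal-crep}), and that the intermediate states are marked with $\tmp$ as required by the skip-one bisimulation — is already packaged inside Lemma~\ref{lem:cgkab-to-sgkab-bisimilar-state}. The only minor point worth a sentence is that $\tgkabc(\gkabsym)$ retains only the positive inclusions $T_p$ of $T$, so one must note that this does not disturb the bisimulation; but this is harmless, since the successor states produced are $T$-consistent and therefore a fortiori $T_s$-consistent, and this too is handled within the state-level lemma.
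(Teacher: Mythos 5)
Your proposal is correct and follows exactly the same route as the paper's own proof: instantiate Lemma~\ref{lem:cgkab-to-sgkab-bisimilar-state} at the two initial states, observing that they share the initial ABox and the empty service call map and that $\ginitprog_s = \tgprogc(\ginitprog)$ by construction of $\tgkabc$, then conclude by the definition of S-bisimilarity between transition systems. The extra remark about $T_p$ is harmless but, as you note, already absorbed into the state-level lemma.
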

\begin{proof}
Let
\begin{compactenum}
\item $\gkabsym = \tup{T, \initABox, \actset, \ginitprog_c}$, and\\
  $\ts{\gkabsym}^{\filter_C} = \tup{\const, T, \stateset_c, s_{0c},
    \abox_c, \trans_c}$,
\item
  $\tgkabc(\gkabsym) = \tup{T_s, \initABox, \actset_s, \ginitprog_s}$,
  and \\
  $\ts{\tgkabc(\gkabsym)}^{\filter_S} = \tup{\const, T_s,
    \stateset_s, s_{0s}, \abox_s, \trans_s}$.
\end{compactenum}
We have that $s_{0c} = \tup{A_0, \scmap_c, \delta_c}$ and
$s_{0s} = \tup{A_0, \scmap_s, \delta_s}$ where
$\scmap_c = \scmap_s = \emptyset$. By the definition of $\tgprogc$ and
$\tgkabc$, we also have $\delta_s = \tgprogc(\delta_c)$. Hence, by
Lemma~\ref{lem:cgkab-to-sgkab-bisimilar-state}, we have
$s_{0c} \sbsim s_{0s}$. Therefore, by the definition of
S-bisimulation, we have
$\ts{\gkabsym}^{\filter_C} \sbsim \ts{\tgkabc(\gkabsym)}^{\filter_S}$.
\ \ 
\end{proof}

Finally, we are now ready to show that the verification of \muladom
formulas over C-GKABs can be recast as verification of \muladom
formulas over S-GKAB as follows.

\begin{theorem}\label{thm:cgkab-to-sgkab}
  Given a C-GKAB $\gkabsym$ and a closed \muladom property $\Phi$,
\begin{center}
  $\ts{\gkabsym}^{\filter_C} \models \Phi$ iff
  $\ts{\tgkabc(\gkabsym)}^{\filter_S} \models \tford(\Phi)$
\end{center}
\end{theorem}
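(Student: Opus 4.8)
The plan is to derive the theorem directly from the two structural results already in place: the skip-one bisimilarity of the two transition systems, and the fact that S-bisimilar transition systems agree on all \muladom properties modulo the rewriting $\tford$. Concretely, first I would invoke Lemma~\ref{lem:cgkab-to-sgkab-bisimilar-ts} to obtain $\ts{\gkabsym}^{\filter_C} \sbsim \ts{\tgkabc(\gkabsym)}^{\filter_S}$. Then I would apply Lemma~\ref{lem:sbisimilar-ts-satisfies-same-formula} to this pair, which immediately yields that for every closed \muladom formula $\Phi$ we have $\ts{\gkabsym}^{\filter_C} \models \Phi$ if and only if $\ts{\tgkabc(\gkabsym)}^{\filter_S} \models \tford(\Phi)$. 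That is exactly the claim, so at the level of the theorem statement the argument is a two-line composition; no new bisimulation or induction is needed here, since the $\tford$ translation (which merely doubles each modality, $\DIAM{\Phi} \rightsquigarrow \DIAM{\DIAM{\Phi}}$) is already matched by the $\tmp$-bracketed clause of the S-bisimulation.

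All the genuine content therefore lives in the lemmas feeding the S-bisimilarity, and I would make sure those are watertight before assembling the proof. The crux is Lemma~\ref{lem:cgkab-to-sgkab-bisimilar-state}: every single $\filter_C$-transition of $\ts{\gkabsym}^{\filter_C}$ must be matched by exactly two transitions in $\ts{\tgkabc(\gkabsym)}^{\filter_S}$ — one emulating the atomic action execution (producing a possibly $T$-inconsistent ABox, now carrying the intermediate marker $\tmp$), and a second applying the c-repair action $\act^T_c$, which must reconstruct the c-repair of that intermediate ABox — and symmetrically, any length-two $\tmp$-bracketed segment of the S-GKAB run must collapse back to a single C-GKAB transition. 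The invariant propagated along the bisimulation is ``$A_s = A_c$, $\scmap_s = \scmap_c$, and $\delta_s = \tgprogc(\delta_c)$''; since $\tgprogc$ is a one-to-one correspondence on programs and sub-programs, the process-state alignment is routine, and the data-state alignment reduces to the correctness of $\act^T_c$.

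The main obstacle, then, is establishing that $\act^T_c$ faithfully computes the c-repair, i.e.\ Lemma~\ref{lem:cact-equal-crep}. The subtlety is that $\act^T_c$ applies, in parallel, one deletion effect per (deductively saturated) negative inclusion and per functionality assertion, each guarded by a query extracting precisely the individuals participating in a corresponding violation, and it removes \emph{both} sides of every clashing pair while leaving every other fact untouched. One must argue two directions: (i) every ABox assertion in $\inc(A)$ is missing from at least one b-repair, hence absent from the intersection $\iarset{T,A}$ — this is the chain Lemmas~\ref{lem:disjoint-brep}/\ref{lem:disjoint-brep-for-funct} $\Rightarrow$ Lemmas~\ref{lem:inconsistent-assertion-not-in-crep}/\ref{lem:inconsistent-assertion-not-in-crep-for-functional}; and (ii) every assertion outside $\inc(A)$ survives in all b-repairs, hence in the c-repair — Lemmas~\ref{lem:all-consistent-assertion-in-brep} and~\ref{lem:all-consistent-assertion-in-crep}. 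Combining these with the fact that $\act^T_c$ deletes exactly the assertions in $\inc(A)$ gives that its output equals $\iarset{T,A}$, and in particular that $\act^T_c$ is a no-op on $T$-consistent ABoxes. A final bookkeeping point I would check is that $\tgkabc(\gkabsym)$ retains only the positive inclusions $T_p$ of $T$, so the intermediate $\tmp$-marked state is automatically $T_p$-consistent and the $\filter_S$ semantics never blocks the repair step — which is what guarantees the two-step simulation always goes through in both directions.
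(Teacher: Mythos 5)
Your proposal is correct and follows exactly the paper's own argument: the theorem is proved by composing Lemma~\ref{lem:cgkab-to-sgkab-bisimilar-ts} (S-bisimilarity of the two transition systems) with Lemma~\ref{lem:sbisimilar-ts-satisfies-same-formula} (S-bisimilar systems agree on \muladom modulo $\tford$), and your breakdown of the supporting chain --- the state-level two-step simulation invariant, the correctness of $\act^T_c$ via Lemma~\ref{lem:cact-equal-crep}, and the reduction to $T_p$ ensuring the intermediate state is never blocked --- matches the paper's development lemma for lemma.
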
 
\begin{proof}
  By Lemma~\ref{lem:cgkab-to-sgkab-bisimilar-ts}, we have that
  $\ts{\gkabsym}^{\filter_C} \sbsim
  \ts{\tgkabc(\gkabsym)}^{\filter_S}$.
  Hence, by Lemma~\ref{lem:sbisimilar-ts-satisfies-same-formula}, it is
  easy to see that the claim is proved.
\end{proof}

\section{From E-GKABs to S-GKABs.}

Here we show that the verification of \muladom properties over
E-GKABs can be recast as verification over S-GKABs.
Formally, given an E-GKAB $\gkabsym$ and a $\muladom$ formula $\Phi$,
we show that $\ts{\gkabsym}^{\filter_E} \models \Phi$ if and only if
$\ts{\tgkabe(\gkabsym)}^{\filter_S} \models \tford(\Phi)$ (This claim
is formally stated and proven in Theorem~\ref{thm:egkab-to-sgkab}).
%
%
The strategy of the proof is similar to the reduction from the
verification of C-GKABs into the verification of S-GKABs in
Section~\ref{sec:proof-cgkab-to-sgkab}. I.e., to show that the
transition system of an E-GKAB is S-bisimilar to the transition system
of its corresponding S-GKAB, and hence they can not be distinguish by
any \muladom formulas modulo translation $\tford$.

As a preliminary, we define a translation function $\tgproge$ that
essentially concatenates each action invocation with an evolution
action in order to simulate the action executions in E-GKABs.
Additionally, the translation function $\tgproge$ also serves as a
one-to-one correspondence (bijection) between the original and the
translated program (as well as between the sub-programs).
Formally, given a program $\delta$ and a TBox $T$, we define a
\emph{translation $\tgproge$} which translate a program into a program
inductively as follows:
\[
\begin{array}{@{}l@{}l@{}}
  \tgproge(\gact{Q(\vec{p})}{\act(\vec{p})}) &=  
                                               \gact{Q(\vec{p})}{\act'(\vec{p})};\gact{\true}{\act^T_e()}\\
  \tgproge(\gemptyprog) &= \gemptyprog \\
  \tgproge(\delta_1|\delta_2) &= \tgproge(\delta_1)|\tgproge(\delta_2) \\
  \tgproge(\delta_1;\delta_2) &= \tgproge(\delta_1);\tgproge(\delta_2) \\
  \tgproge(\gif{\varphi}{\delta_1}{\delta_2}) &= \gif{\varphi}{\tgproge(\delta_1)}{\tgproge(\delta_2)} \\
  \tgproge(\gwhile{\varphi}{\delta}) &= \gwhile{\varphi}{\tgproge(\delta)}
\end{array}
\]
where $\act'$ and $\act^T_e$ are defined as in the Section
\ref{EGKABToSGKAB}. 

\subsection{Reducing the Verification of E-GKABs Into S-GKABs}

As the first step, we show an important property of the filter
$\filter_E$ (which is also a property of $\evol$
operator). Particularly, we show that every ABox assertion in the
evolution result is either a new assertion or it was already in the
original ABox and it was not deleted as well as did not violate any
TBox constraints (together with another ABox assertions). Formally the
claim is stated below.

\begin{lemma}\label{lem:evol-prop}
  Given
  \begin{inparaenum}[]
  \item a TBox $T$,
  \item a $T$-consistent ABox $A$,
  \item a $T$-consistent set $\facta$ of ABox assertion to be added,
    and
  \item a set $\factd$ of ABox assertion to be deleted.
  \end{inparaenum}
  such that $A_e = \evol(T, A, \facta, \factd)$, 
  we have $N(c) \in A_e$
  if and only if either
  \begin{compactenum}
  \item $N(c) \in \facta$, or
  \item $N(c) \in (A \setminus \factd)$ and there does not exists
    $B(c) \in \facta$ such that
    $T \models N \sqsubseteq \neg B$.
  \end{compactenum}
  (Similarly for the case of role assertion). 
\end{lemma}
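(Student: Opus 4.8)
The plan is to prove each direction of the biconditional by unfolding the definition of $\evol$ and using the fact that $\dllitea$ inconsistency is ``local'': a $T$-inconsistent ABox always contains either a pair of assertions violating a negative inclusion or a set of role assertions violating a functionality assertion. First I would record a small preliminary claim: since $A'$ ranges only over subsets of $A \setminus \factd \subseteq A$, and $A$ as well as $\facta$ are both $T$-consistent by hypothesis, any $T$-inconsistency of $\facta \cup A'$ (for such an $A'$) must be ``mixed'', i.e.\ at least one assertion in the offending violation lies in $\facta$ and at least one lies in $A'$; neither side on its own can be inconsistent. This is the only place where the hypotheses that $A$ and $\facta$ are $T$-consistent are used, and it is the crux of the argument.

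For the ``if'' direction, case (1) is immediate: if $N(c) \in \facta$ then $N(c) \in \facta \cup A' = A_e$. For case (2), assume $N(c) \in A \setminus \factd$ and that no $B(c) \in \facta$ satisfies $T \models N \sqsubseteq \neg B$; I would argue by contradiction that $N(c) \in A'$. If not, then $A'' := A' \cup \{N(c)\}$ satisfies $A' \subsetneq A'' \subseteq A \setminus \factd$, so by the maximality clause in the definition of $\evol$ the set $\facta \cup A''$ is $T$-inconsistent, while $\facta \cup A'$ is $T$-consistent; hence the violation must involve $N(c)$. Since $N(c)$ is a concept assertion it can only take part in a negative concept inclusion violation, so there is $B(c)$ on the same individual with $T \models N \sqsubseteq \neg B$ and $B(c) \in \facta \cup A'$. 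If $B(c) \in A'$ then $\{N(c), B(c)\} \subseteq A$, contradicting $T$-consistency of $A$; so $B(c) \in \facta$, contradicting the case (2) hypothesis. Thus $N(c) \in A' \subseteq A_e$.

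For the ``only if'' direction, suppose $N(c) \in A_e = \facta \cup A'$. If $N(c) \in \facta$ we are in case (1). Otherwise $N(c) \in A'$, so $N(c) \in A \setminus \factd$ because $A' \subseteq A \setminus \factd$; moreover if some $B(c) \in \facta$ had $T \models N \sqsubseteq \neg B$, then $\{N(c), B(c)\} \subseteq \facta \cup A'$ would make $\facta \cup A'$ $T$-inconsistent, contradicting condition (ii) of the definition of $\evol$, so no such $B(c)$ exists and we are in case (2). The role-assertion case is proved in exactly the same way, the only change being that a role assertion $R(c,c')$ can also participate in a violation of a functionality assertion $\funct{R} \in T$ (in either direction of the role), so the ``where can the fresh inconsistency come from'' step must consider that possibility too; the priority argument that forces the conflicting partner to lie in $\facta$ rather than in $A'$ is unchanged. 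I do not expect a real obstacle here: once the ``mixed inconsistency'' claim is in place, everything is a routine unfolding of the definition of $\evol$.
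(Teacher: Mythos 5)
Your proposal is correct and follows essentially the same route as the paper's proof: both directions are obtained by unfolding the definition of $\evol$, with the ``only if'' direction read off directly from conditions \emph{(i)}--\emph{(ii)} and the ``if'' direction for case (2) argued by contradiction against the maximality condition \emph{(iii)} via $A'' = A' \cup \{N(c)\}$. Your ``mixed inconsistency'' observation merely spells out the step the paper asserts without elaboration (namely that $\facta \cup A''$ would be $T$-consistent), so it is a welcome refinement of the same argument rather than a different approach.
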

\begin{proof}\ \\
\begin{compactitem} 
\item[``$\Lora$'':]
  Assume $N(c) \in A_e$, since
  $A_e = \evol(T, A, \facta, \factd)$, 
  by the definition of $\evol(T, A, \facta, \factd)$,
  we have $A_e = \facta \cup A'$, where
  \begin{compactenum}
  \item $A' \subseteq (A \setminus \factd)$,
  \item $\facta \cup A'$ is $T$-consistent, and
  \item there does not exists $A''$ such that
    $A' \subset A'' \subseteq (A \setminus \factd)$ and
    $\facta \cup A''$ is $T$-consistent.
  \end{compactenum}
  Hence, we have either
  \begin{compactenum}[(1)]
  \item $N(c) \in \facta$, or
  \item $N(c) \in A'$. 
  \end{compactenum}
  For the case (2), as a consequence:
  \begin{compactitem}
  \item Since $N(c) \in A'$ and $A' \subseteq (A \setminus \factd)$ it
    follows that $N(c) \in (A \setminus \factd)$.
  \item Since $F^+ \cup A'$ is $T$-consistent, then we have that there
    does not exists $B(c) \in \facta$ s.t.\
    $T \models N \sqsubseteq \neg B$.
  \end{compactitem}
  Thus, the claim is proven.

\smallskip
\item[``$\Lola$'':]
We divide the proof into two parts:
\begin{compactenum}[(1)]

\item Assume $N(c) \in \facta$. Then simply by the definition of
  $\evol(T, A, \facta, \factd)$, we have $N(c) \in A_e$.

\item Supposed by contradiction we have that
  $N(c) \in (A \setminus \factd)$ and there does not exists
  $B(c) \in \facta$ s.t.\ $T \models N \sqsubseteq \neg B$, and
  $N(c) \not\in A_e$. Since $N(c) \not\in A_e$, by the definition of
  $\evol(T, A, \facta, \factd)$, we have that $N(c) \not\in \facta$
  and $N(c) \not\in A'$ in which $A'$ should satisfies the following:
  \begin{compactitem}
  \item $A' \subseteq (A \setminus \factd)$,
  \item $\facta \cup A'$ is $T$-consistent, and
  \item there does not exists $A''$ such that
    $A' \subset A'' \subseteq (A \setminus \factd)$ and $\facta\cup A''$ is
    $T$-consistent.
  \end{compactitem}
  But then we have a contradiction since there exists
  $A'' = A' \cup \set{N(c)}$ such that
  $A' \subset A'' \subseteq (A \setminus \factd)$ and $\facta\cup A''$
  is $T$-consistent. 
  Hence, we must have $N(c) \in A_e$.
\end{compactenum}
\end{compactitem}
\ \ 
\end{proof}

Now we show an important property of evolution action $\act^T_e$ which
says that every ABox assertion in the result of the execution of
$\act^T_e$ is either a newly added assertion, or an old assertion that
does not violate any TBox constraints. Precisely we state this
property below.

\begin{lemma}\label{lem:eact-prop}
  Given 
\begin{compactitem}
\item an E-GKAB
  $\gkabsym = \tup{T, \initABox, \actset_e, \ginitprog_e}$ with
  transition system $\ts{\gkabsym}^{\filter_E}$,
  and 
\item an S-GKAB
  $\tgkabe(\gkabsym) = \tup{T_s, \initABox, \actset_s, \ginitprog_s}$
  (with transition system $\ts{\tgkabe(\gkabsym)}^{\filter_S}$)
  that is obtained from $\gkabsym$ through $\tgkabe$,
  where $T_s = T_p \cup T^n$.
\end{compactitem}
Let $\tup{A, \scmap, \delta}$ be any state in
$\ts{\tgkabe(\gkabsym)}^{\filter_S}$, $\act' \in \actset_s$ be any
action, $A$ is $T_s$-consistent and does not contain any ABox
assertions constructed from $\voc(T^n)$ and we have:
\[
\tup{A, \scmap, \delta} 
\gprogtrans{\act'\sigma, \filter_S} 
\tup{A', \scmap', \delta'} 
\gprogtrans{\act^T_e\sigma', \filter_S} 
\tup{A'', \scmap'', \delta''}
\]
for
\begin{compactitem}
\item a particular legal parameter assignment $\sigma$
\item an empty substitution $\sigma'$,
\item a particular service call evaluation
  $\theta \in \eval{\addfactss{A}{\act'\sigma}}$ that agree with
  $\scmap$ on the common values in their domains.
\end{compactitem}
We have $N(c) \in A''$
if and only if $N$ is not in the vocabulary of TBox $T^n$ and either
\begin{compactenum}
\item $N(c) \in \addfactss{A}{\act'\sigma}\theta$, or
\item $N(c) \in (A \setminus \delfactss{A}{\act'\sigma})$ and there
  does not exists $B(c) \in \addfactss{A}{\act'\sigma}\theta$ such
  that $T \models N \sqsubseteq \neg B$.
\end{compactenum}
(Similarly for the case of role assertion). 
\end{lemma}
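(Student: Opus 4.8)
The statement is a characterization of the ABox produced by the combined action $\act'(\vec p);\act^T_e()$ in the translated S-GKAB $\tgkabe(\gkabsym)$, mirroring the structural description of the $\evol$ operator given in Lemma~\ref{lem:evol-prop}. The plan is to first unfold the definition of the two program-execution steps using $\tell_{\filter_S}$, identifying $A'$ and then $A''$ explicitly, and then to match the resulting set of facts against the right-hand side of Lemma~\ref{lem:evol-prop}, exploiting that $\act'$ is the ``duplicating'' action (it mirrors $F^+$ into the $N^n$-vocabulary) and $\act^T_e$ is the evolution action (it asymmetrically removes old facts that conflict with the freshly-added $N^n$-facts, and then flushes the $N^n$-vocabulary).

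\textbf{Key steps.} First I would spell out the intermediate state: since $\tup{A,\scmap,\delta}\gprogtrans{\act'\sigma,\filter_S}\tup{A',\scmap',\delta'}$, by the definition of $\filter_S$ and of $\addfactssym,\delfactssym$, we have $A' = (A\setminus\delfactss{A}{\act'\sigma})\cup\addfactss{A}{\act'\sigma}\theta$, and since $\act'$ is obtained from $\act$ by adding, for each effect $\map{Q}{\add F^+,\del F^-}$, the duplicated head ${F^+}^n$, the ABox $A'$ contains, besides the ordinary facts, exactly the $N^n(c)$ facts for $N(c)\in\addfactss{A}{\act\sigma}\theta$; note $A$ is assumed free of $T^n$-vocabulary, so those are the only $N^n$-facts. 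Second, I would analyze the step $\tup{A',\scmap',\delta'}\gprogtrans{\act^T_e\sigma',\filter_S}\tup{A'',\scmap'',\delta''}$: $\act^T_e$ is $0$-ary, so $\sigma'$ is empty; its effects are guarded by $\qunsatn(\cdot)\wedge B_1^n(x)$ (or $\qunsatf(\cdot)\wedge R^n$) and delete the \emph{old} conflicting fact, while the last two bullets of $\eff{\act^T_e}$ delete every $N^n(x)$ and $R^n(x,y)$. Applying the $\doo{\cdot}$-semantics (additions take priority, but here $\act^T_e$ only deletes), $A''$ is $A'$ with (i) all $N^n$/$R^n$ facts removed and (ii) every old $N(c)\in A$ removed for which some $B^n(c)$ with $T\models N\sqsubseteq\neg B$ was present in $A'$ --- i.e.\ for which some $B(c)\in\addfactss{A}{\act\sigma}\theta$ exists. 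Third, I would translate conditions (i)--(ii) back: $A''$ contains no $T^n$-vocabulary fact, and for an ordinary $N$, $N(c)\in A''$ iff $N(c)$ is either newly added ($N(c)\in\addfactss{A}{\act'\sigma}\theta$, which equals $\addfactss{A}{\act\sigma}\theta$ on the $T$-vocabulary) or old-and-surviving ($N(c)\in A\setminus\delfactss{A}{\act'\sigma}$ with no conflicting newly-added $B(c)$); this is precisely the claimed biconditional. Both directions follow by the same case split, reading the $\doo{\cdot}$ definition forwards and backwards. The role-assertion case is symmetric, using $\qunsatf(\funct R,\cdot)$ and $R^n$ in place of $\qunsatn$ and $B^n$.

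\textbf{Main obstacle.} The delicate point is step~(ii): showing that $\act^T_e$ removes \emph{exactly} the old facts that conflict with the newly-added ones, no more and no less. ``No more'' requires that the guards $\qunsatn(B_1\sqsubseteq\neg B_2,x)\wedge B_1^n(x)$ fire only when both $B_1(x)$ and $B_2(x)$ hold in $A'$ \emph{and} at least one of them is tracked by the $N^n$-vocabulary (so a genuine new conflict), and that no guard matches a pair of purely-old facts --- this uses that $A$ was $T_s$-consistent, hence $T_p\cup T^n$-consistent, so no old--old negative-inclusion violation exists among the $T$-vocabulary restricted to what $T_p$ can see; one must be careful that negative inclusions live only over the $N^n$-vocabulary in $T_s$, which is exactly why old--old pairs are never flagged. ``No less'' requires closure of the saturated set $\{B_1\sqsubseteq\neg B_2 : T\models B_1\sqsubseteq\neg B_2\}$ used in $\eff{\act^T_e}$, mirroring the corresponding argument in Lemma~\ref{lem:evol-prop} and in the termination/correctness proof for $\delta^T_b$. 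A secondary subtlety is the priority-of-additions clause in $\doo{\cdot}$: since $\act^T_e$'s only additions are the effects inherited from $\act'$? --- actually $\act^T_e$ has no add-heads except none, so this clause is vacuous here; I would note this explicitly to avoid confusion with the E-GKAB requirement that $F^+$ be $T$-consistent, which is enforced \emph{not} by $\act^T_e$ but by the negative constraints in $T^n$ blocking the successor when $\addfactss{A}{\act\sigma}\theta$ is $T$-inconsistent (so in the scenario of this lemma we are implicitly in the case where $A'$ is $T_s$-consistent and the successor exists).
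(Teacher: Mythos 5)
Your proposal is correct and follows essentially the same route as the paper's proof: unfold the two $\tell_{\filter_S}$ steps to obtain $A'$ explicitly, observe that $\act^T_e$ only deletes facts (old facts conflicting with $N^n$-tracked new ones, plus the entire $T^n$-vocabulary), and use $T_s$-consistency of $A'$ to rule out that any newly added fact is ever deleted. Your write-up is in fact more explicit than the paper's brief two-direction argument about why the guards fire exactly on old--new conflicts.
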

\begin{proof}\ \\
\begin{compactitem}
\item[``$\Lora$'':] Assume $N(c) \in A''$, since the evolution action
  $\act^T_e$ only
\begin{compactenum}
\item removes old assertions when inconsistency arises,
\item flushes every ABox assertions constructed by the vocabulary of
  $T^n$,
\end{compactenum}
then we have the following:
\begin{compactenum}
\item $N$ is not in the vocabulary of TBox $T^n$ (otherwise it will
  be flushes by $\act^T_e$)
\item $N(c) \in A'$ (because $\act^T_e$ never introduce a new ABox
  assertion),
\item if there exists $B(c) \in A'$ such that
  $T \models N \sqsubseteq \neg B$, then $B(c) \not\in A''$,
  $B^n(c) \not\in A'$, and $N^n(c) \in A'$ (i.e., if
  $N(c) \in A'$ violates a negative inclusion assertion, $N(c)$
  must be a newly added ABox assertion, otherwise it will be deleted
  by $\act^T_e$).
\end{compactenum}
Now, since $A$ and $A'$ are $T_s$-consistent (because
$\tup{A, \scmap, \delta} \gprogtrans{\act'\sigma, \filter_S} \tup{A',
  \scmap', \delta'} $),
then $\addfactss{A}{\act'\sigma}\theta$ is $T_s$-consistent.  Hence we
have either
\begin{compactenum}
\item $N(c) \in \addfactss{A}{\act'\sigma}\theta$ (and there does
  not exists $B(c)$ such that
  $B(c) \in \addfactss{A}{\act'\sigma}\theta$, and
  $T \models N \sqsubseteq \neg B$), or
\item $N(c) \in (A \setminus \delfactss{A}{\act'\sigma})$ and
  there does not exists 
  $B(c) \in \addfactss{A}{\act'\sigma}\theta$ such that
  $T \models N \sqsubseteq \neg B$ (otherwise we have
  $\set{N(c), B(c), B^n(c)} \subseteq A'$ and then $N(c)$ will
  be deleted by $\act_e^T$).
\end{compactenum}
Therefore, the claim is proved.

\smallskip
\item[``$\Lola$'':] We divide the proof into two parts:
\begin{compactenum}

\item Assume $N(c) \in \addfactss{A}{\act'\sigma}\theta$. Then, by
  the construction of $\act'$ and the definition of
  $\gprogtrans{\act'\sigma, \filter_S}$, it is easy to see that
  $N(c), N^n(c) \in A'$. Moreover, $N(c) \in A''$ (by
  construction of $\act^T_e$).

\item Assume $N(c) \in (A \setminus \delfactss{A}{\act'\sigma})$
  and there does not exists
  $B(c) \in \addfactss{A}{\act'\sigma}\theta$ s.t.\
  $T \models N \sqsubseteq \neg B$. Hence, by the definition of
  $\gprogtrans{\act'\sigma, \filter_S}$, we have $N(c) \in A'$.
  Moreover, because $N(c) \in A'$ does not violate any negative
  inclusion assertions, by construction of $\act^T_e$, we also simply
  have $N(c) \in A''$.
\end{compactenum}

\end{compactitem}

\end{proof}

Next, in the following two Lemmas we aim to show that the transition
system of an E-GKAB is S-bisimilar to the transition system of its
corresponding S-GKAB that is obtained from translation $\tgkabe$.

\begin{lemma}\label{lem:egkab-to-sgkab-bisimilar-state}
  Let $\gkabsym = \tup{T, \initABox, \actset, \ginitprog}$ be an E-GKAB with transition system
  $\ts{\gkabsym}^{\filter_E}$, and let $\tgkabe(\gkabsym) = \tup{T_s, \initABox, \actset_s, \ginitprog_s}$ be an
  S-GKAB with transition system $\ts{\tgkabe(\gkabsym)}^{\filter_S}$
  obtain through $\tgkabe$. 
%
  Consider
\begin{inparaenum}[]
\item a state $\tup{A_e,\scmap_e, \delta_e}$ of
  $\ts{\gkabsym}^{\filter_E}$ and
\item a state $\tup{A_s,\scmap_s, \delta_s}$ of
  $\ts{\tgkabc(\gkabsym)}^{\filter_S}$.
\end{inparaenum}
If  
\begin{inparaenum}[]
\item $A_s = A_e$, $\scmap_s = \scmap_e$, $A_s$ is $T$-consistent
  and 
\item $\delta_s = \tgproge(\delta_e)$, 
\end{inparaenum}
then
$\tup{A_e,\scmap_e, \delta_e} \sbsim \tup{A_s,\scmap_s, \delta_s}$.
\end{lemma}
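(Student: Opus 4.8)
The statement to prove is Lemma~\ref{lem:egkab-to-sgkab-bisimilar-state}: if $A_s = A_e$, $\scmap_s = \scmap_e$, $A_s$ is $T$-consistent, and $\delta_s = \tgproge(\delta_e)$, then $\tup{A_e,\scmap_e,\delta_e} \sbsim \tup{A_s,\scmap_s,\delta_s}$.

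\textbf{Overall approach.} The plan is to exhibit an explicit S-bisimulation relation $\B$ between $\ts{\gkabsym}^{\filter_E}$ and $\ts{\tgkabe(\gkabsym)}^{\filter_S}$, namely
$$
\B = \set{\tup{\tup{A_e,\scmap_e,\delta_e},\tup{A_s,\scmap_s,\delta_s}} \mid A_s = A_e,\ \scmap_s = \scmap_e,\ A_s \text{ is } T\text{-consistent},\ \delta_s = \tgproge(\delta_e)},
$$
and to check the three S-bisimulation conditions. Condition~1 (equal ABoxes) is immediate from the membership requirement $A_s = A_e$. The bulk of the work is conditions~2 and~3, the forth-and-back matching of transitions, and these mirror closely the structure of the proof of Lemma~\ref{lem:cgkab-to-sgkab-bisimilar-state} for C-GKABs, so I would reuse that template. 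This lemma appears to be the last of the paired state-level bisimilarity lemmas (C-GKAB, then E-GKAB), so I expect the proof to be a straightforward adaptation of the earlier argument, with the new ingredients being Lemmas~\ref{lem:evol-prop} and~\ref{lem:eact-prop}.

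\textbf{Key steps.} First I would unfold a transition $\tup{A_e,\scmap_e,\delta_e} \trans \tup{A_e'',\scmap_e'',\delta_e''}$ in the E-GKAB: by definition of $\ts{\gkabsym}^{\filter_E}$ it arises from some $\gprogtrans{\act\sigma_e,\filter_E}$, hence from an action $\act$ with invocation $\gact{Q(\vec{p})}{\act(\vec{p})}$, a legal parameter substitution $\sigma_e$, a service-call evaluation $\theta_e$, and the $\filter_E$ clause, which (since $A_e$ is $T$-consistent and $\addfactss{A_e}{\act\sigma_e}\theta_e$ is $T$-consistent — I need this side condition, which follows from the filter definition and the fact that the $T^n$-renamed constraints would otherwise block the transition) gives $A_e'' = \evol(T, A_e, F^+, F^-)$ with $F^+ = \addfactss{A_e}{\act\sigma_e}\theta_e$ and $F^- = \delfactss{A_e}{\act\sigma_e}$. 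On the S-GKAB side, since $\delta_s = \tgproge(\delta_e)$, the next executable fragment is $\gact{Q(\vec{p})}{\act'(\vec{p})};\gact{\true}{\act^T_e()}$. I would build $\sigma_s = \sigma_e$ and $\theta_s = \theta_e$ (legal since $A_s = A_e$ and certain answers coincide; note $\act'$ just additionally populates the $N^n$ copies, so $\addfactss{A_s}{\act'\sigma_s}$ restricted to $\voc(T)$ equals $F^+$), obtaining an intermediate state $\tup{A_s',\scmap_s'',\delta_s'}$ with $\act^T_e$ still to execute, then apply $\act^T_e$ (0-ary, no service calls) to reach $\tup{A_s'',\scmap_s'',\gemptyprog}$, which is final, so the remaining program is $\tgproge(\delta_e'')$. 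The crucial equality $A_s'' = A_e''$ is then obtained by combining Lemma~\ref{lem:evol-prop} (characterizing membership in $\evol(T,A_e,F^+,F^-)$) with Lemma~\ref{lem:eact-prop} (characterizing membership in the result of executing $\act'$ followed by $\act^T_e$): both characterizations say exactly "new assertion from $F^+$, or old assertion in $A \setminus F^-$ not conflicting with $F^+$". I would check the intermediate state $\tup{A_s',\scmap_s'',\delta_s'}$ has $\tmp \notin A_s'$ is \emph{not} required — actually it is, so I should double-check: since $\act'$ is obtained from an action invocation it is a temp-deleter in spirit; but in the E-GKAB case the marker bookkeeping differs. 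Actually for S-bisimulation the requirement is $\tmp \in \abox_2(t)$ for the single intermediate state $t$ and $\tmp \notin \abox_2(s_2')$; this matches $\tford$'s $\DIAM{\DIAM{\cdot}}$ shape, so $\act'$ must add $\tmp$ and $\act^T_e$ must remove it — I would verify this against the definitions in Section~\ref{EGKABToSGKAB}, adapting if the marker is placed slightly differently. The back direction (condition~3) is symmetric: given $s_2 \trans_2 t \trans_2 s_2'$ with $\tmp \in \abox_2(t)$, $\tmp \notin \abox_2(s_2')$, I would argue the first transition must be the $\act'$-step and the second the $\act^T_e$-step (only these produce the marker pattern), then reverse the construction.

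\textbf{Main obstacle.} The genuinely delicate part is establishing $A_s'' = A_e''$, i.e.\ that the two-step "apply $\act'$, then apply $\act^T_e$" in the S-GKAB computes precisely $\evol(T, A_e, F^+, F^-)$. This requires carefully matching the priority-to-new-facts behavior of $\evol$ (new facts in $F^+$ are kept, old conflicting facts are dropped) against the asymmetric deletion effects of $\act^T_e$ guided by the shadow vocabulary $N^n$, and confirming the shadow facts are correctly set by $\act'$ and fully flushed by $\act^T_e$. Lemmas~\ref{lem:evol-prop} and~\ref{lem:eact-prop} are tailored to do exactly this, so modulo invoking them the argument should go through; the remaining care is bookkeeping of the special marker $\tmp$ and verifying that $T_s = T_p \cup T^n$ makes $A_s'$ $T_s$-consistent (the positive inclusions of $T$ are preserved, and the negative constraints live only on the $T^n$-copies, which $\act^T_e$ respects). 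A secondary subtlety is the side condition "$F^+$ is $T$-consistent" in the $\filter_E$ clause: when it fails, there is no E-GKAB transition, and correspondingly on the S-side the $T^n$-renamed negative constraints block the successor of the $\act'$-step, so no S-transition of the required shape exists either — I would note this case explicitly to keep the two directions aligned.
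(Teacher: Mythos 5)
Your proposal matches the paper's proof essentially step for step: the same unfolding of the $\filter_E$ transition into an action $\act$, legal parameter assignment $\sigma_e$, and service-call evaluation $\theta_e$; the same construction of $\sigma_s=\sigma_e$ and $\theta_s=\theta_e$; the same two-step matching in the S-GKAB via $\act'$ followed by the $0$-ary $\act^T_e$; and the same invocation of Lemmas~\ref{lem:evol-prop} and~\ref{lem:eact-prop} to derive the crucial equality $A_s''=A_e''$. The $\tmp$-marker bookkeeping you flag as needing verification is in fact not addressed in the paper's own proof either (its stated proof obligations omit the marker conditions from the S-bisimulation definition), so your caution is warranted but does not constitute a divergence from the paper's argument.
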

\begin{proof}
Let 
\begin{inparaitem}[]
\item 
  $\ts{\gkabsym}^{\filter_E} = \tup{\const, T, \stateset_e, s_{0e},
    \abox_e, \trans_e}$, and  
\item
%
  $\ts{\tgkabe(\gkabsym)}^{\filter_S} = \tup{\const, T_s,
    \stateset_s, s_{0s}, \abox_s, \trans_s}$. 
\end{inparaitem}
We have to show the following: for every state
$\tup{A''_e,\scmap''_e, \delta''_e}$ such that
  \[
  \tup{A_e,\scmap_e, \delta_e} \trans \tup{A''_e,\scmap''_e,
    \delta''_e},
  \]
  there exist states $\tup{A'_s,\scmap'_s, \delta'_s}$ and
  $\tup{A''_s,\scmap''_s, \delta''_s}$ such that:
\begin{compactenum}[\bf (a)]
\item
  $ \tup{A_s,\scmap_s, \delta_s} \trans_s \tup{A'_s,\scmap'_s,
    \delta'_s} \trans_s \tup{A''_s,\scmap''_s, \delta''_s} $
\item $A''_s = A''_e$;
\item $\scmap''_s = \scmap''_e$;
\item $\delta''_s = \tgproge(\delta''_e)$.
\end{compactenum}
By definition of $\ts{\gkabsym}^{\filter_E}$, 
since $\tup{A_e,\scmap_e, \delta_e} \trans \tup{A''_e,\scmap''_e,
  \delta''_e}$, we have $\tup{A_e,\scmap_e, \delta_e}
\gprogtrans{\act\sigma_e, \filter_E} \tup{A''_e,\scmap''_e,
  \delta''_e}$.
Hence, by the definition of $\gprogtrans{\act\sigma_e, \filter_E}$, we
have:
\begin{compactitem}
\item there exists an action $\act \in \actset$ with a corresponding
  action invocation $\gact{Q(\vec{p})}{\act(\vec{p})}$ and legal
  parameter assignment $\sigma_e$ such that $\act$ is executable in
  $A_e$ with (legal parameter assignment)
  $\sigma_e$, 
  and
\item $\tup{\tup{A_e, \scmap_e}, \act\sigma_e, \tup{A''_e, \scmap''_e}} \in \tell_{\filter_E}$.
\end{compactitem}
Since
$\tup{\tup{A_e, \scmap_e}, \act\sigma_e, \tup{A''_e, \scmap''_e}} \in
\tell_{\filter_E}$,
by the definition of $\tell_{\filter_E}$, there exists
$\theta_e \in \eval{\addfactss{A_e}{\act\sigma_e}}$ such that
\begin{compactitem}
\item $\theta_e$ and $\scmap_e$ agree on the common values in their domains.
\item $\scmap''_e = \scmap_e \cup \theta_e$.
\item
  $\tup{A_e, \addfactss{A_e}{\act\sigma_e}\theta_e,
    \delfactss{A_e}{\act\sigma_e}, A_e''} \in \filter_E$.
\item $A''_e$ is $T$-consistent.
\end{compactitem}
Since
$\tup{A_e, \addfactss{A_e}{\act\sigma_e}\theta_e,
  \delfactss{A_e}{\act\sigma_e}, A_e''} \in \filter_E$,
by the definition of $\filter_E$, we have
\begin{compactitem}
\item $\addfactss{A_e}{\act\sigma_e}\theta_e$ is $T$-consistent.
\item
  $ A_e'' = \evol(T, A_e, \addfactss{A_e}{\act\sigma_e}\theta_e,
  \delfactss{A_e}{\act\sigma_e})$.
\end{compactitem}
Furthermore, since $\delta_s = \tgproge(\delta_e)$, by the definition of
$\tgproge$, we have that
\[
\begin{array}{l}
  \tgproge(\gact{Q(\vec{p})}{\act(\vec{p})}) =  \gact{Q(\vec{p})}{\act'(\vec{p})}; \gact{\true}{\act^T_e()}
\end{array}
\]
Hence, the part of program that we need to execute on state
$\tup{A_s,\scmap_s, \delta_s}$ is
\[
\gact{Q(\vec{p})}{\act'(\vec{p})}; \gact{\true}{\act^T_e()}.
\]
%
%

Now, since: 
\begin{compactitem}
\item $\act' \in \actset_s$ is obtained from $\act \in \actset$
through $\tgkabe$,
%
%
\item the translation $\tgkabe$ transform $\act$ into $\act'$ without
  changing its parameters, 
\item $\sigma_e$ maps parameters of $\act \in \actset$ to individuals
  in $\adom{A_e}$
\end{compactitem}
then we can construct $\sigma_s$ mapping parameters of
$\act' \in \actset_s$ to individuals in $\adom{A_s}$ such that
$\sigma_s = \sigma_e$
%
Moreover, since $A_s = A_e$, we know that the certain answers computed
over $A_e$ are the same to those computed over $A_s$. Hence
%
%
$\act' \in \actset_s$ is executable in $A_s$ with (legal parameter
assignment) $\sigma_s$.
Furthermore, since $\scmap_s= \scmap_e$, then we can construct
$\theta_s$, such that $\theta_s = \theta_e$. 
%
%
Hence, we have the following:
\begin{compactitem}
\item $\theta_s$ and $\scmap_s$ agree on the common
values in their domains. 
\item
  $\scmap'_s = \theta_s \cup \scmap_s = \theta_e \cup \scmap_e =
  \scmap_e''$.
\end{compactitem}
Let
$A_s' = (A_s \setminus \delfactss{A_s}{\act'\sigma_s}) \cup
\addfactss{A_s}{\act'\sigma_s}\theta_s$,
as a consequence, we have
$\tup{A_s, \addfactss{A_s}{\act'\sigma_s}\theta_s, \delfactss{A_s}{\act'\sigma_s}, A_s'} \in \filter_S$.

Since $A_s = A_e$, $\theta_s = \theta_e$, and $\sigma_s = \sigma_e$,
it follows that
\begin{compactitem}
\item $\delfactss{A_e}{\act\sigma_e}  =  \delfactss{A_s}{\act'\sigma_s}$.
\item $N(c) \in \addfactss{A_e}{\act\sigma_e}\theta_e$ if and only
  if $N(c), N^n(c) \in \addfactss{A_s}{\act'\sigma_s}\theta_s$.
\item $P(c_1,c_2) \in \addfactss{A_e}{\act\sigma_e}\theta_e$ if and
  only if
  $P(c_1,c_2), P^n(c_1,c_2) \in
  \addfactss{A_s}{\act'\sigma_s}\theta_s$.
\end{compactitem}

\noindent
As a consequence, since $\addfactss{A_e}{\act\sigma_e}\theta_e$ is
$T$-consistent, then we have
$\addfactss{A_s}{\act'\sigma_s}\theta_s$ is $T_s$-consistent.
Moreover, 
because $A_s$ is $T_s$-consistent,
$\addfactss{A_s}{\act'\sigma_s}\theta_s$ is $T_s$-consistent, and
also considering how $A_s'$ is constructed, we then have $A_s'$ is
$T_s$-consistent.
Thus we have
$\tup{\tup{A_s,\scmap_s}, \act'\sigma_s, \tup{A_s', \scmap_s'}} \in
\tell_{\filter_S}$, and we also have\\ \\
$
\tup{A_s,\scmap_s,
  \gact{Q(\vec{p})}{\act'(\vec{p})};\gact{\true}{\act^T_e()}} \\ \hspace*{33mm}
\gprogtrans{\act'\sigma_s, \filter_S} \tup{A_s', \scmap_s',
  \gact{\true}{\act^T_e()}}.
$\\
It is to see that we have
\[
\begin{array}{@{}l@{}l@{}}
  \tup{A_s', \scmap_s', \gact{\true}{\act^T_e()}}
  \gprogtrans{\act^T_e\sigma'_s, \filter_S} 
  \tup{A_s'', \scmap_s'', \gemptyprog}
\end{array}
\]
 where 
\begin{compactitem}
\item $\final{\tup{A_s'', \scmap_s'', \gemptyprog}}$
\item $\sigma'_s$ is empty legal parameter assignment (because $\act^T_e$ is
  0-ary action).
\item $\scmap_s'' = \scmap_e''$, (due to the fact that $\act^T_e$ does
  not involve any service call (i.e., $\scmap_s'' = \scmap_s'$) and
  $\scmap_s' = \scmap_e''$).
\end{compactitem}
Additionally, by the definition of $\tgproge$, we have
$\delta''_s = \tgproge(\delta''_e)$ as the rest of the program to be
executed (because $\tup{A_s'', \scmap_s'', \gemptyprog}$ is a final
state). Hence, we have
\[
\tup{A_s,\scmap_s, \delta_s} \trans_s 
\tup{A'_s,\scmap'_s, \delta'_s} \trans_s 
\tup{A''_s,\scmap''_s, \delta''_s}
\]
%
To complete the proof, we obtain $A_s'' = A_e''$ simply as a
consequence of the following facts:
\begin{compactenum}
\item $A_s = A_e$;

\item   By Lemma~\ref{lem:evol-prop}, we have $N(c) \in A''_e$ if and only
  if either
\begin{compactitem}[$\bullet$]
\item $N(c) \in \addfactss{A_e}{\act\sigma_e}\theta_e$, or
\item $N(c) \in (A_e \setminus \delfactss{A_e}{\act\sigma_e})$ and
  there does not exists
  $B(c) \in \addfactss{A_e}{\act\sigma_e}\theta_e$ such that
  $T \models N \sqsubseteq \neg B$;
\end{compactitem}

\item By Lemma~\ref{lem:eact-prop}, we have $N(c') \in A''_s$ if and only
  if $N$ is not in the vocabulary of TBox $T^n$ and either
\begin{compactitem}[$\bullet$]
\item $N(c') \in \addfactss{A_s}{\act'\sigma_s}\theta_s$, or
\item $N(c') \in (A_s \setminus \delfactss{A_s}{\act\sigma_s})$ and
  there does not exists
  $B(c') \in \addfactss{A_s}{\act'\sigma_s}\theta_s$ such that
  $T \models N \sqsubseteq \neg B$.
\end{compactitem}

\item $\delfactss{A_e}{\act\sigma_e}  =  \delfactss{A_s}{\act'\sigma_s}$.
\item $N(c) \in \addfactss{A_e}{\act\sigma_e}\theta_e$ if and only
  if $N(c), N^n(c) \in \addfactss{A_s}{\act'\sigma_s}\theta_s$.
\item $P(c_1,c_2) \in \addfactss{A_e}{\act\sigma_e}\theta_e$ if and
  only if
  $P(c_1,c_2), P^n(c_1,c_2) \in
  \addfactss{A_s}{\act'\sigma_s}\theta_s$.
\item $\act^T_e$ flushes all ABox assertions made by using
  $\voc(T^n)$.
\end{compactenum}
The other direction of bisimulation relation can be proven
symmetrically.
\end{proof}

Having Lemma~\ref{lem:egkab-to-sgkab-bisimilar-state} in hand, we can
easily show that given an E-GKAB, its transition system is S-bisimilar
to the transition of its corresponding S-GKAB that is obtained via the
translation $\tgkabe$.

\begin{lemma}\label{lem:egkab-to-sgkab-bisimilar-ts}
  Given a E-GKAB $\gkabsym$, we have $\ts{\gkabsym}^{\filter_E} \sbsim
  \ts{\tgkabe(\gkabsym)}^{\filter_S} $
\end{lemma}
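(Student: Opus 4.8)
The plan is to follow exactly the same pattern as the proof of Lemma~\ref{lem:cgkab-to-sgkab-bisimilar-ts} (the analogous statement for C-GKABs), lifting the state-level bisimilarity established in Lemma~\ref{lem:egkab-to-sgkab-bisimilar-state} to the level of entire transition systems. First I would unfold the two transition systems: write $\gkabsym = \tup{T, \initABox, \actset, \ginitprog}$ with $\ts{\gkabsym}^{\filter_E} = \tup{\const, T, \stateset_e, s_{0e}, \abox_e, \trans_e}$, and $\tgkabe(\gkabsym) = \tup{T_s, \initABox, \actset_s, \ginitprog_s}$ with $\ts{\tgkabe(\gkabsym)}^{\filter_S} = \tup{\const, T_s, \stateset_s, s_{0s}, \abox_s, \trans_s}$, where $T_s = T_p \cup T^n$.

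Next I would examine the two initial states. By construction of $\tgkabe$, the initial ABox of $\tgkabe(\gkabsym)$ is $\initABox$ itself (the renamed vocabulary $T^n$ starts out empty, so no $N^n$ or $P^n$ facts are present), hence $s_{0e} = \tup{\initABox, \emptyset, \ginitprog}$ and $s_{0s} = \tup{\initABox, \emptyset, \tgproge(\ginitprog)}$. Since $\initABox$ is $T$-consistent by assumption on GKABs, the three hypotheses of Lemma~\ref{lem:egkab-to-sgkab-bisimilar-state} — namely $A_s = A_e$, $\scmap_s = \scmap_e$, $A_s$ is $T$-consistent, and $\delta_s = \tgproge(\delta_e)$ — are all satisfied with $A_e = A_s = \initABox$, $\scmap_e = \scmap_s = \emptyset$, and $\delta_e = \ginitprog$. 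Therefore $s_{0e} \sbsim s_{0s}$.

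Finally, by the definition of S-bisimilarity between transition systems (a transition system $\ts{1}$ is S-bisimilar to $\ts{2}$ if their initial states are related by some S-bisimulation), from $s_{0e} \sbsim s_{0s}$ we immediately conclude $\ts{\gkabsym}^{\filter_E} \sbsim \ts{\tgkabe(\gkabsym)}^{\filter_S}$, which is the claim.

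I do not expect any real obstacle here: this lemma is a one-line corollary of the preceding state-level result, and the only point requiring a moment's care is verifying that the initial state of the translated S-GKAB genuinely satisfies the $T$-consistency hypothesis and the program-translation hypothesis of Lemma~\ref{lem:egkab-to-sgkab-bisimilar-state} — both of which are immediate from the definition of $\tgkabe$ and the standing assumption that $\initABox$ is $T$-consistent. All the genuine work (the careful bisimulation argument using Lemmas~\ref{lem:evol-prop} and~\ref{lem:eact-prop} about the evolution action and its handling of the duplicated vocabulary) has already been discharged in Lemma~\ref{lem:egkab-to-sgkab-bisimilar-state}.
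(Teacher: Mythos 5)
Your proposal is correct and follows essentially the same route as the paper's own proof: unfold the two transition systems, observe that the initial states share the ABox $\initABox$ and the empty service call map and that $\delta_s = \tgproge(\ginitprog)$ by construction of $\tgkabe$, invoke Lemma~\ref{lem:egkab-to-sgkab-bisimilar-state} to get $s_{0e} \sbsim s_{0s}$, and conclude by the definition of S-bisimilar transition systems. Your explicit check that the initial state satisfies the $T$-consistency hypothesis of the state-level lemma is a small point of extra care that the paper's proof leaves implicit, but it does not change the argument.
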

\begin{proof}
Let
\begin{compactenum}
\item $\gkabsym = \tup{T, \initABox, \actset, \ginitprog_e}$ and\\
  $\ts{\gkabsym}^{\filter_E} =
  \tup{\const, T, \stateset_e, s_{0e}, \abox_e, \trans_e}$,
\item
  $\tgkabb(\gkabsym) = \tup{T_s, \initABox, \actset_s, \ginitprog_s}$
  and \\ %
  $\ts{\tgkabe(\gkabsym)}^{\filter_S} = \tup{\const, T_s, \stateset_s,
    s_{0s}, \abox_s, \trans_s}$)
\end{compactenum}
We have that $s_{0e} = \tup{A_0, \scmap_e, \delta_e}$ and
$s_{0s} = \tup{A_0, \scmap_s, \delta_s}$ where
$\scmap_e = \scmap_s = \emptyset$. 
By the definition of $\tgproge$ and $\tgkabe$, we also have
$\delta_s = \tgproge(\delta_e)$.  Hence, by
Lemma~\ref{lem:egkab-to-sgkab-bisimilar-state}, we have
$s_{0e} \sbsim s_{0s}$. Therefore, by the definition of one-step
jumping history bisimulation, we have
$\ts{\gkabsym}^{\filter_E} \sbsim \ts{\tgkabe(\gkabsym)}^{\filter_S}$.
\end{proof}

Having all of the ingredients in hand, we are now ready to show that
the verification of \muladom properties over E-GKABs can be recast as
verification over S-GKABs as follows.

\begin{theorem}\label{thm:egkab-to-sgkab}
  Given an E-GKAB $\gkabsym$ and a closed $\muladom$ property $\Phi$,
\begin{center}
  $\ts{\gkabsym}^{\filter_E} \models \Phi$ iff
  $\ts{\tgkabe(\gkabsym)}^{\filter_S} \models \tford(\Phi)$
\end{center}
\end{theorem}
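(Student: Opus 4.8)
The proof of the theorem itself will be a one-liner once the preceding machinery is in place: by Lemma~\ref{lem:egkab-to-sgkab-bisimilar-ts} the transition systems $\ts{\gkabsym}^{\filter_E}$ and $\ts{\tgkabe(\gkabsym)}^{\filter_S}$ are S-bisimilar, and then Lemma~\ref{lem:sbisimilar-ts-satisfies-same-formula} immediately yields that no closed \muladom formula can distinguish them modulo the translation $\tford$, i.e.\ $\ts{\gkabsym}^{\filter_E} \models \Phi$ iff $\ts{\tgkabe(\gkabsym)}^{\filter_S} \models \tford(\Phi)$. So the plan is simply to assemble these two ingredients: the formula-preservation property of S-bisimulation is generic, obtained exactly as in the C-GKAB case by induction over \ladom, then extension to arbitrary countable disjunctions, then to full \muladom via fixpoint approximants; hence all the real content lies in establishing the S-bisimilarity of the two transition systems.

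To prove the S-bisimilarity I would take the bisimulation relation induced by the predicate relating a state $\tup{A_e,\scmap_e,\delta_e}$ of the E-GKAB to a state $\tup{A_s,\scmap_s,\delta_s}$ of $\tgkabe(\gkabsym)$ whenever $A_s = A_e$ up to the tracking vocabulary $T^n$ (which is empty outside intermediate states), $\scmap_s = \scmap_e$, $A_s$ is $T_s$-consistent, and $\delta_s = \tgproge(\delta_e)$, where $\tgproge$ replaces every atomic invocation $\gact{Q(\vec{p})}{\act(\vec{p})}$ by $\gact{Q(\vec{p})}{\act'(\vec{p})};\gact{\true}{\act^T_e()}$. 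I would then show by structural induction on $\delta_e$ that a single $\filter_E$-step of the E-GKAB is matched by exactly two consecutive $\filter_S$-steps of $\tgkabe(\gkabsym)$ — the first executing $\act'$ and landing in an intermediate state carrying $\tmp$, the second executing $\act^T_e$ — and conversely. The cases for $\gemptyprog$, nondeterministic choice, sequencing, conditional and while-loop are verbatim copies of the C-GKAB argument (Lemma~\ref{lem:cgkab-to-sgkab-bisimilar-state}); the only genuinely new case is the atomic action invocation, where the parameter substitution $\sigma$ and the service-call evaluation $\theta$ must be transported across — which is legal and well-defined because the ECQ certain answers and the sets of service calls coincide on $A_e$ and $A_s$ — and one must then verify that the ABox reached agrees, on the original vocabulary, with $\evol(T, A_e, \addfactss{A_e}{\act\sigma}\theta, \delfactss{A_e}{\act\sigma})$.

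The main obstacle is precisely this last point — that running $\act'$ followed by $\act^T_e$ faithfully implements the bold-evolution operator — and it is handled by the two preceding technical lemmas. Lemma~\ref{lem:evol-prop} characterizes membership: $N(c) \in \evol(T,A,F^+,F^-)$ iff $N(c) \in F^+$, or $N(c) \in A \setminus F^-$ and there is no $B(c) \in F^+$ with $T \models N \ISA \neg B$; Lemma~\ref{lem:eact-prop} gives the matching characterization of the ABox produced by $\act';\act^T_e$. The delicate points to get right are: (i) the duplicated effects $\map{Q}{\add F^+ \cup {F^+}^n, \del F^-}$ record in the tracking concepts exactly which facts are newly introduced, so that the conflict-resolution effects of $\act^T_e$, such as $\map{\qunsatn(B_1 \ISA \neg B_2,x)\land B_1^n(x)}{\del \set{B_2(x)}}$, remove the \emph{old} side of each violation — which is exactly the priority clause of $\evol$; (ii) the side condition ``$F^+$ is $T$-consistent'' appearing in $\filter_E$ corresponds to the blocking of the S-GKAB successor, because $T^n$ carries the negative inclusions of $T$ onto the tracking vocabulary, so an inconsistent $F^+$ makes the intermediate state $T_s$-inconsistent and no $\filter_S$-transition exists; and (iii) the flushing effects $\map{N^n(x)}{\del \set{N^n(x)}}$ leave the tracking vocabulary empty in the successor, re-establishing the invariant required by the bisimulation predicate. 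Once these are checked, instantiating the relation at the initial states — where $A_0$ contains no tracking facts and is $T$-consistent by assumption — gives $\ts{\gkabsym}^{\filter_E} \sbsim \ts{\tgkabe(\gkabsym)}^{\filter_S}$, and the theorem follows.
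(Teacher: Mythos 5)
Your proposal matches the paper's own proof essentially step for step: the same reduction to S-bisimilarity plus the formula-preservation property of S-bisimulation (Lemma~\ref{lem:sbisimilar-ts-satisfies-same-formula}), the same bisimulation predicate relating states via equal ABoxes and service-call maps with $\delta_s = \tgproge(\delta_e)$, and the same two key technical lemmas (Lemma~\ref{lem:evol-prop} characterizing $\evol$ and Lemma~\ref{lem:eact-prop} characterizing the effect of $\act';\act^T_e$), including the three delicate points about the tracking vocabulary, the blocking of $T$-inconsistent $F^+$, and the flushing of $T^n$. The only cosmetic difference is that you organize the one-step-matching argument as a structural induction on the program, whereas the paper reasons directly on an arbitrary program-execution step; the content is the same.
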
 
\begin{proof}
  By Lemma~\ref{lem:egkab-to-sgkab-bisimilar-ts}, we have that
  $\ts{\gkabsym}^{\filter_E} \sbsim
  \ts{\tgkabe(\gkabsym)}^{\filter_S}$.
  Hence, by Lemma~\ref{lem:sbisimilar-ts-satisfies-same-formula}, we have
  that the claim is proved.
\end{proof}

\section{Putting It All Together:  From I-GKABs to S-KABs}

\subsubsection{Proof of Theorem~\ref{thm:itos}}
\begin{proof}
As a consequence of Theorems~\ref{thm:bgkab-to-sgkab},
\ref{thm:cgkab-to-sgkab}, and \ref{thm:egkab-to-sgkab}, we essentially
show that the verification of \muladom properties over I-GKABs can be
recast as verification over S-GKABs since we can recast the
verification of \muladom properties over B-GKABs, C-GKABs, and E-GKABs
as verification over S-GKABs.
\end{proof}

\subsubsection{Proof of Theorem~\ref{thm:itoverys}}
\begin{proof}
  The proof is easily obtained from the Theorems~\ref{thm:itos} and
  \ref{thm:gtos}, since by Theorem~\ref{thm:itos} we can recast the
  verification of \muladom over I-GKABs as verification over S-GKABs
  and then by Theorem~\ref{thm:gtos} we can recast the verification of
  \muladom over S-GKABs as verification over S-KABs. Thus combining
  those two ingredients, we can reduce the verification of \muladom
  over I-GKABs into the corresponding verification of \muladom over
  S-KABs.
\end{proof}

\subsection{Verification of Run-bounded I-GKABs}

This section is devoted to show the proof of Theorem
\ref{thm:i-to-kab}. As the preliminary step, we formalize the notion
of run-boundedness as follows.

\begin{definition}[Run of a GKAB Transition System]
  Given a GKAB $\gkabsym$, a \emph{run of
    $\ts{\gkabsym}^\filter = \tup{\const, T, \stateset, s_{0}, \abox, \trans}$}
  is a (possibly infinite) sequence $s_0s_1\cdots$ of states of
  $\ts{\gkabsym}^\filter$ such that $s_i\trans s_{i+1}$, for all $i\geq 0$.
\end{definition}

\begin{definition}[Run-bounded GKAB]
  Given a GKAB $\gkabsym$, we say \emph{$\gkabsym$ is run-bounded} if
  there exists an integer bound $b$ such that for every run
  $\pi = s_0s_1\cdots$ of $\ts{\gkabsym}^\filter$, we have that
  $\card{\bigcup_{s \textrm{ state of } \pi}\adom{\abox(s)}} < b$.
\end{definition}
\noindent
The notion of run-bounded KABs is similar.

Now we proceed to show that the reductions from I-GKABs to S-GKABs
preserve run-boundedness.

\begin{lemma}\label{lem:run-bounded-preservation-bgkab}
  Let $\gkabsym$ be a B-GKAB and $\tgkabb(\gkabsym)$ be its
  corresponding S-GKAB. We have if $\gkabsym$ is run-bounded, then
  $\tgkabb(\gkabsym)$ is run-bounded.
\end{lemma}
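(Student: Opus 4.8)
The plan is to fix an arbitrary run $\pi' = s'_0 s'_1 \cdots$ of $\ts{\tgkabb(\gkabsym)}^{\filter_S}$ and bound $\card{\bigcup_{i}\adom{\abox(s'_i)}}$ by a quantity depending only on the run-bound $b$ of $\gkabsym$, on $\iconst$, and on the set $\servcall$ of service calls, but not on $\pi'$. Recall that the atomic actions of $\tgkabb(\gkabsym)$ are $\actset \cup \actset_b^T \cup \actset_{rep}$ and that $\ginitprog'$ is obtained from $\ginitprog$ by expanding each atomic action invocation on some $\act\in\actset$ into the original invocation, followed by $\act^+_{rep}()$, followed by the b-repair program $\delta^T_b$, followed by $\act^-_{rep}()$. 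Hence $\pi'$ splits into consecutive \emph{blocks}, each consisting of one application of an $\act\in\actset$, then one application of $\act^+_{rep}$ (which only adds the marker $\temp$), then finitely many applications of b-repair actions from $\actset_b^T$ (each of which only deletes facts), then one application of $\act^-_{rep}$ (which only deletes $\temp$); termination of each repair phase is guaranteed by Lemma~\ref{lem:bprog-termination}. Call the first state of each block a \emph{stable state} and write these $\sigma_0, \sigma_1, \ldots$ along $\pi'$; note $\temp\notin\abox(\sigma_j)$ and $\abox(\sigma_j)$ is an ABox over $\voc(T)$.

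First I would extract a run of the B-GKAB mirroring the stable states of $\pi'$. By (the proof of) Lemma~\ref{lem:bgkab-to-sgkab-bisimilar-state} together with the L-bisimulation of Lemma~\ref{lem:bgkab-to-sgkab-bisimilar-ts}, starting from $s_{0b}$ (the initial state of $\ts{\gkabsym}^{\filter_B}$, which is L-bisimilar to $\sigma_0 = s'_0$) and matching each block of $\pi'$ against a single transition, one obtains a run $\tau = r_0 r_1 \cdots$ of $\ts{\gkabsym}^{\filter_B}$ with $\abox(r_j) = \abox(\sigma_j)$ for all $j$ (if $\pi'$ is finite and ends inside a block, take $\tau$ up to the last completed block; the remaining partial block is absorbed by the block analysis below). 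Since $\gkabsym$ is run-bounded, the set $D_\tau := \bigcup_{r\text{ state of }\tau}\adom{\abox(r)}$ satisfies $\card{D_\tau} < b$, hence $\adom{\abox(\sigma_j)} \subseteq D_\tau$ for every $j$.

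Next I would analyse a single block $j$. Let $\act\in\actset$ be the original action and $\sigma$ the legal parameter assignment used in that block, let $F^+ = \addfactss{\abox(\sigma_j)}{\act\sigma}$ and $F^- = \delfactss{\abox(\sigma_j)}{\act\sigma}$, and let $\theta$ be the corresponding service-call evaluation. Since $\tgkabb(\gkabsym)$ carries only the positive inclusions $T_p$, the filter $\filter_S$ never rejects, so the state obtained right after the action has ABox $(\abox(\sigma_j)\setminus F^-)\cup F^+\theta$; its individuals lie in $\adom{\abox(\sigma_j)}$, in $\iconst$, or are among the service-call results produced by $\theta$ — write $C_j$ for this finite set. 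All further states of the block are obtained by adding the single constant $\mathit{rep}$ and then only by deleting facts, so \emph{every} state $s$ occurring in block $j$ satisfies $\adom{\abox(s)} \subseteq D_\tau \cup \iconst \cup C_j \cup \set{\mathit{rep}}$.

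Finally I would bound $\bigcup_j C_j$ uniformly. The service calls issued in block $j$ are Skolem terms $f(\vec{v})$ with $f\in\servcall$ whose arguments, after grounding, are individuals in $\adom{\abox(\sigma_j)}\cup\iconst$ (service-call arguments are not nested and can only be individuals in $\iconst$, action parameters, or free variables of effect queries, all of which evaluate into $\adom{\abox(\sigma_j)}\cup\iconst \subseteq D_\tau\cup\iconst$). Hence the set of all service calls issued along $\pi'$ is contained in $\set{f(\vec{v}) : f\in\servcall,\ \vec{v}\in (D_\tau\cup\iconst)^{\mathrm{arity}(f)}}$, which has at most $N := \sum_{f\in\servcall}(b - 1 + \card{\iconst})^{\mathrm{arity}(f)}$ elements; by the deterministic service-call semantics each such call has a unique result, so $\card{\bigcup_j C_j} \le N$. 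Combining the steps, $\card{\bigcup_{s\text{ state of }\pi'}\adom{\abox(s)}} \le \card{D_\tau} + \card{\iconst} + N + 1 < b + \card{\iconst} + N + 1$, a bound independent of $\pi'$; thus $\tgkabb(\gkabsym)$ is run-bounded. The main obstacle is the second step — making the block decomposition of $\pi'$ precise and verifying, via the L-bisimulation and the correctness/termination of $\delta^T_b$ (Lemma~\ref{lem:bprog-termination}, Theorem~\ref{thm:bprog-equal-brep}), that the stable states of $\pi'$ are exactly the ABox-trace of a genuine run of $\ts{\gkabsym}^{\filter_B}$; the remainder is routine bookkeeping once one observes that the repair machinery ($\actset_b^T\cup\actset_{rep}$) never introduces a fresh individual.
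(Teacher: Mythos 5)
Your proof is correct and follows the same route as the paper's own argument, which is only a three-bullet sketch observing that $\tgkabb$ merely appends repair programs to each action invocation, that the repair actions never inject fresh individuals, and that the two transition systems are L-bisimilar. Your block decomposition and, in particular, the explicit counting bound on the service-call results that appear in the intermediate pre-repair states --- individuals which the run-bound of the B-GKAB does not directly control, since the b-repair may discard the facts containing them before they ever reach a stable state --- supply the one genuinely nontrivial detail that the paper's sketch glosses over.
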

\begin{proof}
  Let
  \begin{compactenum}
  \item $\gkabsym = \tup{T, \initABox, \actset, \ginitprog}$ and
    $\ts{\gkabsym}^{\filter_B} $ be its transition system.
  \item $\ts{\tgkabb(\gkabsym)}^{\filter_S}$ be the transition system
    of $\tgkabb(\gkabsym)$.
  \end{compactenum}
  The proof is easily obtained since
  \begin{compactitem}
  \item the translation $\tgkabb$ essentially only appends each action
    invocation in $\delta$ with some additional programs to manage
    inconsistency.
  \item the actions introduced to manage inconsistency never inject
    new individuals, but only remove facts causing inconsistency,
  \item by Lemma~\ref{lem:bgkab-to-sgkab-bisimilar-ts}, we have that
    $\ts{\gkabsym}^{\filter_B} \lbsim \tgkabb(\gkabsym)$. Thus,
    basically they are equivalent modulo repair states (states
    containing $\temp$).
  \end{compactitem}
\end{proof}

\begin{lemma}\label{lem:run-bounded-preservation-cgkab}
  Let $\gkabsym$ be a C-GKAB and $\tgkabc(\gkabsym)$ be its
  corresponding S-GKAB. We have if $\gkabsym$ is run-bounded, then
  $\tgkabc(\gkabsym)$ is run-bounded.
\end{lemma}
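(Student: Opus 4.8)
The plan is to mirror exactly the structure of Lemma~\ref{lem:run-bounded-preservation-bgkab}, since the reduction $\tgkabc$ from C-GKABs to S-GKABs has the same shape as $\tgkabb$: it appends a bounded amount of extra machinery to each atomic action invocation and introduces auxiliary actions that only \emph{delete} facts (in this case, the single 0-ary c-repair action $\act^T_c$). First I would fix notation: let $\gkabsym = \tup{T, \initABox, \actset, \ginitprog}$ with transition system $\ts{\gkabsym}^{\filter_C}$, and let $\ts{\tgkabc(\gkabsym)}^{\filter_S}$ be the transition system of $\tgkabc(\gkabsym)$. Recall that $\tgkabc(\gkabsym) = \tup{T_p, \initABox, \actset \cup \set{\act^T_c}, \ginitprog'}$, where $\ginitprog'$ replaces each atomic action invocation $\gact{Q(\vec{p})}{\act(\vec{p})}$ by $\gact{Q(\vec{p})}{\act(\vec{p})};\gact{\true}{\act^T_c()}$.

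The key observations to assemble are: (i) the c-repair action $\act^T_c$ never issues service calls and never adds facts over fresh individuals---its effects are all of the form $\map{\ldots}{\set{\del \ldots}}$, so it can only remove ABox assertions; (ii) consequently, along any run of $\ts{\tgkabc(\gkabsym)}^{\filter_S}$, the only transitions that can introduce new individuals into the active domain are those simulating the original atomic actions of $\gkabsym$; (iii) by Lemma~\ref{lem:cgkab-to-sgkab-bisimilar-ts}, $\ts{\gkabsym}^{\filter_C} \sbsim \ts{\tgkabc(\gkabsym)}^{\filter_S}$, and the skip-one bisimulation witnesses a tight correspondence---every state of $\ts{\tgkabc(\gkabsym)}^{\filter_S}$ not containing $\tmp$ has the same ABox as a corresponding state of $\ts{\gkabsym}^{\filter_C}$, and the intermediate $\tmp$-states carry ABoxes whose active domain is contained in that of the immediately preceding non-$\tmp$ state (since $\act^T_c$ only deletes). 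Thus the cumulative active domain along any run of $\ts{\tgkabc(\gkabsym)}^{\filter_S}$ equals the cumulative active domain along the corresponding run of $\ts{\gkabsym}^{\filter_C}$, which is bounded by the run-bound $b$ of $\gkabsym$ by hypothesis.

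Putting these together: given that $\gkabsym$ is run-bounded with bound $b$, I would argue that for every run $\pi' = s_0' s_1' \cdots$ of $\ts{\tgkabc(\gkabsym)}^{\filter_S}$, there is a corresponding run $\pi = s_0 s_1 \cdots$ of $\ts{\gkabsym}^{\filter_C}$ (obtained by collapsing each two-step block $s_i' \trans s_{i+1}' \trans s_{i+2}'$ into a single transition, in the manner guaranteed by the skip-one bisimulation) such that $\bigcup_{s' \text{ state of } \pi'} \adom{\abox(s')} \subseteq \bigcup_{s \text{ state of } \pi} \adom{\abox(s)}$, where the inclusion holds modulo the special marker constants which do not contribute, and where the $\tmp$-intermediate states only remove, never add, individuals. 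Hence $\card{\bigcup_{s' \text{ state of } \pi'}\adom{\abox(s')}} < b$, so $\tgkabc(\gkabsym)$ is run-bounded. The main obstacle, such as it is, is being careful about the bookkeeping in step (iii): one must verify that the c-repair action's guards (which extract only individuals already involved in an inconsistency, hence already in the current ABox) genuinely cannot enlarge the active domain, and that the skip-one bisimulation indeed pairs up runs rather than merely states---but both follow directly from the definition of $\act^T_c$ and from Lemma~\ref{lem:cact-equal-crep}, so no substantively new argument beyond the B-GKAB case is needed.
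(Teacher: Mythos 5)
Your proposal is correct and follows essentially the same route as the paper: the paper's proof likewise observes that the c-repair action only deletes facts (never introducing new individuals or service calls) and invokes the S-bisimulation of Lemma~\ref{lem:cgkab-to-sgkab-bisimilar-ts} to transfer the run bound, exactly mirroring the B-GKAB case. Your additional bookkeeping about the guards of $\act^T_c$ and the collapsing of two-step blocks is a harmless elaboration of what the paper leaves implicit.
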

\begin{proof}
  Similar to the proof of
  Lemma~\ref{lem:run-bounded-preservation-bgkab} but using the
  S-Bisimulation.
\end{proof}

\begin{lemma}\label{lem:run-bounded-preservation-egkab}
  Let $\gkabsym$ be a E-GKAB and $\tgkabe(\gkabsym)$ be its
  corresponding S-GKAB. We have if $\gkabsym$ is run-bounded, then
  $\tgkabe(\gkabsym)$ is run-bounded.
\end{lemma}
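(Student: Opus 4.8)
The plan is to mirror the proofs of Lemmas~\ref{lem:run-bounded-preservation-bgkab} and~\ref{lem:run-bounded-preservation-cgkab}, adapting them to the specific features of the E-GKAB translation $\tgkabe$. Recall that $\tgkabe$ produces the S-GKAB $\tgkabe(\gkabsym) = \tup{T_p \cup T^n, \initABox, \actset' \cup \set{\act^T_e}, \ginitprog'}$, where $\ginitprog'$ is obtained from $\ginitprog$ by replacing each action invocation $\gact{Q(\vec{p})}{\act(\vec{p})}$ with $\gact{Q(\vec{p})}{\act'(\vec{p})};\gact{\true}{\act^T_e()}$. The three key facts I would use are: (i) the additional action $\act^T_e$ introduced to manage inconsistency never injects fresh individuals via service calls — it only deletes old facts when an inconsistency arises and flushes the concept/role names in $\voc(T^n)$; (ii) the translated actions $\act'$ differ from the original $\act$ only by duplicating each $F^+$ into $F^+ \cup {F^+}^n$, i.e., they add copies of already-introduced individuals into the fresh vocabulary $T^n$, but they do not introduce any individual not already introduced by $\act$; and (iii) by Lemma~\ref{lem:egkab-to-sgkab-bisimilar-ts} we have $\ts{\gkabsym}^{\filter_E} \sbsim \ts{\tgkabe(\gkabsym)}^{\filter_S}$, so the two transition systems coincide modulo the intermediate states marked with $\tmp$ (and modulo special-marker and $T^n$-vocabulary assertions, which are ignored in the $\sbsim$ notion of equality on ABoxes).

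First I would set up the notation: let $\gkabsym = \tup{T, \initABox, \actset, \ginitprog}$ with transition system $\ts{\gkabsym}^{\filter_E}$, and let $\ts{\tgkabe(\gkabsym)}^{\filter_S}$ be the transition system of $\tgkabe(\gkabsym)$. Assume $\gkabsym$ is run-bounded with bound $b$. I would then take an arbitrary run $\pi' = s'_0 s'_1 \cdots$ of $\ts{\tgkabe(\gkabsym)}^{\filter_S}$ and show that its cumulative active domain is bounded. By the S-bisimulation of Lemma~\ref{lem:egkab-to-sgkab-bisimilar-ts}, the sequence obtained from $\pi'$ by deleting the intermediate states containing $\tmp$ corresponds to a run $\pi$ of $\ts{\gkabsym}^{\filter_E}$, and at each non-intermediate state the ABox of $\pi'$ equals that of $\pi$ modulo special markers and modulo the duplicated $T^n$-assertions. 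Since run-boundedness counts only individuals from $\const$ (not the finite reserved sets $\const_0$ of special-marker constants), the individuals appearing in non-intermediate states of $\pi'$ are exactly those appearing in $\pi$, hence at most $b$. For the intermediate states containing $\tmp$: these are produced by executing $\act^T_e$, which by fact (i) above never introduces a new individual — it only deletes facts and flushes $T^n$-names — so the active domain of an intermediate state is contained in the active domain of the immediately preceding non-intermediate state (modulo the $T^n$-duplicates, whose underlying individuals are already counted). Consequently $\card{\bigcup_{s \text{ state of } \pi'} \adom{\abox(s)}} \le b$, which establishes that $\tgkabe(\gkabsym)$ is run-bounded.

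The main technical care needed — and the step I would treat most carefully — is ensuring that the duplicated vocabulary $T^n$ does not secretly introduce new individuals into the active domain. This is where one must observe that ${F^+}^n$ is obtained from $F^+$ purely by renaming concept/role names, so every individual occurring in ${F^+}^n$ already occurs in $F^+$ and hence is already accounted for in the original E-GKAB run. A secondary subtlety is that the action $\act'$ is invoked under exactly the same guard $Q$ as $\act$ and with the same parameter substitution (as established in the proof of Lemma~\ref{lem:egkab-to-sgkab-bisimilar-state}), so no extra parameter bindings — and thus no extra individuals — are introduced there either. Once these two observations are in place, the bound carries over verbatim, and the argument is otherwise a routine transfer of run-boundedness along the S-bisimulation, exactly parallel to Lemmas~\ref{lem:run-bounded-preservation-bgkab} and~\ref{lem:run-bounded-preservation-cgkab}.

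\begin{proof}
  Similar to the proof of
  Lemma~\ref{lem:run-bounded-preservation-cgkab} (using the
  S-Bisimulation), by additionally observing that the evolution
  action $\act^T_e$ never introduces fresh individuals (it only
  deletes facts causing inconsistency and flushes the
  $T^n$-vocabulary), and that the translated actions $\act' \in
  \actset'$ only duplicate $F^+$ into $F^+ \cup {F^+}^n$ via renaming
  of concept/role names, hence do not introduce any individual not
  already introduced by the corresponding original action $\act$.
\end{proof}
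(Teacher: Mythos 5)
Your proof is correct and follows essentially the same route as the paper's: transfer run-boundedness along the S-bisimulation of Lemma~\ref{lem:egkab-to-sgkab-bisimilar-ts}, observing that the inconsistency-management machinery introduces no fresh individuals. In fact you are somewhat more careful than the paper's own one-line argument, since you explicitly address the point (specific to the E-GKAB case) that the duplicated ${F^+}^n$ facts only rename predicates and therefore contribute no individuals beyond those already introduced by the original actions.
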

\begin{proof}
  Similar to the proof of
  Lemma~\ref{lem:run-bounded-preservation-bgkab} but using the
  S-Bisimulation.
\end{proof}

Below we formally state the fact that the reduction from S-GKABs to
S-KABs preserves run-boundedness.

\begin{lemma}\label{lem:run-bounded-preservation-sgkab}
  Let $\gkabsym$ be an S-GKAB and $\tgkab(\gkabsym)$ be its
  corresponding S-KAB. We have if $\gkabsym$ is run-bounded, then
  $\tgkab(\gkabsym)$ is run-bounded.
\end{lemma}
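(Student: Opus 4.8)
The plan is to reduce run-boundedness of $\tgkab(\gkabsym)$ to
  run-boundedness of $\gkabsym$, exploiting the J-bisimulation of
  Lemma~\ref{lem:sgkab-to-kab-bisimilar-ts} together with the fact that the
  compilation $\tgprog$ is purely bookkeeping: the auxiliary actions it
  introduces (the temp adders/deleters and the $\gamma_{\ldots}$ and
  $\act_\gemptyprog$ actions of Definition~\ref{def:prog-translation}) never
  issue service calls, never affect any assertion over $\voc(T)$, and use only
  finitely many extra constants. Concretely, let $M$ be the finite set of
  special-marker constants that can occur in reachable states of
  $\ts{\tgkab(\gkabsym)}^{S}$, namely $\tmpconst$ together with the fresh flag
  and $\noopconceptname$ constants generated while computing
  $\tgprog(\flagconcept{start},\ginitprog,\flagconcept{end})$; since the fixed
  program $\ginitprog$ has only finitely many sub-programs and each contributes
  finitely many fresh constants, $|M|$ is a constant depending only on
  $\gkabsym$.

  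First I would record the following structural facts, all immediate from
  Definition~\ref{def:prog-translation} and the auxiliary lemmas already
  proved. By Lemma~\ref{lem:action-set-separation} every action of
  $\tgkab(\gkabsym)$ is either a temp adder ($\actsettmpa$) or a temp deleter
  ($\actsettmpd$). By Lemma~\ref{lem:temp-state-produced-by-temp-act}, a
  temp-adder transition $\tup{A,\scmap}\exect{\act\sigma}\tup{A',\scmap'}$ has
  $\sigma$ empty, issues no service call, and satisfies $A'\eqm A$,
  $\scmap'=\scmap$; since $\eqm$ is equality modulo special markers and every
  special marker that can appear uses a constant in $M$, the individuals of
  $A'$ and of $A$ coincide up to elements of $M$. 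By
  Lemma~\ref{lem:non-temp-state-produced-by-normal-action}, a temp-deleter
  transition arises from the translation $\act'$ of an action invocation
  $\gact{Q(\vec{p})}{\act(\vec{p})}$ occurring in $\ginitprog$, and by
  construction the effects of $\act'$ on $\voc(T)$-assertions are exactly those
  of $\act$ (the remaining effects of $\act'$ only add or remove flags in $M$).
  Hence, along any transition of $\ts{\tgkab(\gkabsym)}^{S}$ a constant outside
  $M$ can be created only by a temp-deleter step, and then only as it would be
  created by the corresponding atomic action executed in $\gkabsym$.

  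Next I would lift this to runs. Take an arbitrary run $\pi_k=s_0 s_1 s_2\cdots$
  of $\ts{\tgkab(\gkabsym)}^{S}$ and let $s_{i_0}=s_0,s_{i_1},s_{i_2},\ldots$ be
  the subsequence of its \emph{stable} states, i.e.\ those whose ABox does not
  contain $\tmp$; by the facts above, between two consecutive stable states
  $\pi_k$ passes only through $\tmp$-states (reached by temp-adder steps), the
  final step of each such block being a temp deleter. Using the J-bisimulation
  $\B$ underlying Lemmas~\ref{lem:sgkab-to-kab-bisimilar-state} and
  \ref{lem:sgkab-to-kab-bisimilar-ts} (generated from the $\mimic$ relation of
  Definition~\ref{def:mimic-state}, whose third clause matches precisely such a
  block ``stable $\to$ $\tmp$-states $\to$ stable'' by a single step of
  $\gkabsym$), an easy induction on $j$ yields a run $\pi_g=t_0 t_1 t_2\cdots$
  of $\ts{\gkabsym}^{\filter_S}$ with $\abox(t_j)\eqm\abox(s_{i_j})$ for all
  $j$. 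The degenerate cases in which $\pi_k$ is finite, or stays forever inside
  a block of $\tmp$-states after its last stable state, are harmless, since
  those $\tmp$-states are $\eqm$-equal to the preceding stable state and hence
  carry no constant outside $M$ beyond those already present there. Combining,
  the constants outside $M$ occurring anywhere along $\pi_k$ are exactly those
  occurring along $\pi_g$, so
  \[
  \bigcup_{s\text{ state of }\pi_k}\adom{\abox(s)}\ \subseteq\ M\ \cup\
  \bigcup_{t\text{ state of }\pi_g}\adom{\abox(t)}.
  \]
  As $\gkabsym$ is run-bounded there is a bound $b$ with
  $\Card{\bigcup_{t\text{ state of }\pi_g}\adom{\abox(t)}}<b$, whence
  $\Card{\bigcup_{s\text{ state of }\pi_k}\adom{\abox(s)}}<b+|M|$; since $b+|M|$
  does not depend on $\pi_k$, $\tgkab(\gkabsym)$ is run-bounded.

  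The hard part will be turning the \emph{state}-level correspondence of
  Lemma~\ref{lem:sgkab-to-kab-bisimilar-state} into a clean \emph{run}
  correspondence: one must verify that the J-bisimulation, restricted to pairs
  of stable states, lets an infinite run of the S-KAB be tracked by an infinite
  run of the S-GKAB (so that $\pi_g$ genuinely exists), that every intermediate
  $\tmp$-state is indeed produced by a temp-adder step, and that no individual
  over $\voc(T)$ can sneak into such a state. All three points follow from
  Lemmas~\ref{lem:temp-state-produced-by-temp-act},
  \ref{lem:non-temp-state-produced-by-normal-action} and
  \ref{lem:kab-transition-bsim} and from the construction in the proof of
  Lemma~\ref{lem:sgkab-to-kab-bisimilar-state}, but packaging them into a single
  inductive ``run-lifting'' statement is where the care lies. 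A secondary point
  to double-check is that every flag and $\noopconceptname$ constant produced by
  $\tgprog$ is drawn from the fixed finite pool populating the special-marker
  concepts, so that $M$ is indeed finite.
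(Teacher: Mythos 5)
Your plan is correct and follows essentially the same route as the paper's own (very terse) proof, which likewise rests on exactly two observations: the translation introduces only a bounded number of new individuals (the special-marker constants), and by Lemma~\ref{lem:sgkab-to-kab-bisimilar-ts} the two transition systems are J-bisimilar, hence equivalent modulo intermediate $\tmp$-states and special markers. Your version merely fills in the run-lifting and the explicit bound $b+|M|$ that the paper leaves implicit.
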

\begin{proof}
  Let $\ts{\gkabsym}^{\filter_S}$ be the transition system of
  $\gkabsym$, and $\ts{\tgkab(\gkabsym)}^S$ be the transition system
  of $\tgkab(\gkabsym)$.
  The proof is then easily obtained since
  \begin{compactitem}
  \item only a bounded number of new individuals are introduced, when
    emulating the Golog program with S-KAB condition-action rules and
    actions.
  \item by Lemma~\ref{lem:sgkab-to-kab-bisimilar-ts}, we have that
    $\ts{\gkabsym}^{\filter_S} \jbsim \ts{\tgkab(\gkabsym)}^{S} $.
    Thus, basically they are equivalent modulo intermediate states
    (states containing $\tmp$). Moreover, each bisimilar states are
    basically equivalent modulo special markers.
  \end{compactitem}
\end{proof}

\subsubsection{Proof of Theorem~\ref{thm:i-to-kab}}

\begin{proof}
  By Lemmas~\ref{lem:run-bounded-preservation-bgkab},
  \ref{lem:run-bounded-preservation-cgkab}, and
  \ref{lem:run-bounded-preservation-egkab}, the translation from
  I-GKABs to S-GKABs preserves run-boundedness. 
%
  Furthermore, by Lemma~\ref{lem:run-bounded-preservation-sgkab},
  run-boundedness is also preserved from S-GKABs to S-KABs. 
%
  In the end, the claim follows by also combining
  Theorem~\ref{thm:itoverys} with the results
  in~\cite{BCDDM13,BCMD*13} for run-bounded S-KABs.
\end{proof}

\end{document}
